\documentclass{article} 
\usepackage{iclr2026_conference,times}

\usepackage{amsmath,amsfonts,bm}

\def\eqref#1{equation~\ref{#1}}

\def\1{\bm{1}}

\def\vtheta{{\bm{\theta}}}

\def\vg{{\bm{g}}}

\def\vm{{\bm{m}}}

\def\vs{{\bm{s}}}

\def\vu{{\bm{u}}}
\def\vv{{\bm{v}}}
\def\vw{{\bm{w}}}
\def\vx{{\bm{x}}}
\def\vy{{\bm{y}}}

\def\mA{{\bm{A}}}
\def\mB{{\bm{B}}}

\def\mD{{\bm{D}}}
\def\mE{{\bm{E}}}

\def\mH{{\bm{H}}}
\def\mI{{\bm{I}}}

\def\mK{{\bm{K}}}
\def\mL{{\bm{L}}}

\def\mR{{\bm{R}}}
\def\mS{{\bm{S}}}

\def\mU{{\bm{U}}}

\def\mLambda{{\bm{\Lambda}}}

\DeclareMathAlphabet{\mathsfit}{\encodingdefault}{\sfdefault}{m}{sl}
\SetMathAlphabet{\mathsfit}{bold}{\encodingdefault}{\sfdefault}{bx}{n}

\def\gN{{\mathcal{N}}}

\newcommand{\E}{\mathbb{E}}

\newcommand{\R}{\mathbb{R}}

\newcommand{\Var}{\mathrm{Var}}

\DeclareMathOperator*{\argmax}{arg\,max}
\DeclareMathOperator*{\argmin}{arg\,min}

\DeclareMathOperator{\Tr}{Tr}

\usepackage{url}

\usepackage[english]{babel}
\usepackage{kotex}
\usepackage{natbib}    
\usepackage{float}
\usepackage{placeins}

\usepackage{amsthm}

\usepackage{wrapfig}    
\usepackage{caption}      

\theoremstyle{remark}
\newtheorem{remark}{Remark}

\usepackage{booktabs,multirow,array,tabularx,makecell}
\newcolumntype{Y}{>{\raggedright\arraybackslash}X}

\usepackage{amsmath}
\usepackage{amssymb}
\usepackage{graphicx}
\usepackage[table]{xcolor}

\usepackage{subcaption}

\usepackage{enumitem}
\usepackage{amsmath,amssymb,amsthm}

\newtheorem{proposition}{Proposition}[section]

\usepackage{mathtools}  
\newcommand{\defeq}{\coloneqq}

\newcommand{\inner}[2]{\langle #1, #2 \rangle}
\newcommand{\T}{^{\mathsf{T}}}  
\newcommand{\sgn}{\operatorname{sign}}
\newcommand{\frf}{\mathfrak{f}}
\newcommand{\diag}{\operatorname{diag}}
\newcommand{\kbar}{\overline{\mK}}

\newcommand{\ksig }{\mK_{\sigma}}
\newcommand{\ktau }{\mK_{\tau}}

\newcommand{\Hnorm}{\lVert \mH^{1/2} \vw_\perp \rVert^2}
\newcommand{\lamkbar}{\lambda_i(\overline{\mK})}

\newcommand{\Rheadn}{L^{\text{drift}}(N)}
\newcommand{\Rtailn}{L^{\text{noise}}(N)}

\newcommand{\Rinf}{L_{\infty}}
\newcommand{\trace}{\operatorname{Tr}}

\newcommand{\SHST}{\mS \mH \mS \T}

\newcommand{\RomanNum}[1]{\uppercase\expandafter{\romannumeral #1}}

\newcommand{\A}{\mathcal{A}}
\newcommand{\DAl}{\mathcal{D}_{\mathrm{al}}}
\newcommand{\DDis}{\mathcal{D}_{\mathrm{dis}}}
\newcommand{\Noise}{\mathcal{N}}

\newcommand{\Am}{\mathcal{A}(M)}

\newcommand{\Dal}{\mathcal{D}_{\mathrm{al}}^{\mathrm{sign}}(M,N,\gamma_0)}
\newcommand{\Ddis}{\mathcal{D}_{\mathrm{dis}}^{\mathrm{sign}}(M,N,\gamma_0)}
\newcommand{\Nm}{\mathcal{N}^{\mathrm{sign}}(M,\gamma_0)}

\newcommand{\Ams}{\mathcal{A}(M)}

\newcommand{\Dals}{\mathcal{D}_{\mathrm{al}}^{\mathrm{SGD}}(N,\gamma_0)}
\newcommand{\Ddiss}{\mathcal{D}_{\mathrm{dis}}^{\mathrm{SGD}}(M,N,\gamma_0)}
\newcommand{\Nms}{\mathcal{N}^{\mathrm{SGD}}(N,\gamma_0)}

\usepackage{amsmath,amssymb,amsthm,mathtools,bm}

\newcommand{\C}{\mathbb{C}}
\newcommand{\I}{\mathrm{i}}

\newcommand{\ip}[1]{\left\langle #1 \right\rangle}
\newcommand{\norm}[1]{\left\lVert #1 \right\rVert}

\newcommand{\aaa}{\text{Aa}^{\star}}
\newcommand{\tfs}{\text{\RomanNum{3}-\RomanNum{4}}_{\text{sub}}}

\theoremstyle{plain}

\theoremstyle{remark}

\usepackage{hyperref}

\AtBeginDocument{
  \counterwithout{equation}{section}
  
}

\AtBeginDocument{

}

\usepackage[nameinlink]{cleveref}
\crefname{equation}{Equation}{Equations} 
\Crefname{equation}{Equation}{Equations}

\renewcommand{\eqref}[1]{\textup{(\ref{#1})}}

\title{Scaling Laws of SignSGD in Linear\\Regression: When Does It Outperform SGD?}

\author{Jihwan Kim \\
Seoul National University \& KAIST InnoCORE LLM\\
\texttt{aqua4689@snu.ac.kr}
\And
Dogyoon Song \\
University of California, Davis \\
\texttt{dgsong@ucdavis.edu}
\And
Chulhee Yun \\
KAIST \\
\texttt{chulhee.yun@kaist.ac.kr}
}

\iclrfinalcopy 

\begin{document}

\maketitle

\begin{abstract}
We study scaling laws of signSGD under a power-law random features (PLRF) model that accounts for both feature and target decay.
We analyze the population risk of a linear model trained with one-pass signSGD on Gaussian-sketched features. 
We express the risk as a function of model size, training steps, learning rate, and the feature and target decay parameters.
Comparing against the SGD risk analyzed by \citet{paquette4+}, we identify a \emph{drift-normalization effect} and a \emph{noise-reshaping effect} unique to signSGD.
We then obtain compute-optimal scaling laws under the optimal choice of learning rate. Our analysis shows that the noise-reshaping effect can make the compute-optimal slope of signSGD steeper than that of SGD in regimes where noise is dominant.
Finally, we observe that the widely used warmup-stable-decay (WSD) schedule further reduces the noise term and sharpens the compute-optimal slope, when feature decay is fast but target decay is slow.
\end{abstract}

\section{Introduction}

In large-scale language model training, neural scaling laws are a well-documented empirical regularity: performance tends to improve predictably as data, parameters, and compute increase.
\citet{kaplan2020scaling} observe that the language model cross-entropy loss scales as a power-law of model size $M$ and number of steps $N$ in terms of the risk formula 
$R(M, N) \eqsim M^{-\tau_1}+N^{-\tau_2}$ for some $\tau_1, \tau_2 > 0$.\footnote{We use $\eqsim$ to denote equality up to a multiplicative constant, i.e., $f(x)\eqsim g(x)$ means $c_1 g(x)\le f(x)\le c_2 g(x)$ for some constants $c_1,c_2>0$.}
Also, they observe that loss scales as the power of training compute, under optimal allocation of compute between model size and number of steps.

A growing body of theory has sought to explain this phenomenon, most prominently by analyzing the stochastic gradient descent (SGD) optimizer under the power-law random features (PLRF) model \citep{paquette4+, linscaling, lin2025improved}. 
Yet, in practice, SGD is not the optimizer that powers today’s state-of-the-art LLMs. Instead, their training is dominated by Adam \citep{kingma2014adam} and its variants. While Adam is considerably more difficult to analyze theoretically, it is often approximated in theory by the simpler signSGD \citep{bernstein2018signsgd}, which captures Adam's coordinate-wise adaptivity.
This gap between practice and theory motivates a natural question: \emph{how do scaling laws change when we replace SGD with signSGD?}
Addressing this question can help align theory with optimizer choices used in practice, and clarify how adaptive updates could reshape compute‑optimal scaling regimes in the PLRF setting.

\begin{figure}[t]
    \centering
    \begin{subfigure}{0.49\textwidth}
        \centering
        \includegraphics[width=\textwidth]{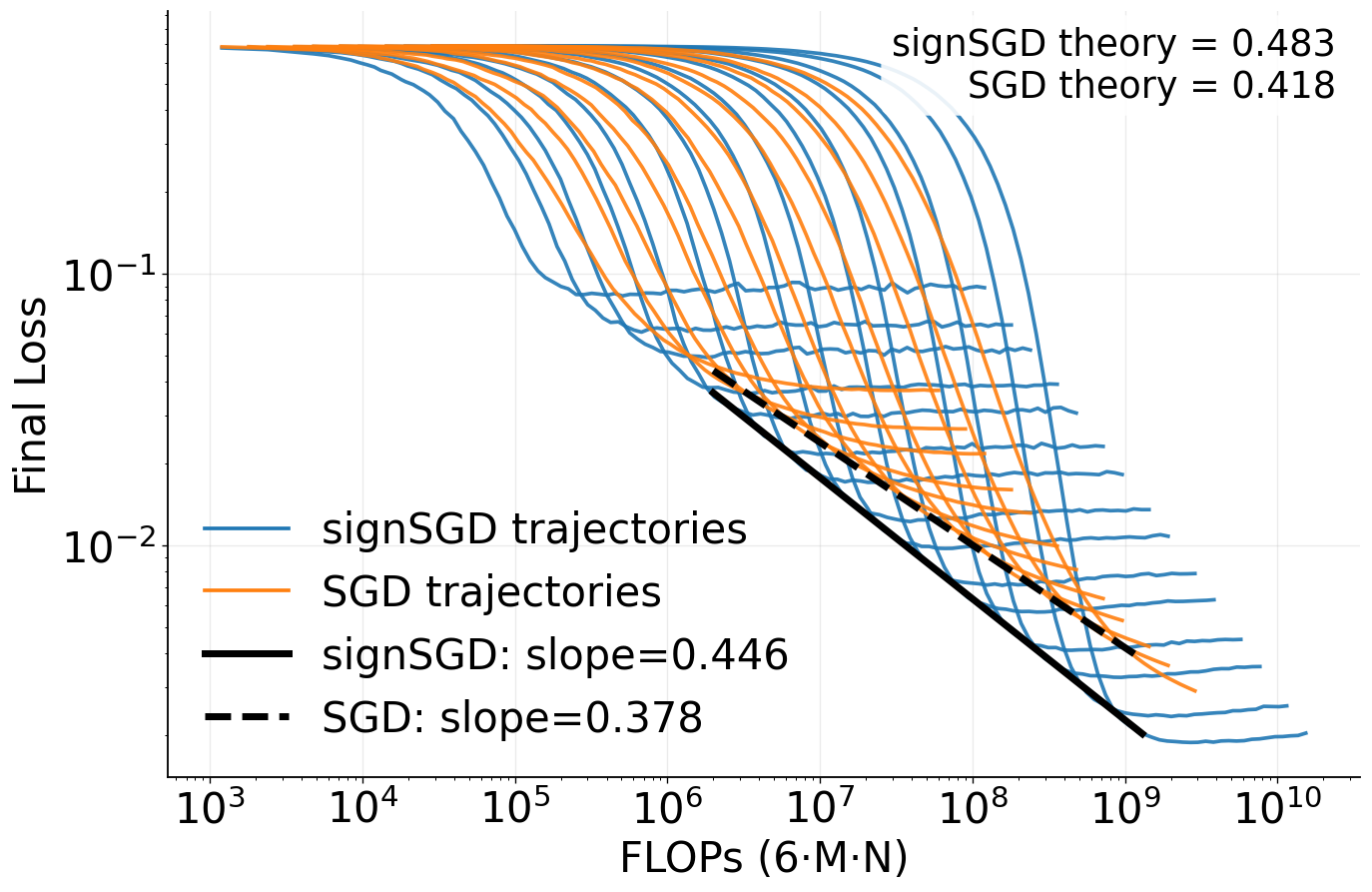}
        \label{fig:phase4}
    \end{subfigure}
    \hfill
    \begin{subfigure}{0.49\textwidth}
        \centering
        \includegraphics[width=\textwidth]{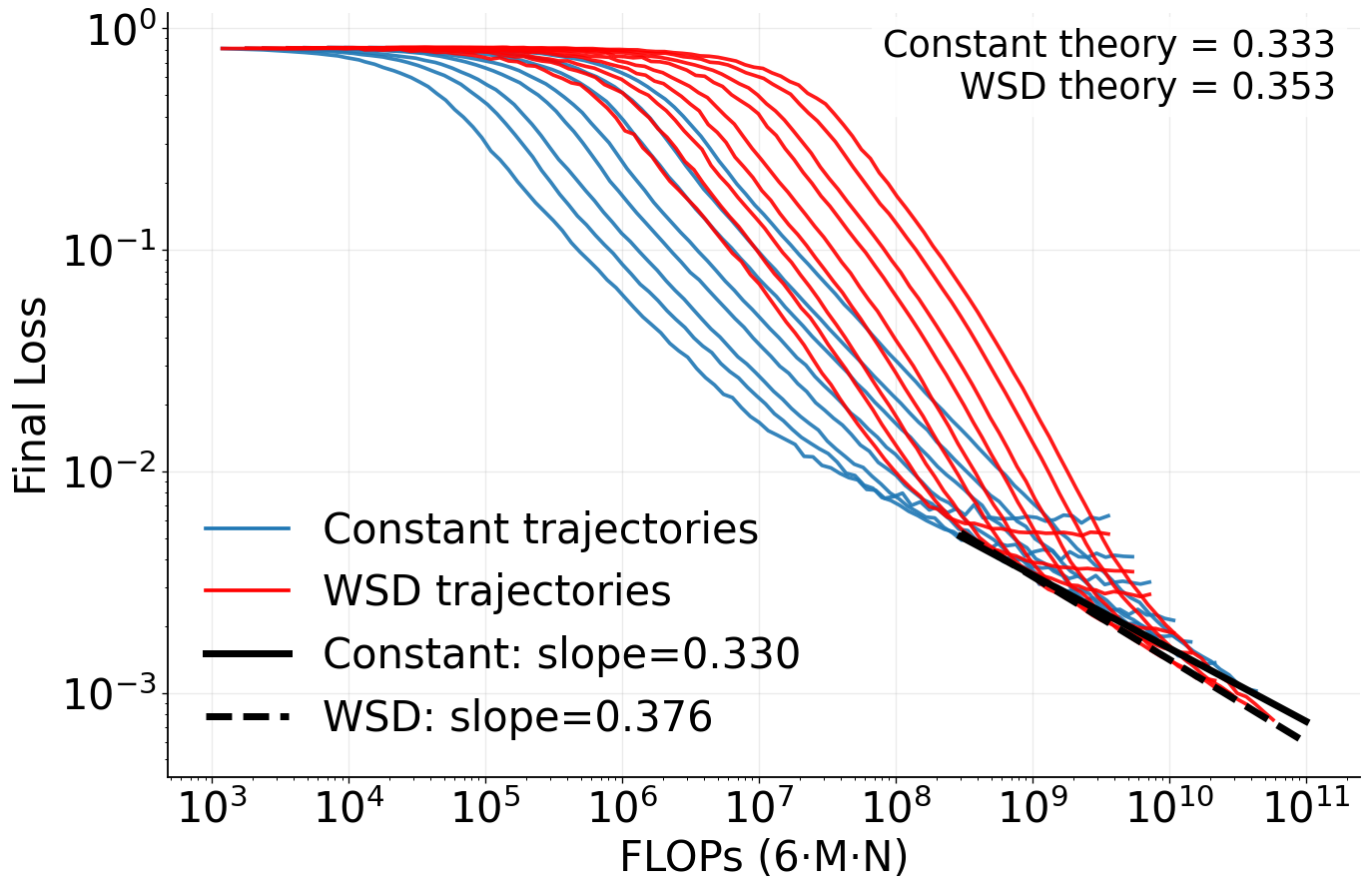}
        \label{fig:sd}
    \end{subfigure}
\caption{
\textbf{Left: SGD vs.\ signSGD;}
\textbf{Right: signSGD with constant vs.\ warmup-stable-decay schedules.}
Colored lines represent the training trajectories of each algorithm, and black lines denote the compute-optimal curves.
The upper right legend in each panel shows the theoretical value of the compute-optimal slope.
SignSGD achieves a steeper compute-optimal slope than SGD (left panel), and warmup-stable-decay scheduling sharpens the compute-optimal slope relative to a constant schedule (right panel), for some parameter configurations.
See Appendix~\ref{expinapp} for parameters used in the experiment.
}
    \label{fig:slope}
\end{figure}

\subsection{Our Contribution}

We study the scaling law of signSGD in the power-law random features (PLRF) model, and our contributions are as follows.

\begin{enumerate}[leftmargin=*]
\item We derive a scaling law of signSGD with constant learning rates involving three variables (model size $M$, training steps $N$, learning rate $\gamma_0$) and two PLRF model parameters (feature decay $\alpha$, target decay $\beta$); see \eqref{fourterm}.
   By comparing with the SGD scaling laws of \citet{paquette4+} and \citet{linscaling}, we observe two effects of signSGD: a \emph{drift-normalization effect} and a \emph{noise-reshaping effect}, inside the scaling law (see \autoref{comp}).

\item
Under the fixed compute budget, we balance model size $M$ and training steps $N$, and optimize over learning rate $\gamma_0$.
This allows us to characterize the compute-optimal loss decay rate and optimal model size with respect to the compute budget (see \autoref{optimtable}).
Comparing against the compute-optimal scaling laws of SGD from \citet{paquette4+} across regimes of the $(\alpha,\beta)$-parameter plane, we find that signSGD can achieve better exponents in the SGD noise bottleneck regimes, due to the noise-reshaping effect (see \autoref{fig:slope}).

\item We show that learning rate scheduling can further reduce the stochastic noise of signSGD.
We analyze a warmup-stable-decay (WSD) schedule \citep{huminicpm} widely used in large language model training. 
   By maintaining drift velocity during the stable interval and reducing stochastic noise by the polynomially decaying interval, this schedule increases the compute-optimal slope in the PLRF setting for large $\alpha$ and small $\beta$ (see \autoref{sdsch} and \autoref{fig:slope}).

\item We empirically validate our theory; see Figure~\ref{fig:slope} and Appendix~\ref{expinapp} for details.

\end{enumerate}

\subsection{Related Work}
\label{rel}
Here we discuss directly relevant results; additional related work is deferred to Appendix~\ref{adderl}.
\paragraph{Empirical Scaling Laws.}
Modern empirical work shows that performance improves with scale across data, parameters, and compute, following power laws across many domains \citep{hestness2017deep}. 
 In language modeling, \citet{kaplan2020scaling} document power-law loss trends over multiple orders of magnitude and simple budgeting rules linking model size, data, and compute. 
 \citet{henighan2020scaling} extend these curves to images, video, and multimodal settings. 
 Building on this, \citet{hoffmann2022training} argue that many LMs were under-trained on tokens and propose data-optimal scaling that substantially improves accuracy at fixed compute. 
\citet{tissue2024scaling} investigate the empirical scaling law with learning rate annealing.

\paragraph{Scaling Law Theory.} 
Our work starts from the SGD scaling law in the PLRF model in \citet{paquette4+} and \citet{linscaling}.
In particular, \citet{paquette4+} derive a scaling-law formula for one-pass SGD, where \(M\), \(N\), and \(\gamma_0\) denote the model size, number of training steps, and learning rate, respectively, and \(\alpha\) and \(\beta\) are the feature- and target-decay parameters.
\begin{equation}
R(M,N, \gamma_0) \;\eqsim\;
\underbrace{M^{-2\alpha+\max(0,\,1-2\beta)}}_{=:\Ams}
+ \underbrace{(N \gamma_0)^{-\tfrac{2\alpha+2\beta-1}{2\alpha}}}_{=:\Dals}
+ \underbrace{M^{-1}(N \gamma_0)^{-\tfrac{2\alpha-1}{2\alpha}}}_{=:\Ddiss}
+ \underbrace{\gamma_0 (N \gamma_0)^{-\tfrac{4\alpha-1}{2\alpha}}}_{=:\Nms} .
\label{sgd-scaling}
\end{equation}
The $\Ams$ term corresponds to the \emph{approximation error}, i.e., the loss as $N \to \infty$.  
\cite{paquette4+} explain that $\Dals$ represents the \emph{aligned feature loss}, as it coincides with the loss for a diagonal sketch matrix $\mS$ (see \autoref{sec:model} for a formal description).  
They also explain that $\Ddiss$ corresponds to the \emph{distorted feature loss}, arising from projection with a random matrix $\mS$, and it decays more slowly than the aligned feature loss.  
Finally, $\Nms$ captures the \emph{SGD noise}, stemming from the quadratic term in the Taylor expansion of the SGD update.

Several subsequent papers extend this baseline along two axes: (i) optimizer changes and (ii) model/training‑protocol variations.
On the optimizer side, \citet{ferbach2025dimension} investigate dimension-adapted Nesterov acceleration in the PLRF model and argue that it gives a better scaling law for $2\alpha>1$ regime.
\citet{kunstner2025scaling} compare the gradient descent and sign descent scaling law in the linear bigram model. Comparison with their work is in Appendix~\ref{kuncomp}.
\citet{lin2025improved} cover the multi-pass SGD scaling law and identify the effect of data reuse for the scaling law.
Discussion of the model/training-protocol changes is deferred to Appendix~\ref{adderl}.

\paragraph{Scaling Behavior of Linear Models in the Context of Kernel Methods.}
The power-law settings for data and targets adopted in our work are deeply rooted in the literature on kernel methods and their finite-width approximations.
In this context, the power-law decays of the covariance spectrum and target coefficients are analogous to the classical capacity and source conditions, respectively.
These spectral assumptions have been extensively investigated in kernel ridge regression \citep{caponnetto2007optimal, cui2021generalization} and random-features ridge regression \citep{rudi2017generalization, bach2017equivalence, defilippis2024dimension}.
Furthermore, similar conditions are fundamental to prior theoretical works on SGD that are closely related to our setting, including studies on one-pass SGD \citep{yao2007early, ying2008online, carratino2018learning, berthier2020tight} and multi-pass SGD \citep{pillaud2018statistical}. 
Detailed comparison with these works is in Appendix~\ref{kercomp}.

\paragraph{SignSGD Dynamics.}
\citet{bernstein2018signsgd} give the non-convex convergence rate of signSGD.
\citet{xiao2024exact} derive the SDE and ODE for one-pass signSGD in the linear regression setting with squared loss.
The ODE we derive matches theirs in final form; however, we obtain it in an alternative route that does not require a spectral lower bound on the covariance matrix that they imposed.
Detailed comparison with \citet{xiao2024exact} is in Appendix~\ref{xiaocomp}.
\citet{compagnoni2024adaptive} derive SDEs for adaptive methods, including signSGD.

\section{Problem Setup}

\subsection{Notation}
We use bold lowercase letters (e.g., $\vu$) to denote vectors and bold uppercase letters (e.g., $\mA$) to denote matrices.
For vectors $\vu$ and $\vv$, we denote the outer product by $\vu \otimes \vv := \vu \vv\T$.
$\lambda_i(\mA)$ denotes the i-th eigenvalue of the matrix $\mA$.
For positive-valued functions $f(x)$ and $g(x)$, we use $f(x) \lesssim g(x)$ if there exists $C > 0$ such that $f(x) \le C g(x)$ for sufficiently large $x$, and we use $f(x) \eqsim g(x)$ if there exist $c, C > 0$ such that $c g(x) \le f(x) \le C g(x)$ for sufficiently large $x$.

\subsection{Model}\label{sec:model}
We consider the power-law random features (PLRF) model, parameterized by $\vtheta \in \R^M$. 
Given a feature-label pair $(\vx, y) \in \R^d \times \R$, the parameter $\vtheta$ plays the role of a linear regression coefficient vector on the sketched features $\mS \vx$ (for some $\mS \in \R^{M \times d}$), and the population risk function is
\[
L(\vtheta) = \E_{\vx}\big[(\inner{\mS \vx}{\vtheta} - y)^2\big].
\]
The data are generated as follows: the feature vector $\vx\in\R^d$ is drawn from $\gN(0,\mH)$ where $\mH$ has eigenvalues $1^{-2\alpha},\dots,d^{-2\alpha}$, and the label is $y=\langle \vx,\vw^*\rangle$ with $\inner{\vv_i}{\vw^*} = i^{-\beta}$, where $\vv_i$ is an eigenvector of $\mH$ corresponding to eigenvalue $i^{-2\alpha}$ for $i=1,\dots,d$; we call $\alpha$ and $\beta$ feature‑decay and target‑decay parameters, respectively.
The sketch matrix $\mS\in\R^{M\times d}$ is a random matrix that has i.i.d.\ entries $\gN(0,1/M)$, is drawn once and then held fixed throughout training; we refer to $M$ (with $M\le d$) as the model size. 
Under these model assumptions,
\[
L(\vtheta) = \lVert \mH^{1/2}(\mS\T \vtheta - \vw^*) \rVert^2 .
\]

We assume $d \ge r_0 M$ for some $r_0>1$, and let $d/M \to r \in [r_0, \infty]$ as $d, M \to \infty$ when $2\alpha>1$, and $d/M \to r \in [r_0, \infty)$ when $2\alpha<1$.
The projected optimal parameter is \begin{equation}
\vtheta^* = (\mS \mH \mS\T)^{-1}\mS \mH \vw^*.
\label{projtar}
\end{equation}
Define $\vw_\perp = \vw^* - \mS\T \vtheta^*$ so that $\vw^* = \mS\T \vtheta^* + \vw_\perp$ and $\mS \mH \vw_\perp = 0$.  
The loss decomposes as
\[
L(\vtheta) 
= \lVert \mH^{1/2} \mS\T (\vtheta - \vtheta^*) \rVert^2
+ \lVert \mH^{1/2} \vw_\perp \rVert^2 ,
\]
where the second term represents the approximation error.

\paragraph{SignSGD.}
We estimate the minimizer of the population risk via empirical risk minimization using signSGD.
At step $k$, we draw a fresh sample $(\vx_k,y_k)$ from our data model and form the stochastic gradient
\begin{equation}
\vg_k = 2\big(\inner{\mS \vx_k}{\vtheta_{k}} - y_k\big) \mS \vx_k .
\label{grad}
\end{equation}
The signSGD update rule is
\begin{align*}
\vtheta_{k+1} &= \vtheta_{k} - \gamma_k  \sgn(\vg_k)
\\&= \vtheta_{k} - \gamma_k  \sgn\big(\inner{\mS \vx_k}{\vtheta_{k}} - y_k\big) \sgn(\mS \vx_k).
\end{align*}

\subsection{Representation of the Result}

Let $R(M, N, \gamma_0)$ denote the loss $L(\vtheta_N)$ under learning rate $\gamma_0$ and fixed model size $M$.  
We define the computational budget in terms of FLOPS\footnote{floating point operations per second} as $\frf = M N$, and consider the optimal model size $M^\star$ under fixed $\frf$, and optimal scaling of learning rate in the form $\gamma_0^{\star} = (M^\star)^{-e^*}$.
For SGD, \citet{paquette4+} derive compute-optimal scaling laws of the following form:
\[
M^\star \eqsim \frf^{\xi}, 
\qquad 
R\left(M^\star, \tfrac{\frf}{M^\star}, \gamma_0^{\star} \right) \eqsim \frf^{-\eta}.
\]
Our objective is to derive analogous formulas for signSGD, namely, $R(M, N, \gamma_0)$ and $R\bigl(M^\star, \tfrac{\frf}{M^\star}, \gamma_0^{\star} \bigr)$, and to compare them with the corresponding results for SGD.

\section{Analyzing the SignSGD}

In this section, we formulate the implicit integral equation for signSGD.
We define 
\begin{equation}
\mK = \mS \mH \mS\T, \quad \kbar = \diag(\mK)^{-1/2} \mK, \quad
\mK_\sigma = \arcsin\!\big(\diag(\mK)^{-1/2}  \mK  \diag(\mK)^{-1/2}\big),
\label{kdef}
\end{equation}
where $\arcsin$ is applied entry-wise; we use these matrices and notation throughout the paper.
We decompose the loss via
\[
r_i(N) \defeq (\vtheta_N - \vtheta^*)\T (\mK \vu_i \otimes \vw_i)(\vtheta_N - \vtheta^*),
\]
where $\vu_i,\vw_i$ are the right/left eigenvectors of $\kbar$ corresponding to the $i$th eigenvalue $\lambda_i(\kbar)$.
This modal decomposition matches that of \citet{xiao2024exact}. For brevity we write $L(N) \defeq L(\vtheta_N)$.
\begin{equation}
L(N) = \sum_{i=1}^{M} r_i(N) + \Hnorm .
\label{decomf}
\end{equation}
In Appendix~\ref{onestep}, we derive the one-step update formula for signSGD on a quadratic objective, using a second-order Taylor expansion and sign–Gaussian identities.
Applying this to $r_i$ yields
\begin{equation}
\E\!\left[r_i(k+1) - r_i(k) \,\middle|\, \mathcal{F}_k\right]
=  - \underbrace{\frac{4\gamma_k}{\pi \sqrt{L(k)}}\,\lambda_i(\kbar)}_{ \textbf{drift}}\,r_i(k)
+ \underbrace{\frac{2\gamma_k^2}{\pi}\,\vw_i\T \mK_\sigma \mK \vu_i}_{\textbf{quadratic noise}} .
\label{onestep3}
\end{equation}
\begin{enumerate}
\item  \textbf{Drift.}
The first term in \eqref{onestep3} yields a systematic decrease of mode \(i\): it is proportional to the curvature \(\lambda_i(\kbar)\) and the learning rate \(\gamma_k\), while the factor \(1/\sqrt{L(k)}\) self–normalizes the step. Note that the directions corresponding to larger eigenvalues contract faster.
\item  \textbf{Quadratic noise.}
The second term in \eqref{onestep3} is an \(O(\gamma_k^2)\) variance injection shaped by curvature and the sign–noise covariance. It is independent of \(r_i(k)\) and may set a noise floor of $r_i$, unless $\gamma_k$ decays.
\end{enumerate}
Overall, one‑step progress reflects a balance between drift and quadratic noise: when $r_i(k)$ is large, the drift decreases $r_i(k)$; near the optimum, quadratic noise can dominate and cause $r_i(k)$ to plateau.

Converting the one-step update formula to the continuous-time ODE, we obtain \footnote{We treat $L$, $r_i$, and the learning-rate $\gamma_k$ as continuous extensions, so $\gamma_{t/\gamma_0}$ is well-defined for any $t>0$.}
\begin{equation}
\frac{dr_i}{dt} =  - \underbrace{\frac{4\gamma_{t/\gamma_0} }{\pi \gamma_0 \sqrt{L(t)}}\,\lambda_i(\kbar)}_{  =: \Phi^\textbf{drift}_i (t)}\,r_i(t)
+ \underbrace{\frac{2\gamma_{t/\gamma_0}^2}{\pi \gamma_0}\,\vw_i\T \mK_\sigma \mK \vu_i}_{ =: \Phi^\textbf{noise}_i(t)} .
\label{odede}
\end{equation}
Compared to SGD, the drift is self-normalized by \(1/\sqrt{L(t)}\) and the quadratic noise term does \emph{not} carry the extra \(L(t)\) factor present in SGD.
So, for the constant learning rate, the quadratic noise does not decrease over time.
The variation‑of‑constants formula gives the implicit integral representation
\begin{equation}
r_i(N)
= r_i(0)\,\exp\!\left\{- \int_0^N \Phi^\textbf{drift}_i (u)  \, du \right\}
\;+\;
\int_{0}^{N}
\exp\!\left\{-\int_z^N \Phi^\textbf{drift}_i (u)  \, du\right\}
\times \Phi^\textbf{noise}_i(z) \,dz .
\label{imint}
\end{equation}

Summing over modes, we define
\begin{align}
&\Rheadn = \sum_{i=1}^M r_i(0)\,\exp\!\left\{- \int_0^N \Phi^\textbf{drift}_i (u)  \, du \right\}, \\ 
&\Rtailn = \sum_{i=1}^M\int_{0}^{N}
\exp\!\left\{-\int_z^N \Phi^\textbf{drift}_i (u)  \, du\right\}
\times \Phi^\textbf{noise}_i(z) \,dz . 
\end{align}
Exact formulation of $\Rheadn$ and $\Rtailn$ can be found in \eqref{formalheadtail} of Appendix~\ref{ODE}.
Then by \eqref{decomf} our risk is decomposed as
\begin{equation}
L(N) =\Rheadn + \Rtailn +  \underbrace{\Hnorm}_{\text{\small approx}} .
 \label{headtail}
\end{equation}

\section{Main Results}

\subsection{Loss Formula for Constant Learning Rate}

We now analyze \eqref{headtail} to express it in the form of $R(M,N,\gamma_0)$, the loss after $N$ steps with constant learning rate $\gamma_0$ and model size $M$; time-varying schedules are discussed later.

\begin{itemize}[leftmargin=*]
\item
For $\Rheadn$, we use a deterministic approximation (Appendix~\ref{deter}) similar to \citet{paquette4+}, and obtain the asymptotic self-consistent equation: with $\Gamma_M = M^{\min(\alpha,0.5)} \gamma_0$,
\begin{align*}
\Rheadn \eqsim \left( \Gamma_M \!\int_0^N L^{\text{drift}}(u)^{-1/2} \, du \right)^{-\tfrac{2\alpha+2\beta-1}{2\alpha}}
    + M^{-1}\left( \Gamma_M \!\int_0^N L^{\text{drift}}(u)^{-1/2} \, du \right)^{-\tfrac{2\alpha-1}{2\alpha}}.
\end{align*}
Solving this yields signSGD counterparts of the aligned- and distorted- feature loss terms in \eqref{sgd-scaling}, denoted by $\Dal$ and $\Ddis$; see \eqref{fourterm} below for their precise forms.
\item For $\Rtailn$ and approximation term, we calculate the limit loss $\Rinf$ and get 
\[
\Rinf \eqsim \max\Bigl\{\gamma_0^2\,M^{2-\min(1,2\alpha)},\ \Hnorm\Bigr\}
\]
Lastly we use approximation error result from \citet{paquette4+, linscaling},
\[
\Hnorm \eqsim M^{-2\alpha+\max(0,1-2\beta)}.
\]
\end{itemize}
Combining two parts yields a proxy for the loss formula, and we prove that it satisfies the implicit integral equation~\eqref{headtail} in Appendix~\ref{ver1} and \ref{ver2}.
Finally, we get the following four-term scaling law formula for one-pass signSGD in the regime $-\alpha+0.5<\beta<\alpha+0.5$:
\footnote{For the case $\beta>\alpha+0.5$, $\Dal$ takes form of $\bigl(1 - \kappa M^{\min(\alpha,0.5)}N\gamma_0\bigr)^{-\tfrac{2(2\alpha+2\beta-1)}{2\alpha-2\beta+1}}$. See Appendix~\ref{nopower} for more details.}
\begin{equation}
\begin{aligned}
R(M,N,\gamma_0)\;\eqsim\;
\underbrace{M^{-2\alpha+\max(0,\,1-2\beta)}}_{ =: \Am}
+\underbrace{\bigl(M^{\min(\alpha,0.5)}N\gamma_0\bigr)^{-\tfrac{2(2\alpha+2\beta-1)}{2\alpha-2\beta+1}}}_{=:\Dal}
\\+\underbrace{M^{-\tfrac{6\alpha-1}{2\alpha+1}}(N\gamma_0)^{-\tfrac{2(2\alpha-1)}{2\alpha+1}}}_{ =: \Ddis}
+\underbrace{\gamma_0^2\,M^{2-\min(1,2\alpha)}}_{=:\Nm}.
\end{aligned}
\label{fourterm}
\end{equation}

\noindent\textbf{Interpretation.}\;
The term $\Am$ is the approximation error.
The terms $\Dal$ and $\Ddis$ arise from the drift’s exponential damping $r_i(0)\,\exp\!\left\{- \int_0^N \Phi^\textbf{drift}_i (u)  \, du \right\}$ and correspond to the aligned and distorted feature losses of SGD scaling law in \citet{paquette4+}.
The term $\Nm$ captures the quadratic noise from the one-step Taylor expansion, specific to one-pass signSGD.

\paragraph{Comparison.}
\label{comp}
We compare our signSGD scaling law formula with the SGD formula \eqref{sgd-scaling} of \citet{paquette4+}.
Since the approximation error is optimizer-independent, the term $\Am$ remains unchanged.
For the $N$-exponent in $\DAl$ and $\DDis$, when the absolute value of the exponent is $x$ for SGD, then it changes to $\frac{2}{2-x} x$ in signSGD, which is strictly larger than $x$. 
Therefore, $\Dal$ and $\Ddis$ decrease faster in the number of steps $N$ under signSGD.
By contrast, the signSGD noise term $\Nm$ does not decay with $N$, whereas the SGD noise $\Nms$ does.\footnote{
Decay with respect to $M$ depends on the choice of $\gamma_0$; in the subsequent sections, we set $\gamma_0$ as $M^{-e}$.}

We discuss the underlying mechanism that modifies the drift terms $\DAl$, $\DDis$, and the noise term $\Noise$.
\begin{itemize}[leftmargin=*]
  \item \textbf{Drift terms (Drift-normalization effect):}
    In signSGD, the drift in \eqref{onestep3} is $\frac{4\gamma_k}{\pi \sqrt{L(k)}}\,\lambda_i(\kbar)$, whereas for SGD it is $2\gamma_k\,\lambda_i(\mK)$; see \eqref{kdef} for the definition of $\mK$ and $\kbar$.
    The diagonal preconditioning embedded in $\kbar$ contributes an extra factor $M^{\min(\alpha,1/2)}$, since the scale of the matrix $\diag(\mK)^{-1/2}$, which is multiplied in $\kbar$, is $M^{\min(\alpha,1/2)}$.
    The normalization by $\sqrt{L(k)}$ replaces the effective flow time $N\gamma_0$ with $\gamma_0 \int_0^N L(u)^{-1/2}\,du$, which accelerates progress in training whenever $L(u)\lesssim 1$.
    Thus, in the aligned/distorted drift terms, $N\gamma_0$ is replaced by $M^{\min(\alpha,1/2)}\gamma_0 \int_0^N L(u)^{-1/2}\,du$.
    It leads to the self-consistent equation, which does not occur in SGD, and the solution of the self-consistent equation includes powers of $M^{\min(\alpha,1/2)}N \gamma_0$.
    The absolute value of the exponent increases compared to SGD due to the acceleration in the regime $L(u)\lesssim 1$.

  \item \textbf{Noise term (Noise-reshaping effect):}
    The signSGD noise in \eqref{onestep3} is $\frac{2\gamma_k^2}{\pi}\,\vw_i^\top \mK_\sigma \mK \vu_i$, while for SGD it is $\gamma_k^2\,(\vv_i^\top \mK \vv_i)\,L(k)$ with $\vv_i$ an eigenvector of $\mK$.
    The normalization removes the multiplicative $L(k)$ in signSGD, eliminating the Volterra structure present in \citet{paquette4+}. 
    This difference is crucial: the lack of $L(k)$ in the quadratic term ultimately yields a noise term that does not decay in $N$.
     In the final formula, it removes the $(N\gamma_0)^{-\frac{4\alpha-1}{2\alpha}}$ factor which appears in the SGD noise term, and therefore the noise term of signSGD increases as the learning rate $\gamma_0$ grows for all $(\alpha, \beta)$.
    In contrast, when the learning rate $\gamma_0$ grows, the noise term of SGD decreases for $\alpha>0.5$ and increases for $\alpha<0.5$.
    Meanwhile, an additional $M$‑dependence arises from working in the $\kbar$‑eigenbasis (rather than $\mK$‑eigenbasis) due to the sign operation.
\end{itemize}

\subsection{Compute-Optimal Result under Optimal Constant Learning Rate}

In the constant learning-rate schedule, we allow $\gamma_0$ to scale with the model size via $\gamma_0=M^{-e}$.  
The hyperparameter $e$ directly influences the compute-optimal scaling law.\footnote{One may wonder why we do not parameterize by $N$. Setting $\gamma_0 = M^{-e}$ is without loss of generality, since in the compute-optimal case both $M$ and $N$ are expressed as powers of the total compute $\frf$.}

Following~\citet{paquette4+}, we distinguish the \emph{maximal} and \emph{optimal} learning rates for SGD.  
The maximal rate is the largest step that yields a stable (non-exploding) recursion; for signSGD, it leads to a zero compute-optimal slope (see Appendix~\ref{maximal}).  
We therefore focus on the optimal learning rate $\gamma_0^\star$, which maximizes the decay exponent $\eta$ in
\[
R(M^\star, \frf/M^\star, \gamma_0^\star) \eqsim \frf^{-\eta},
\]
where $M^\star$ denotes the model size minimizing $R(\cdot)$ at fixed compute budget $\frf$.

To characterize the compute‑optimal scaling, set $\gamma_0=M^{-e}$, $M=\frf^{x}$, and $N=\frf^{\,1-x}$ (with $x\in[0,1]$), and solve
\begin{equation}
(e^*,x^*)\in \argmin_{e,x} R \bigl( M , N, \gamma_0 \bigr) =  \argmin_{e,x} R \bigl( \frf^{x} , \frf^{\,1-x}, \frf^{-ex} \bigr).
\label{prob}
\end{equation}
Then $M^\star=\frf^{x^*}$, $N^\star=\frf^{\,1-x^*}$, and $\gamma_0^\star=(M^\star)^{-e^*}$, and at the optimum
\[
R\big(M^\star, \frf/M^\star, \gamma_0^\star\big) \eqsim  \frf^{-\eta(\alpha,\beta)},
\]
for some $\eta(\alpha, \beta) > 0$, which we refer to as the compute‑optimal slope.

In problem~\eqref{prob}, each of the four terms in \eqref{fourterm} scales as $\frf^{-\ell_i(e,x)}$, so minimizing $R$ is equivalent to maximizing $\min\{\ell_1,\ell_2,\ell_3,\ell_4\}$.
The optimal value $(e^*,x^*)$ is obtained by balancing three active exponents.
The resulting formulas and dominant and balancing terms are summarized in Table~\ref{optimtable}; see Appendix~\ref{optimlr} for details.

\begin{figure}[t]
    \centering
    \begin{subfigure}{0.51\textwidth}
        \centering
        \includegraphics[width=\textwidth]{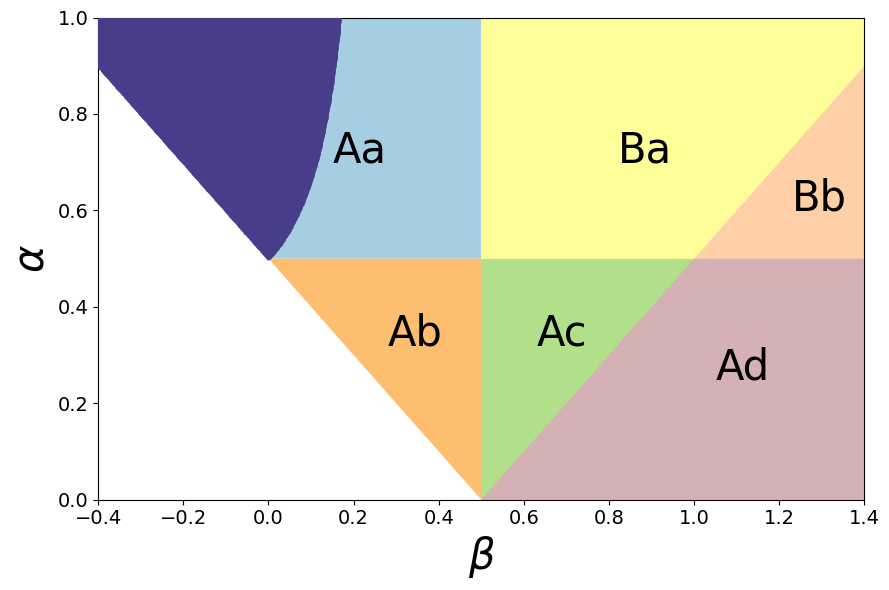}
        \label{fig:plot1}
    \end{subfigure}
    \hfill
    \begin{subfigure}{0.45\textwidth}
        \centering
        \includegraphics[width=\textwidth]{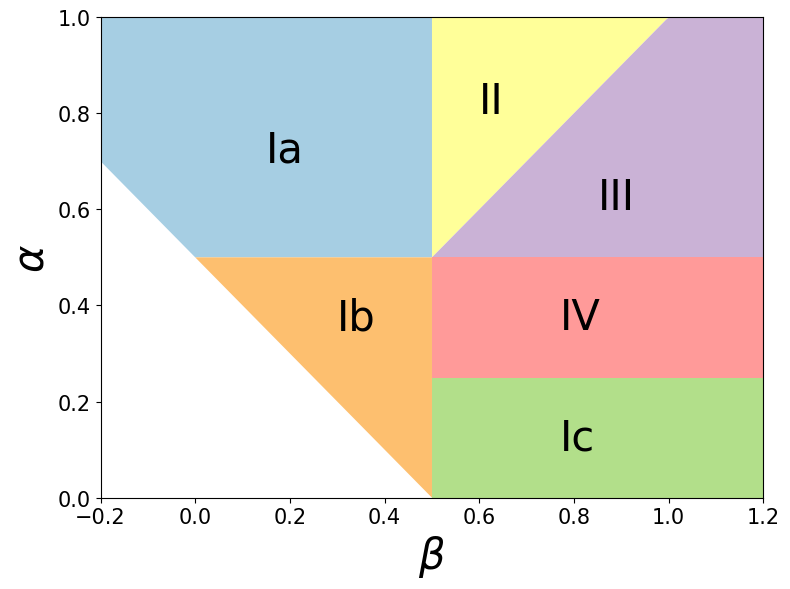}
        \label{fig:plot2}
    \end{subfigure}
    \caption{\textbf{Left: Phase plane for signSGD;} \textbf{Right:  Phase plane for SGD.}
    The white region indicates parameter values with no power-law scaling.
    The dark blue area represents the region where warmup-stable-decay scheduling (Section~\ref{sdsch}) yields a better compute-optimal exponent.}
    \label{fig:phase_plots}
\end{figure}

We follow \citet{paquette4+} in defining phases by dominant terms; to avoid confusion with their SGD phases, we label our signSGD phases by uppercase letters. Accordingly, any reference to Phase~\RomanNum{1}--\RomanNum{4} hereafter refers exclusively to the SGD phases of \citet{paquette4+}. For signSGD, the phase plane is simpler: when \(\alpha>0.5\) and \(\beta>0.5\) (Phase~B) all four terms are dominant; otherwise (Phase~A) the dominant terms are \(\Am\), \(\Dal\), and \(\Nm\). We declare \emph{subphases} whenever the formula of at least one of \(\gamma_0=M^{-e^*}\), \(M^{\star}\), or \(R(M^{\star},\frf/M^{\star},\gamma_0^{\star})\) changes. These changes occur across the boundaries \(\alpha=0.5\), \(\beta=0.5\), and $\beta=\alpha+0.5$, yielding six subphases in total (Phase~A split into four, Phase~B into two). 
We provide a formula of approximation, drift, and noise term for each subphase in Table~\ref{termtable}.
For context, \citet{paquette4+} partition the \((\alpha,\beta)\)-plane into four phases with six subphases for optimal learning rate (and seven for the maximal learning rate).

\begin{table}[!t]
\caption{Dominant and balancing terms, optimal learning rate, compute-optimal model size, and risk across different $(\alpha,\beta)$ phases. Refer to \eqref{fourterm} for the definitions of the terms $\A$, $\DAl$,$\DDis$, $\Noise$. See Figures~\ref{fig:co1} to \ref{fig:co5} in the Appendix for empirical validation of the theoretical exponents.
}
\label{optimtable}
\centering
\footnotesize
\setlength{\tabcolsep}{3pt}
\renewcommand{\arraystretch}{1.5}
\begin{tabularx}{\linewidth}{@{} l Y Y l l l Y @{}}
\toprule
& \multicolumn{2}{c}{\textbf{Term structure}} & \multicolumn{4}{c}{\textbf{Compute–optimal}} \\
\textbf{Phase} & \textbf{Dominant terms} & \textbf{Balancing terms} &
\textbf{} & $\boldsymbol{\gamma_0^{\star}}$ & $\boldsymbol{M^{\star}}$ &
$\boldsymbol{R\left(M^{\star}, \frac{\frf}{M^{\star}}, \gamma_0^{\star}\right)}$ \\
\midrule
\multirow{4}{*}{\textbf{Phase A}} &
\multirow{4}{=}{\makecell[l]{$\A$, $\DAl$, $\Noise$}} &
\multirow{4}{=}{\makecell[l]{$\A$, $\DAl$, $\Noise$}} &
Aa & $M^{-(\alpha+\beta)}$ & $\frf^{\frac{1}{2\alpha+1}}$ &
$\frf^{-\frac{2\alpha+2\beta-1}{2\alpha+1}}$ \\
& & & Ab & $M^{-\frac{2\beta+1}{2}}$ & $\frf^{\frac{1}{2}}$ &
$\frf^{-\frac{2\alpha+2\beta-1}{2}}$ \\
& & & Ac & $M^{-1}$ &
$\frf^{\frac{2\alpha+2\beta-1}{2(2\beta - \alpha(2\beta-3)-1)}}$ &
$\frf^{-\frac{\alpha(2\alpha+2\beta-1)}{2\beta - \alpha(2\beta-3)-1}}$ \\
& & & Ad & $M^{-1}$ &
$\frf^{\frac{1}{2 - \alpha}}$ &
$\frf^{-\frac{2\alpha}{2-\alpha}}$ \\
\midrule
\multirow{2}{*}{\textbf{Phase B}} &
\multirow{2}{=}{\makecell[l]{$\A$, $\DAl$,$\DDis$, $\Noise$}} &
\multirow{2}{=}{\makecell[l]{$\DAl$,$\DDis$, $\Noise$}} &
Ba & $M^{-\frac{2\alpha+4\beta-1}{4\beta}}$ & $\frf^{\frac{\beta}{\alpha+\beta}}$ &
$\frf^{-\frac{2\alpha+2\beta-1}{2\alpha+2\beta}}$ \\
& & & Bb
 & $M^{-\frac{6\alpha+1}{4\alpha+2}}$ & $\frf^{\frac{2\alpha+1}{4\alpha+1}}$ &
$\frf^{-\frac{4\alpha}{4\alpha+1}}$ \\
\bottomrule
\end{tabularx}
\end{table}

\medskip
\begin{remark}[Dominant vs.\ balancing terms]
Dominant terms refer to those that dominate the risk for some $(\gamma_0,M,N)$.
\emph{Balancing terms} are the ones that tie (hence ``balancing'') at the compute‑optimal choice $(\gamma_0^\star,M^\star,N^\star)$ and therefore determine the slope; they form a subset of the dominant terms.
\end{remark}

\paragraph{Comparison of Compute-optimal Results.}
For the intersection of Phase~Aa, Ab, Ac, Ba and Phase~\RomanNum{1}, \RomanNum{2}, the compute-optimal slope $\eta(\alpha, \beta)$ and optimal model size $M^\star$ are the same for signSGD and SGD.
In contrast, for the area of Phase~\RomanNum{3}, \RomanNum{4} excluding the case $0.25<\alpha<1/3$, $\beta>(1-\alpha)(1-2\alpha)/(2(1-3\alpha))$ (See Figure~\ref{sgdcplane} in the Appendix for the visualization of this area), the compute-optimal slope $\eta(\alpha, \beta)$ for signSGD is \emph{steeper} than that for SGD, and the optimal model size is bigger in signSGD.
We refer to this region as the Area~$\tfs$.
Finally, for the optimal learning rate $\gamma_0 = M^{-e^*}$, the exponent $e^*$ is always bigger than SGD in signSGD, which means signSGD always has a smaller optimal learning rate.

\subsection{Effect of Warmup-stable-decay Scheduling}
\label{sdsch}

We next study the widely used warmup-stable-decay schedule \citep{huminicpm}, which reduces late-stage noise via the decay interval while maintaining the drift rate over the stable interval.

For the warmup-stable-decay schedule, we set the learning rate to $\gamma_k = \gamma_0 f(k)$ with
\begin{equation}
f(k) =
\begin{cases}
k/(wN), & k \leq wN, \\
1, &  wN \leq  k \leq pN, \\
\bigl(1 + \tau (k - pN)\bigr)^{-c}, & k > pN ,
\end{cases}
\label{sdfunc}
\end{equation}
where $w, p, c \in (0,1)$ and $\tau > 0$.
In other words, the learning rate increases linearly for the first $wN$ steps, stays constant for the next $(p-w)N$ steps, and finally decays as a polynomial of exponent $c$ for the remaining $(1-p)N$ steps. Throughout, we additionally assume $w<p/2$.

In Phase~Aa, the $f$-scheduled noise bound can improve over constant LR:
\[
\Rtailn \;\lesssim\;  \gamma_0M^{\frac{1}{2}}N^{-c}\sqrt{L(N)} + \gamma_0^{\frac{1}{2\alpha}} M^{\frac{1}{4\alpha}} N^{-(1-c)(1-\frac{1}{2\alpha})} .
\]

Combining this with the drift and approximation terms, and then optimizing over $e$ of $\gamma_0 = M^{-e}$, the decay parameter $c$, and the model size $M$, yields the $f$-scheduled risk bound \footnote{
The loss bound \eqref{sdexpo} also holds for stable-decay scheduling without a warmup stage, as well as for cosine and linear scheduling. Refer to Appendices~\ref{derivsd} and~\ref{lincos}.}
\begin{equation}
R_f(M^\star, \frf / M^\star, (M^\star)^{-e^*}) \;\lesssim\; \frf^{-\tfrac{2(4\alpha-1)(2\alpha+2\beta-1)}{16\alpha^2 + 8\alpha\beta + 2\alpha - 2\beta - 1}}.
\label{sdexpo}
\end{equation}

The absolute value of the exponent in \eqref{sdexpo} exceeds the compute-optimal slope under constant learning rate when $\alpha > 0.5$ and $0.5-\alpha < \beta < \tfrac{2\alpha-1}{2(4\alpha-1)}$. 
Thus, warmup-stable-decay scheduling yields a strictly larger compute-optimal slope in the upper left region of Phase~Aa (marked with dark blue in Figure~\ref{fig:phase_plots}).
We will refer to this region as Area~$\aaa$ throughout the paper.

Scheduling does not improve the SGD compute-optimal exponent in Phases~\RomanNum{1}–\RomanNum{2} (see Appendix~\ref{scSGD}). Thus, with scheduling, signSGD achieves a larger compute-optimal exponent compared to SGD in Area~$\aaa$.
\footnote{Whether scheduling benefits other regions of signSGD or other phases of SGD remains open, since for both methods the scheduled noise upper and lower bounds do not match tightly, even up to constant factors.}

\section{Discussion: Where and Why SignSGD Provides Benefits?}

With a constant learning rate $\gamma_0=M^{-e}$, signSGD yields improvements over SGD in Area~$\tfs$.
Under warmup-stable-decay scheduling, we find signSGD also provides benefits in Area~$\aaa$.

\paragraph{Mechanisms.}
These gains can be explained by \emph{noise‑reshaping}, together with \emph{drift‑normalization}.
In \citet{paquette4+}, Phases~\RomanNum{3}–\RomanNum{4} are the SGD noise‑bottleneck regimes.
By contrast, noise‑reshaping in signSGD can alleviate this bottleneck with a suitable learning‑rate choice, yielding improved compute‑optimal slopes.

\paragraph{Role of Learning‑rate Scaling.}
The signSGD noise term with constant LR is $\Nm=\gamma_0^2\,M^{2-\min(1,2\alpha)}$, whereas for SGD it is $\Nms=\gamma_0 (N\gamma_0)^{-(4\alpha-1)/(2\alpha)}$.
If $\gamma_0 \eqsim 1$, $\Nm$ is much larger than $\Nms$, making the compute‑optimal slope asymptotically zero.
Hence, we set $\gamma_0=M^{-e}$ and optimize $e$ to balance terms and obtain a steep compute-optimal curve: decreasing $\gamma_0$ lowers $\Nm$ while increasing the drift terms $\Dal$ and $\Ddis$, and the optimal $e$ strikes the balance.

\paragraph{Why Gains Arise in Area~$\tfs$.}
For SGD, the shape of $\Nms$ makes it dominate $\Dals$ at the compute‑optimal point in Phases~\RomanNum{3}–\RomanNum{4}.
It is because the absolute value of exponent in $\Nms = \gamma_0 (N \gamma_0)^{-\frac{4\alpha -1}{2\alpha}} $ is smaller than that of $ \Dals =  (N \gamma_0)^{-\frac{2\alpha + 2\beta-1}{2\alpha}}$ in Area~$\tfs$.
For signSGD, noise‑reshaping alters $\Nm$ so it can \emph{balance} against $\Dal$.
Note that the noise term takes a completely different form: $\Nm = \gamma_0^2\,M^{2-\min(1,2\alpha)}$, therefore dominance against the aligned drift term disappears.
On the other hand, drift‑normalization steepens the decay of $\Dal$ by increasing the absolute value of the exponent with respect to $N$.
This creates room for a balance in which both terms are smaller than the SGD noise $\Nms$ at optimum, explaining the improvements in Area~$\tfs$.
For example, in the intersection between Phase~Ba and Phase~\RomanNum{3}, balancing $\Nm$ and $\Dal$ leads to $\frf^{-\frac{2\alpha + 2\beta-1}{2\alpha+2\beta}}$, whereas $\Nms$ takes bigger value $\frf^{-\frac{4\alpha-1}{4\alpha}}$.

\paragraph{Why Warmup-stable-decay Scheduling Helps.}
For a learning-rate schedule $\gamma_k=\gamma_0 f(k)$ with general $f$, the drift‑only self‑consistent solution in Phase~Aa takes the form
\[
\big(M^{1/2}\,\gamma_0\,F(N)\big)^{-\frac{2(2\alpha+2\beta-1)}{2\alpha-2\beta+1}},
\qquad\text{where}\qquad 
F(N):=\int_0^N f(u)\,du.
\]
This can be viewed as $\Dal$ with $N$ replaced by $F(N)$. 
This aligns with empirical observations that a loss term can decay polynomially with the area under the learning‑rate curve \citep{tissue2024scaling}.

In contrast, the noise term depends most heavily on the learning rate \emph{near the end} of training, since earlier noise can be damped by later drift; see \eqref{imint}.
Warmup-stable-decay preserves the total area $F(N)$ asymptotically while shrinking the late‑stage learning rate, thereby reducing noise without sacrificing drift.
As a result, warmup-stable-decay scheduling yields a larger compute‑optimal slope in Area~$\aaa$ (upper‑left Phase~Aa; see Section~\ref{hyppp} for intuition).
More broadly, we conjecture that appropriate scheduling can further reduce the signSGD noise term, enabling improvements beyond Area~$\tfs$ and Area~$\aaa$.

\subsection{Hypothesis for the Position of the Beneficial Area}
\label{hyppp}

Here, we hypothesize why the areas with improved scaling law lie near the left edge (small $\beta$) and the right side ($\beta>\alpha$) of the phase plane.

\paragraph{Heuristic Criterion.}
Let ``target decay'' denote the coordinate-wise decay of $\vtheta^*$ in \eqref{projtar}, and let ``stochastic-gradient decay'' denote that of the stochastic gradient in \eqref{grad}.
SignSGD is advantageous when the \emph{target decays more slowly} than the stochastic gradient.
Under SGD, coordinates with smaller gradients take smaller updates; if the target does not decay much, those coordinates still require learning targets of comparable magnitude, so more iterations are needed---an inefficiency that signSGD mitigates by normalizing per‑coordinate updates via the sign operation.

\paragraph{When Does This Occur? Observations and Conjecture}
Let $\mS\mH\mS^\top = \mU \mLambda \mU^\top$ be the eigendecomposition. 
Then the expected stochastic-gradient expressed in the $\mU$-eigenbasis has $i$th coordinate magnitude that decays as $i^{-2\alpha}$.
See Appendix~\ref{hypanal} for details of analysis.

Next, we examine how the target decays in the basis of the columns of $\mU$. For that, we have to consider $\mU\T \vtheta^*$.
Since $\mathbb{E}[\mS\T \mS] = \mI$, we decompose
\[
\mS\T \mS = \mI + \mE, \qquad \mE := \mS\T \mS - \mI,
\]
so that $\mE$ represents the zero-mean fluctuation around the identity.
Then we have
\begin{align*}
\mU\T \vtheta^* &= \mU\T (\mS \mH \mS\T)^{-1} \mS \mH \vw^*
\\&= \mU\T (\mS \mH \mS\T)^{-1} \mS \mH (\mS\T \mS - \mE) \vw^*
\\&= \mU\T \mS \vw^* - \mU\T (\mS \mH \mS\T)^{-1} \mS \mH \mE \vw^*.
\end{align*}
Since $\mS \mH \mS^\top = \mU \mLambda \mU^\top$ and the columns of $\mU$ and $\mS$ are well aligned, we expect that $\mU^\top \mS \vw^*$ would exhibit a decay pattern similar to $\vw^*$.
The second term $\mU\T (\mS \mH \mS\T)^{-1} \mS \mH \mE \vw^*$ could be thought of as a stochastic error which hinders the decay.
For small $\beta$, as the decay of $\vw^*$ is slow, the decay of $\mU\T \mS \vw^*$ is expected to be slow, and therefore the overall decay of $\mU\T \vtheta^*$ will be slow as well.
If we increase the $\beta$, the decay of $\mU\T \mS \vw^*$ will become faster, which also drives a faster decay of $\mU\T \vtheta^*$.
However, when $\beta$ becomes too big, as the first term $\mU\T \mS \vw^*$ decays rapidly, the second term $\mU\T (\mS \mH \mS\T)^{-1} \mS \mH \mE \vw^*$ dominates quickly, and therefore $\mU\T \vtheta^*$ will plateau quickly after some steep decay.

\begin{wrapfigure}[18]{r}{0.47\linewidth}
  \vspace{-16pt}                
  \centering
  \includegraphics[width=\linewidth]{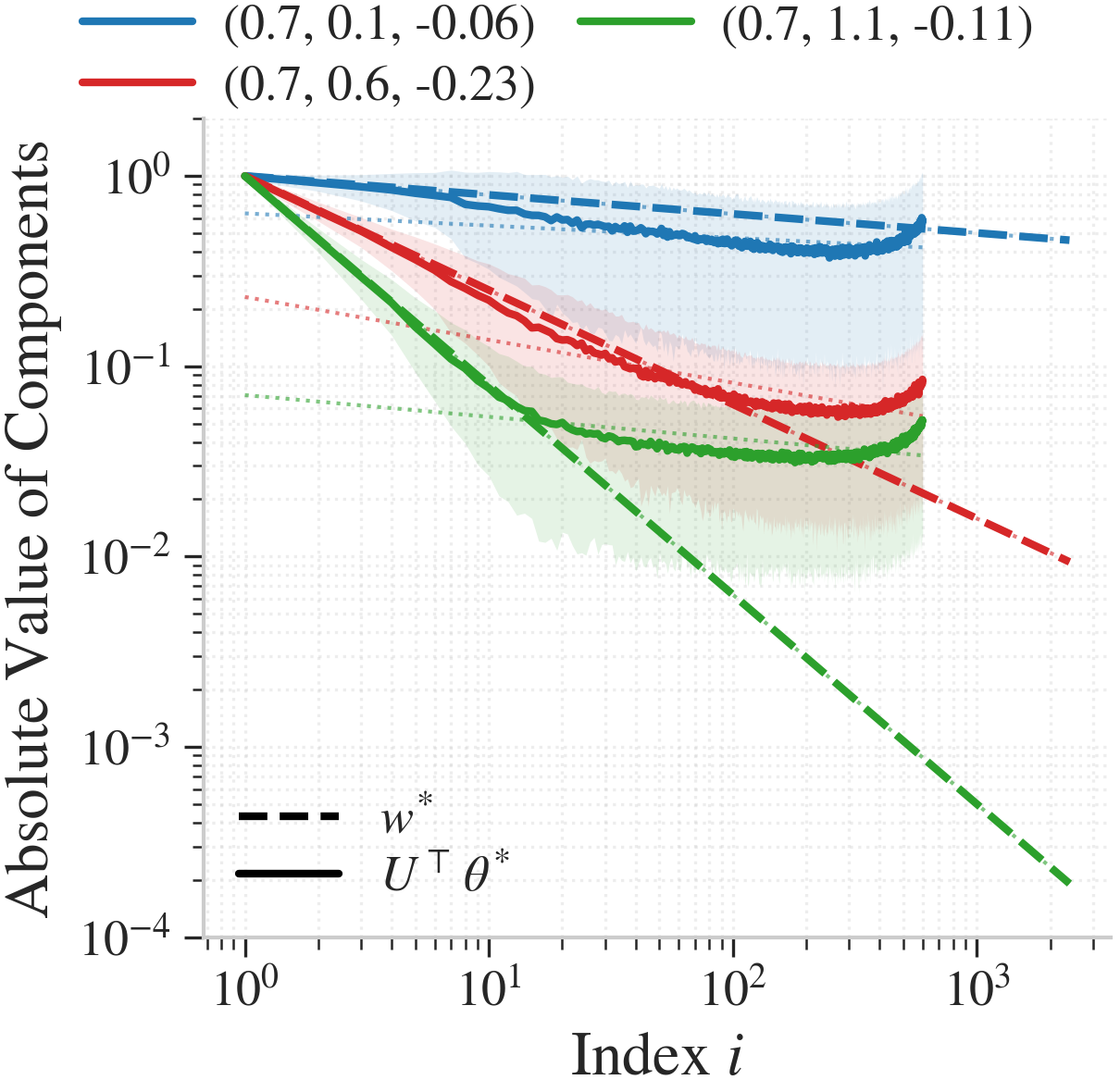}
  \captionsetup{justification=raggedright,singlelinecheck=false} 
  \caption{\textbf{Decay of $\vtheta^*$ in the basis of columns of $\mU$ compared to $\vw^*$.}\;
  The legend on the top shows $(\alpha, \beta, \text{fitted slope of} \ \mU\T \vtheta^*)$.
  }
  \label{fig:signsgd-phase}
\end{wrapfigure}

Figure~\ref{fig:signsgd-phase} empirically validates our intuition for the decay of $\mU\T \vtheta^*$.
For $(\alpha,\beta)=(0.7,1.1)$, $\mU^\top\vtheta^*$ plateaus quickly; for $(0.7,0.6)$ it decays longer; and for $(0.7,0.1)$, since $\vw^*$ hardly decays, the target also shows little decay.

These observations suggest that in the left region (small $\beta$) and the right region ($\beta>\alpha$), the targets decay more slowly than the stochastic gradient, whereas in the middle band ($0.5<\beta<\alpha$) they do not.
This could potentially explain why the signSGD‑beneficial area appears near the left edge and the right side of the phase plane.

\subsection{Conjecture for Adam}
We conjecture that Adam with \(\beta_2\) parameter sufficiently close to \(1\) follows the same scaling law with signSGD, based on the heuristic analysis in Appendix~\ref{adam}.
In detail, we expect Adam to follow the same asymptotic loss formula \eqref{fourterm} with signSGD, and therefore to follow the same compute-optimal scaling law with respect to FLOPS $\frf$ in the Table~\ref{optimtable}. We also conducted an experiment on Adam and checked that the exponents in the Table~\ref{optimtable} and the measured compute-optimal loss exponents and optimal model size exponents for Adam match well (see Figure~\ref{fig:adco}).

\section{Conclusion}
We derived the scaling law of signSGD under the PLRF model and identified two distinctive effects—drift-normalization and noise-reshaping—relative to SGD. 
Analyzing compute-optimal tradeoffs, we showed that signSGD achieves steeper slopes than SGD in the noise-bottleneck regimes, and that the warmup-stable-decay schedule further improves performance in the Area~$\aaa$.
Additionally, in Appendix~\ref{adam}, we analyze Adam using the heuristic of \citet{xiao2024exact} and observe the same scaling law as signSGD.
Deriving Adam’s scaling law without heuristic assumptions is a compelling direction.
We defer the discussion of limitations and additional future work to Appendix~\ref{lim}.

\newpage

\section*{Acknowledgments}

Jihwan Kim thanks Junghyun Lee for helpful discussions and insightful feedback.
This work was supported in part by an Institute of Information \& communications Technology Planning \& Evaluation (IITP) grant (No.\ RS-2024-00457882, National AI Research Lab Project) funded by the Korean government (MSIT) and the InnoCORE program of the Ministry of Science and ICT (No.\ N10250156).

\bibliography{iclr2026_conference}
\bibliographystyle{iclr2026_conference}

\newpage
\appendix
\begin{center}
    \Large\sc
    Supplementary Materials for\\
    ``Scaling Laws of SignSGD in Linear Regression:\\When Does It Outperform SGD?''
\end{center}
\vspace{12pt}

\section*{Usage of LLM}
We primarily used LLMs to polish the English writing throughout the paper.
They were also employed to help us identify additional related work beyond those we were already familiar with.
When preparing well-formatted tables, we relied on LLMs for assistance.
We also used LLMs to refine LaTeX code so that complicated formulas appeared clean and readable in the manuscript.
Finally, we sought LLM support for debugging code used in our experiments.

\section*{Overview of Appendix}
\begin{enumerate}[label=(\arabic*), leftmargin=*, itemsep=2pt, topsep=4pt]
\item In Appendix~\ref{lim} we discuss limitations and future work.
  \item In Appendix~\ref{adderl} we discuss more related works beyond those discussed in Section~\ref{rel}, and provide a detailed comparison with closely related works.
  \item In Appendix~\ref{expinapp} we present experimental results which support our theory.
  \item In Appendix~\ref{gencov}, we prove that the general setting with a feature covariance $H$ can be reduced to the diagonal covariance case without loss of generality.
  \item In Appendix~\ref{decon} we derive the scaling law formula \eqref{fourterm} of $R(M,N,\gamma_0)$ under constant learning rate. We first derive a one-step update formula and convert it to an ODE to get an integral equation. We use a deterministic approximation for the integral equation with experimental results.
  Then we set a proxy of the loss function and verify that it satisfies the integral equation.
    \item In Appendix~\ref{comop} we discuss the maximal learning rate deferred from the main text, and derive the optimal learning rate, compute-optimal loss, and optimal model size in Table~\ref{optimtable}.
  \item In Appendix~\ref{derivsd} we derive the result for the warmup-stable-decay learning rate in Section~\ref{sdsch}.
  \item In Appendix~\ref{lincos}, we provide an analysis for the linear decaying scheduling and the cosine scheduling.
    \item In Appendix~\ref{hypanal} we provide analysis for stochastic gradient decay deferred from Section~\ref{hyppp}.
  \item In Appendix~\ref{adam} we derive scaling law of Adam under heuristic proposed by \citet{xiao2024exact}, and verify our results with experiment.
  \item In Appendix~\ref{omanal} we provide omitted analysis from Appendix~\ref{decon}.
  \item In Appendix~\ref{noiserate}, we provide an analysis for the case with noisy labels.

\end{enumerate}

\section{Limitation and Future Work}
\label{lim}

\paragraph{Limitation.}
Our analysis assumes batch size $1$ and focuses on the PLRF setting; we leave theoretical extensions to minibatches for future work and provide only empirical evidence in Section~\ref{minitest}.

We also use a deterministic approximation whose accuracy we verify empirically; tightening constants and extending the formal guarantees are left for future work.

\paragraph{Future Work.}

Combining signSGD with dimension‑adapted acceleration \citep{ferbach2025dimension} and extending the framework to more complex architectures (e.g., two‑layer linear networks or self‑attention) are promising avenues.

\section{Additional Related Work}
\label{adderl}

\paragraph{More Related Works on Empirical Scaling Laws.}
\citet{porian2024resolving} resolve discrepancy between \citet{kaplan2020scaling} and \citet{hoffmann2022training}.
\citet{kumar2024scaling} investigate precision-aware scaling law.

\paragraph{More Related Works on Scaling Law Theory.}
There are lines of work analyzing more complex models compared to the power-law random features (PLRF) model.
\citet{bordelon2025feature} investigate the scaling law of a two-layer linear neural network with projected gradient descent, and argue the benefit compared to the PLRF model, which is one-layer.
\citet{ding2025scaling} cover the scaling law of quadratically parameterized linear regression with SGD.
\citet{lyusolvable} cover the scaling law of linear self-attention under gradient flow.

\citet{sharma2020neural} show that test loss scales as a power-law of model size in regression problems.
 \citet{hutter2021learning} investigates binary classification using a tabulation learning algorithm, deriving a power-law scaling with respect to dataset size.
\citet{bahri2024explaining} analyze a linear random features model with SGD, showing a power-law decay in test loss with respect to sample size (or model size, when the other is infinite).
\citet{bordelon2024dynamical} derive a power law over model size, dataset size, and time for the linear random features model under gradient flow dynamics.

\paragraph{More Related Works about signSGD and sign descent.}
\citet{balles2020geometry} investigate the geometry of sign gradient descent.
\citet{kunstner2023noise} discover that sign descent could be the key factor making the gap between SGD and Adam on Transformers.
\citet{bernstein2018signsgd2} propose signSGD with majority vote, which is communication efficient and fault-tolerant.
\citet{karimireddy2019error} prove that error-feedback can make the rate of convergence of signSGD better.

\subsection{Comparison with \citet{kunstner2025scaling}}
\label{kuncomp}
First, their work compares the scaling laws of sign descent and gradient descent, whereas our work compares the scaling laws of signSGD and SGD.
Second, they analyze a Linear Bigram Model, while we analyze for the power-law random features (PLRF) model.
The advantage of the PLRF model is that it models two parameters each for feature vector decay and target decay, while the Linear Bigram Model has one parameter for data frequency decay.
Lastly, they derived a scaling law where the model size goes to infinity; in contrast, our scaling law covers both finite model size and infinite limit by representing the loss as a function of model size, number of steps, and learning rate.
This makes it possible for us to analyze the compute-optimal scaling law.

\subsection{Comparison with \citet{xiao2024exact}}
\label{xiaocomp}

ODE for signSGD in \citet{xiao2024exact} is equivalent to the ODE that occurs during our analysis.
The reason that we were not able to directly use their ODE is that they derived it under the spectrum lower bound assumption for the covariance matrix.
In our case, the spectrum of the covariance matrix $\mS\mH\mS\T$ decays asymptotically as $i^{-2\alpha}$, so their assumption does not hold for our setup.
So we re-derived the ODE without the spectrum lower bound assumption.
Due to the spectrum lower bound assumption, they led to an exponential decay to limit risk, which is completely different from the polynomial neural scaling law derived from our paper.
They discussed the noise-reshaping effect on the level of SDE. In contrast, we observed noise reshaping on the level of scaling law and investigated its effect on compute-optimal scaling.

\subsection{Comparison with the Works in the Context of Kernel Methods}
\label{kercomp}

\citet{yao2007early} study deterministic Gradient Descent and SGD under the reproducing kernel Hilbert space (RKHS) model.
Their setup captures the infinite-dimensional case, while our paper handles model size $M$ as a tunable parameter to achieve optimal risk.
They analyze the Early Stopping and that concept is closely related to the number of optimal steps $N = \frf/M^{\star}$ under fixed compute in our paper. 
Both imply that stopping the algorithm before the convergence can be helpful.
The strength of our paper compared to theirs is that we provide an asymptotic loss function with model size and number of steps (which is the same as sample count in one-pass setting), while they provide an upper bound of loss by a polynomial of the sample count.
They use the source parameter $r$ and relation $r = (2\alpha+2\beta-1) / (4\alpha)$ was indicated in \citet{paquette4+}.
The authors derive $m^{-(\alpha+\beta-0.5)/(6\alpha+2\beta-1)}$ rate under condition $\alpha+\beta>0.5$, where $m$ is sample count.
Our signSGD rate with respect to $N$ for noisy labels in Section~\ref{noiserate} is better than their rate.
Their strength compared to our paper is that they also cover the classification setting, not only the regression setting. We leave the classification setting as future work.

\citet{ying2008online} study online gradient descent without regularization under the reproducing kernel Hilbert space (RKHS) model.
They represent the expected loss as a function of the number of online steps $T$. They derive loss formula $T^{-(2\alpha+2\beta-1)/(4\alpha+2\beta-1)} \ln T$. Similar to \citet{yao2007early}, our signSGD rate with respect to $N$ for noisy labels in Section~\ref{noiserate} is better than their rate.
Their source parameter $\beta$ is related to the target decay parameter $\beta$ in our paper. Note that they use the same Greek letter but have different meanings.
They focus on the number of online steps $T$, while we handle two variables: model size $M$ and number of steps $N$.
Their paper investigates the universal polynomially decaying step size and constant step size depending on the number of online steps $T$. The first one is similar to the polynomially decaying part of the warmup-stable-decay scheduling. One major difference is that we tune the learning rate based on model size $M$.

\citet{carratino2018learning} study both multiple and single pass SGD under a random feature model with a connection to the RKHS setting.
In their random feature model, non-linearity is included by the continuous map $\psi$, we leave the analysis of signSGD under the nonlinear model for future work.
They provide a bound of risk with high probability, while we focus on the average asymptotic behavior of signSGD.
They handle both model size $M$ and number of iterations $t$, and it is the same as our setting.
Their strength compared to our paper is that they cover minibatching, while we focus on batch size 1. For the signSGD batch size bigger than 1 makes the problem significantly complicated to solve compared to the case of SGD, so we leave minibatching for future work.
Their rate with sample count $n$ is $n^{-(2\alpha+2\beta-1)/(2(\alpha+\beta))}$.
Our signSGD rate with respect to $N$ for noisy labels in Section~\ref{noiserate} is better than their rate for the case $\beta>0$, and theirs is better for the case $\beta<0$.

\citet{berthier2020tight} has a closer setting to our paper. They study linear regression with SGD and assume a noiseless label.
Their upper bound of loss is $n^{-\min((2\alpha+2\beta-1)/(2\alpha), 1-1/(2\alpha))}$ where $n$ is number of samples. 
Later work \citet{paquette4+} has the same exponents for drift terms, as they also use SGD and assume a noiseless label. 

The difference between exponents in \citet{berthier2020tight} and the exponents of the drift term in our work stems from the drift-normalization effect of signSGD.
Also note that our work is different in several other aspects: (i) we consider a model size parameter $M$; (ii) we cover the regime $2\alpha < 1$; (iii) we derive the asymptotic loss formula rather than an upper bound; (iv) we consider the compute-optimal aspect.

\citet{pillaud2018statistical} investigate multi-pass SGD in least-squares regression with bounded label noise. They got a rate $n^{-(2\alpha+2\beta-1)/(2\alpha+2\beta)}$ where $n$ is the number of samples, and it is better than single-pass SGD in the regime $\beta<0$.
Compared to the signSGD rate with respect to $N$ for noisy labels in Section~\ref{noiserate}, our signSGD rate is better when $\beta > 0$ and worse for regime $\beta<0$ than the single-pass SGD.
Investigating multi-pass signSGD for $\beta<0$ will be an interesting future direction.

Much earlier work \citet{caponnetto2007optimal} study kernel ridge regression in the RKHS model. Their rate is $l^{-\frac{2\alpha+2\beta-1}{2\alpha+2\beta}}$ where $l$ is number of samples.
 Their rate is better than our signSGD rate with respect to $N$ for noisy labels in Section~\ref{noiserate} for the case $\beta<0$, and worse for the case $\beta>0$.

Later work \citet{cui2021generalization} also investigate kernel ridge regression in the RKHS model. Different from \citet{caponnetto2007optimal}, they also consider a noiseless target and get a rate of $n^{-(2\alpha+2\beta-1)}$ for that case, where $n$ is the number of samples.
Our noiseless drift exponent $-\frac{2(2\alpha+2\beta-1)}{2\alpha-2\beta+1}$ is better when $\alpha>\beta+0.5$, and worse otherwise.

\citet{rudi2017generalization} consider random-features ridge regression under the RKHS model.
They give a rate of $n^{-(2\alpha+2\beta)/(2\alpha+2\beta+1)}$ where $n$ is the number of samples.
Compared to our signSGD rate with respect to $N$ for noisy labels in Section~\ref{noiserate}, ours is better when $\beta>0$, $\alpha>1/(4\beta)-\beta$ holds, and worse otherwise.

\citet{bach2017equivalence} also considers random-features ridge regression under the RKHS model, and gives a different upper bound rate $n^{-\alpha}$ where $n$ is the number of samples.
Compared to our signSGD rate with respect to $N$ for noisy labels in Section~\ref{noiserate}, ours is better when $\beta>\alpha^2 - \alpha+0.5$, and worse otherwise.

\citet{defilippis2024dimension} derive a deterministic equivalent for random-features ridge regression under the RKHS model.
Their rate is $n^{-(2\beta-1)/(2\beta)}$ for $\beta \le 0.5+2\alpha$ and $n^{-(4\alpha)/(4\alpha+1)}$ for $\beta \ge 0.5+2\alpha$.
Compared to our signSGD rate with respect to $N$ for noisy labels in Section~\ref{noiserate}, ours is better when $\alpha > -2\beta^2+\beta$, $\beta \le 0.5+2\alpha$ or $\beta>(2\alpha+1)/(8\alpha+2)$, $\beta \ge 0.5+2\alpha$ holds, and worse otherwise.

\subsection{Table of Asymptotic forms of Approximation, Drift, and Noise Term for SignSGD and SGD}

We added Table~\ref{termtable} and Table~\ref{termsgd}, which show asymptotic forms of approximation, drift, and noise term for signSGD and SGD, for comparison.

\begin{table}[h]
\caption{Asymptotic forms of approximation, drift, and noise term for signSGD in different $(\alpha,\beta)$ phases. In this table, we provide a formula of approximation, drift, and noise term for 6 subphases.}
\label{termtable}
\centering
\renewcommand{\arraystretch}{1.3}
\setlength{\tabcolsep}{4pt} 
\resizebox{\textwidth}{!}{
\begin{tabular}{clll}
\toprule
\textbf{Phase} & \textbf{Approx} & \textbf{Drift} & \textbf{Noise} \\
\midrule
Phase~Aa &
$M^{-(2\alpha + 2\beta-1)}$ &
$(M^{1/2} N \gamma_0)^{-\tfrac{2(2\alpha + 2\beta-1)}{2\alpha-2\beta+1}}$ &
$\gamma_0^2 M$ \\
\addlinespace
Phase~Ab &
$M^{-(2\alpha + 2\beta-1)}$ &
$(M^{\alpha} N \gamma_0)^{-\tfrac{2(2\alpha + 2\beta-1)}{2\alpha-2\beta+1}}$ &
$\gamma_0^2 M^{2-2\alpha}$ \\
\addlinespace
Phase~Ac &
$M^{-2\alpha}$ &
$(M^{\alpha} N \gamma_0)^{-\tfrac{2(2\alpha + 2\beta-1)}{2\alpha-2\beta+1}}$ &
$\gamma_0^2 M^{2-2\alpha}$ \\
\addlinespace
Phase~Ad &
$M^{-2\alpha}$ &
$( \max( 1 - M^{\alpha} N \gamma_0, 0) )^{\tfrac{2(2\alpha + 2\beta-1)}{-2\alpha+2\beta-1}}$ &
$\gamma_0^2 M^{2-2\alpha}$ \\
\addlinespace
Phase~Ba &
$M^{-2\alpha}$ &
$(M^{1/2} N \gamma_0)^{-\tfrac{2(2\alpha + 2\beta-1)}{2\alpha-2\beta+1}}
\;+\;
M^{-\tfrac{6\alpha-1}{2\alpha+1}} (N \gamma_0)^{-\tfrac{2(2\alpha-1)}{2\alpha+1}}$ &
$\gamma_0^2 M$ \\
\addlinespace
Phase~Bb &
$M^{-2\alpha}$ &
$( \max(1- M^{1/2} N \gamma_0, 0))^{\tfrac{2(2\alpha + 2\beta-1)}{-2\alpha+2\beta-1}}
\;+\;
M^{-\tfrac{6\alpha-1}{2\alpha+1}} (N \gamma_0)^{-\tfrac{2(2\alpha-1)}{2\alpha+1}}$ &
$\gamma_0^2 M$ \\
\bottomrule
\end{tabular}
}
\end{table}

\begin{table}[h]
\caption{Asymptotic forms of approximation, drift, and noise term for SGD in different $(\alpha,\beta)$ phases. In this table, we provide a formula of approximation, drift, and noise term for 6 subphases.}
\label{termsgd}
\centering
\renewcommand{\arraystretch}{1.3}
\setlength{\tabcolsep}{4pt} 

\begin{tabular}{clll}
\toprule
\textbf{Phase} & \textbf{Approx} & \textbf{Drift} & \textbf{Noise} \\
\midrule
Phase~\RomanNum{1}a &
$M^{-(2\alpha + 2\beta-1)}$ &
$(N \gamma_0)^{-\tfrac{2\alpha + 2\beta-1}{2\alpha}}$ &
$\gamma_0 (N \gamma_0)^{-\tfrac{4\alpha -1}{2\alpha}} $ \\
\addlinespace
Phase~\RomanNum{1}b &
$M^{-(2\alpha + 2\beta-1)}$ &
$(N \gamma_0)^{-\tfrac{2\alpha + 2\beta-1}{2\alpha}}$ &
$\gamma_0 (N \gamma_0)^{-\tfrac{4\alpha -1}{2\alpha}} $ \\
\addlinespace
Phase~\RomanNum{1}c &
$M^{-2\alpha}$ &
$(N \gamma_0)^{-\tfrac{2\alpha + 2\beta-1}{2\alpha}}$ &
$\gamma_0 (N \gamma_0)^{-\tfrac{4\alpha -1}{2\alpha}} $ \\
\addlinespace
Phase~\RomanNum{2} &
$M^{-2\alpha}$ &
$(N \gamma_0)^{-\tfrac{2\alpha + 2\beta-1}{2\alpha}}
\;+\;
M^{-1} (N \gamma_0)^{-\tfrac{2\alpha -1}{2\alpha}} $ &
$\gamma_0 (N \gamma_0)^{-\tfrac{4\alpha -1}{2\alpha}} $ \\
\addlinespace
Phase~\RomanNum{3} &
$M^{-2\alpha}$ &
$(N \gamma_0)^{-\tfrac{2\alpha + 2\beta-1}{2\alpha}}
\;+\;
M^{-1} (N \gamma_0)^{-\tfrac{2\alpha -1}{2\alpha}} $ &
$\gamma_0 (N \gamma_0)^{-\tfrac{4\alpha -1}{2\alpha}} $ \\
\addlinespace
Phase~\RomanNum{4} &
$M^{-2\alpha}$ &
$(N \gamma_0)^{-\tfrac{2\alpha + 2\beta-1}{2\alpha}}$ &
$\gamma_0 (N \gamma_0)^{-\tfrac{4\alpha -1}{2\alpha}} $ \\
\bottomrule
\end{tabular}

\end{table}

\subsection{Additional Phase Plane Plots to Compare with Prior Work}

Figure~\ref{sgdcplane} indicates the area where signSGD has a steeper compute-optimal slope compared to SGD, by coloring it with Mint green.
It lies in Phase~Ac, Ad, Ba, Bb, and covers all areas of Phase~Bb.
In terms of the SGD Phase, it covers all areas of Phase~\RomanNum{3} and most of the areas of Phase~\RomanNum{4}.

Figure~\ref{danacplane} indicates the area where signSGD has a steeper compute-optimal slope compared to DANA-decaying in \citet{ferbach2025dimension}, by coloring it with Lime green.
It lies in Phase~Ac, Ad, Ba, Bb. It is smaller than the Mint green area, and this is natural, since DANA-decaying in \citet{ferbach2025dimension} has a steeper slope compared to SGD.

\begin{figure}[t]
    \centering
    \begin{subfigure}{0.51\textwidth}
        \centering
        \includegraphics[width=\textwidth]{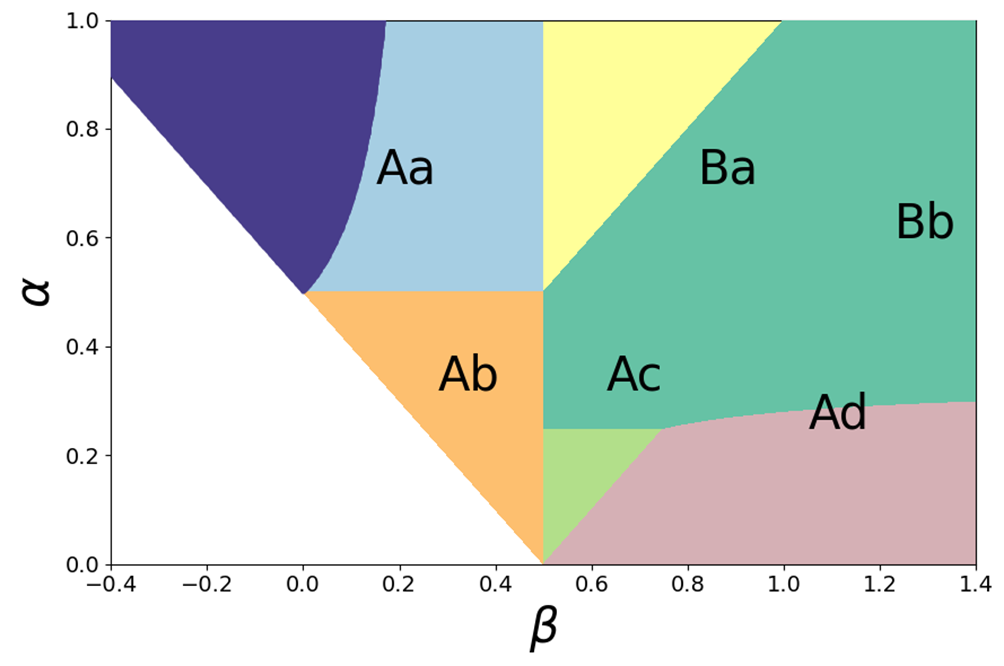}
        \label{fig:plot1}
    \end{subfigure}
    \hfill
    \begin{subfigure}{0.45\textwidth}
        \centering
        \includegraphics[width=\textwidth]{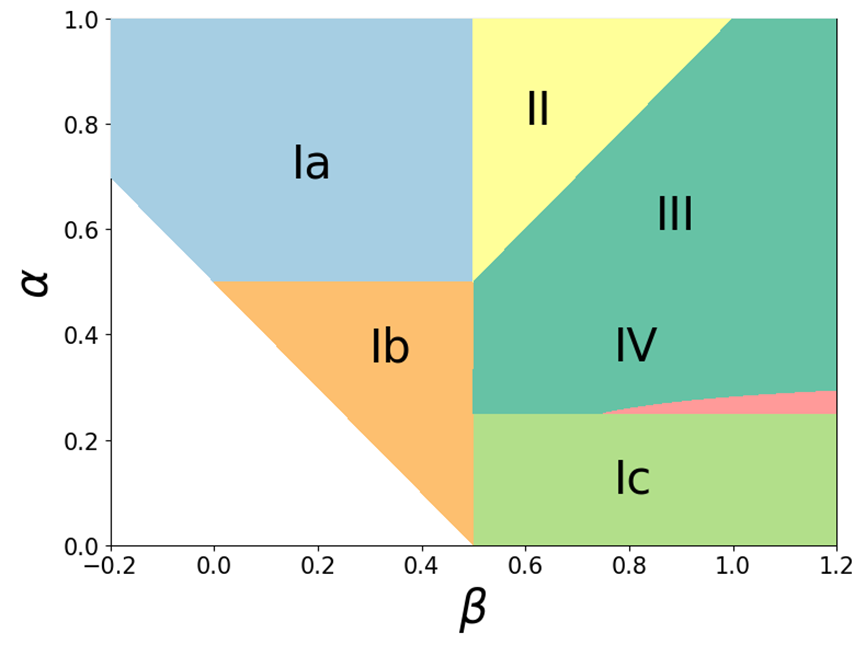}
        \label{fig:plot2}
    \end{subfigure}
    \caption{\textbf{Phase planes to compare signSGD and SGD.}
    Mint green area covering all of Phase~Bb and \RomanNum{3}, and some part of Phase~Ac, Ad, Ba, \RomanNum{4} is the area where signSGD has a steeper compute-optimal slope compared to SGD.
    The left side is the signSGD phase plane, and the right side is the SGD phase plane. We placed the Mint green area for both of them for clarity. We will call this Mint green area as Area~$\tfs$.
    }
    \label{sgdcplane}
\end{figure}

\begin{figure}[t]
    \centering
    \begin{subfigure}{0.51\textwidth}
        \centering
        \includegraphics[width=\textwidth]{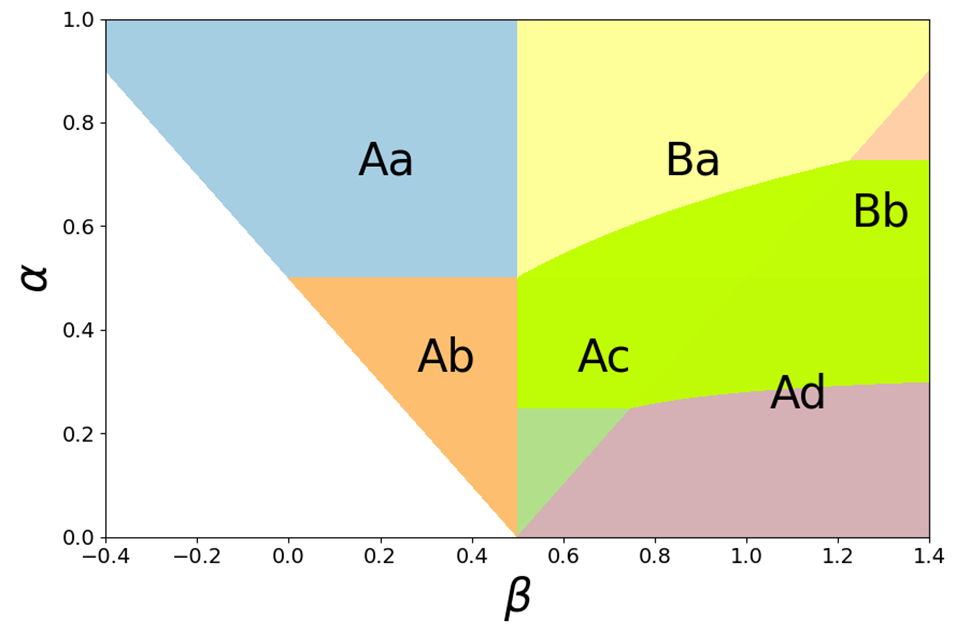}
        \label{fig:plot1}
    \end{subfigure}
    \hfill
    \caption{\textbf{Phase plane to compare signSGD and DANA-decaying in \citet{ferbach2025dimension}.}
    Lime green area covering some part of Phase~Ac, Ad, Ba, Bb is the area where signSGD has a steeper compute-optimal slope compared to DANA-decaying in \citet{ferbach2025dimension}.
    }
    \label{danacplane}
\end{figure}

\FloatBarrier

\newpage

\section{Experiments}
\label{expinapp}

\subsection{Explanation for Figure~\ref{fig:slope}.}

\textbf{Parameters.}
Left parameters: $(\alpha, \beta)=(0.4,0.8)$, $\gamma_0 = 0.006$, $e^*=1.0$ for signSGD, $e^*=0.4571$ for SGD, 20 runs.
Right parameters: $(\alpha, \beta)=(1.0,0)$, $\gamma_0 = 0.002$ for both, $e^*=1.0$ for constant, $e^*=0.833$, $c=0.091$, $w=0.05$, $p=0.9$, $\tau=1$ for warmup-stable-decay, 10 runs.

\textbf{Takeaways.}
In Figure~\ref{fig:slope}, the left panel demonstrates the steeper compute-optimal slope of signSGD for $(\alpha, \beta)=(0.4,0.8)$ in the area of Phase~Ac.  
The right panel shows the increase in compute-optimal slope achieved by warmup-stable-decay scheduling for $(\alpha, \beta)=(1.0,0)$.
The theoretical and experimental compute-optimal slopes agree within errors of $0.04$ (left) and $0.01$ (right), which are well within the error margins reported in prior works.

Additionally, Figure~\ref{p34} demonstrates the steeper compute-optimal slope of signSGD for $(\alpha, \beta)=(0.4,1.0)$ in the Phase~Ad and $(\alpha, \beta)=(0.7,1.1)$ in Phase~Ba.

\begin{figure}[h]
    \centering
    \begin{subfigure}{0.45\textwidth}
        \centering
        \includegraphics[width=\textwidth]{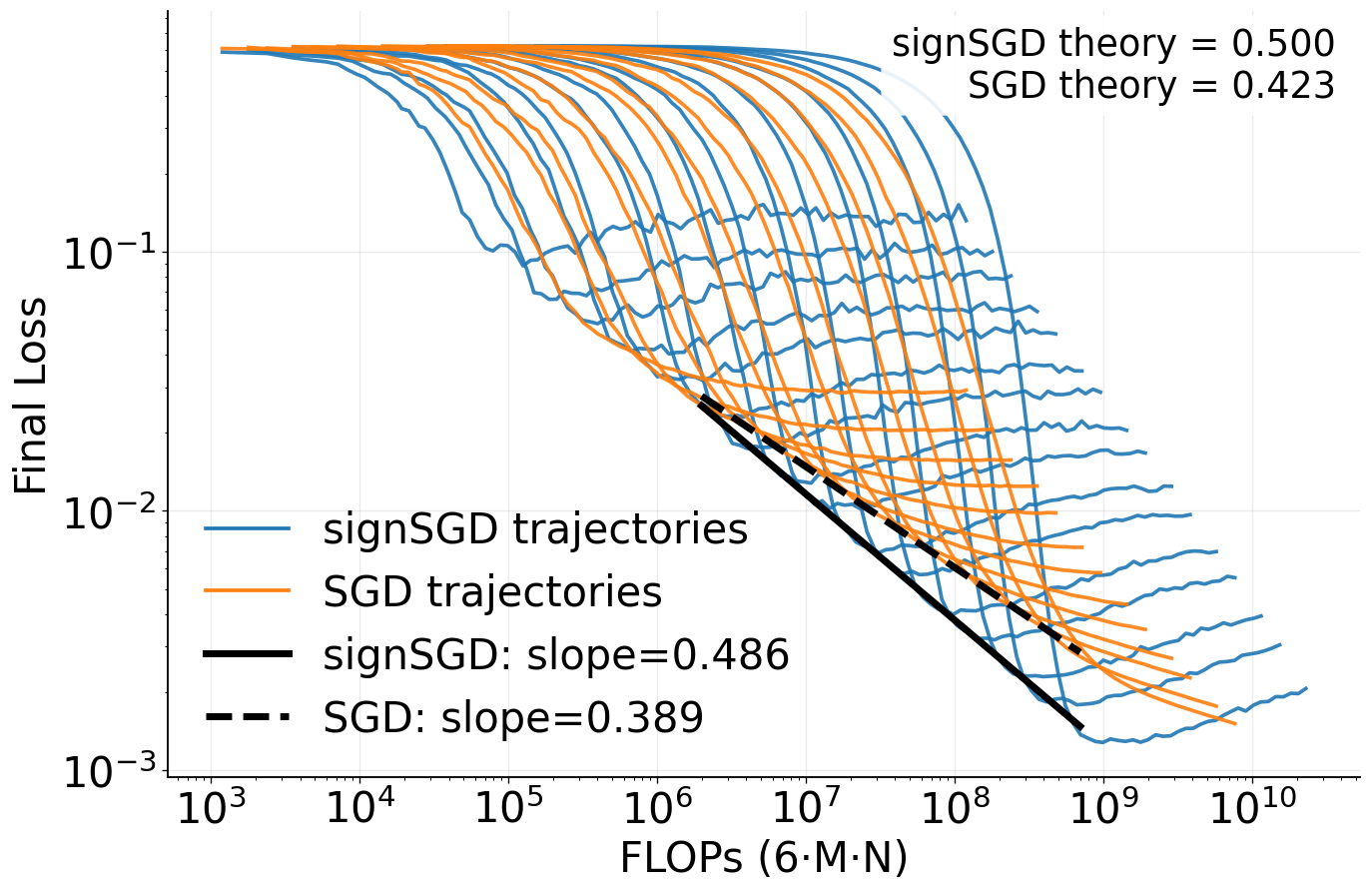}
        \label{fig:phase4}
    \end{subfigure}
    \hfill
    \begin{subfigure}{0.45\textwidth}
        \centering
        \includegraphics[width=\textwidth]{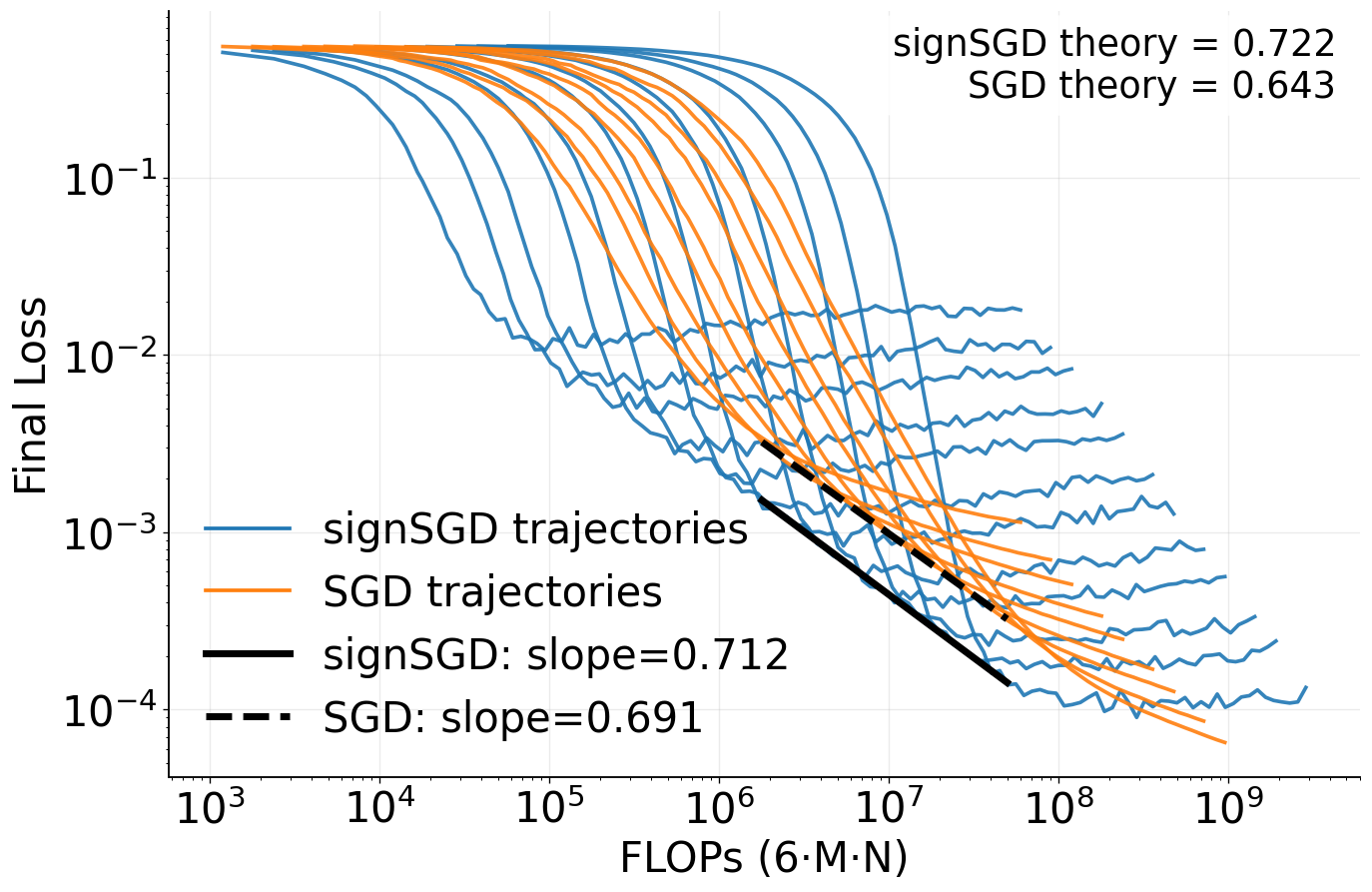}
        \label{fig:sd}
    \end{subfigure}
\caption{\textbf{comparison of SGD and signSGD on Compute-Optimal Scaling.}
Colored lines represent the training trajectories of each algorithm, while black lines denote the compute-optimal curves.
In both panels, the theoretical compute-optimal predictions closely follow the observed scaling.
Both plots show that signSGD has a steeper compute-optimal slope than SGD.
Left parameters: $(\alpha, \beta)=(0.4,1.0)$, $\gamma_0 = 0.01$, $e^*=1.0$ for signSGD, $e^*=0.533$ for SGD, 5 runs.
Right parameters: $(\alpha, \beta)=(0.7,1.1)$, $\gamma_0 = 0.01$, $e^*=1.09$ for signSGD, $e^*=0$ for SGD, 20 runs.}
    \label{p34}
\end{figure}

\subsection{Experiment for Aligned Drift}

In Figure~\ref{fig:align}, we examine the exponent of the $\Dal$ term, 
\[\bigl(M^{\min(\alpha,0.5)}\,\gamma_0\,N\bigr)^{-\tfrac{2(2\alpha + 2\beta-1)}{2\alpha - 2\beta + 1}},\] 
of signSGD.
For the Phase~Aa, the $\Dal$ term dominates in the early iterations over a sufficient interval, allowing us to evaluate the exponent by line fitting on a log-log plot.
The experimental results align well with the theoretical formula $-\tfrac{2(2\alpha + 2\beta-1)}{2\alpha - 2\beta + 1}$.

\begin{figure}[t]
    \centering
    \begin{subfigure}{0.49\textwidth}
        \centering
        \includegraphics[width=\textwidth]{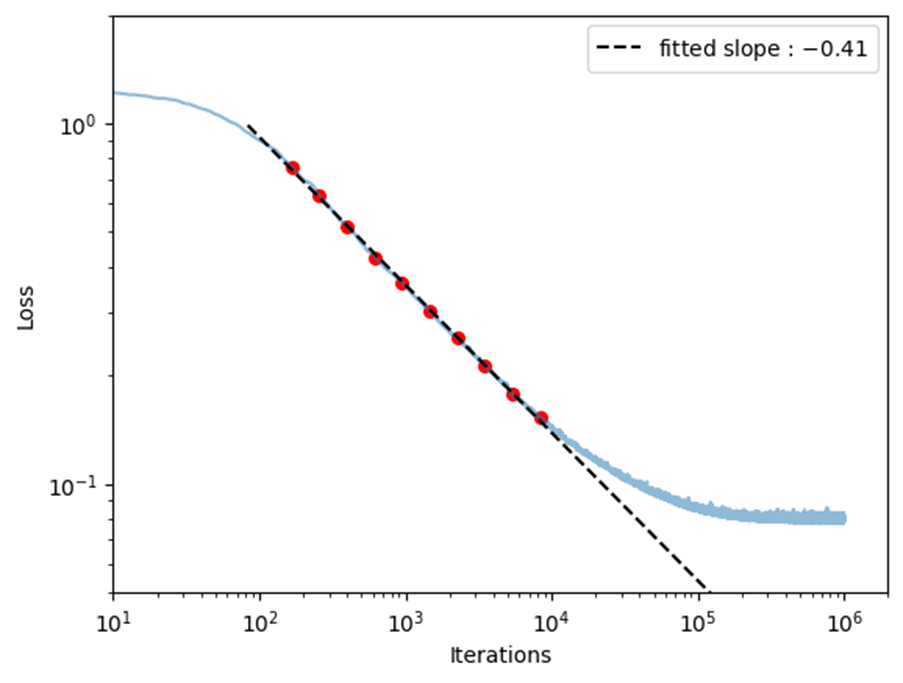}
        \label{fig:plot1}
    \end{subfigure}
    \hfill
    \begin{subfigure}{0.49\textwidth}
        \centering
        \includegraphics[width=\textwidth]{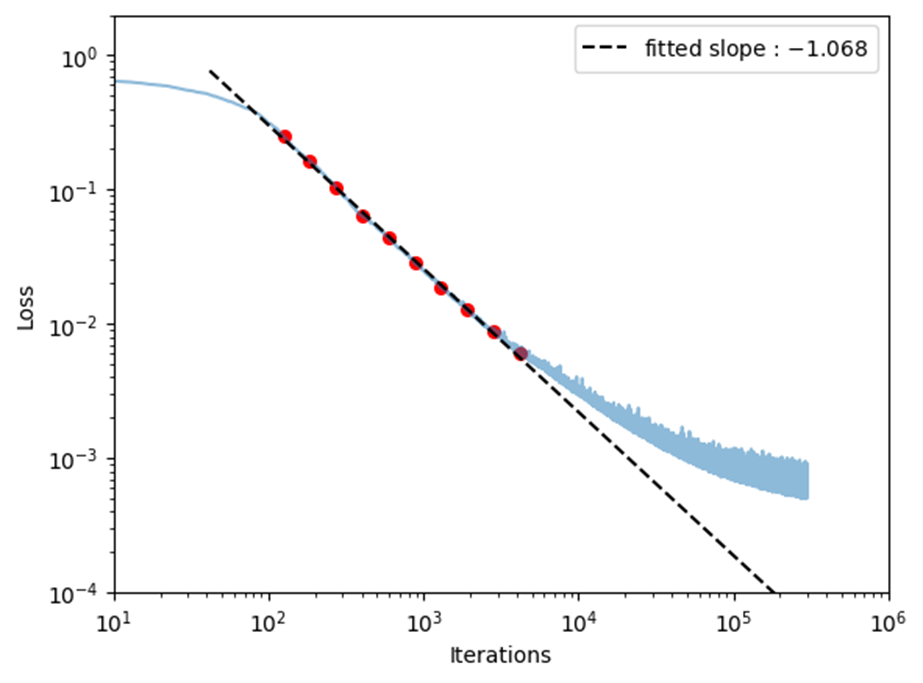}
        \label{fig:plot2}
    \end{subfigure}
    \caption{\textbf{$\Dal$ term exponent.}
    Blue curves: true signSGD trajectories. Black dotted curves: linear fits over the early-iteration interval in log-log scale.  
    Left: parameters $(\alpha, \beta) = (0.75,0)$, $\gamma_0=0.0012$, $f(z)=1$, $M=200$, $d=400$. The theoretical exponent is $-2(2\alpha + 2\beta-1)/(2\alpha - 2\beta + 1) = -0.4$, which matches the experiment.  
    Right: parameters $(\alpha, \beta) = (1.0,0.2)$, $\gamma_0=0.0006$, $f(z)=1$, $M=400$, $d=1600$. The theoretical exponent is $-2(2\alpha + 2\beta-1)/(2\alpha - 2\beta + 1) = -1.077$, again consistent with the experiment.}
    \label{fig:align}
\end{figure}

\subsection{Validation of the Table~\ref{optimtable}}

Figures~\ref{fig:co1} through \ref{fig:co5} validates the exponent in Table~\ref{optimtable} for various $(\alpha, \beta)$.
On the left plots, we draw multiple curves with different model size $M$ while setting the learning rate as $\gamma_0 = M^{-e^*}$.
Then the lower envelope becomes the compute-optimal curve, and by measuring the slope in a log-log plot, we can validate the compute-optimal loss exponent in the Table~\ref{optimtable}.
On the right plots, we draw the optimal model size at each FLOPS. Here, the optimal model size is the model size of the curve that meets the lower envelope at that FLOPS.
By measuring the slope in a log-log plot, we can validate the optimal model size exponent in the Table~\ref{optimtable}.
Note that we use a similar experimental setting to \citet{paquette4+}.
In most cases, the error between the measured exponent and the theoretical exponent was less than $0.04$, and the error was less than $0.06$ even for the worst case.
This error lies within the error margins reported in prior works \citep{paquette4+, ferbach2025dimension}.

\begin{figure}[t]
    \centering

    \begin{minipage}{0.45\textwidth}\centering
        \includegraphics[width=\linewidth]{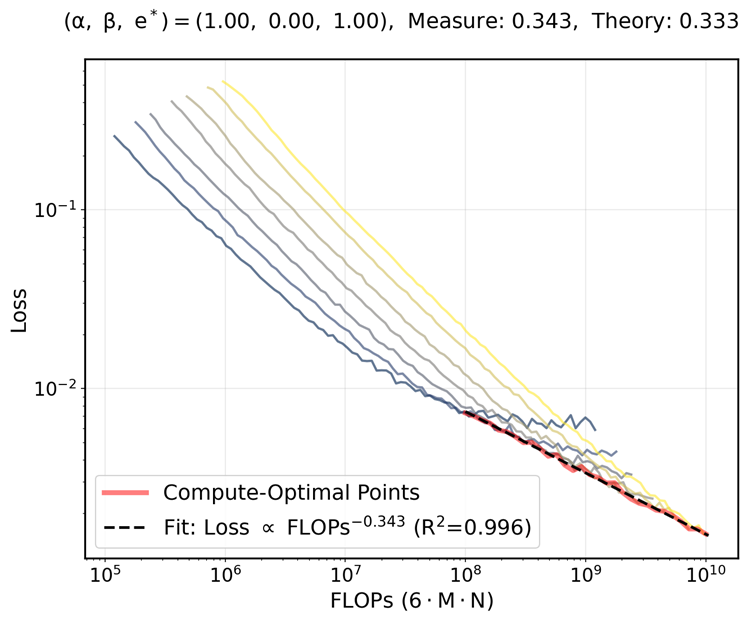}
    \end{minipage}\hfill
    \begin{minipage}{0.45\textwidth}\centering
        \includegraphics[width=\linewidth]{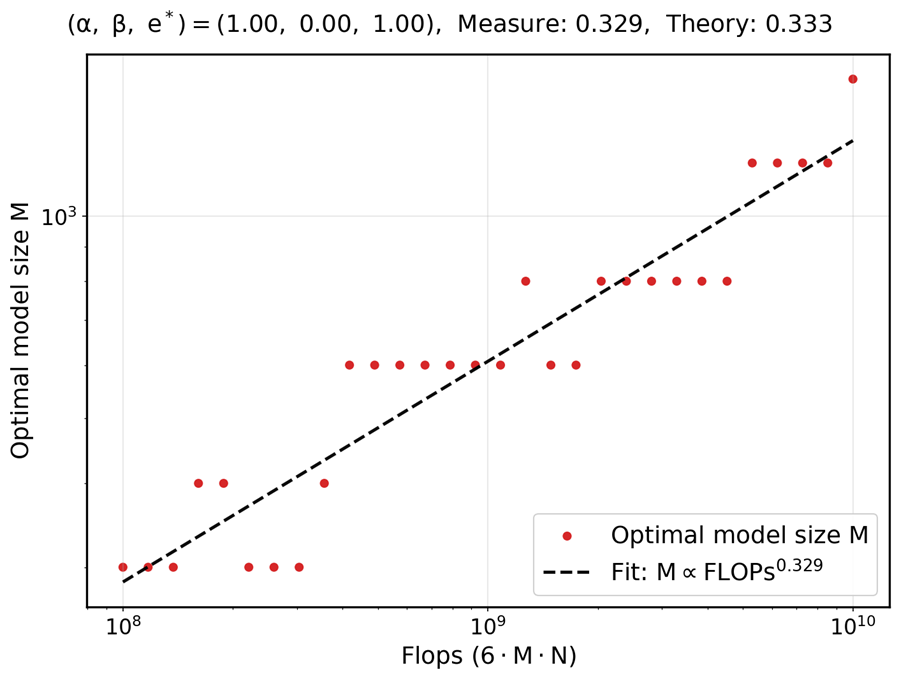}
    \end{minipage}\\[0.6em]

    \begin{minipage}{0.45\textwidth}\centering
        \includegraphics[width=\linewidth]{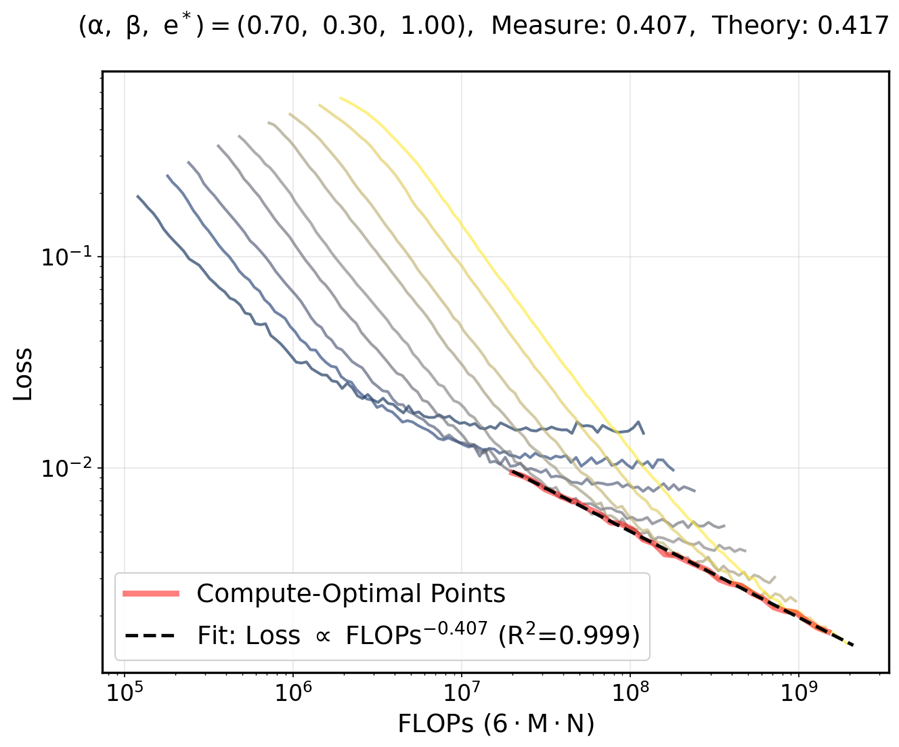}
    \end{minipage}\hfill
    \begin{minipage}{0.45\textwidth}\centering
        \includegraphics[width=\linewidth]{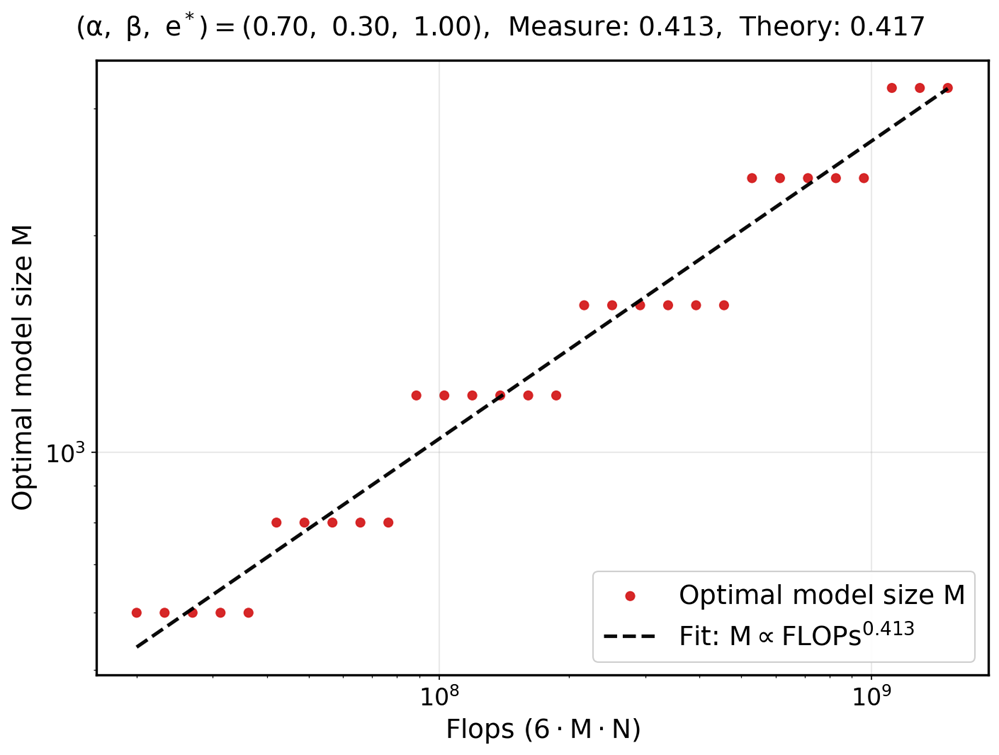}
    \end{minipage}\\[0.6em]

    \begin{minipage}{0.45\textwidth}\centering
        \includegraphics[width=\linewidth]{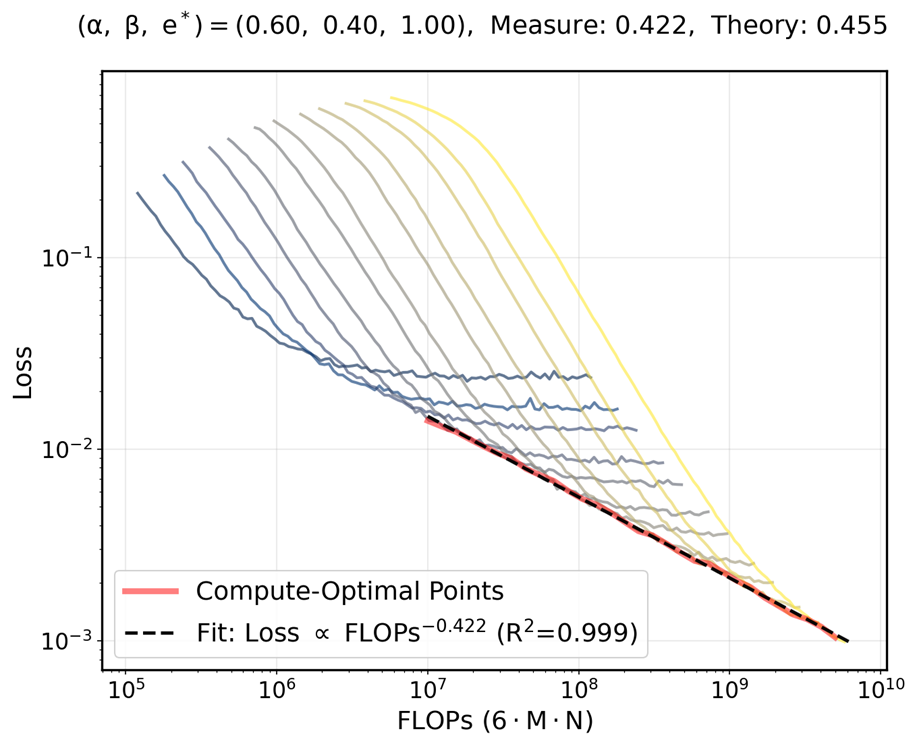}
    \end{minipage}\hfill
    \begin{minipage}{0.45\textwidth}\centering
        \includegraphics[width=\linewidth]{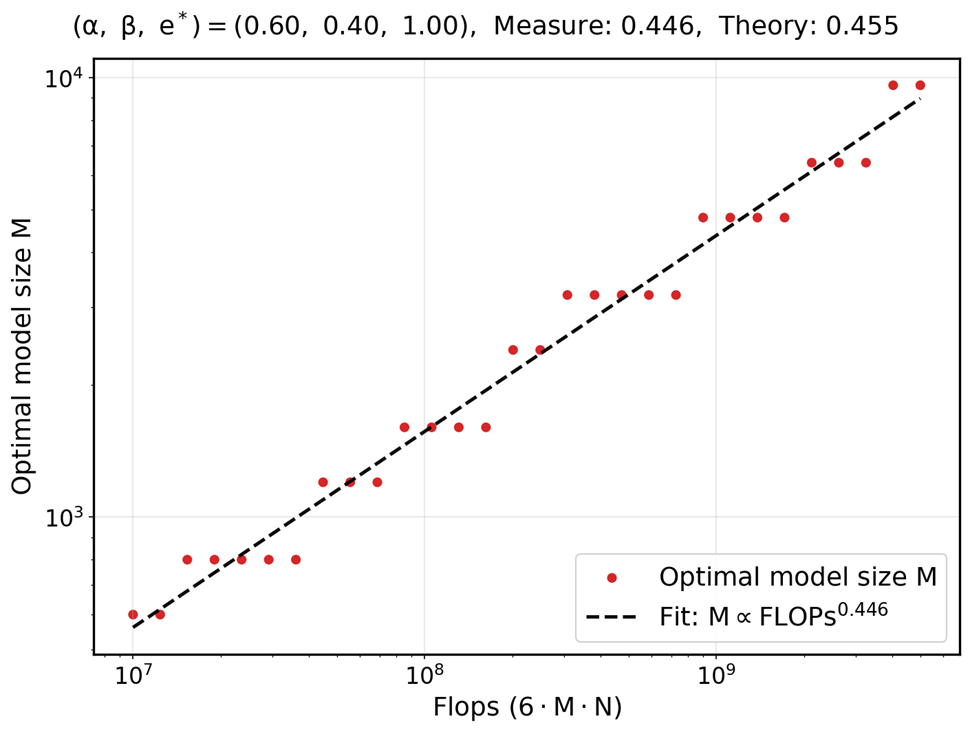}
    \end{minipage}\\[0.6em]

    \begin{minipage}{0.45\textwidth}\centering
        \includegraphics[width=\linewidth]{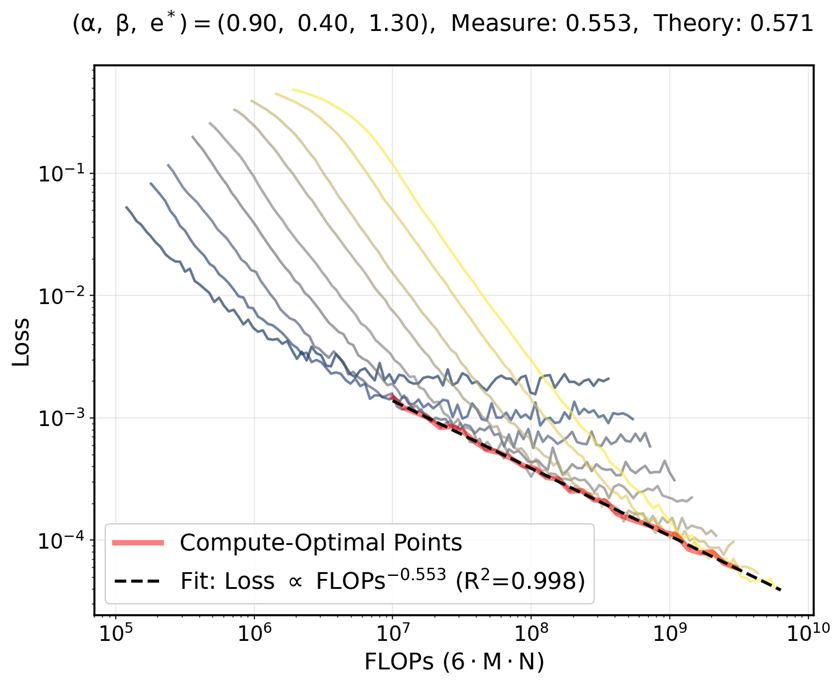}
    \end{minipage}\hfill
    \begin{minipage}{0.45\textwidth}\centering
        \includegraphics[width=\linewidth]{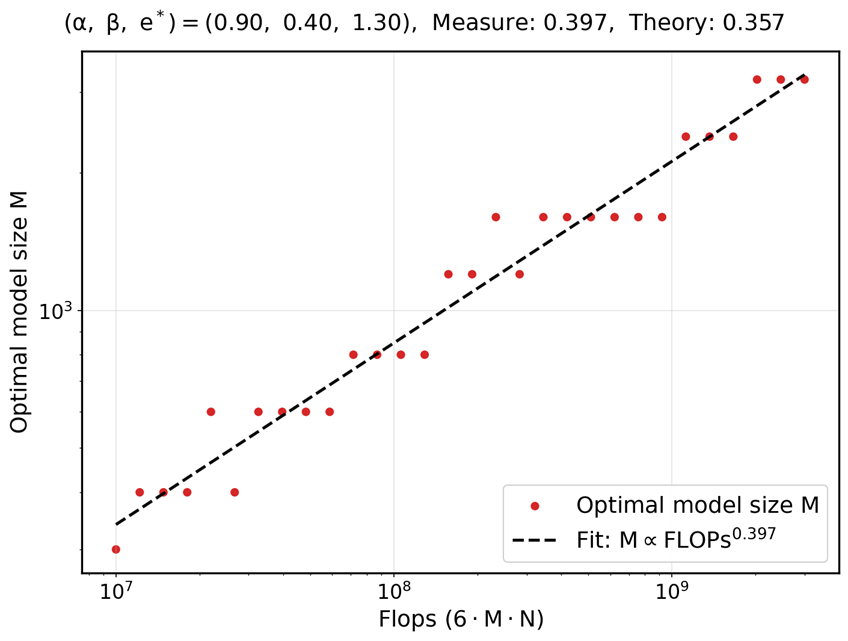}
    \end{minipage}\\[0.6em]

    \caption{\textbf{Measure of compute-optimal loss slope and optimal model size slope.} 
    We validate the exponent of $R\left(M^{\star}, \frac{\frf}{M^{\star}}, \gamma_0^{\star}\right)$ and $M^{\star}$ with respect to $\frf$ in the Table~\ref{optimtable}.
    The left plot shows the compute-optimal loss with respect to FLOPS $6MN$.
    The right plot shows the optimal model size with respect to FLOPS $6MN$.
    Each plot includes the measured slope and the theoretical slope from the Table~\ref{optimtable}.
    }
    \label{fig:co1}
\end{figure}

\begin{figure}[t]
    \centering

    \begin{minipage}{0.45\textwidth}\centering
        \includegraphics[width=\linewidth]{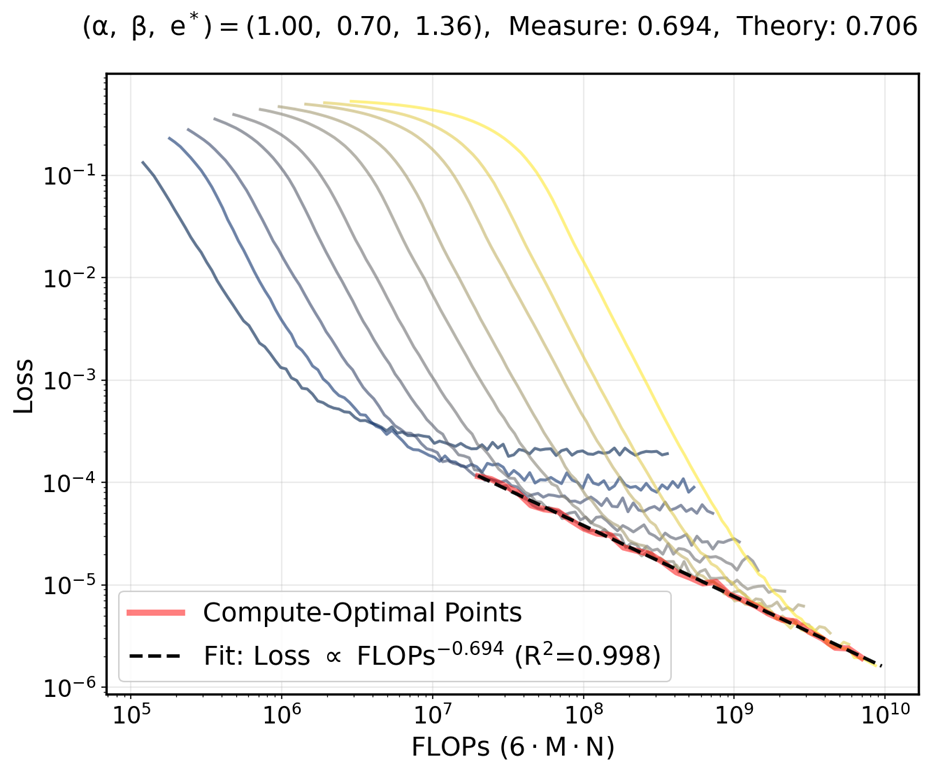}
    \end{minipage}\hfill
    \begin{minipage}{0.45\textwidth}\centering
        \includegraphics[width=\linewidth]{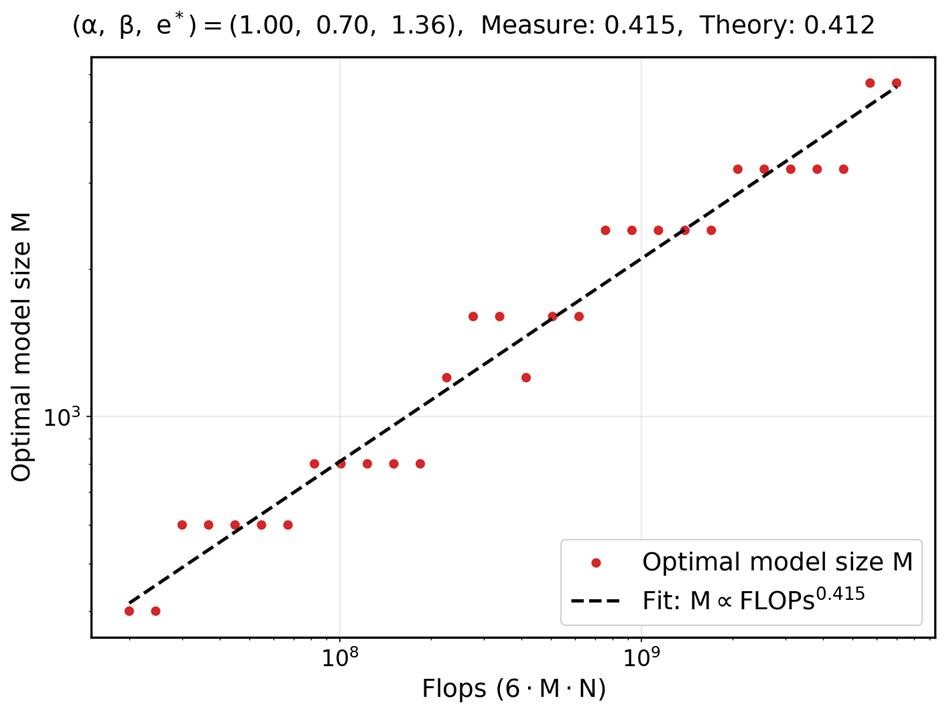}
    \end{minipage}\\[0.6em]

    \begin{minipage}{0.45\textwidth}\centering
        \includegraphics[width=\linewidth]{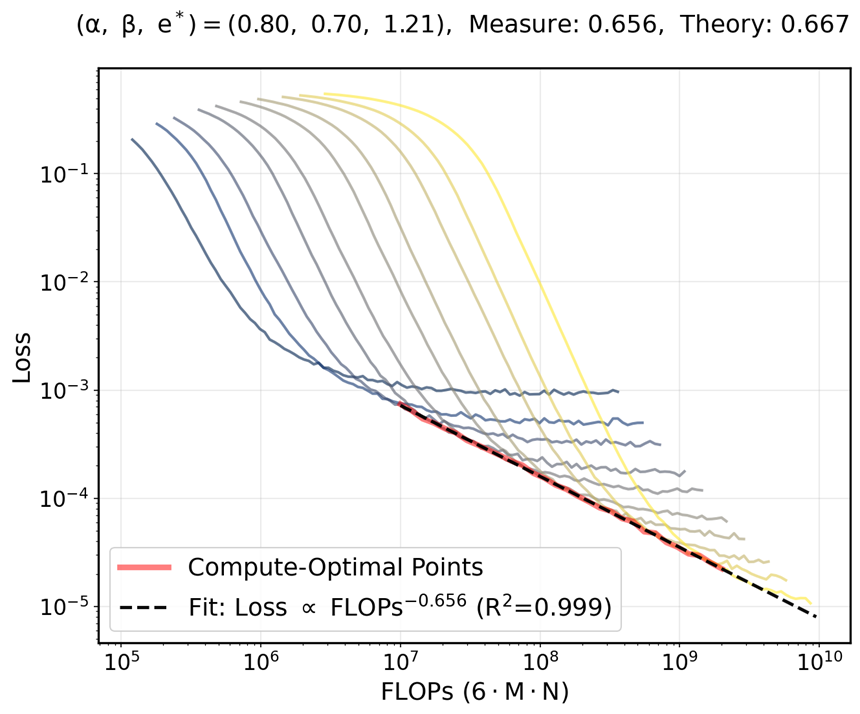}
    \end{minipage}\hfill
    \begin{minipage}{0.45\textwidth}\centering
        \includegraphics[width=\linewidth]{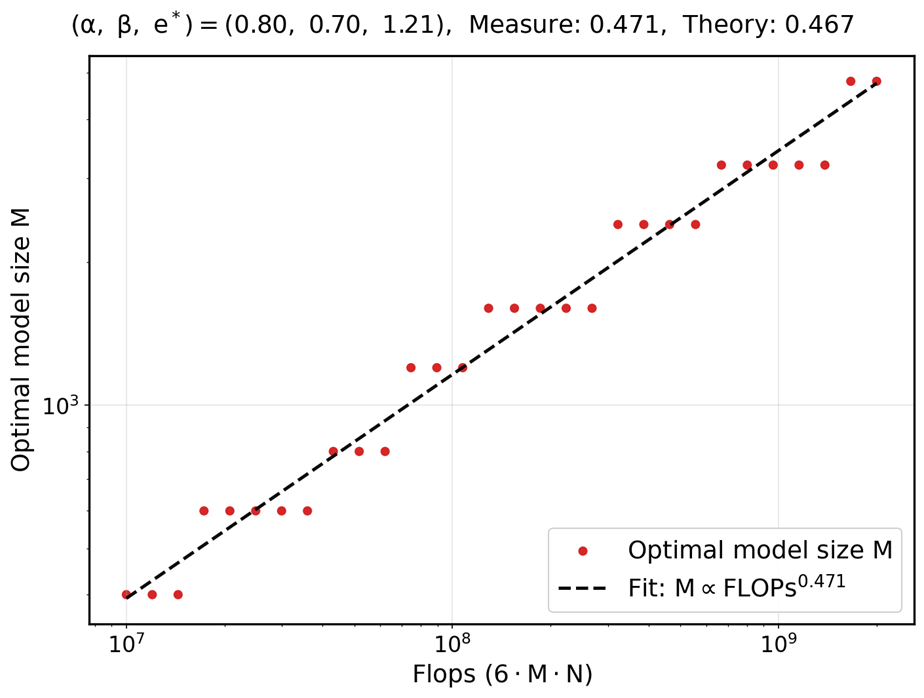}
    \end{minipage}\\[0.6em]

    \begin{minipage}{0.45\textwidth}\centering
        \includegraphics[width=\linewidth]{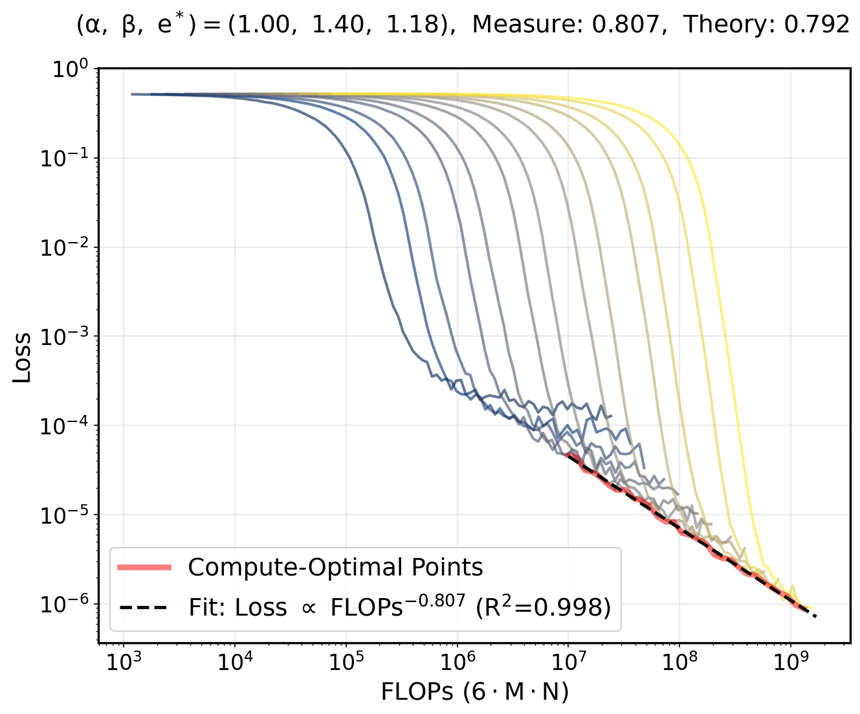}
    \end{minipage}\hfill
    \begin{minipage}{0.45\textwidth}\centering
        \includegraphics[width=\linewidth]{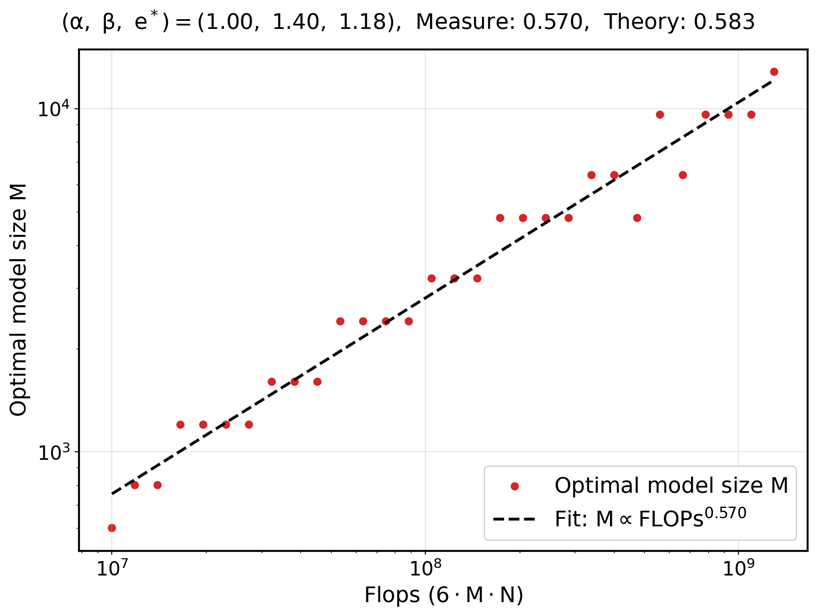}
    \end{minipage}\\[0.6em]

    \begin{minipage}{0.45\textwidth}\centering
        \includegraphics[width=\linewidth]{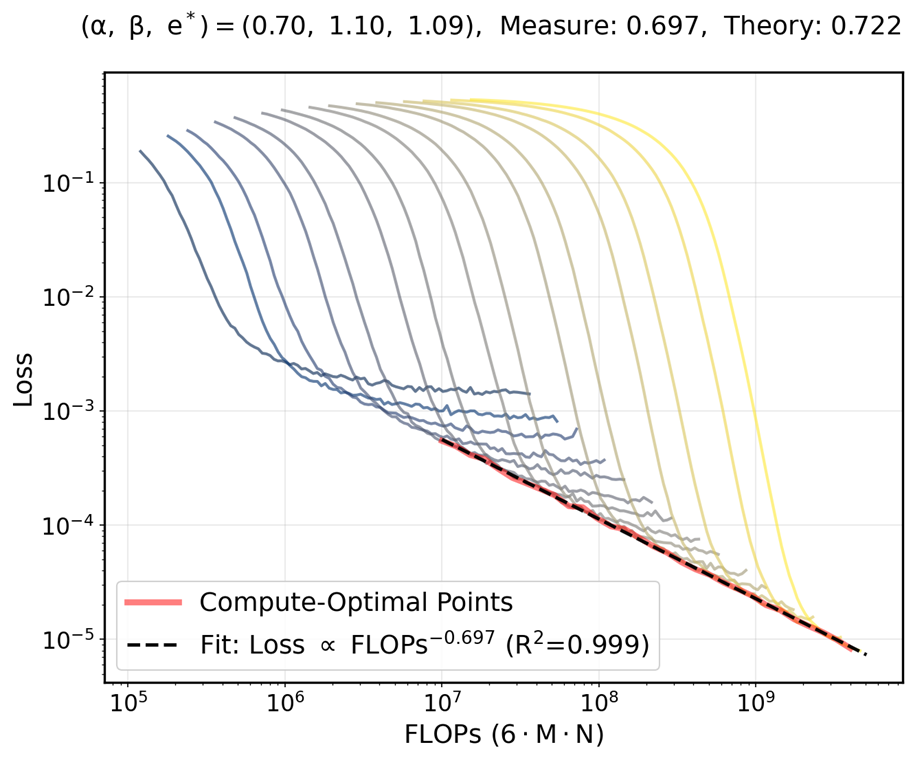}
    \end{minipage}\hfill
    \begin{minipage}{0.45\textwidth}\centering
        \includegraphics[width=\linewidth]{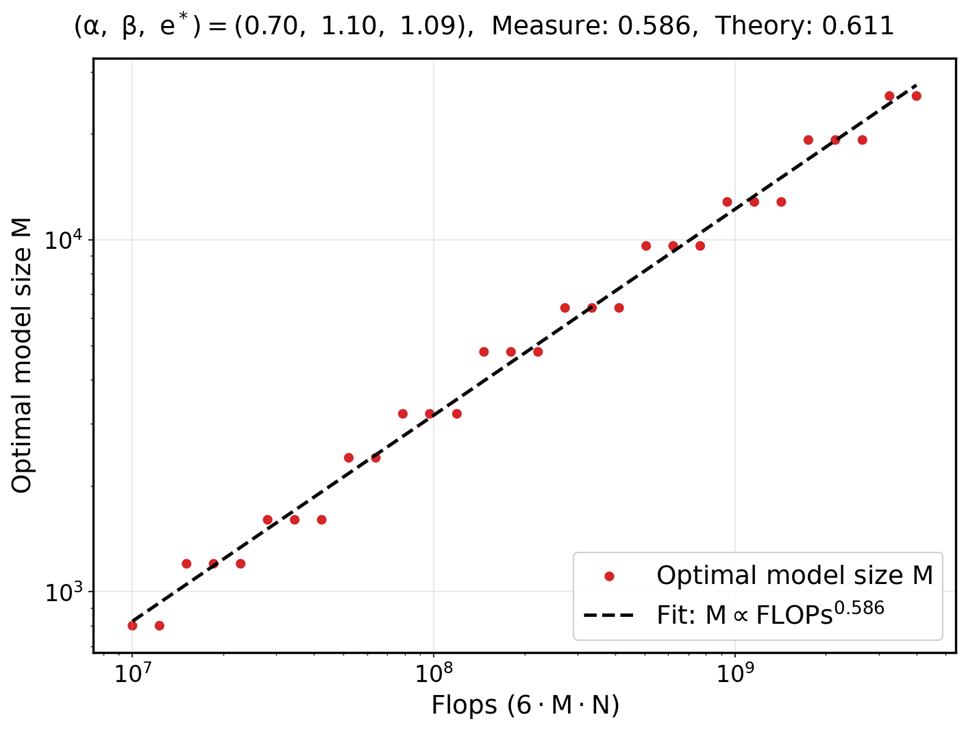}
    \end{minipage}\\[0.6em]

    \caption{\textbf{Measure of compute-optimal loss slope and optimal model size slope.} 
    We validate the exponent of $R\left(M^{\star}, \frac{\frf}{M^{\star}}, \gamma_0^{\star}\right)$ and $M^{\star}$ with respect to $\frf$ in the Table~\ref{optimtable}.
    The left plot shows the compute-optimal loss with respect to FLOPS $6MN$.
    The right plot shows the optimal model size with respect to FLOPS $6MN$.
    }
    \label{fig:co2}
\end{figure}

\begin{figure}[t]
    \centering

    \begin{minipage}{0.45\textwidth}\centering
        \includegraphics[width=\linewidth]{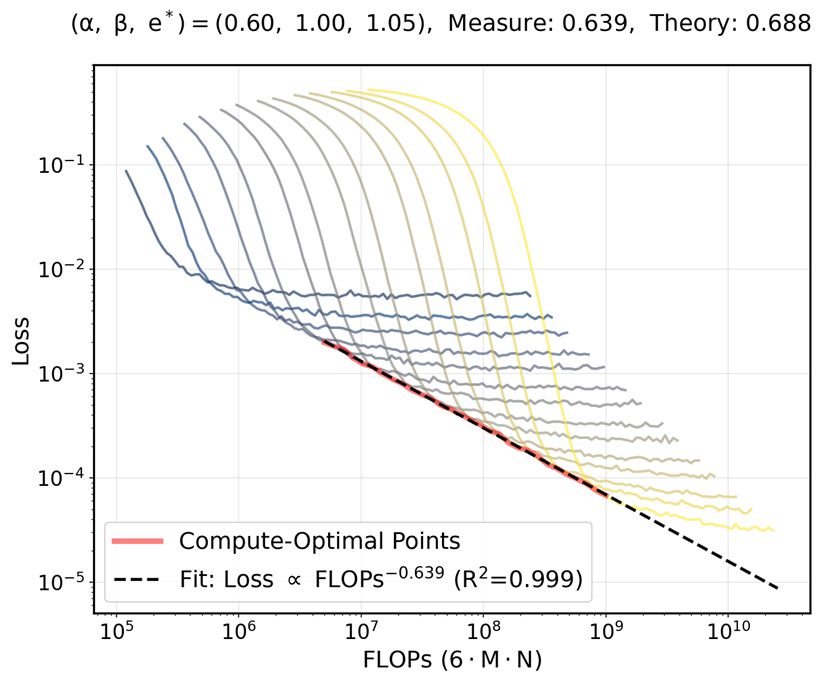}
    \end{minipage}\hfill
    \begin{minipage}{0.45\textwidth}\centering
        \includegraphics[width=\linewidth]{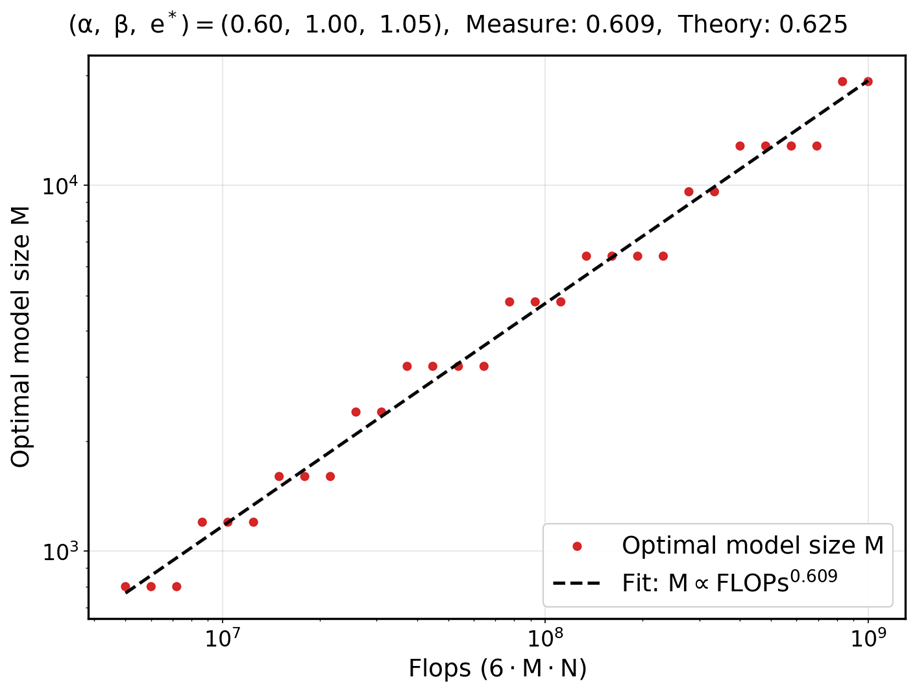}
    \end{minipage}\\[0.6em]

    \begin{minipage}{0.45\textwidth}\centering
        \includegraphics[width=\linewidth]{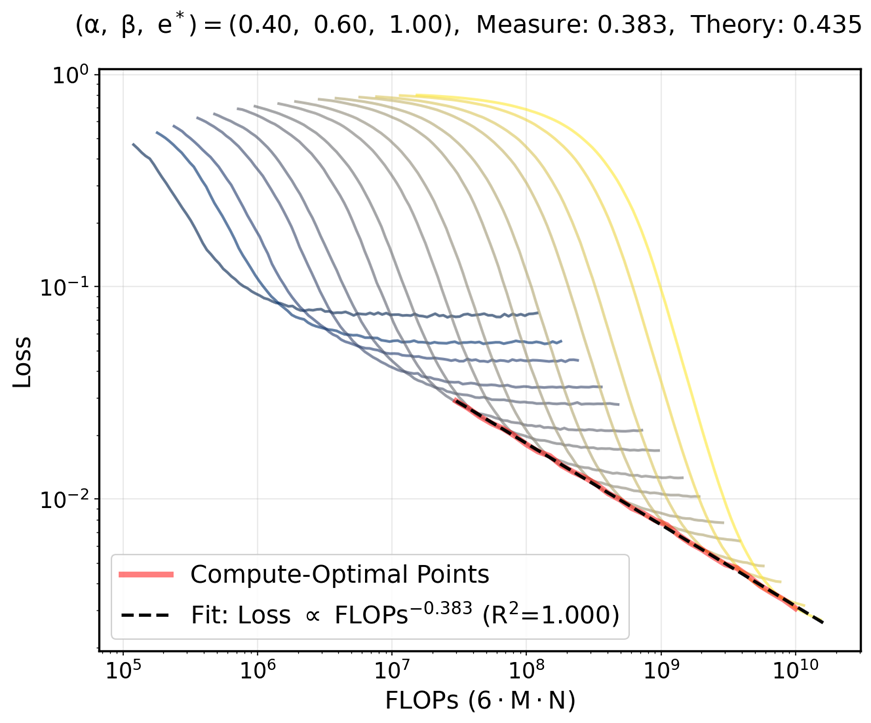}
    \end{minipage}\hfill
    \begin{minipage}{0.45\textwidth}\centering
        \includegraphics[width=\linewidth]{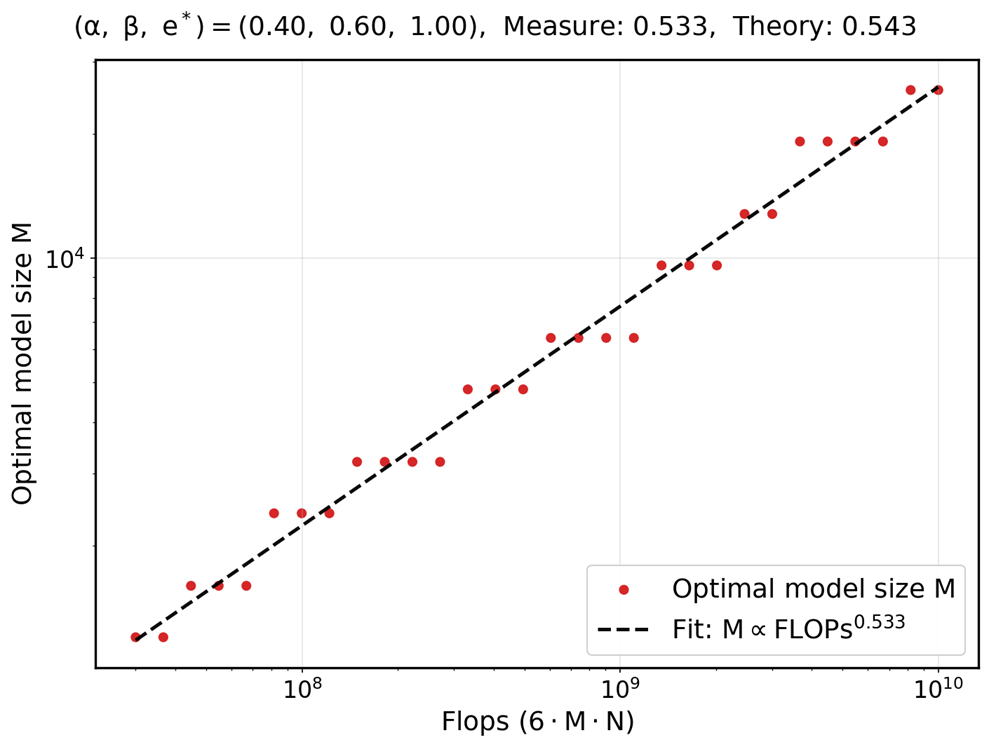}
    \end{minipage}\\[0.6em]

    \begin{minipage}{0.45\textwidth}\centering
        \includegraphics[width=\linewidth]{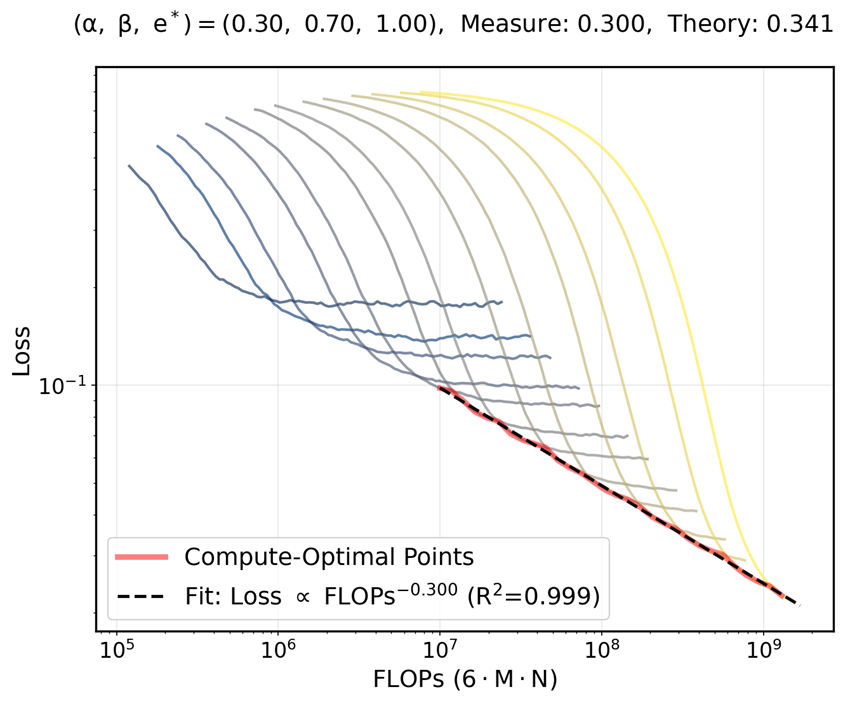}
    \end{minipage}\hfill
    \begin{minipage}{0.45\textwidth}\centering
        \includegraphics[width=\linewidth]{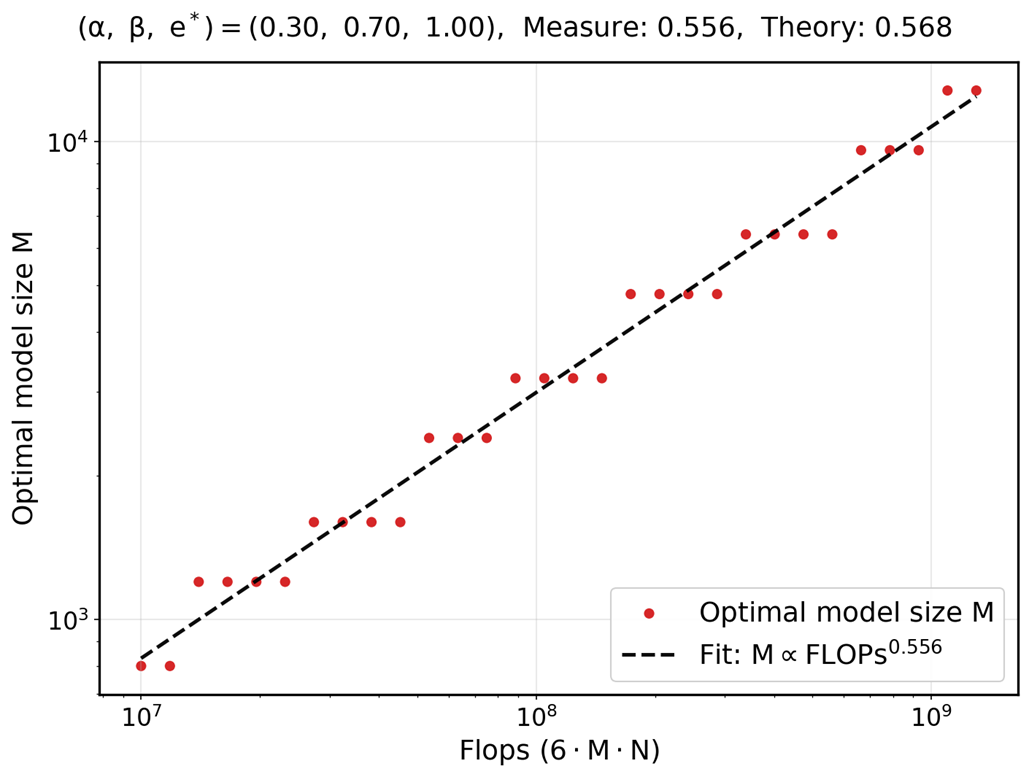}
    \end{minipage}\\[0.6em]

    \begin{minipage}{0.45\textwidth}\centering
        \includegraphics[width=\linewidth]{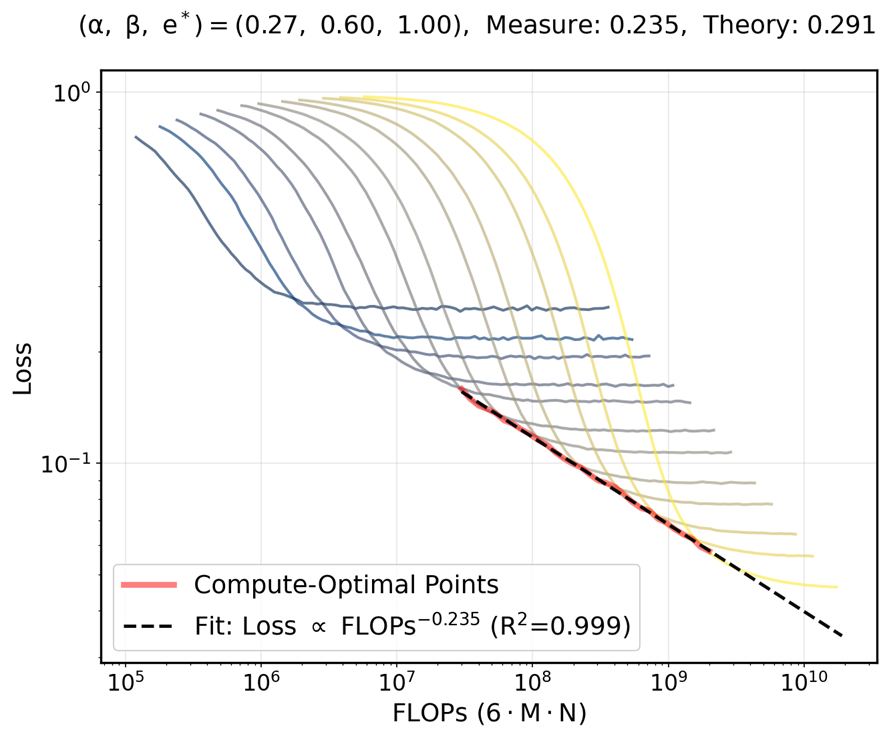}
    \end{minipage}\hfill
    \begin{minipage}{0.45\textwidth}\centering
        \includegraphics[width=\linewidth]{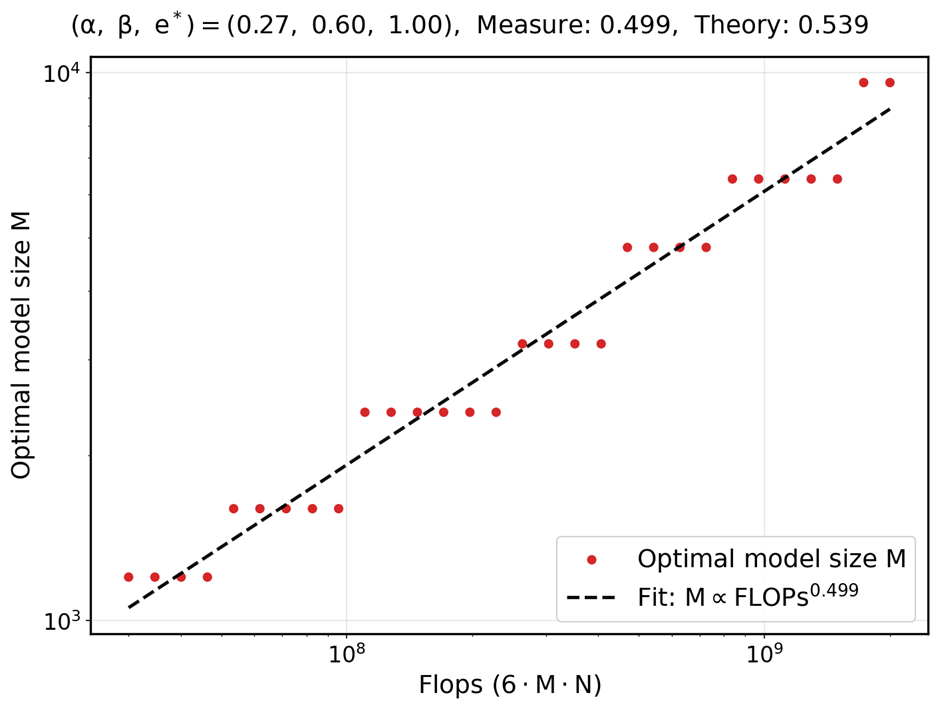}
    \end{minipage}\\[0.6em]

    \caption{\textbf{Measure of compute-optimal loss slope and optimal model size slope.} 
    We validate the exponent of $R\left(M^{\star}, \frac{\frf}{M^{\star}}, \gamma_0^{\star}\right)$ and $M^{\star}$ with respect to $\frf$ in the Table~\ref{optimtable}.
    The left plot shows the compute-optimal loss with respect to FLOPS $6MN$.
    The right plot shows the optimal model size with respect to FLOPS $6MN$.
    }
    \label{fig:co3}
\end{figure}

\begin{figure}[t]
    \centering

    \begin{minipage}{0.45\textwidth}\centering
        \includegraphics[width=\linewidth]{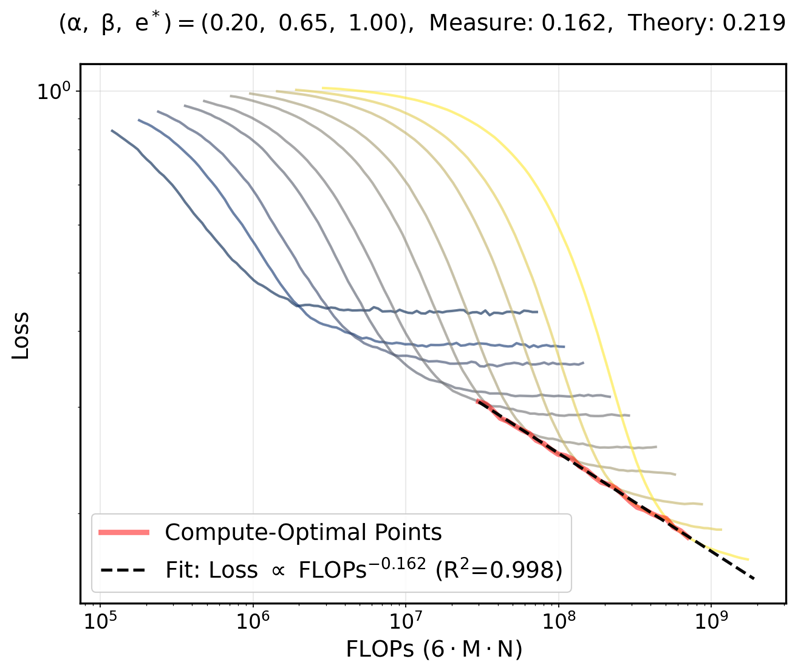}
    \end{minipage}\hfill
    \begin{minipage}{0.45\textwidth}\centering
        \includegraphics[width=\linewidth]{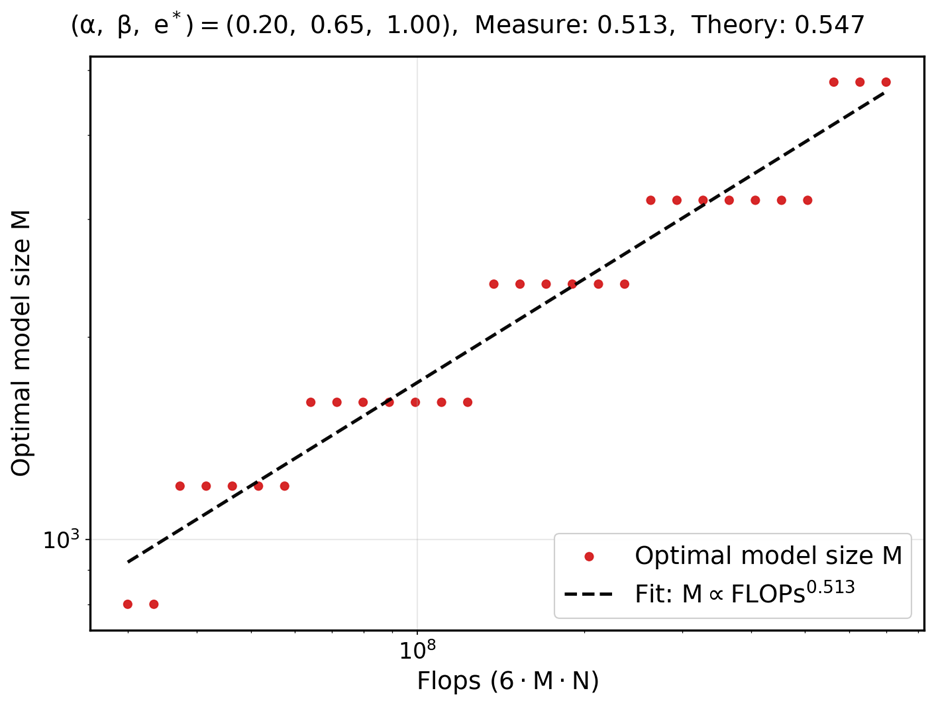}
    \end{minipage}\\[0.6em]

    \begin{minipage}{0.45\textwidth}\centering
        \includegraphics[width=\linewidth]{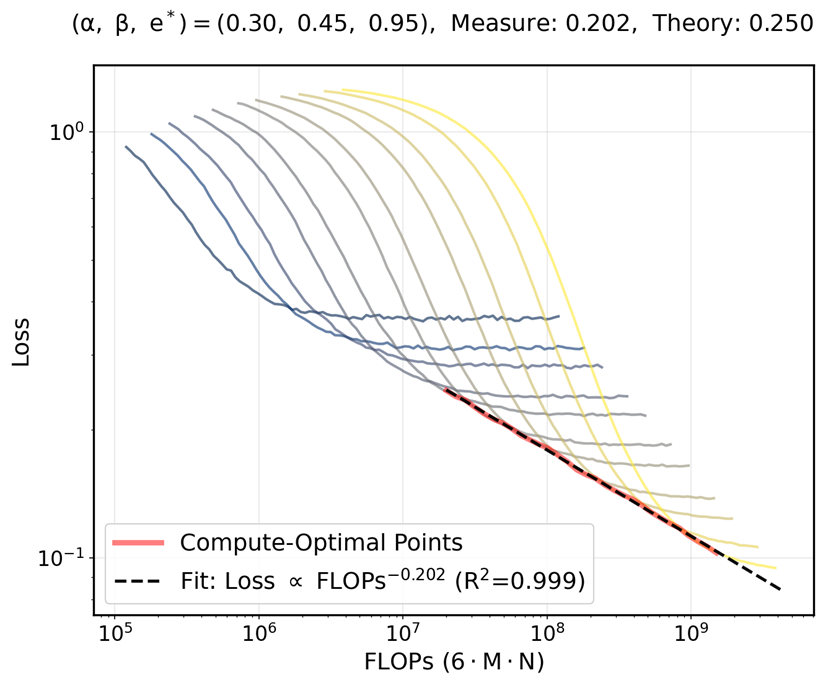}
    \end{minipage}\hfill
    \begin{minipage}{0.45\textwidth}\centering
        \includegraphics[width=\linewidth]{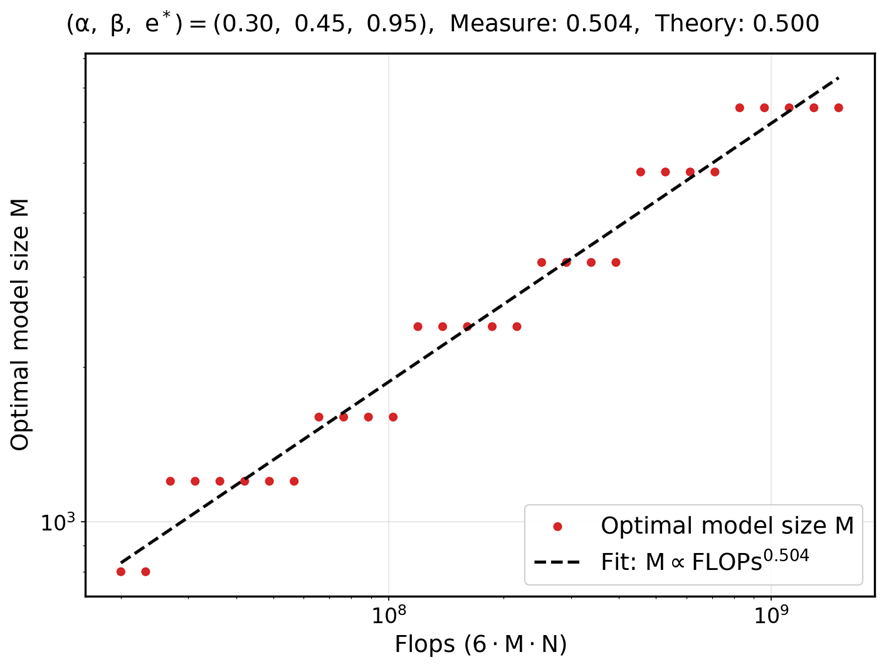}
    \end{minipage}\\[0.6em]

    \begin{minipage}{0.45\textwidth}\centering
        \includegraphics[width=\linewidth]{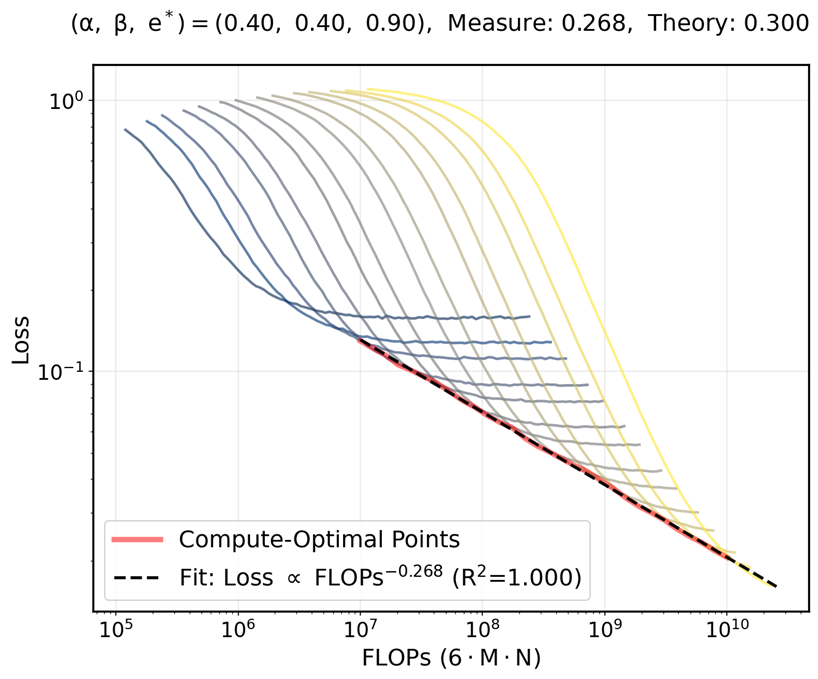}
    \end{minipage}\hfill
    \begin{minipage}{0.45\textwidth}\centering
        \includegraphics[width=\linewidth]{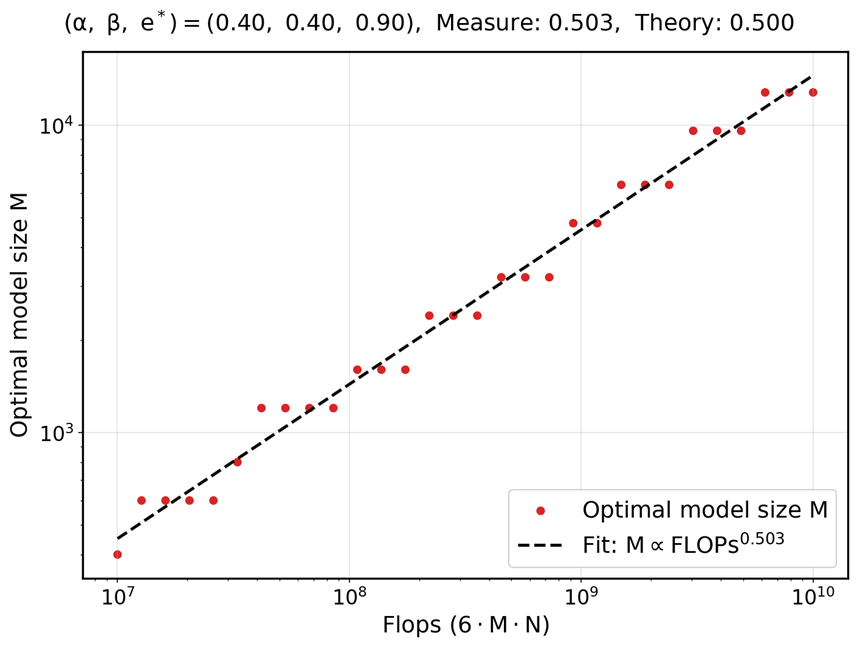}
    \end{minipage}\\[0.6em]

    \caption{\textbf{Measure of compute-optimal loss slope and optimal model size slope.} 
    We validate the exponent of $R\left(M^{\star}, \frac{\frf}{M^{\star}}, \gamma_0^{\star}\right)$ and $M^{\star}$ with respect to $\frf$ in the Table~\ref{optimtable}.
    The left plot shows the compute-optimal loss with respect to FLOPS $6MN$.
    The right plot shows the optimal model size with respect to FLOPS $6MN$.
    Each plot includes the measured slope and the theoretical slope from the Table~\ref{optimtable}.
    }
    \label{fig:co4}
\end{figure}

\begin{figure}[t]
    \centering

    \begin{minipage}{0.45\textwidth}\centering
        \includegraphics[width=\linewidth]{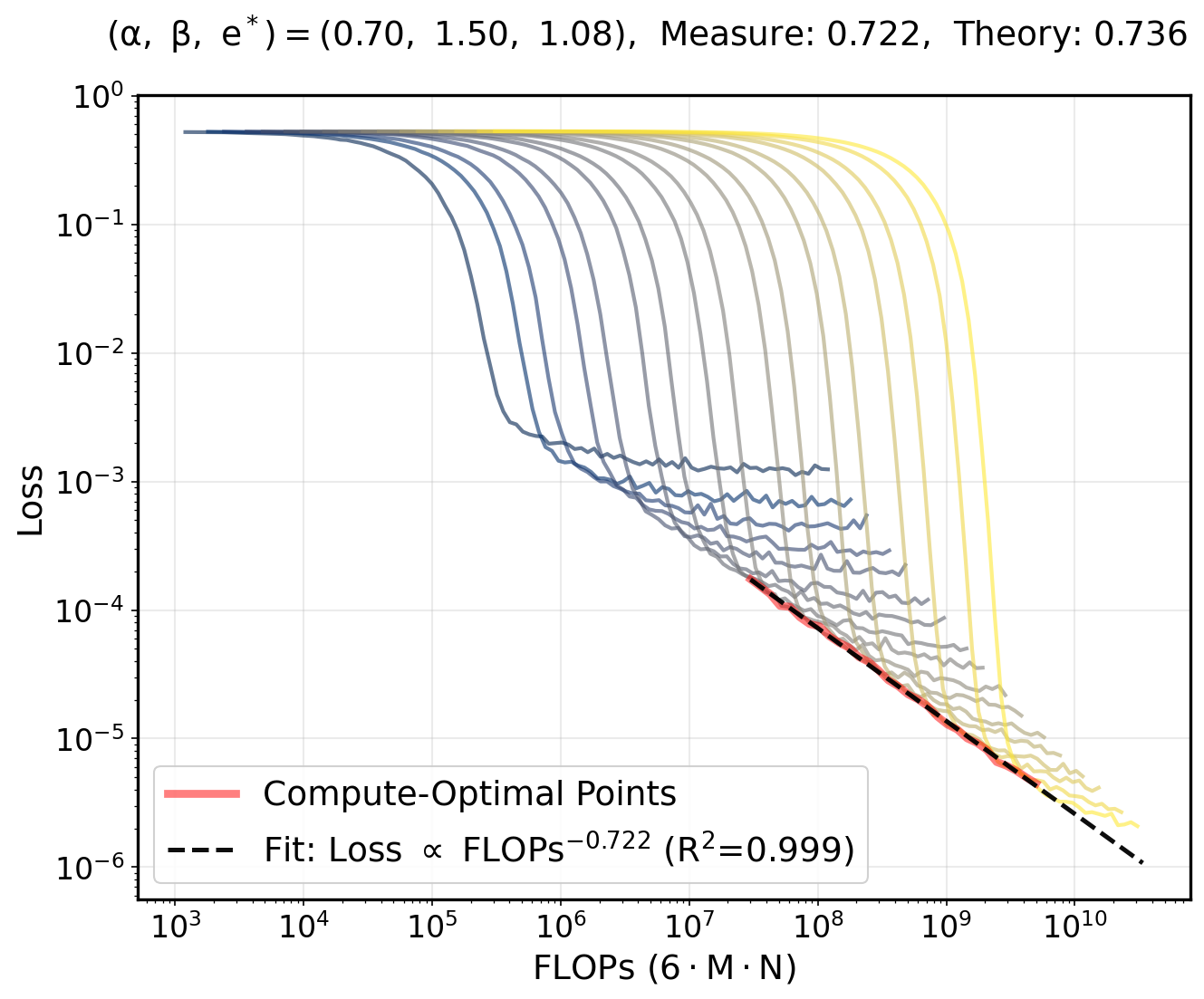}
    \end{minipage}\hfill
    \begin{minipage}{0.45\textwidth}\centering
        \includegraphics[width=\linewidth]{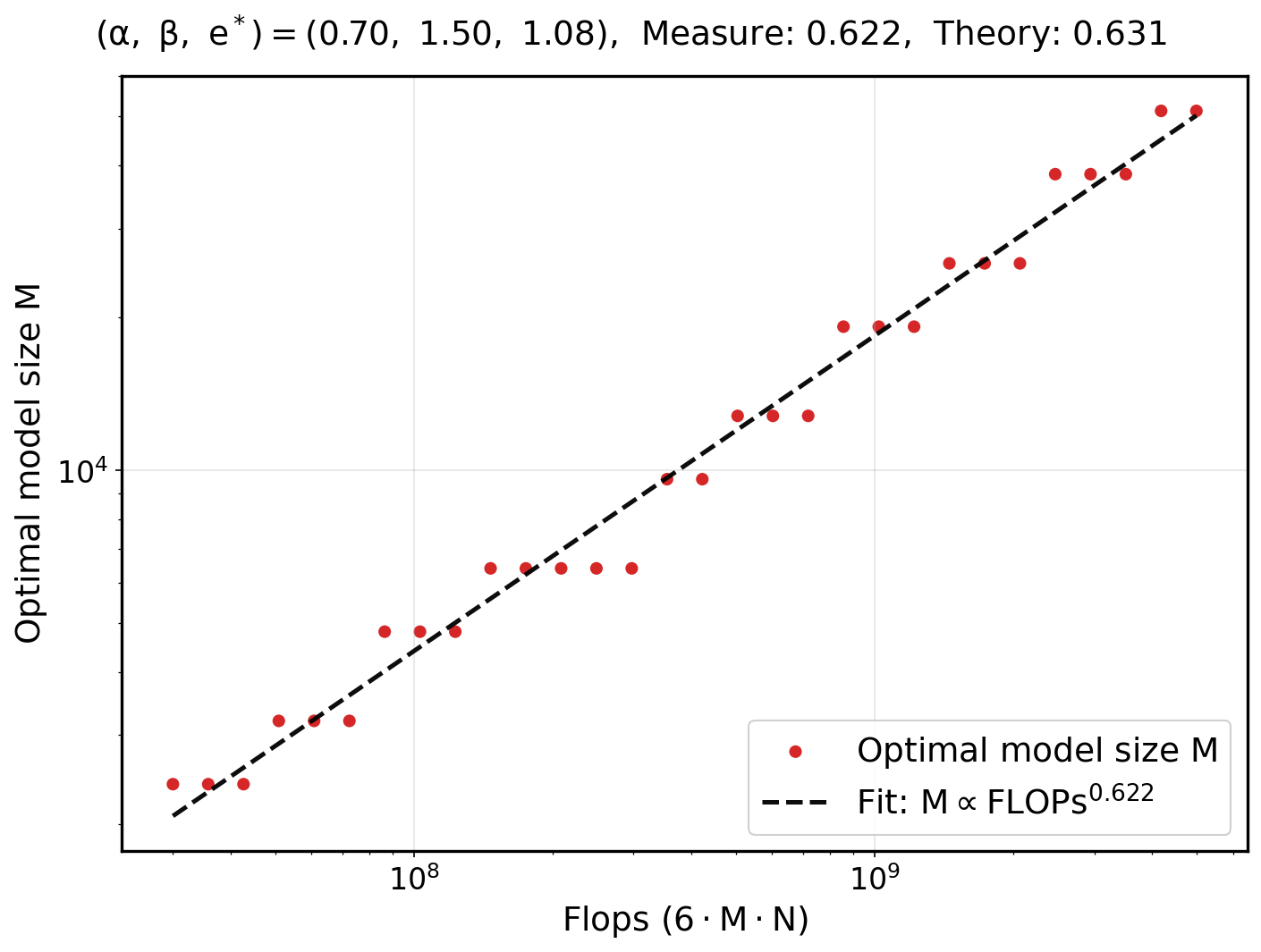}
    \end{minipage}\\[0.6em]

    \begin{minipage}{0.45\textwidth}\centering
        \includegraphics[width=\linewidth]{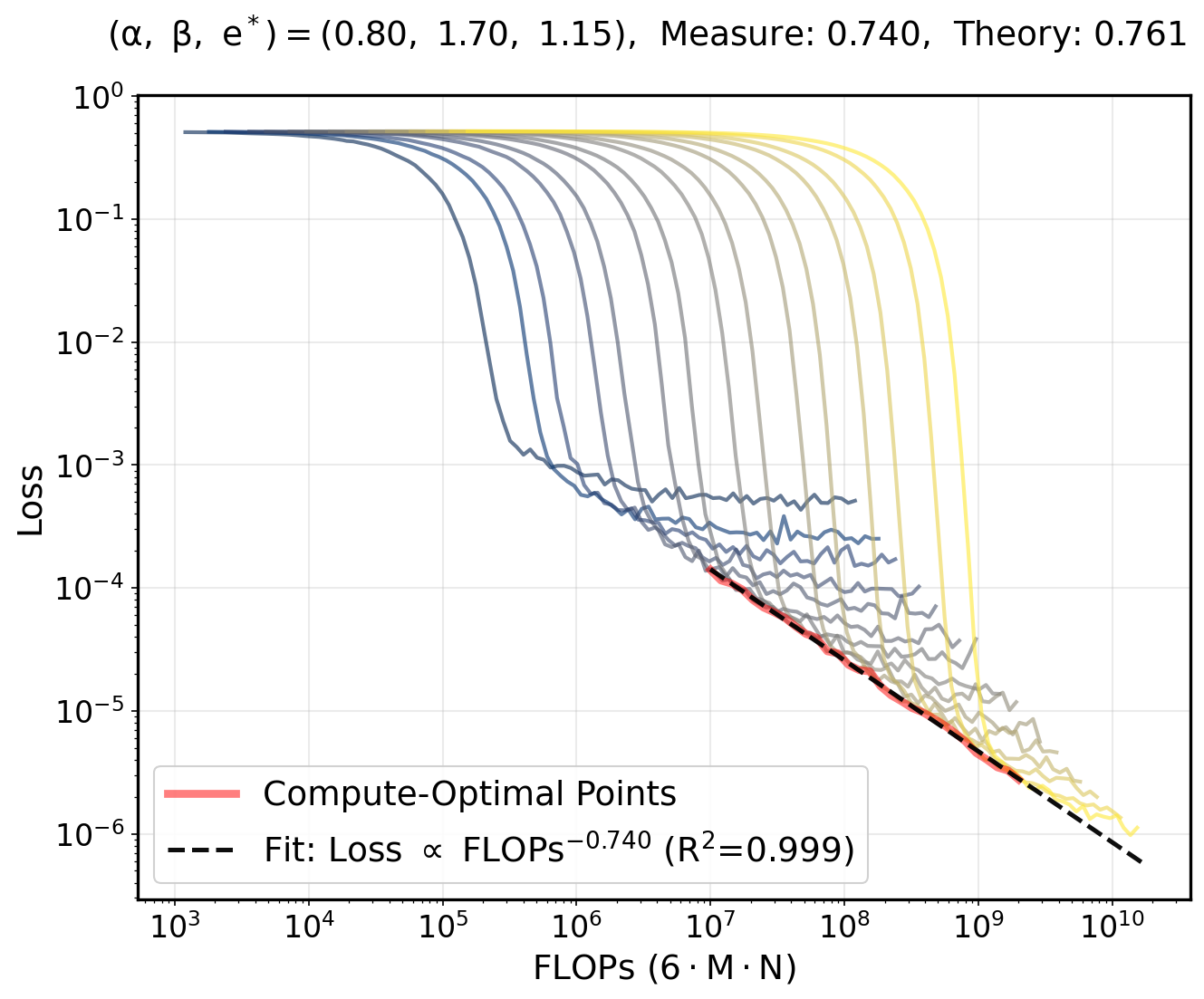}
    \end{minipage}\hfill
    \begin{minipage}{0.45\textwidth}\centering
        \includegraphics[width=\linewidth]{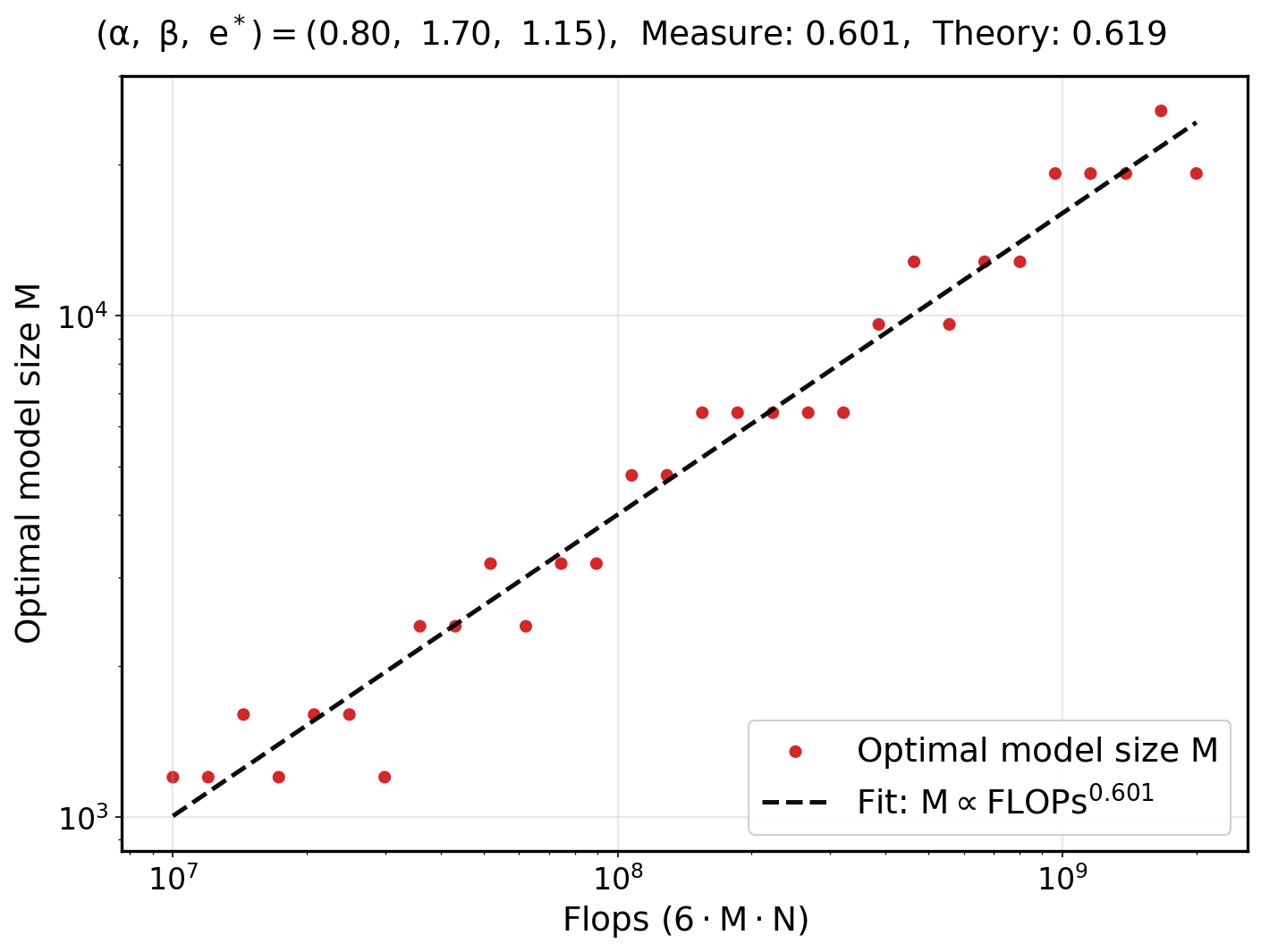}
    \end{minipage}\\[0.6em]

    \begin{minipage}{0.45\textwidth}\centering
        \includegraphics[width=\linewidth]{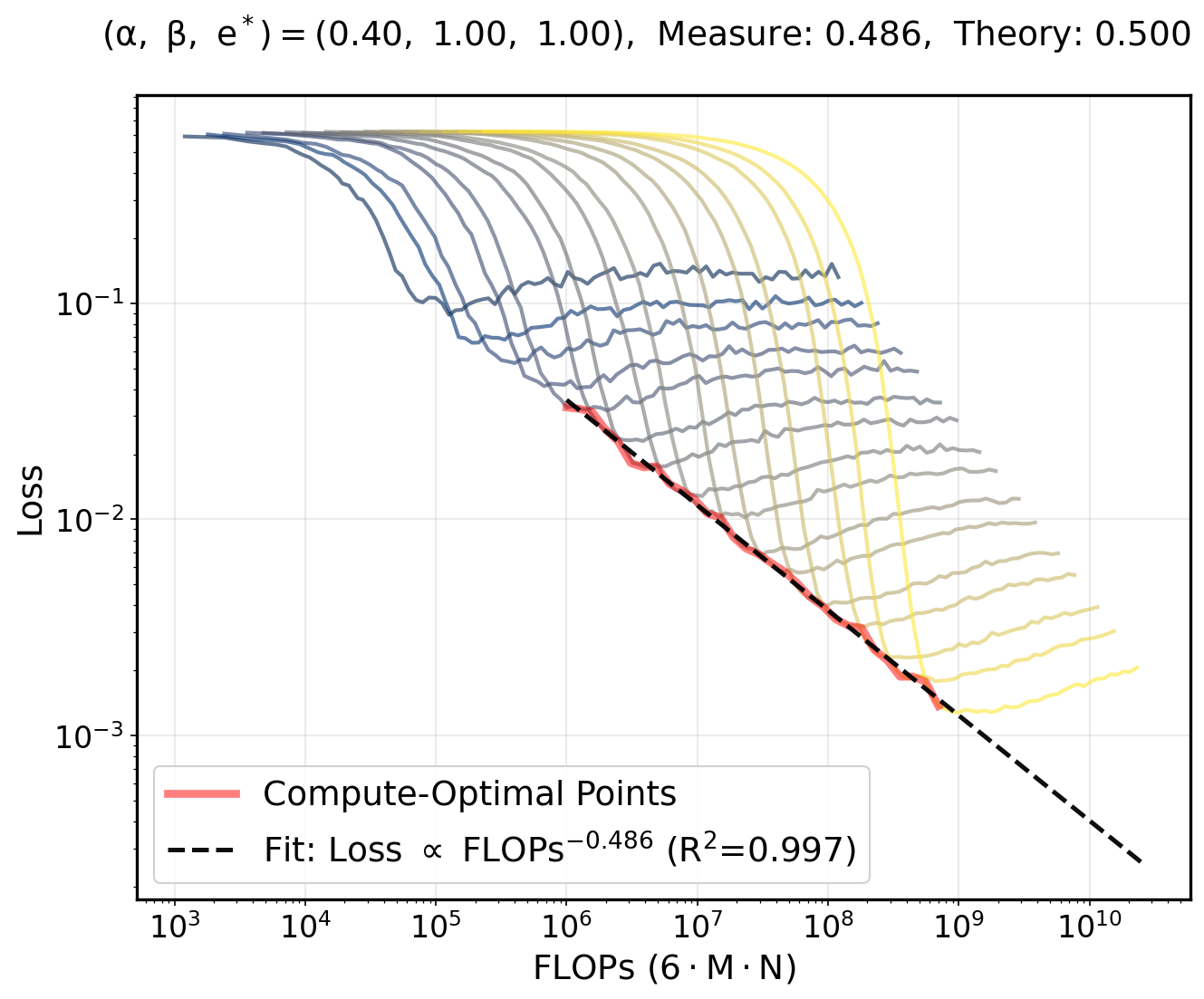}
    \end{minipage}\hfill
    \begin{minipage}{0.45\textwidth}\centering
        \includegraphics[width=\linewidth]{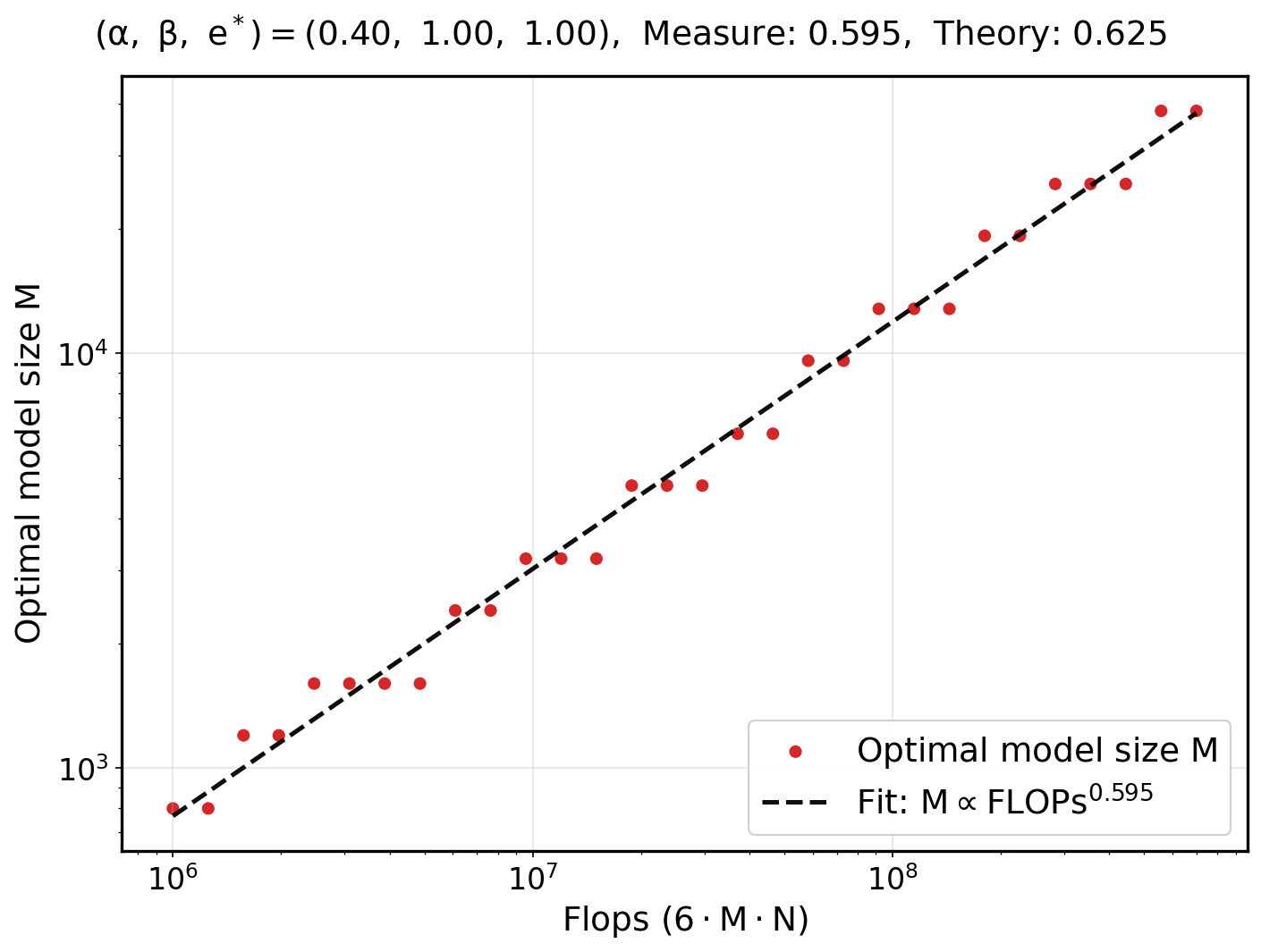}
    \end{minipage}\\[0.6em]

    \caption{\textbf{Measure of compute-optimal loss slope and optimal model size slope.} 
    We validate the exponent of $R\left(M^{\star}, \frac{\frf}{M^{\star}}, \gamma_0^{\star}\right)$ and $M^{\star}$ with respect to $\frf$ in the Table~\ref{optimtable}.
    The left plot shows the compute-optimal loss with respect to FLOPS $6MN$.
    The right plot shows the optimal model size with respect to FLOPS $6MN$.
    Each plot includes the measured slope and the theoretical slope from the Table~\ref{optimtable}.
    }
    \label{fig:co5}
\end{figure}

\FloatBarrier

\subsection{Experiment for Minibatching}
\label{minitest}

In this subsection, we provide an experiment with batch sizes 10 and 128.
Figures~\ref{fig:mini} and \ref{fig:mini128} show the measured compute-optimal loss slope and optimal model size slope for batch sizes 10 and 128, respectively.
The theory slope in the figure is the theory value for batch size 1.
We can see that the difference between the measured value for batch sizes 10 and 128 and the theoretical value for batch size 1 is less than or equal to 0.042.
Therefore, we conjecture that mini-batching with a constant-order batch size has the same compute-optimal exponents as the batch size 1 case; this is plausible because constant factors in the loss formula are ignored in the exponent analysis. 
Mathematically analyzing mini-batch signSGD is an important direction for research, which we leave for future work.

\begin{figure}[t]
    \centering

    \begin{minipage}{0.45\textwidth}\centering
        \includegraphics[width=\linewidth]{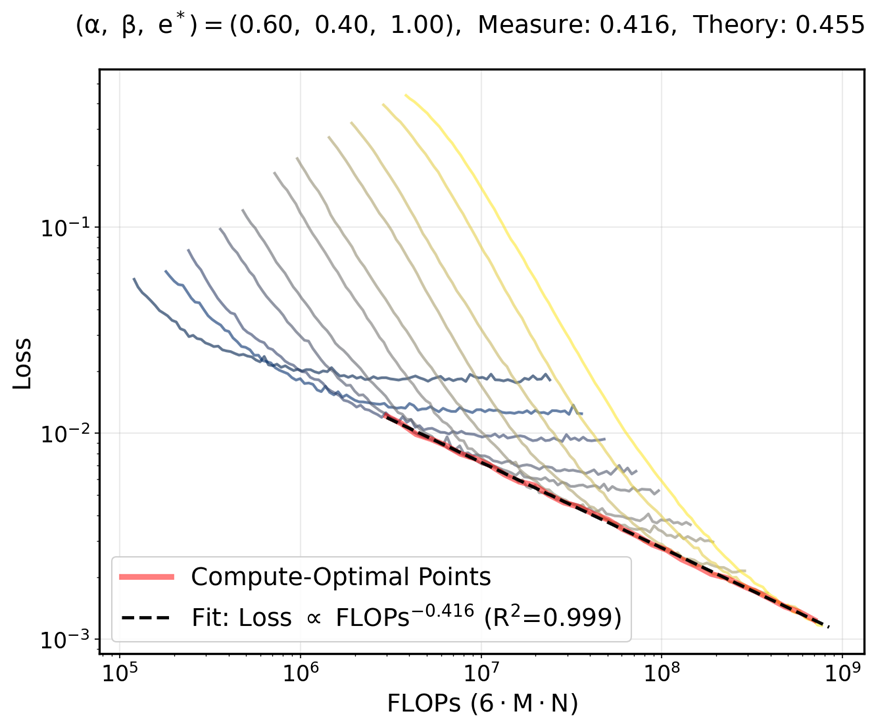}
    \end{minipage}\hfill
    \begin{minipage}{0.45\textwidth}\centering
        \includegraphics[width=\linewidth]{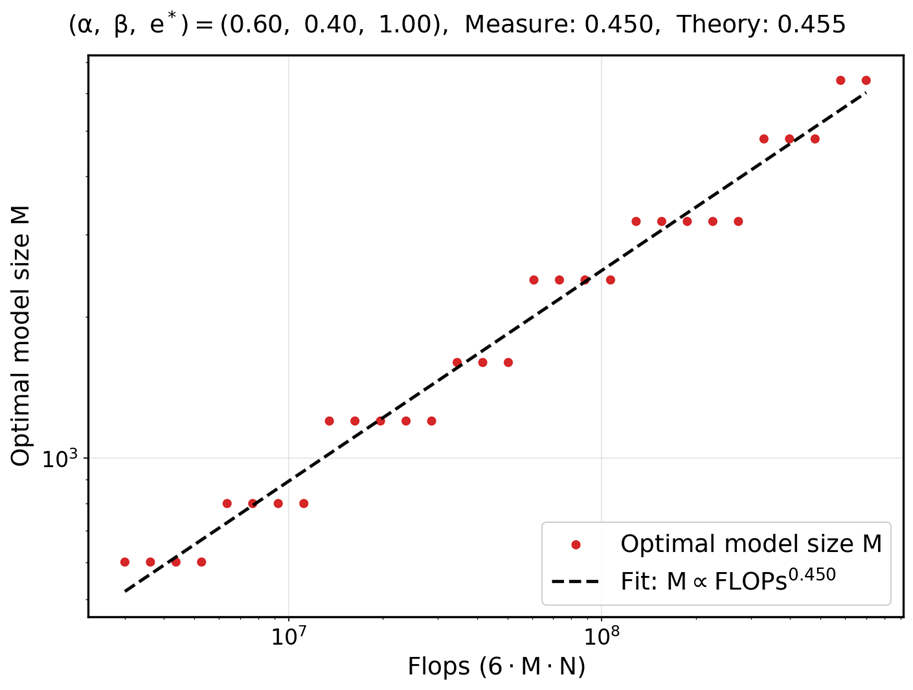}
    \end{minipage}\\[0.6em]

    \begin{minipage}{0.45\textwidth}\centering
        \includegraphics[width=\linewidth]{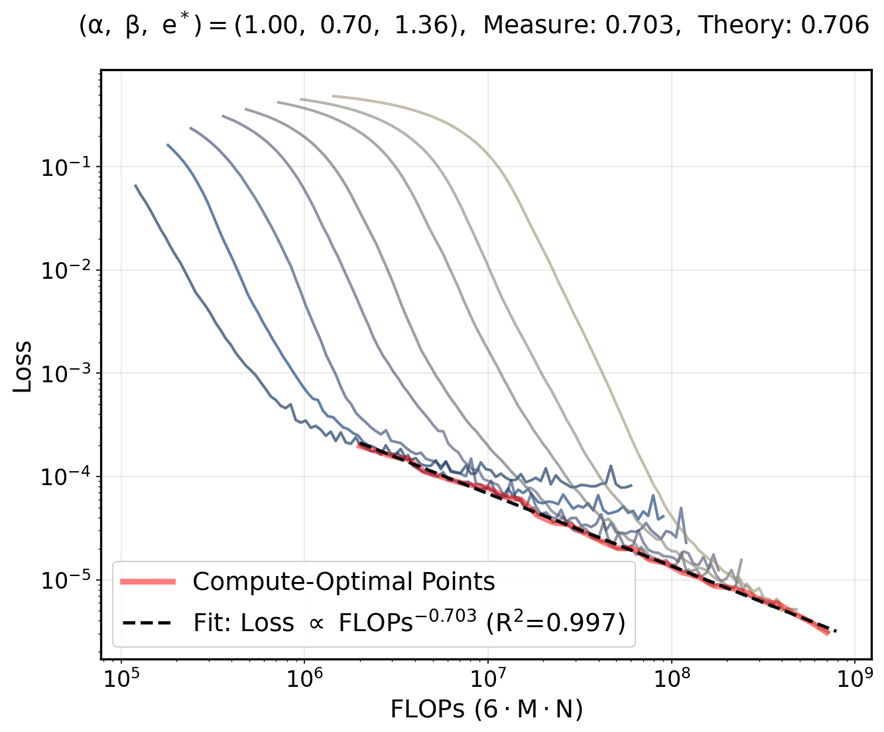}
    \end{minipage}\hfill
    \begin{minipage}{0.45\textwidth}\centering
        \includegraphics[width=\linewidth]{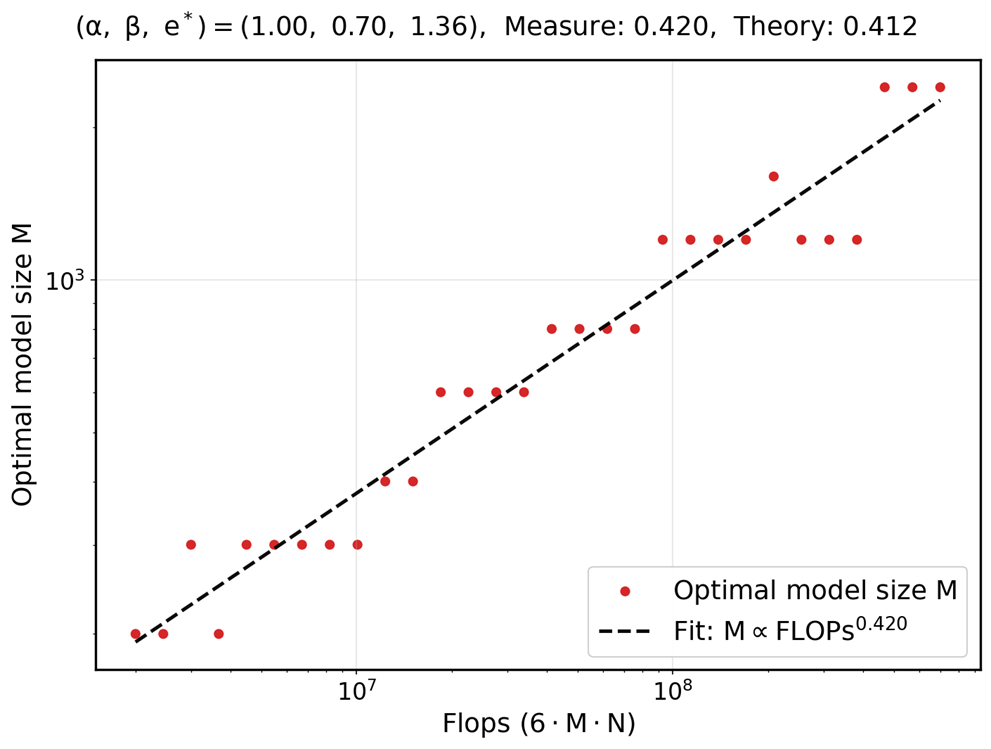}
    \end{minipage}\\[0.6em]

        \begin{minipage}{0.45\textwidth}\centering
        \includegraphics[width=\linewidth]{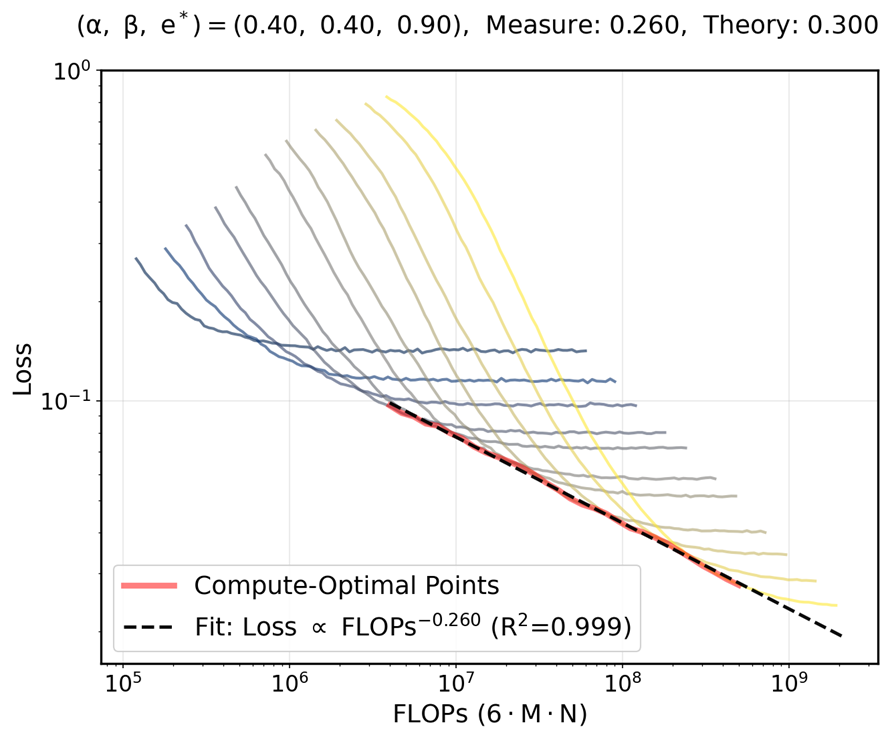}
    \end{minipage}\hfill
    \begin{minipage}{0.45\textwidth}\centering
        \includegraphics[width=\linewidth]{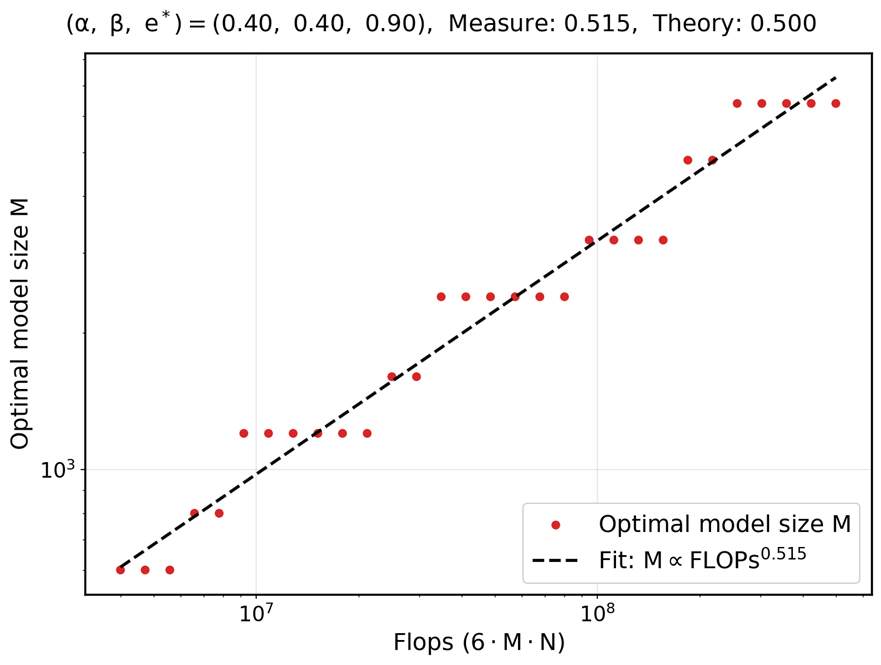}
    \end{minipage}\\[0.6em]

    \caption{\textbf{Measure of compute-optimal loss slope and optimal model size slope for batch size 10.} 
    We calculate the exponent of $R\left(M^{\star}, \frac{\frf}{M^{\star}}, \gamma_0^{\star}\right)$ and $M^{\star}$ with respect to $\frf$.
    The left plot shows the compute-optimal loss with respect to FLOPS $6MN$.
    The right plot shows the optimal model size with respect to FLOPS $6MN$.
    Each plot includes the measured slope and the theoretical slope for the batch size 1 case.
    }
    \label{fig:mini}
\end{figure}

\begin{figure}[t]
    \centering

    \begin{minipage}{0.45\textwidth}\centering
        \includegraphics[width=\linewidth]{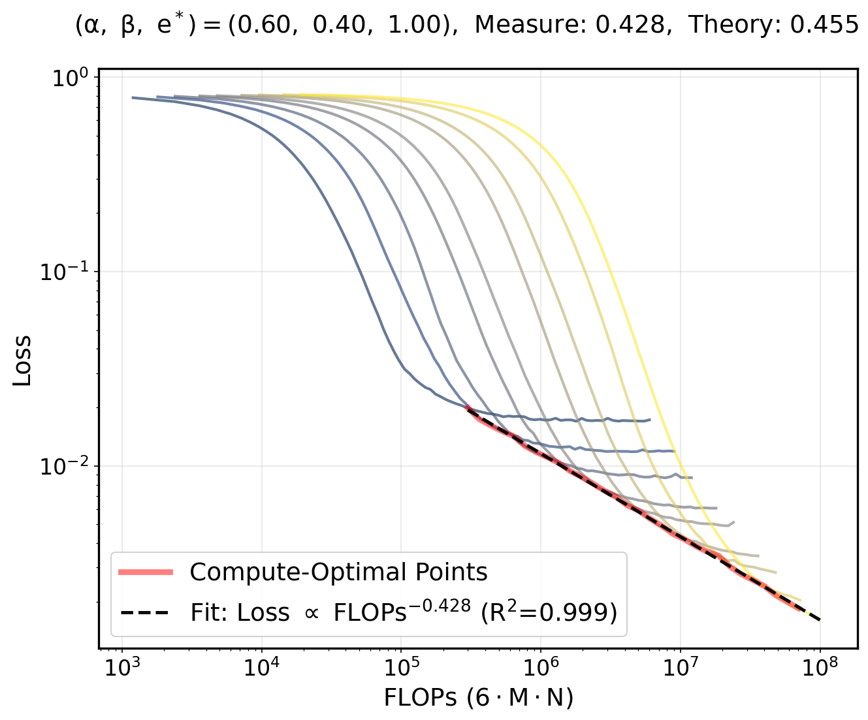}
    \end{minipage}\hfill
    \begin{minipage}{0.45\textwidth}\centering
        \includegraphics[width=\linewidth]{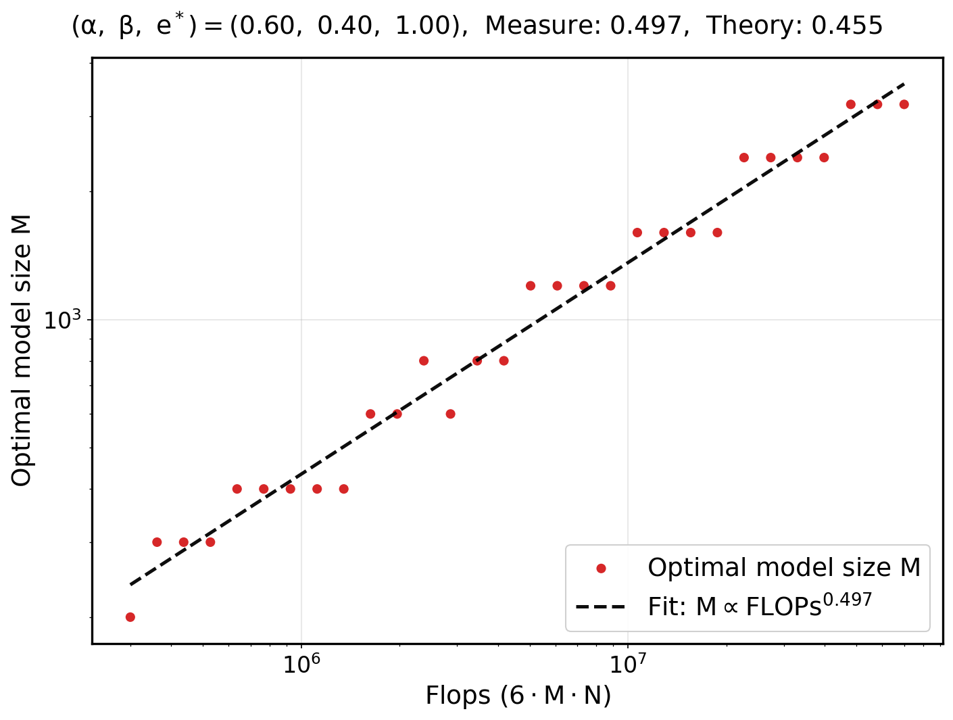}
    \end{minipage}\\[0.6em]

    \begin{minipage}{0.45\textwidth}\centering
        \includegraphics[width=\linewidth]{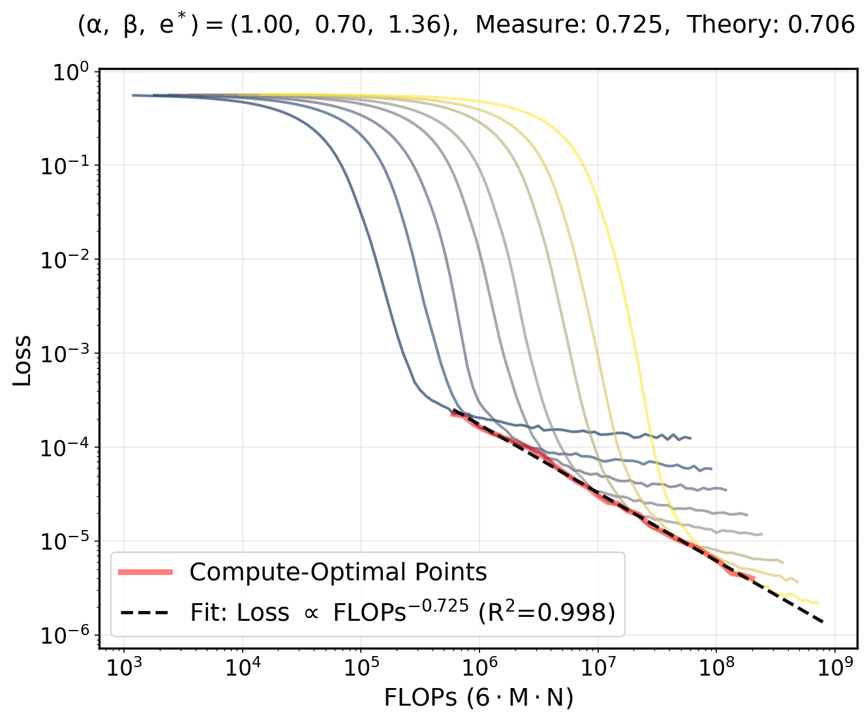}
    \end{minipage}\hfill
    \begin{minipage}{0.45\textwidth}\centering
        \includegraphics[width=\linewidth]{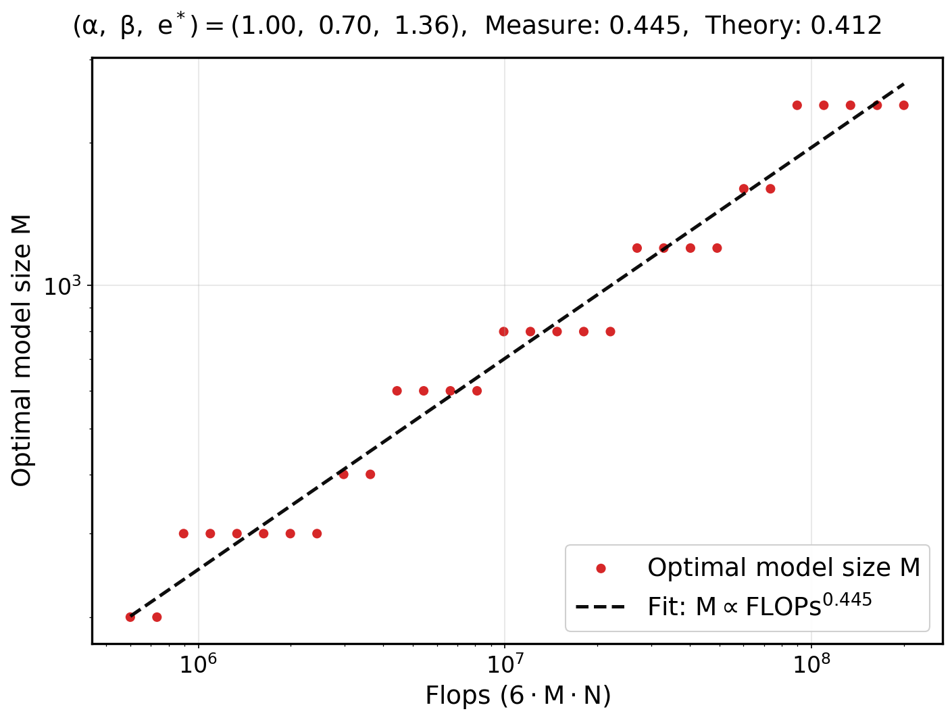}
    \end{minipage}\\[0.6em]

        \begin{minipage}{0.45\textwidth}\centering
        \includegraphics[width=\linewidth]{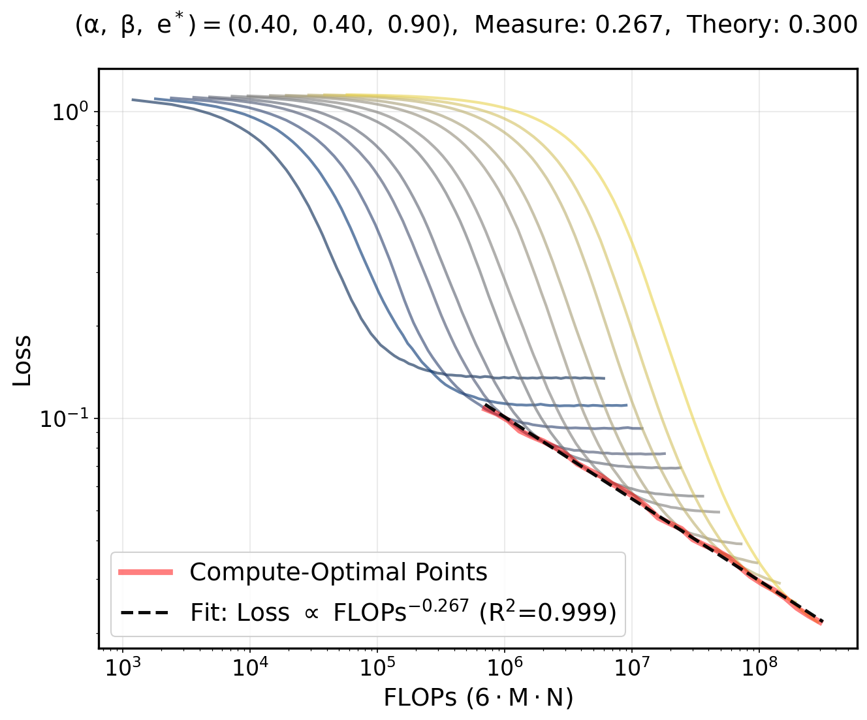}
    \end{minipage}\hfill
    \begin{minipage}{0.45\textwidth}\centering
        \includegraphics[width=\linewidth]{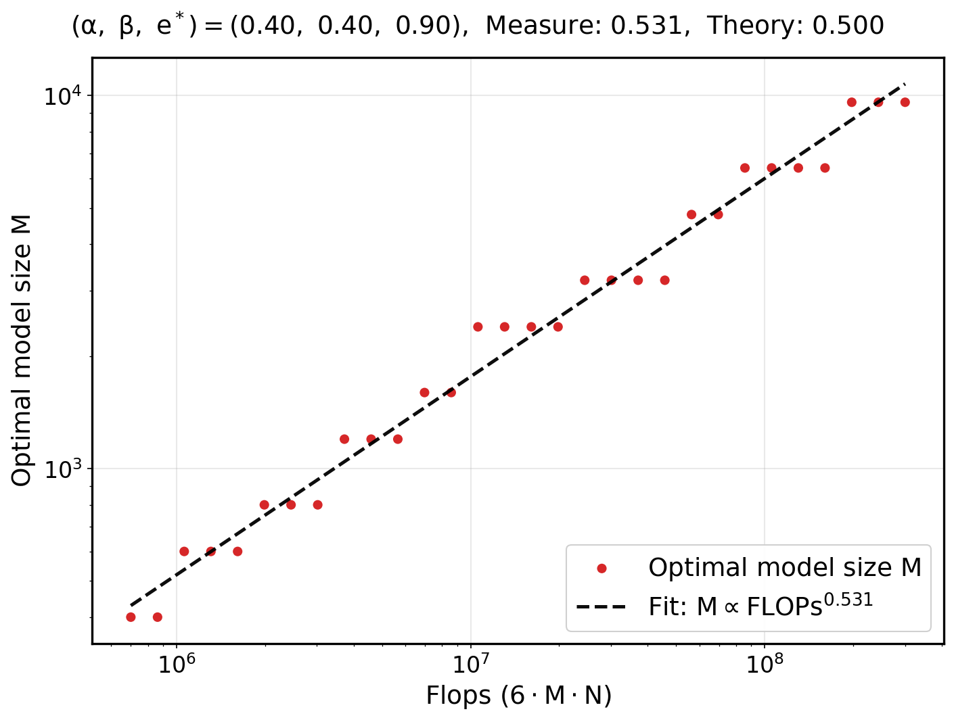}
    \end{minipage}\\[0.6em]

    \caption{\textbf{Measure of compute-optimal loss slope and optimal model size slope for batch size 128.} 
    We calculate the exponent of $R\left(M^{\star}, \frac{\frf}{M^{\star}}, \gamma_0^{\star}\right)$ and $M^{\star}$ with respect to $\frf$.
    The left plot shows the compute-optimal loss with respect to FLOPS $6MN$.
    The right plot shows the optimal model size with respect to FLOPS $6MN$.
    Each plot includes the measured slope and the theoretical slope for the batch size 1 case.
    }
    \label{fig:mini128}
\end{figure}

\FloatBarrier

\subsection{Experiment of AdamW and SGD with Transformer}

\subsubsection{Compute-optimal Exponent}
\label{real1}

We calculated the loss decaying exponent with respect to the compute for AdamW \citep{loshchilov2017decoupled} and SGD optimizer on the Transformer architecture \citep{vaswani2017attention}.
We conducted an experiment based on the GitHub code of \citet{shehper_scaling_laws}.
In our experiment, we evaluated five different model sizes: $(\text{number of layers}, \text{embedding dimension}) = (4, 64), (8, 64), (8, 96), (8, 128), (8, 160)$.
We used a constant learning rate and gradient clipping with 1.0 for both AdamW and SGD.
We set $\beta_1 = 0.9$, $\beta_2 = 0.95$ for AdamW.
We trained for $10^5$ steps for each run.
We set both batch size and gradient accumulation steps as 1.
We set dropout as 0.1, and set weight decay as 0.1.
We used 1024 tokens per iteration.
Amount of compute is calculated by $6  \times  (\text{number of model parameters})\times  (\text{iterations})\times  (\text{tokens per iteration})$.
The validation loss is a cross-entropy loss with 200 sets of 1024 tokens.
We used the OpenWebText dataset \citep{openwebtext2_dataset} for training.

Figure~\ref{fig:tranadam} shows that the exponent of AdamW is -0.021 and the exponent of SGD is -0.005.
It means AdamW has better compute-optimal scaling compared to SGD in this experiment.
Our experiment implies that a practical optimizer, AdamW, on a practical deep network, Transformer, can have a better compute-optimal exponent compared to SGD.
Although our analysis is about signSGD---studied as an approximate surrogate of Adam and its variants---and a simple linear model, our experiment implies that an advantage in the compute-optimal scaling aspect may also occur in a practical optimizer, AdamW, with a deep neural network Transformer.

\begin{figure}[t]
    \centering

    \begin{minipage}{0.45\textwidth}\centering
        \includegraphics[width=\linewidth]{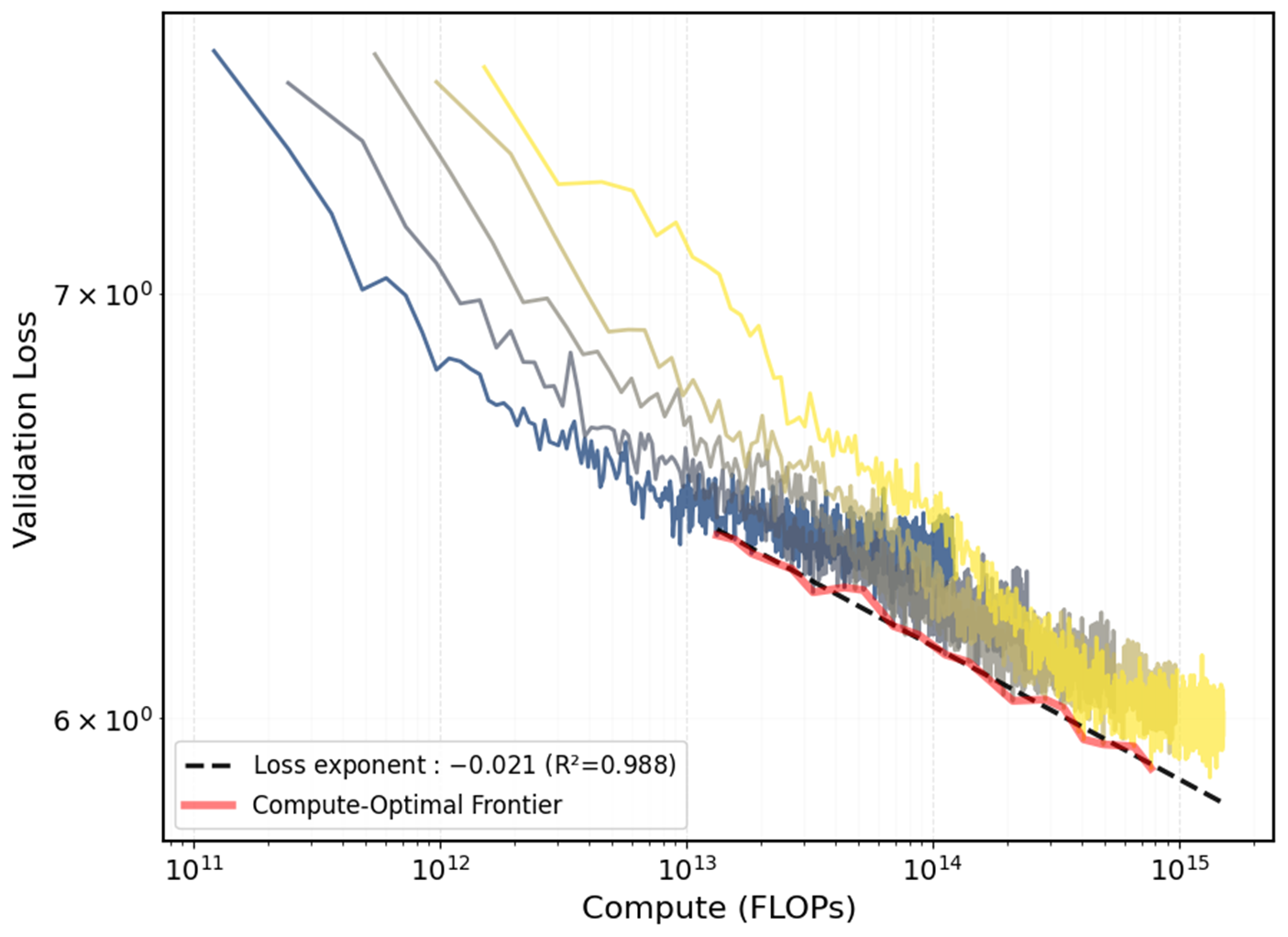}
    \end{minipage}\hfill
    \begin{minipage}{0.45\textwidth}\centering
        \includegraphics[width=\linewidth]{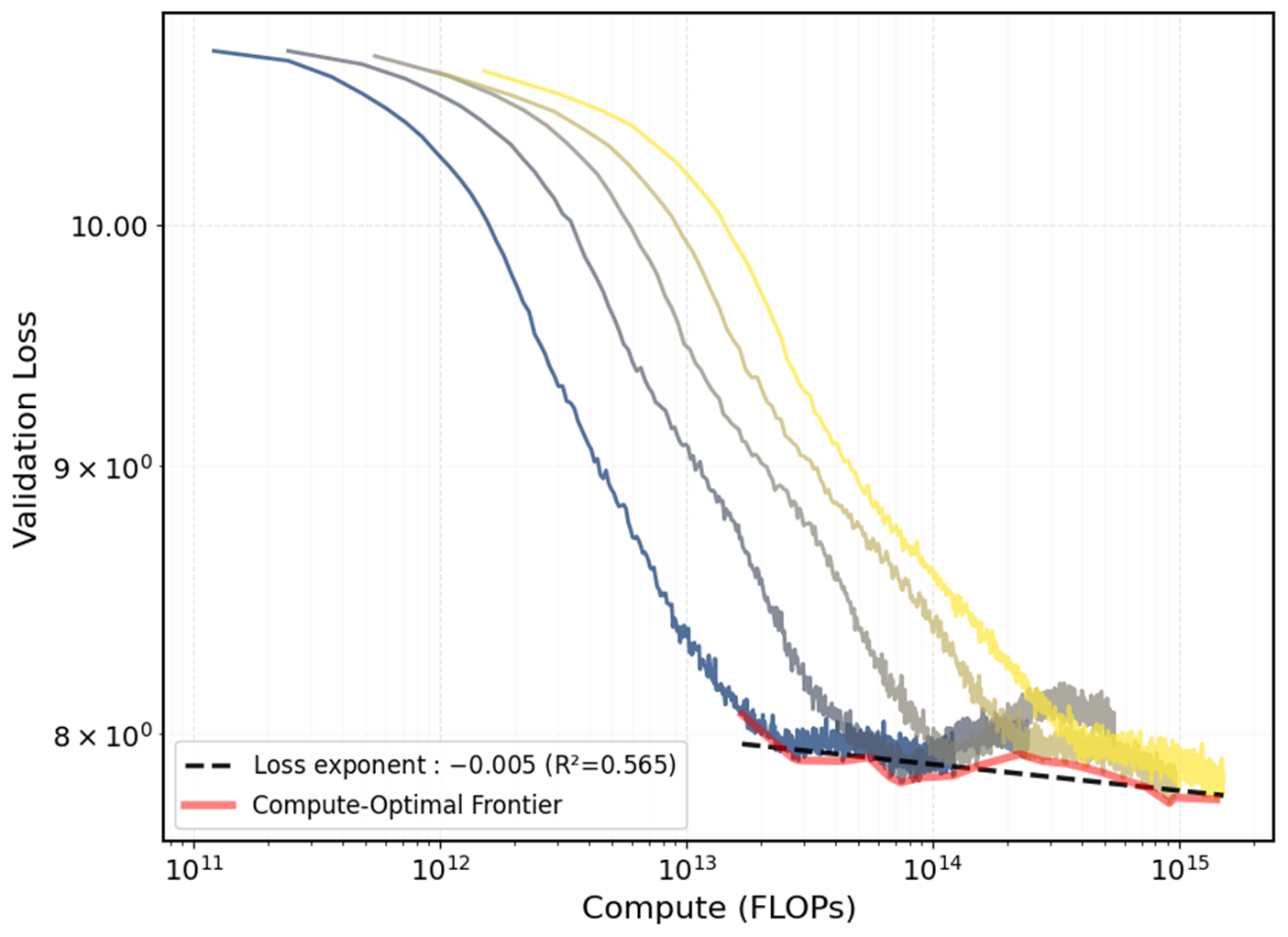}
    \end{minipage}\\[0.6em]

    \caption{\textbf{Measure of compute-optimal loss slope for AdamW and SGD on Transformer architecture. Left: AdamW, Right: SGD} 
    The x-axis shows the amount of compute calculated by $6  \times  (\text{number of model parameters})\times  (\text{iterations})\times  (\text{tokens per iteration})$.
    The y-axis shows the validation loss, which is a cross-entropy loss with 200 sets of 1024 tokens.
    }
    \label{fig:tranadam}
\end{figure}

\subsubsection{Drift-normalization Effect and Noise-reshaping Effect}

To observe the drift-normalization effect, we experimented with a batch size of 16 and gradient accumulation steps of 32 to decrease the noise term. As the loss curve is the sum of the drift term, noise term, and approximation term, decreasing the noise term allows us to observe the drift-normalization effect more clearly.
We experimented for $(\text{number of layers}, \text{embedding dimension}) = (8, 96)$ for each AdamW and SGD. 
Other experimental settings are the same as the section~\ref{real1}.
In Figure~\ref{fig:drr}, we measure the slope of the loss curve in a log-log plot for the linear decaying interval, where the drift term is dominant.
We can observe that the slope for AdamW is larger than SGD, and this is consistent with the drift-normalization effect in PLRF, which increased the exponent of the drift term in signSGD compared to SGD.

\begin{figure}[t]
    \centering

    \begin{minipage}{0.45\textwidth}\centering
        \includegraphics[width=\linewidth]{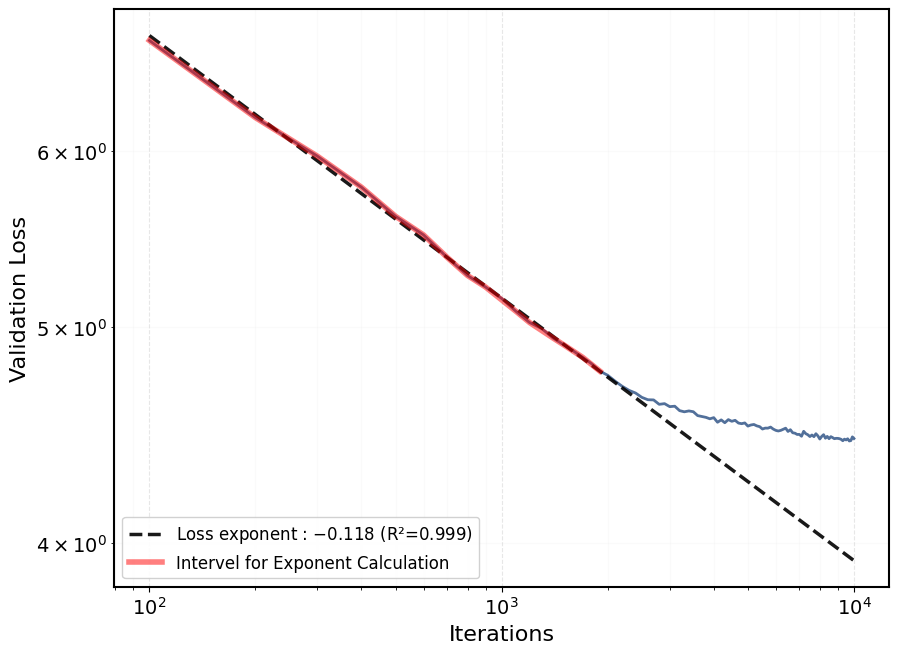}
    \end{minipage}\hfill
    \begin{minipage}{0.45\textwidth}\centering
        \includegraphics[width=\linewidth]{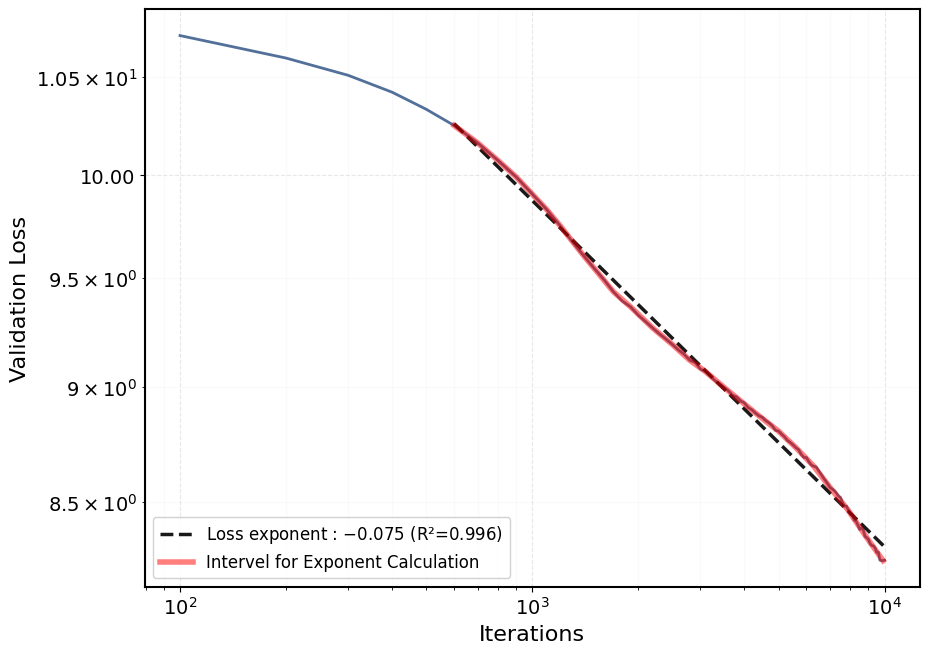}
    \end{minipage}\\[0.6em]

    \caption{\textbf{Measure of drift term slope for AdamW and SGD on Transformer architecture. Left: AdamW, Right: SGD} 
    The x-axis shows the iterations.
    The y-axis shows the validation loss, which is a cross-entropy loss with 200 sets of 1024 tokens.
    }
    \label{fig:drr}
\end{figure}

To observe the noise-reshaping effect, we focus on the plateau regime of the batch size 1 experiment.
To see how the loss value of the plateau regime is influenced by the size of the learning rate, we experiment with two learning rate values: 0.00266 and 0.00133 for both AdamW and SGD.
We experimented for $(\text{number of layers}, \text{embedding dimension}) = (8, 96)$ for each AdamW and SGD. 
Other experimental settings are the same as the section~\ref{real1}, including batch size 1 and gradient accumulation steps 1.
In Figure~\ref{fig:nrr}, we can see that the loss value at the plateau regime, which is dominated by the noise term, increases for AdamW when we take a bigger learning rate, but does not increase for SGD.
This is consistent with the noise-reshaping effect in PLRF, which made the size of the noise term in signSGD increase as we take a larger learning rate, in contrast to SGD.

\begin{figure}[t]
    \centering

    \begin{minipage}{0.45\textwidth}\centering
        \includegraphics[width=\linewidth]{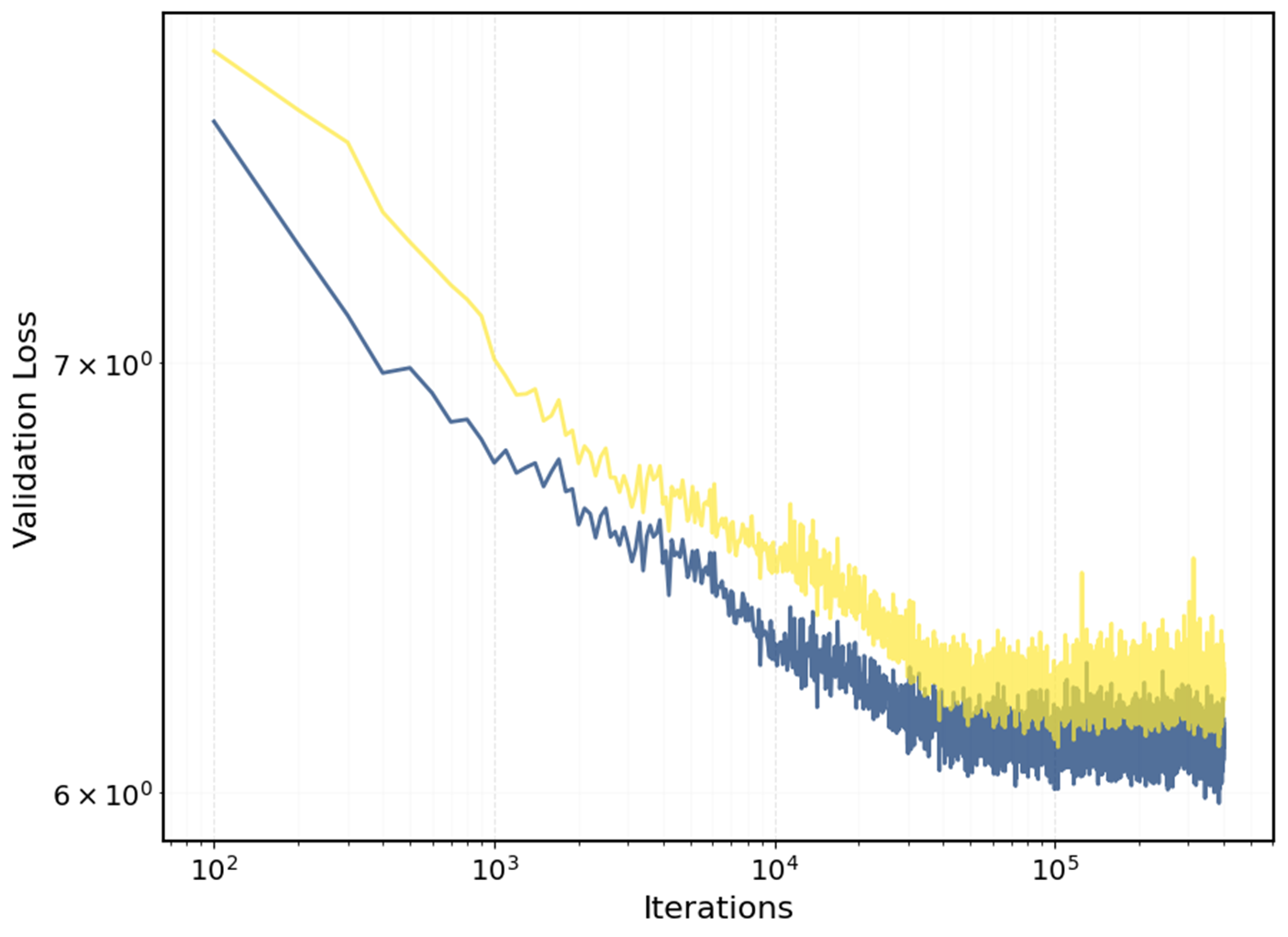}
    \end{minipage}\hfill
    \begin{minipage}{0.45\textwidth}\centering
        \includegraphics[width=\linewidth]{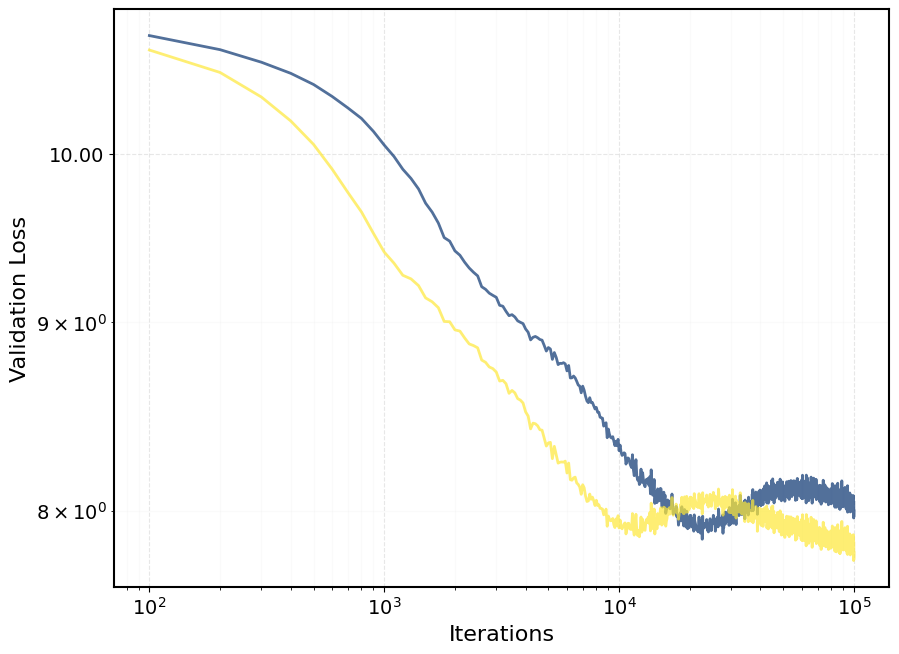}
    \end{minipage}\\[0.6em]

    \caption{\textbf{Plateau loss value for two different learning rate. Left: AdamW, Right: SGD} 
    The blue curve is the trajectory with a learning rate of 0.00133, and the yellow curve is the trajectory with a learning rate of 0.00266.
    The x-axis shows the iterations.
    The y-axis shows the validation loss, which is a cross-entropy loss with 200 sets of 1024 tokens.
    }
    \label{fig:nrr}
\end{figure}

\subsection{Other Synthetic Task Experiment}
We experimented with feature learning based on the setting of \citet{bordelon2025feature}.
In the feature learning, the sketch matrix $\mS$ becomes learnable, in contrast to the fixed Gaussian sketch setting of the PLRF model.
We let $\mS = \mB(t) \mS_0$, where $\mB(t)$ is $M\times M$ square matrix and $\mB(0)=I$.
During the training, we update the square matrix $\mB(t)$ at each time step with the optimizer.
Other settings, except for this learnable sketch matrix, are the same as the settings for PLRF.

Figure~\ref{fig:nonpl} shows our evaluation of the compute-optimal slope for Adam, signGD (full-batch sign descent; deterministic signSGD), and GD in the feature learning setting.
We experimented with a full batch due to the training instability of small batch cases.
We experimented for the parameter $(\alpha, \beta) = (1.0, 1.25)$ which is included in the Area~$\tfs$.
In this feature learning experiment, Adam and signGD had similar slopes, and those two had a steeper slope compared to GD.
The result is consistent with the phenomena in PLRF that signSGD has a steeper compute-optimal slope compared to SGD in the Area~$\tfs$, and also consistent with the conjecture in PLRF that Adam has the same compute-optimal slope as signSGD.

\begin{figure}[t]
    \centering

    \begin{minipage}{0.45\textwidth}\centering
        \includegraphics[width=\linewidth]{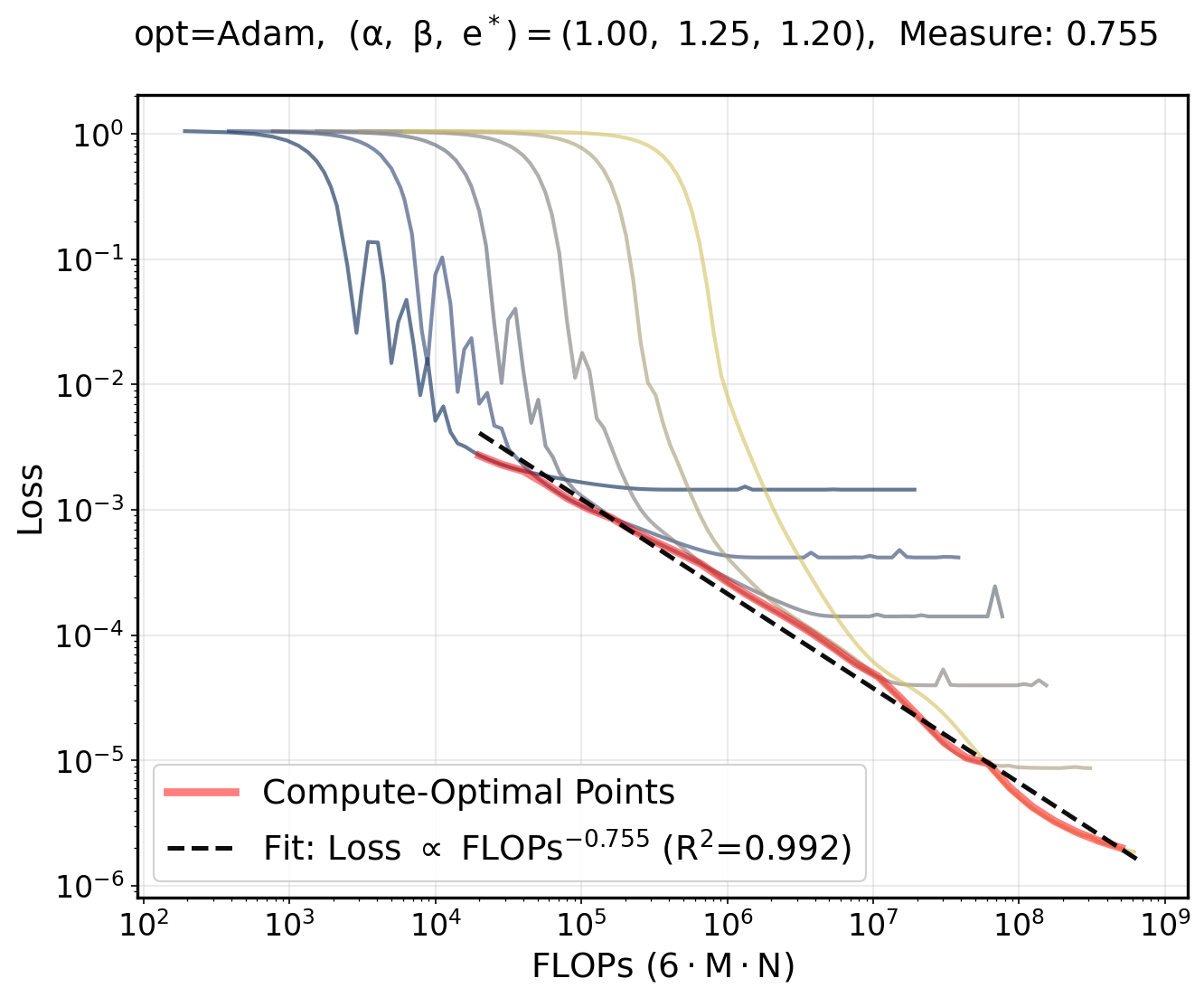}
    \end{minipage}\hfill
    \begin{minipage}{0.45\textwidth}\centering
        \includegraphics[width=\linewidth]{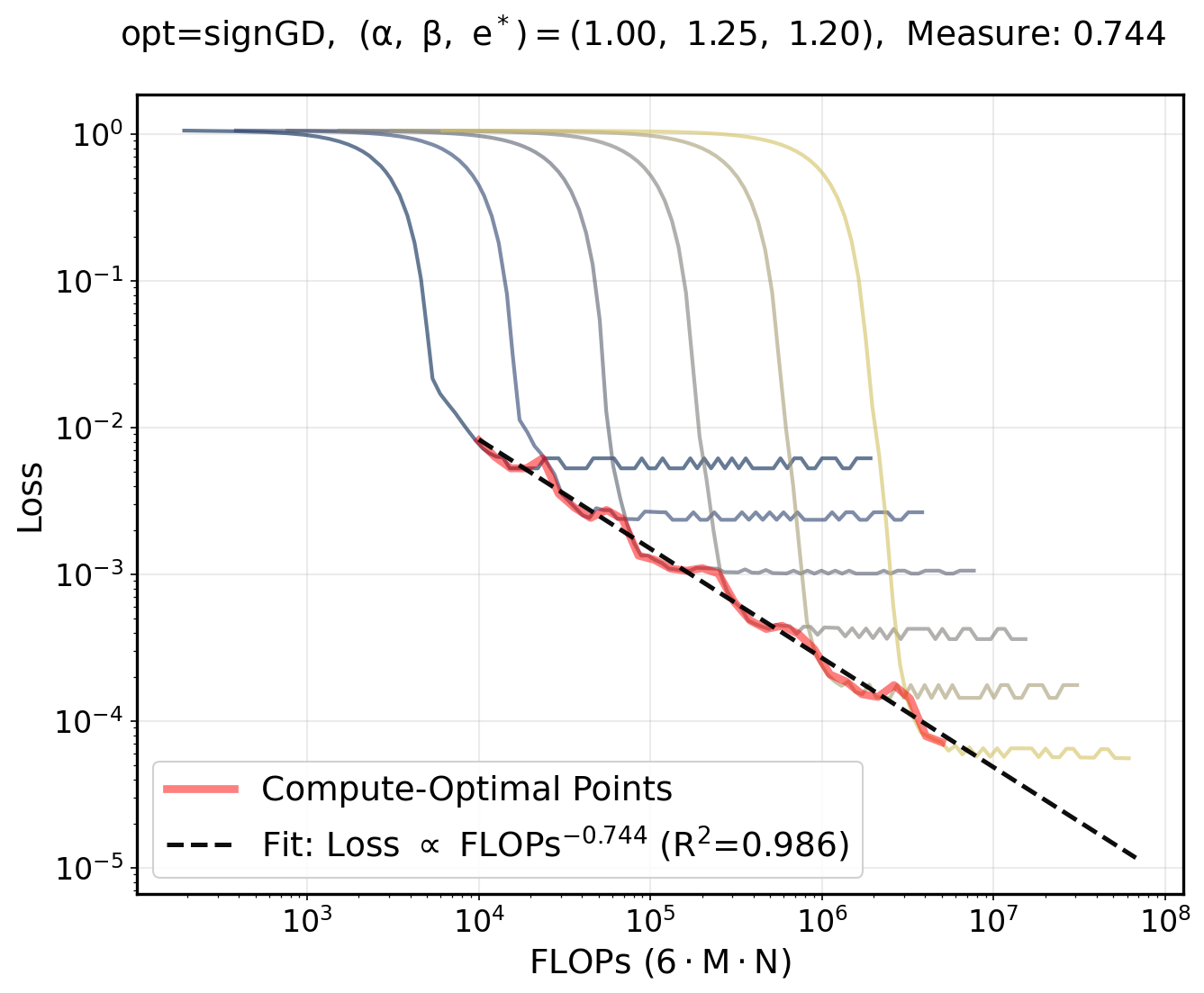}
    \end{minipage}\\[0.6em]

        \begin{minipage}{0.45\textwidth}\centering
        \includegraphics[width=\linewidth]{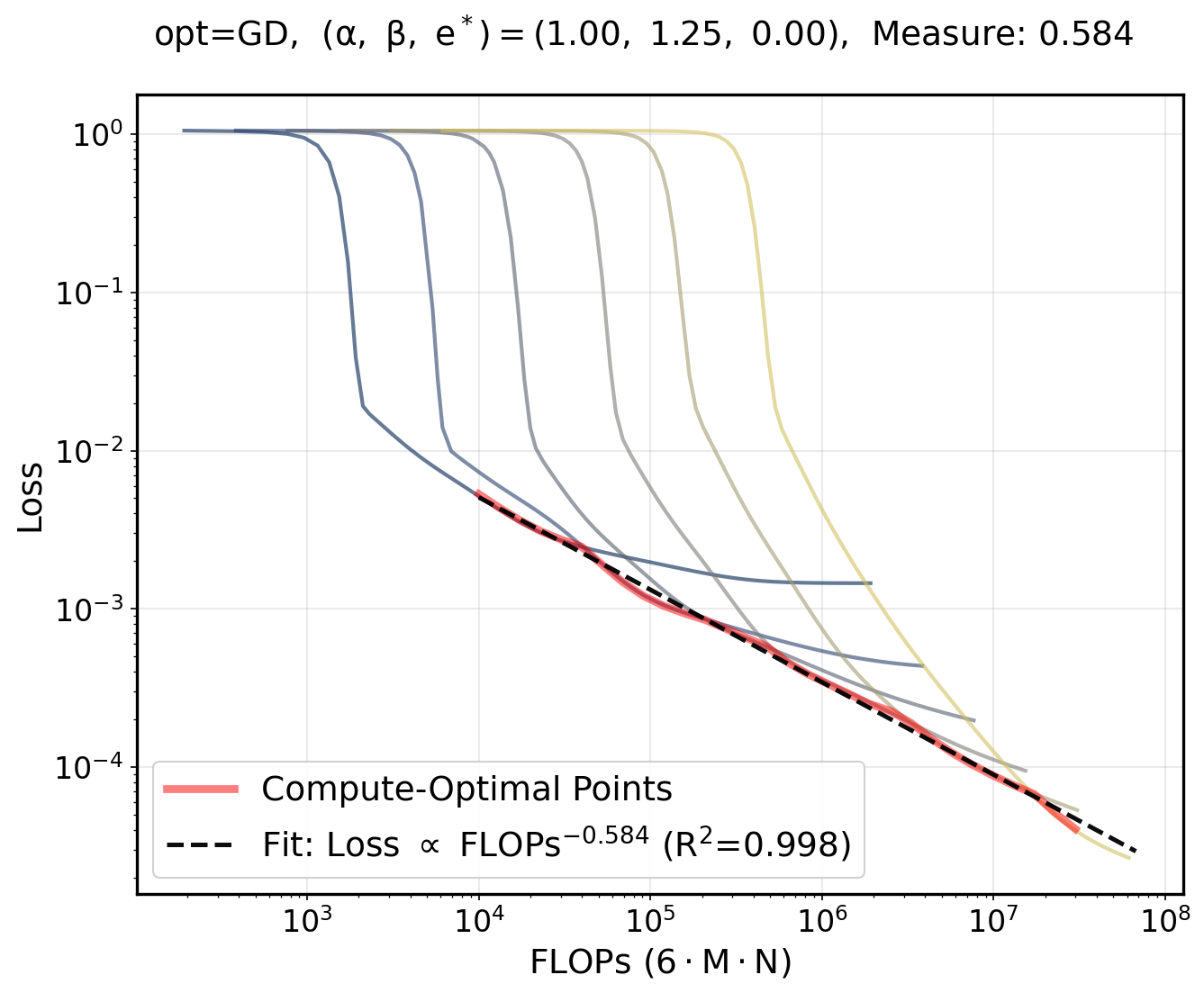}
    \end{minipage}\hfill
    \begin{minipage}{0.45\textwidth}\centering
       
    \end{minipage}\\[0.6em]

    \caption{\textbf{Compute-optimal exponent for feature learning. Left upper: Adam, Right upper: signGD, Left lower: GD.} 
    The x-axis shows the FLOPS.
    The y-axis shows the loss value. We experimented for $(\alpha, \beta) = (1.0, 1.25)$. We set $e$ of $\gamma_0 = M^{-e}$ as the optimal value derived in PLRF. We use a constant learning rate.
    Dimension before projection is 2000, and each loss curve is experimented with projected dimensions 32, 64, 128, 256, 512, 1024.
    }
    \label{fig:nonpl}
\end{figure}

\FloatBarrier

\section{Equivalence to Diagonal Covariance $H$}
\label{gencov}

In this section, we will prove that general covariance $\mH$ with eigenvalues $1^{-2\alpha},2^{-2\alpha},\dots,d^{-2\alpha}$ can be reduced to diagonal covariance $\diag(1^{-2\alpha},2^{-2\alpha},\dots,d^{-2\alpha})$.
Thereby, for the following sections, we will assume $\mH = \diag(1^{-2\alpha},2^{-2\alpha},\dots,d^{-2\alpha})$ without loss of generality.

Recall that we assume $\inner{\vv_i}{\vw^*} = i^{-\beta}$ where $\vv_i$ is a eigenvector of $\mH$ corresponding to eigenvalue $i^{-2\alpha}$ for $i=1,\dots,d$.

Let $\mD = \diag(1^{-2\alpha},2^{-2\alpha},\dots,d^{-2\alpha})$.
Then $\mH = \mU \mD \mU\T$ holds for some orthogonal matrix $\mU$
by the eigenvalue decomposition.
The $i$-th column of $\mU$ can be thought as $\vv_i$.
Then the following holds for $ \vw^*_0 = [1^{-\beta},2^{-\beta},\dots,d^{-\beta}]\T $.
\[ \vw^* = \sum_{i=1}^d  i^{-\beta} \cdot \vv_i = \mU\vw^*_0.  \]

The signSGD update rule is
\[
\vtheta_{k+1} 
= \vtheta_{k} - \gamma_k \, \sgn\big(\inner{\mS \vx_k}{\vtheta_{k}} - y_k\big)\, \sgn(\mS \vx_k).
\]

With label assumption $y_k = \inner{\vx_k}{w^*} $, the signSGD update rule converts to 
\[
\vtheta_{k+1} 
= \vtheta_{k} - \gamma_k \, \sgn\big(\inner{\mS \vx_k}{\vtheta_{k}} - \inner{\vx_k}{w^*} \big)\, \sgn(\mS \vx_k).
\]

We let $\vx_k ' = \mU\T \vx_t$. By substituting $\vx_k = \mU \vx_t'$ and $\vw^* = \mU\vw^*_0$, we get
\[
\vtheta_{k+1} 
= \vtheta_{k} - \gamma_k \, \sgn\big(\inner{\mS \mU \vx_k'}{\vtheta_{k}} - \inner{\mU \vx_k'}{\mU\vw^*_0 } \big)\, \sgn(\mS \mU \vx_k').
\]

As $\mU$ is orthogonal, it leads to 
\begin{equation}
\vtheta_{k+1} 
= \vtheta_{k} - \gamma_k \, \sgn\big(\inner{\mS \mU \vx_k'}{\vtheta_{k}} - \inner{ \vx_k'}{\vw^*_0 } \big)\, \sgn(\mS \mU \vx_k').
\label{convup}
\end{equation}

Also, the loss formula 
\[
L(\vtheta) = \lVert \mH^{1/2}(\mS\T \vtheta - \vw^*) \rVert^2 = (\mS\T \vtheta - \vw^*)\T \mH (\mS\T \vtheta - \vw^*)
\]
converts to
\[
L(\vtheta) = (\mS\T \vtheta - \mU\vw^*_0)\T \mU \mD \mU\T (\mS\T \vtheta - \mU\vw^*_0) = ((\mS \mU)\T \vtheta - \vw^*_0)\T  \mD ((\mS \mU)\T \vtheta - \vw^*_0).
\]

Now the covariance of $\vx_k'$ is $\mD = \diag(1^{-2\alpha},2^{-2\alpha},\dots,d^{-2\alpha})$ and target $ \vw^*_0 = [1^{-\beta},2^{-\beta},\dots,d^{-\beta}]\T $ is same with the diagonal covariance case.
Lastly, the distribution of $\mS \mU$ is identical to the distribution of $\mS$. 
This is because each row $\vs_i$ of $\mS$ follows the distribution $\gN(0,\mI_d / M)$, and $\vs_i \mU$, which is each row of $\mS \mU$, follows the distribution $\gN(0,\mU\T \mI_d \mU / M) = \gN(0, \mI_d / M)$. Also note that $\vs_i$s are independent and $\vs_i \mU$s are independent.

So the converted update rule \eqref{convup} is equivalent to the case with diagonal covariance $\diag(1^{-2\alpha},2^{-2\alpha},\dots,d^{-2\alpha})$.

\newpage

\section{Derivation of the Scaling Law Formula $R(M,N,\gamma_0)$}
\label{decon}

\paragraph{Goal.}
In this section, our goal is to derive the scaling law formula \eqref{fourterm} of $R(M,N,\gamma_0)$.
On the area $\alpha<0.5$ or $\beta<0.5$ with $-\alpha+0.5<\beta<\alpha+0.5$,
$\Ddis$ term is smaller than at least one of the other three terms.
So it is enough to show
\begin{equation*}
\begin{aligned}
R(M,N,\gamma_0)\;\eqsim\;
\underbrace{M^{-2\alpha+\max(0,\,1-2\beta)}}_{ =: \Am}
+\underbrace{\bigl(M^{\min(\alpha,0.5)}N\gamma_0\bigr)^{-\tfrac{2(2\alpha+2\beta-1)}{2\alpha-2\beta+1}}}_{=:\Dal}
+\underbrace{\gamma_0^2\,M^{2-\min(1,2\alpha)}}_{=:\Nm}.
\end{aligned}
\end{equation*}
for that area.

For the area $\alpha>0.5$ and $\beta>0.5$ with $-\alpha+0.5<\beta<\alpha+0.5$, as all four terms are dominant, we will prove
\begin{equation*}
\begin{aligned}
R(M,N,\gamma_0)\;\eqsim\;
\underbrace{M^{-2\alpha+\max(0,\,1-2\beta)}}_{ =: \Am}
+\underbrace{\bigl(M^{\min(\alpha,0.5)}N\gamma_0\bigr)^{-\tfrac{2(2\alpha+2\beta-1)}{2\alpha-2\beta+1}}}_{=:\Dal}
\\+\underbrace{M^{-\tfrac{6\alpha-1}{2\alpha+1}}(N\gamma_0)^{-\tfrac{2(2\alpha-1)}{2\alpha+1}}}_{ =: \Ddis}
+\underbrace{\gamma_0^2\,M^{2-\min(1,2\alpha)}}_{=:\Nm}.
\end{aligned}
\end{equation*}

\paragraph{Proof Overview.}
As a first step, we obtain the ODE
\begin{equation}
\frac{dp_i}{dt}
= -\frac{4}{\pi \sqrt{P(t)}}\,\lambda_i(\kbar)\,f(t/\gamma_0)\,p_i(t)
+ \frac{2 f(t/\gamma_0)^2 \gamma_0}{\pi}\,V_i .
\end{equation}
where $P(t) = L(t/\gamma_0)$ and $p_i(t) = r_i(t/\gamma_0)$.

Then we derive the following integral equation from the ODE.
\begin{equation}
L(N) = \Hnorm
+ \sum_{i=1}^M r_i(0)\,e^{-\tfrac{4\lambda_i\gamma_0}{\pi}\int_0^N \tfrac{f(u)}{\sqrt{L(u)}}\,du}
+ \frac{2\gamma_0^2}{\pi} \sum_{i=1}^M V_i \int_0^N
e^{-\tfrac{4\lambda_i\gamma_0}{\pi}\int_z^N \tfrac{f(u)}{\sqrt{L(u)}}\,du}\,f(z)^2\,dz .
\end{equation}

Going through the arguments, including the contour integral, our integral equation converts to the following equation, where $Q(z) = \frac{4\gamma_0}{\pi}\int_0^z \frac{f(u)}{\sqrt{L(u)}}\,du$.
\begin{align}
L(N)\;\eqsim\;
\underbrace{M^{-2\alpha+\max( 0 , 1 - 2\beta)}}_{\textbf{approx}}
\;+\;
\underbrace{\bigl(M^{\min(\alpha,\,0.5)}\,Q(N)\bigr)^{-\tfrac{2\alpha+2\beta-1}{2\alpha}}}_{\textbf{drift}}
\; \\ +\;
\underbrace{\frac{2\gamma_0^2}{\pi}\sum_{i=1}^{M}V_i
\int_{0}^{N}\exp\!\Bigl(-\frac{4\gamma_0}{\pi}\lambda_i(\overline K)\!\!\int_{z}^{N}\!\frac{du}{\sqrt{L(u)}}\Bigr)\,dz}_{\textbf{noise}}.
\end{align}
for $\alpha < 0.5$ or $\beta < 0.5$, and
\begin{align}
L(N)\ \eqsim\
\underbrace{M^{-2\alpha}}_{\text{approx}}
\;+\;
\underbrace{\bigl(M^{1/2}Q(N)\bigr)^{-\tfrac{2\alpha+2\beta-1}{2\alpha}}}_{\text{drift$_1$}}
\;+\;
\underbrace{M^{-1}\bigl(M^{1/2}Q(N)\bigr)^{-1+\tfrac{1}{2\alpha}}}_{\text{drift$_2$}}
\; \\ +\;
\underbrace{\frac{2\gamma_0^2}{\pi}\sum_{i=1}^M V_i \int_0^N
\exp\!\Bigl(-\frac{4\gamma_0}{\pi}\lambda_i(\overline K)\!\int_z^N\!\frac{du}{\sqrt{L(u)}}\Bigr)\,dz}_{\text{noise}},
\end{align}
for $\alpha > 0.5$ and $\beta > 0.5$.

Solving the early stage and the limit stage separately, we get the following proxy for $\alpha < 0.5$ or $\beta < 0.5$.
\begin{equation}
L_{\rm px}(N)
\;:=\;
\bigl(\gamma_0\,M^{\min(\alpha,\,0.5)}\,N\bigr)^{-p}
\;+\;
\underbrace{\gamma_0^2\,M^{\,2-2\min(\alpha,\,0.5)}+M^{-2\alpha+\max(0,\,1-2\beta)}}_{=:C},
\qquad
p=\frac{2(2\alpha+2\beta-1)}{\,2\alpha+1-2\beta\,}.
\end{equation}
For $\alpha > 0.5$ and $\beta > 0.5$, we get the proxy
\begin{equation}
\begin{aligned}
L_{\rm px}(N) =  (\gamma_0\,M^{0.5}N)^{-p_1}
\;+\;
\bigl(\gamma_0\,M^{\frac{6\alpha-1}{4\alpha-2}}N\bigr)^{-p_2}
\;+\;C,
\end{aligned}
\end{equation}
where
\[
p_1=\frac{2(2\alpha+2\beta-1)}{\,2\alpha+1-2\beta\,},
\qquad
p_2=\frac{4\alpha-2}{\,2\alpha+1\,}.
\]
As a last step, we verify the proxies by proving that they satisfy the converted integral equations.

\subsection{One-Step Update Formula of signSGD}
\label{onestep}

\citet{xiao2024exact} approximate the signSGD trajectory using SDE and ODE techniques.  
Their proof relies on a spectral lower bound assumption of the covariance matrix, so their results are not directly applicable to our setting.

For a quadratic function $q$, by Taylor's theorem, we have
\[
\E\!\left[q(\vtheta_{k+1}) - q(\vtheta_k)\,\middle|\, \mathcal{F}_k\right]
= \E\!\left[\inner{\nabla q(\vtheta_k)}{\vtheta_{k+1} - \vtheta_k}\,\middle|\, \mathcal{F}_k\right]
+ \tfrac{1}{2}\,\E\!\left[\inner{\nabla^2 q}{(\vtheta_{k+1}-\vtheta_k)^{\otimes 2}}\,\middle|\, \mathcal{F}_k\right],
\]
where $\mathcal{F}_k = \sigma(\mS, \vtheta_0, \dots, \vtheta_k)$.  
Since
\[
\vtheta_{k+1} - \vtheta_k = -\gamma_k \,\sgn(\inner{\mS \vx_k}{\vtheta_k} - y_k)\,\sgn(\mS \vx_k),
\]
We can expand the two terms using sign-Gaussian identities.

\paragraph{Gradient term.}
\begin{align*}
&\E\!\left[\left\langle \nabla q\!\left(\vtheta_k\right),\, \vtheta_{k+1} - \vtheta_k \right\rangle \,\middle|\, \mathcal{F}_k\right] \\
&\quad= -\gamma_k \left\langle \nabla q\!\left(\vtheta_k\right),\, \E\!\left[\sgn\!\left(\mS \vx_k\right)\,\sgn\!\left(\left\langle \vx_k,\, \mS\T \vtheta_k - \vw^* \right\rangle\right)\,\middle|\, \mathcal{F}_k\right] \right\rangle \\
&\quad= -\gamma_k \left\langle \nabla q\!\left(\vtheta_k\right),\, \frac{2}{\pi}\arcsin\!\left( \frac{ \diag\!\left(\mS \mH \mS\T\right)^{-1/2}\, \mS \mH \left(\mS\T \vtheta_k - \vw^*\right)} {\sqrt{ \left(\mS\T \vtheta_k - \vw^*\right)\T \mH \left(\mS\T \vtheta_k - \vw^*\right) }} \right) \right\rangle \\
&\quad= -\gamma_k \left\langle \nabla q\!\left(\vtheta_k\right),\, \frac{2}{\pi}\arcsin\!\left( \frac{ \diag\!\left(\mK\right)^{-1/2}\, \mK \left(\vtheta_k - \vtheta^*\right)} {\left\lVert \mH^{1/2}\!\left(\mS\T \vtheta_k - \vw^*\right) \right\rVert} \right) \right\rangle ,
\end{align*}
where $\mK = \mS \mH \mS\T$.

\paragraph{Quadratic term.}
\begin{align*}
&\E\!\left[\left\langle \nabla^2 q,\, \left(\vtheta_{k+1}-\vtheta_k\right)^{\otimes 2} \right\rangle \,\middle|\, \mathcal{F}_k\right] \\
&\quad= \gamma_k^2 \left\langle \nabla^2 q,\, \E\!\left[\left(\sgn\!\left(\mS \vx_k\right)\,\sgn\!\left(\left\langle \vx_k,\, \mS\T \vtheta_k - \vw^* \right\rangle\right)\right)^{\otimes 2}\,\middle|\, \mathcal{F}_k\right] \right\rangle \\
&\quad= \gamma_k^2 \left\langle \nabla^2 q,\, \E\!\left[\left(\sgn\!\left(\mS \vx_k\right)\right)^{\otimes 2}\,\middle|\, \mathcal{F}_k\right] \right\rangle \\
&\quad= \gamma_k^2 \left\langle \nabla^2 q,\, \frac{2}{\pi}\arcsin\!\left(\diag\!\left(\mS \mH \mS\T\right)^{-1/2}\,\mS \mH \mS\T \diag\!\left(\mS \mH \mS\T\right)^{-1/2}\right) \right\rangle \\
&\quad= \gamma_k^2 \left\langle \nabla^2 q,\, \frac{2}{\pi}\arcsin\!\left(\diag\!\left(\mK\right)^{-1/2}\,\mK\,\diag\!\left(\mK\right)^{-1/2}\right) \right\rangle .
\end{align*}

\paragraph{One-step update formula.}
Substituting the gradient and quadratic terms yields the desired one-step update formula for signSGD.
\[
\E\!\left[q\!\left(\vtheta_{k+1}\right) - q\!\left(\vtheta_{k}\right) \,\middle|\, \mathcal{F}_k\right] 
= -\frac{2\gamma_k}{\pi}\,\left\langle \nabla q\!\left(\vtheta_k\right),\, \arcsin\!\left(\frac{\kbar\left(\vtheta_k - \vtheta^*\right)}{\sqrt{L(k)}}\right) \right\rangle
+ \frac{\gamma_k^2}{\pi}\,\left\langle \nabla^2 q,\, \mK_\sigma \right\rangle .
\]

Let $\lambda_i(\kbar)$, $\vu_i$, and $\vw_i$ denote the eigenvalue, right eigenvector, and left eigenvector of $\kbar$, respectively.  
Then $\kbar = \sum_{i=1}^M \lambda_i(\kbar)\, \vu_i \otimes \vw_i$ and $I = \sum_{i=1}^M \vu_i \otimes \vw_i$.  

Define
\[
r_i(k) = (\vtheta_k - \vtheta^*)\T (\mK \vu_i \otimes \vw_i)(\vtheta_k - \vtheta^*).
\]
The loss decomposes as
\[
L(k) 
= \norm{\mH^{1/2}\mS\T(\vtheta_k - \vtheta^*)}^2 + \norm{\mH^{1/2}\vw_\perp}^2
= (\vtheta_k - \vtheta^*)\T \mK (\vtheta_k - \vtheta^*) + \Hnorm
= \sum_{i=1}^d r_i(k) + \Hnorm.
\]

We now apply the one-step update formula to $r_i(k)$.  
Note that 
\[
\nabla r_i(k) = \mK \vu_i \,\inner{\vw_i}{\vtheta_k - \vtheta^*} + \vw_i\,\inner{\mK \vu_i}{\vtheta_k - \vtheta^*},
\qquad
\nabla^2 r_i = \mK \vu_i \vw_i\T + \vw_i \vu_i\T \mK\T.
\]

Approximating $\arcsin(x) \approx x$ and using $\mK\T = \mK$ together with $\mK\T \kbar = \kbar\T \mK\T$, we obtain
\begin{align*}
\E\!\left[r_i(k+1) - r_i(k) \,\middle|\, \mathcal{F}_k\right]
&\approx -\frac{2\gamma_k}{\pi}\Biggl(
    \left\langle \vw_i,\, \vtheta_k - \vtheta^* \right\rangle
    \left\langle \mK \vu_i,\, \frac{\kbar\left(\vtheta_k - \vtheta^*\right)}{\sqrt{L(k)}} \right\rangle
    + \left\langle \mK \vu_i,\, \vtheta_k - \vtheta^* \right\rangle
    \left\langle \vw_i,\, \frac{\kbar\left(\vtheta_k - \vtheta^*\right)}{\sqrt{L(k)}} \right\rangle
\Biggr) \\
&\quad+ \frac{2\gamma_k^2}{\pi}\, \vw_i\T \mK_\sigma \mK \vu_i \\
&= -\frac{4\gamma_k}{\pi \sqrt{L(k)}}\, \lambda_i\!\left(\kbar\right)\, r_i(k)
+ \frac{2\gamma_k^2}{\pi}\, \vw_i\T \mK_\sigma \mK \vu_i .
\end{align*}

It is possible to replace the linear approximation $\arcsin(x) \approx x$ by an inequality, and the main results of our paper remain unchanged. We explain it in Appendix~\ref{arcxn}.
Hence,
\[
\E\!\left[r_i(k+1) - r_i(k) \,\middle|\, \mathcal{F}_k\right]
\approx -\frac{4\gamma_k}{\pi \sqrt{L(k)}}\,\lambda_i(\kbar)\,r_i(k)
+ \frac{2\gamma_k^2}{\pi}\,\vw_i\T \mK_\sigma \mK \vu_i .
\]

\subsection{ODE Approximation and Implicit Integral Equation of signSGD}
\label{ODE}

Let the learning rate be $\gamma_k = \gamma_0 f(k)$.  
Define $V_i = \vw_i\T \mK_\sigma \mK \vu_i$, then our one-step update formula becomes
\[
\E\!\left[r_i(k+1) - r_i(k) \mid \mathcal{F}_k\right]
= -\frac{4\gamma_k}{\pi \sqrt{L(k)}}\,\lambda_i(\kbar)\,r_i(k)
+ \frac{2\gamma_k^2}{\pi}\,V_i.
\]

Dividing by $\gamma_0$ gives
\[
\E\!\left[\frac{r_i(k+1) - r_i(k)}{\gamma_0} \middle| \mathcal{F}_k\right] 
= -\frac{4}{\pi \sqrt{L(k)}}\,\lambda_i(\kbar)\,f(k)\,r_i(k)
+ \frac{2 f(k)^2 \gamma_0}{\pi}\,V_i.
\]

Interpreting $\gamma_0$ as the time step, the discrete index $k$ corresponds to continuous time $t = k\gamma_0$.  
Let $P(t) = L(t/\gamma_0)$ and $p_i(t) = r_i(t/\gamma_0)$.  
We then obtain the ODE
\begin{equation}
\frac{dp_i}{dt}
= -\frac{4}{\pi \sqrt{P(t)}}\,\lambda_i(\kbar)\,f(t/\gamma_0)\,p_i(t)
+ \frac{2 f(t/\gamma_0)^2 \gamma_0}{\pi}\,V_i .
\label{ode}
\end{equation}

From this point onward in the analysis, we treat \(P\), \(p_i\), \(L\), and \(r_i\) as their continuous extensions, allowing arbitrary positive real inputs.

\paragraph{Integral formulation.}
Solving the ODE yields
\[
p_i(t)
= p_i(0)\,e^{-\tfrac{4\lambda_i}{\pi}\int_0^t \tfrac{f(u/\gamma_0)}{\sqrt{P(u)}}\,du}
+ \frac{2\gamma_0}{\pi}\,V_i \int_0^t
e^{-\tfrac{4\lambda_i}{\pi}\int_s^t \tfrac{f(u/\gamma_0)}{\sqrt{P(u)}}\,du}\,f(s/\gamma_0)^2\,ds.
\]

Since $P(t) = \sum_{i=1}^M p_i(t) + \Hnorm$, we obtain
\[
P(t) = \Hnorm
+ \sum_{i=1}^M p_i(0)\,e^{-\tfrac{4\lambda_i}{\pi}\int_0^t \tfrac{f(u/\gamma_0)}{\sqrt{P(u)}}\,du}
+ \frac{2\gamma_0}{\pi} \sum_{i=1}^M V_i \int_0^t
e^{-\tfrac{4\lambda_i}{\pi}\int_s^t \tfrac{f(u/\gamma_0)}{\sqrt{P(u)}}\,du}\,f(s/\gamma_0)^2\,ds .
\]

\paragraph{Integral equation in discrete form.}
Note that $L(N) = P(N\gamma_0)$.  
With a change of variables, we obtain
\begin{equation}
L(N) = \Hnorm
+ \sum_{i=1}^M r_i(0)\,e^{-\tfrac{4\lambda_i\gamma_0}{\pi}\int_0^N \tfrac{f(u)}{\sqrt{L(u)}}\,du}
+ \frac{2\gamma_0^2}{\pi} \sum_{i=1}^M V_i \int_0^N
e^{-\tfrac{4\lambda_i\gamma_0}{\pi}\int_z^N \tfrac{f(u)}{\sqrt{L(u)}}\,du}\,f(z)^2\,dz .
\label{int_eq}
\end{equation}

\paragraph{Drift and noise decomposition.}
Define
\begin{equation}
\Rheadn = \sum_{i=1}^M r_i(0)\,e^{-\tfrac{4\lambda_i\gamma_0}{\pi}\int_0^N \tfrac{f(u)}{\sqrt{L(u)}}\,du},
\quad
\Rtailn = \frac{2\gamma_0^2}{\pi} \sum_{i=1}^M V_i \int_0^N
e^{-\tfrac{4\lambda_i\gamma_0}{\pi}\int_z^N \tfrac{f(u)}{\sqrt{L(u)}}\,du}\,f(z)^2\,dz.
\label{formalheadtail}
\end{equation}

Then
\begin{equation}
L(N) = \Hnorm + \Rheadn + \Rtailn,
\label{dddcomp}
\end{equation}
and we will analyze $\Hnorm + \Rheadn$ and $\Rtailn$ separately.

Figure~\ref{fig:dyn} shows dynamics of three terms $\Hnorm$, $\Rheadn$, $\Rtailn$ referring each as Approx, Drift, Noise.
The right plot in Figure~\ref{fig:dyn} validates the equality in \eqref{dddcomp}.

\begin{figure}[h]
    \centering
    \begin{subfigure}{0.45\textwidth}
        \centering
        \includegraphics[width=\textwidth]{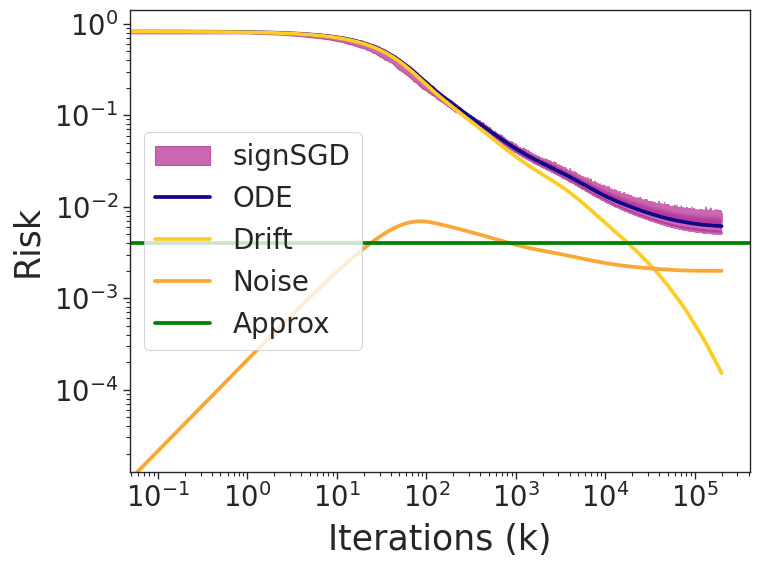}
        \label{fig:plot1}
    \end{subfigure}
    \hfill
    \begin{subfigure}{0.45\textwidth}
        \centering
        \includegraphics[width=\textwidth]{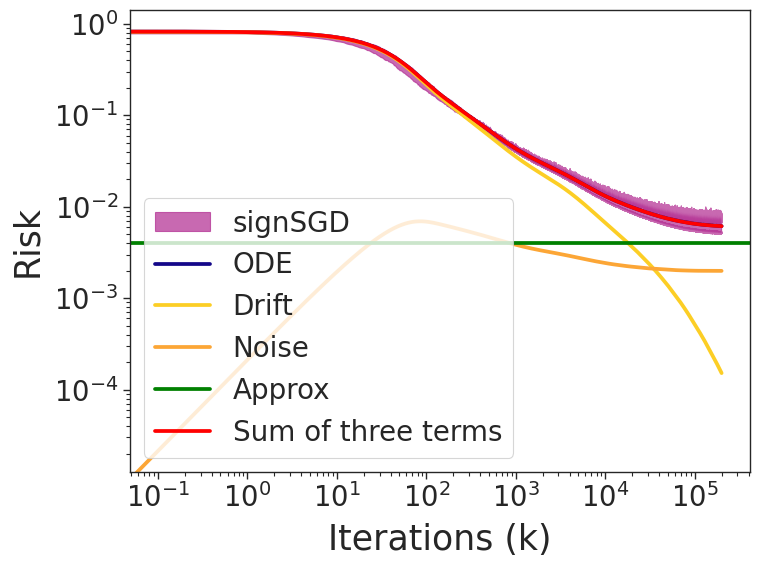}
        \label{fig:plot2}
    \end{subfigure}
    \caption{\textbf{Dynamics of Drift and Noise.} 
    Left: the purple curve is the 80$\%$ confidence interval of the true signSGD trajectory, while the blue curve is the numerical ODE solution. The yellow, orange, and green curves correspond to the approximation, drift, and noise terms in \eqref{dddcomp}.  
    Right: the red curve shows the sum of these three terms, matching both the true trajectory and the ODE solution. Parameters: $\alpha=1.0$, $\beta=0$, $\gamma_0=0.003$, $f(z)=1$, $M=200$, $d=800$.}
    \label{fig:dyn}
\end{figure}

\subsubsection{Transformation of the Drift Term and Approximation Error}
\label{headtrans}

Let
\[
Q(z) \;=\; \frac{4\gamma_0}{\pi}\int_0^z \frac{f(u)}{\sqrt{L(u)}}\,du,
\qquad
\kbar_1 \;=\; \mH^{1/2} \mS\T \diag(\mS\mH\mS\T)^{-1/2} \mS \mH^{1/2}.
\]
Then
\[
\mK\,\kbar^p \;=\; \mS \mH^{1/2}  \kbar_1^p \mH^{1/2}\mS\T
\]
holds.

Define
\[
\mA \;=\; \mH^{1/2} e^{-\kbar_1 Q(N)} \mH^{1/2},
\qquad
\vu \;=\; \mS\T\vtheta_0 - \mS\T\vtheta^* - \vw_\perp \;=\; \mS\T\vtheta_0 - \vw^* .
\]

From $\mS \mH \vw_\perp = 0$ we get
\[
\kbar_1 (\mH^{1/2}\vw_\perp) = 0,
\]
and this implies
\[
e^{-\kbar_1 Q(N)} (\mH^{1/2}\vw_\perp)
= e^0(\mH^{1/2}\vw_\perp)
= \mH^{1/2}\vw_\perp.
\]
Thus,
\[
\mA \vw_\perp = \mH \vw_\perp,
\qquad
\vw_\perp\T \mA \vw_\perp = \vw_\perp\T \mH \vw_\perp,
\]
and
\[
\vu\T \mA \vw_\perp
= \vu\T \mH \vw_\perp
= (\vtheta_0 - \vtheta^*)\T \mS \mH \vw_\perp - \vw_\perp\T \mH \vw_\perp
= -\,\vw_\perp\T \mH \vw_\perp.
\]

Using these identities, we can convert the drift term as follows:
{
\allowdisplaybreaks
\begin{align*}
\Rheadn
&= \sum_{i=1}^M r_i(0)\cdot e^{-\lambda_i(\kbar) Q(N)} \\[1ex]
&= \sum_{i=1}^M (\vtheta_0-\vtheta^*)\T (\mK\vu_i \otimes \vw_i)(\vtheta_0-\vtheta^*)\cdot e^{-\lambda_i(\kbar) Q(N)} \\[1ex]
&= \sum_{i=1}^M (\vtheta_0-\vtheta^*)\T \Bigl((\mK\vu_i \otimes \vw_i)\cdot e^{-\lambda_i(\kbar) Q(N)}\Bigr)(\vtheta_0-\vtheta^*) \\[1ex]
&= (\vtheta_0-\vtheta^*)\T \mK e^{-\kbar Q(N)} (\vtheta_0-\vtheta^*) \\[1ex]
&= (\vtheta_0-\vtheta^*)\T \mS \mH^{1/2}\,\bigl(\mH^{1/2}\mS e^{-\kbar Q(N)}\bigr)(\vtheta_0-\vtheta^*) \\[1ex]
&= (\vtheta_0-\vtheta^*)\T \mS \mH^{1/2}\,\bigl(e^{-\kbar_1 Q(N)}\mH^{1/2}\mS\bigr)(\vtheta_0-\vtheta^*) \\[1ex]
&= (\vu+\vw_\perp)\T \mA (\vu+\vw_\perp) \\[1ex]
&= \vu\T \mA \vu + \vu\T \mA \vw_\perp + \vw_\perp\T \mA \vu + \vw_\perp\T \mA \vw_\perp \\[1ex]
&= \vu\T \mH^{1/2} e^{-\kbar_1 Q(N)} \mH^{1/2} \vu
- \vw_\perp\T \mH \vw_\perp
- \vw_\perp\T \mH \vw_\perp
+ \vw_\perp\T \mH \vw_\perp \\[1ex]
&= \vu\T \mH^{1/2} e^{-\kbar_1 Q(N)} \mH^{1/2} \vu - \Hnorm .
\end{align*}
}

\paragraph{Drift term plus approximation error.}
Adding the approximation error gives
\begin{align*}
\Rheadn + \Hnorm
&= \vu\T \mH^{1/2} e^{-\kbar_1 Q(N)} \mH^{1/2} \vu \\[1ex]
&= \Big\langle e^{-\kbar_1 Q(N)},\;\bigl(\mH^{1/2}(\mS\T\vtheta_0 - \vw^*)\bigr)^{\otimes 2}\Big\rangle.
\end{align*}

Also we assume $\vtheta_0 = 0$, then
\[
\big\langle e^{-\kbar_1 Q(N)},\;(\mH^{1/2}(\mS\T\vtheta_0-\vw^*))^{\otimes 2}\big\rangle
= \big\langle e^{-\kbar_1 Q(N)},\;(\mH^{1/2}\vw^*)^{\otimes 2}\big\rangle.
\]

In the next subsection, we will describe how to apply a deterministic approximation,
similar to \citet{paquette4+}, to the following term:
\[
\mathcal{H} \;:=\; \big\langle e^{-\kbar_1 Q(N)},\, \vv^{\otimes 2}\big\rangle,
\]
where $\vv := \mH^{1/2}\vw^* \in \R^d$.

\subsubsection{Deterministic Approximation}
\label{deter}

Note that we assume $d \ge rM$ for some $r>1$, and let $d/M \to (1, \infty]$ as $d, M \to \infty$ when $2\alpha>1$, and $d/M \to (1, \infty)$ when $2\alpha<1$.
In our setup, $S\in\R^{M\times d}$ have i.i.d.\ $\mathcal N(0,1/M)$ entries, and we will write the $k$th column of $S\T$ as $\frac{1}{\sqrt{M}}\vs_k\in\R^d$; columns are independent.

Define
\[
\vy_k:=\mH^{1/2}\vs_k\in\R^d,
\qquad
a_k:=\frac{1}{\sqrt{\tfrac{1}{M} \vy_k\T \vy_k}}
=\frac{\sqrt{M}}{\sqrt{\vs_k\T \mH \vs_k}}>0.
\]

The unnormalized baseline and the column–normalized matrices are
\[
\widehat \mK:= \mH^{1/2} \mS\T \mS \mH^{1/2}
= \frac{1}{M} \sum_{k=1}^M \vy_k \vy_k\T,
\qquad
\overline \mK_1:=\mH^{1/2}\mS\T\,\diag(\mS\mH\mS\T)^{-1/2}\mS \mH^{1/2}
= \frac{1}{M} \sum_{k=1}^M a_k\,\vy_k \vy_k\T.
\]

For $z\in\C^+:=\{z:\Im z>0\}$, define the resolvents
\[
\mL(z):=(\overline \mK_1-z\mI)^{-1},
\qquad
\mR^{(k)}(z):=\left(\frac{1}{M} \sum_{\ell\ne k} a_\ell \vy_\ell \vy_\ell\T - z\mI\right)^{-1}.
\]

Note that
\[
\vy_k\mB\,\vy_k \approx \Tr(\mH \mB) 
\]
for matrix $\mB$.  
In particular,
\[
\vy_k\T \vy_k \approx \Tr \mH ,
\qquad
a_k \approx \frac{\sqrt M}{\sqrt{\Tr \mH}}.
\]

Also note that
\[
a_k\,\vy_k\T \mR \,\mR^{(k)}\,\vy_k
\approx \frac{\sqrt M}{\sqrt{\Tr \mH}}\cdot \Tr(\mH\,\mR\,\mR^{(k)}),
\]
and
\[
a_k \vy_k\T \mR^{(k)}\vy_k
\approx \frac{\sqrt M}{\sqrt{\Tr \mH}} \Tr(\mH\,\mR^{(k)}).
\]

By the Sherman–Morrison expansion,
\[
\mR
= \mR^{(k)} - \frac{M^{-1} a_k\,\mR^{(k)}\vy_k \vy_k\T \mR^{(k)}}{1+ M^{-1} a_k\,\vy_k\T \mR^{(k)} \vy_k}.
\]
Multiplying on the left by $\mR$ and sandwiching with $\vy_k\T(\cdot)\vy_k$, we get
\begin{align*}
a_k \vy_k\T \mR\,\mR\,\vy_k
&= a_k \vy_k\T \mR\,\mR^{(k)}\vy_k
- \frac{M^{-1} a_k \vy_k\T \mR\,\mR^{(k)} \vy_k \cdot a_k \vy_k\T \mR^{(k)} \vy_k}{1+ M^{-1} a_k\,\vy_k\T \mR^{(k)} \vy_k}.
\end{align*}
Now we will replace terms on the right side by
\[
a_k \vy_k\T \mR \,\mR^{(k)}\vy_k
\approx \frac{\sqrt M}{\sqrt{\Tr \mH}} \Tr(\mH\,\mR \,\mR^{(k)}),
\]
and
\[
a_k \vy_k\T \mR^{(k)}\vy_k
\approx \frac{\sqrt M}{\sqrt{\Tr \mH}} \Tr(\mH\,\mR^{(k)}).
\]
Thus
\[
a_k \vy_k\T \mR \,\mR\,\vy_k
\approx \frac{\tfrac{\sqrt M}{\sqrt{\Tr \mH}}\Tr(\mH\,\mR \,\mR^{(k)})}{1+M^{-1}\tfrac{\sqrt M}{\sqrt{\Tr \mH}}\Tr(\mH\,\mR^{(k)})}.
\]
Replacing $\mR^{(k)}$ by $\mR$ and averaging over $k$, we obtain
\[
\frac{1}{M}\sum_{k=1}^M a_k \vy_k\T \mR \,\mR\,\vy_k
\approx \frac{p_d \Tr(\mH\,\mR \,\mR)}{1+M^{-1}p_d \Tr(\mH\,\mR)},
\qquad p_d:=\frac{\sqrt M}{\sqrt{\Tr \mH}}.
\]

It implies
\[
\trace(\mR (\mR^{-1} + z\mI) \mR)
\approx \frac{p_d \Tr( \mR \, \mH \,\mR)}{1+M^{-1}p_d \Tr(\mH\,\mR)}.
\]

This implies
\[
\mL(z)^{-1}+z\mI \approx \frac{p_d}{1+M^{-1}p_d \Tr(\mH\,\mL(z))} \mH.
\]

Let 
\[
m(z/p_d) = \frac{1}{1+M^{-1}p_d \Tr(\mH\,\mL(z))} .
\]

Then
\[
\mL(z) \approx (-z\mI+p_d m(z/p_d)\mH)^{-1}.
\]

Thus
\[
(\overline \mK_1-z\mI)^{-1}
 \approx (-z\mI+p_d m(z/p_d)\mH)^{-1}.
\]

Therefore,
\[
m(z) = \frac{1}{1+M^{-1}p_d \Tr(\mH\,\mR(p_d z))} \approx \frac{1}{1+M^{-1}\Tr(\mH(-z\mI+m(z)\mH)^{-1})}
\]
holds. This fixed–point equation is identical to the one in \citet{paquette4+}.

\paragraph{Contour representation.}
Let $\vv:=\mH^{1/2}\vw^*\in\R^d$ and consider
\[
\mathcal H := \ip{e^{-\overline \mK_1 Q(N)},\,\vv^{\otimes 2}}.
\]
For any analytic $g$ on a contour $\Gamma_2$ enclosing $\mathrm{Spec}(\overline \mK_1)$,
\[
g(\overline \mK_1) = -\frac{1}{2\pi i}\oint_{\Gamma_2} g(z)(\overline \mK_1-z\mI)^{-1}\,dz.
\]

We prove \[c_1\,M^{\min(0.5,\alpha)}\,I
\;\preceq\;
\diag(\mS\mH\mS\T)^{-1/2}
\;\preceq\;
c_2\,M^{\min(0.5,\alpha)}\,I\] in Section~\ref{matrixbound}.
It leads to 
\[ c_1\,M^{\min(0.5,\alpha)} \widehat \mK \preceq \kbar \preceq c_2\,M^{\min(0.5,\alpha)} \widehat \mK.  \]
$\overline \mK_1$ has eigenvalues scaled by $M^{\min(0.5,\alpha)}$ compared to $\widehat \mK$ excluding constant.
Note that $ p_d \eqsim  M^{\min(0.5,\alpha)}$.
So, there exists a contour $\Gamma_2$ enclosing the spectrum of $\overline{\mK}_1$, and its $1/p_d$–scaled version $\Gamma$ encloses the spectrum of $\widehat{\mK}$.

Taking $g(z)=e^{-Q(N)z}$,
\begin{align*}
\mathcal H
&= -\frac{1}{2\pi i}\oint_{\Gamma_2} e^{-Q(N)z}\ip{(\overline \mK_1-z\mI)^{-1},\,\vv^{\otimes 2}}\,dz \\
&\approx -\frac{1}{2\pi i}\oint_{\Gamma_2} e^{-Q(N)z}\ip{(-z\mI+p_d m(z/p_d)\mH)^{-1},\,\vv^{\otimes 2}}\,dz \\
&= -\frac{1}{2\pi i}\oint_{\Gamma} e^{-p_d Q(N)z}\ip{(-z\mI+m(z)\mH)^{-1},\,\vv^{\otimes 2}}\,dz.
\end{align*}

Let $\mathcal{R}(z)=(-z\mI+m(z)\mH)^{-1}$, then our objective converts to 
\begin{align*}
\mathcal H
&\approx  -\frac{1}{2\pi i}\oint_{\Gamma} e^{-p_d Q(N)z}\ip{\mathcal{R}(z),\,\vv^{\otimes 2}}\,dz.
\end{align*}

\subsubsection{Final Transformation Result}

\citet{paquette4+} evaluate the contour integrals with $\mathcal{R}(z)$.  
When $\alpha < 0.5$ or $\beta < 0.5$, they show
\begin{align}
 -\frac{1}{2\pi\I}\oint_{\Gamma}
 \bigl(1 - 2\gamma B z + \gamma^2 B(B+1) z^2\bigr)^r\,
 \ip{ \mathcal L(z),\, v^{\otimes 2} }\,dz
 &\;\eqsim\;
 M^{-2\alpha+\max(0,\,1-2\beta)} \nonumber \\
 &\quad+\; (2\gamma B r)^{-\tfrac{2\alpha+2\beta-1}{2\alpha}}.
\end{align}

When $\alpha > 0.5$ and $\beta > 0.5$, they obtained
\begin{align}
 -\frac{1}{2\pi\I}\oint_{\Gamma}
 \bigl(1 - 2\gamma B z + \gamma^2 B(B+1) z^2\bigr)^r\,
 \ip{ \mathcal L(z),\, v^{\otimes 2} }\,dz
 &\;\eqsim\;
 M^{-2\alpha+\max(0,\,1-2\beta)} \nonumber \\
 &\quad+\; (2\gamma B r)^{-\tfrac{2\alpha+2\beta-1}{2\alpha}} \nonumber \\
 &\quad+\; M^{-1}\,(2\gamma B r)^{-2+\tfrac{1}{2\alpha}}.
\end{align}

For the case $\alpha < 0.5$ or $\beta < 0.5$, applying a similar method to our objective yields
\begin{align}
 -\frac{1}{2\pi\I}\oint_{\Gamma} e^{-p_d\,Q(N)z}\,
 \ip{ \mathcal L(z),\, v^{\otimes 2} }\,dz
 &\;\eqsim\;
 M^{-2\alpha+\max(0,\,1-2\beta)} \nonumber \\
 &\quad+\;
 \Bigl(M^{\min(\alpha,\,0.5)}\,Q(N)\Bigr)^{-\tfrac{2\alpha+2\beta-1}{2\alpha}},
 \label{pq1}
\end{align}
with details provided in Appendix~\ref{app:similar-analysis}. Hence,
\begin{align}
 \ip{ e^{-\kbar_1 Q(N)},\,(\mH^{1/2}\vw^*)^{\otimes 2} }
 &\;\eqsim\;
 M^{-2\alpha+\max(0,\,1-2\beta)} \nonumber \\
 &\quad+\;
 \Bigl(M^{\min(\alpha,\,0.5)}\,Q(N)\Bigr)^{-\tfrac{2\alpha+2\beta-1}{2\alpha}}.
\end{align}

For the case $\alpha > 0.5$ and $\beta > 0.5$, a similar argument gives
\begin{align}
 -\frac{1}{2\pi\I}\oint_{\Gamma} e^{-p_d\,Q(N)z}\,
 \ip{ \mathcal L(z),\, v^{\otimes 2} }\,dz
 &\;\eqsim\;
 M^{-2\alpha+\max(0,\,1-2\beta)} \nonumber \\
 &\quad+\;
 \Bigl(M^{\min(\alpha,\,0.5)}\,Q(N)\Bigr)^{-\tfrac{2\alpha+2\beta-1}{2\alpha}} \nonumber \\
 &\quad+\;
 M^{-1}\,\Bigl(M^{\min(\alpha,\,0.5)}\,Q(N)\Bigr)^{-1+\tfrac{1}{2\alpha}},
 \label{pq2}
\end{align}
with details in Appendix~\ref{app:similar-analysis}. Consequently,
\begin{align}
 \ip{ e^{-\kbar_1 Q(N)},\,(\mH^{1/2}\vw^*)^{\otimes 2} }
 &\;\eqsim\;
 M^{-2\alpha+\max(0,\,1-2\beta)} \nonumber \\
 &\quad+\;
 \Bigl(M^{\min(\alpha,\,0.5)}\,Q(N)\Bigr)^{-\tfrac{2\alpha+2\beta-1}{2\alpha}} \nonumber \\
 &\quad+\;
 M^{-1}\,\Bigl(M^{\min(\alpha,\,0.5)}\,Q(N)\Bigr)^{-1+\tfrac{1}{2\alpha}}.
\end{align}

In summary, we obtain
\begin{equation}
\Rheadn + \Hnorm
\;\eqsim\;
M^{-2\alpha+\max(0,\,1-2\beta)}
\;+\;
\Bigl(M^{\min(\alpha,\,0.5)}\,Q(N)\Bigr)^{-\tfrac{2\alpha+2\beta-1}{2\alpha}},
\label{head_trans}
\end{equation}
for $\alpha < 0.5$ or $\beta < 0.5$, and
\begin{align}
\Rheadn + \Hnorm
&\;\eqsim\;
M^{-2\alpha+\max(0,\,1-2\beta)} \nonumber \\
&\quad+\;
\Bigl(M^{\min(\alpha,\,0.5)}\,Q(N)\Bigr)^{-\tfrac{2\alpha+2\beta-1}{2\alpha}} \nonumber \\
&\quad+\;
M^{-1}\,\Bigl(M^{\min(\alpha,\,0.5)}\,Q(N)\Bigr)^{-1+\tfrac{1}{2\alpha}},
\label{head_trans2}
\end{align}
for $\alpha > 0.5$ and $\beta > 0.5$.

Figure~\ref{fig:dapp} shows that our transformed result in \eqref{head_trans} and \eqref{head_trans2} based on deterministic approximation match the true signSGD trajectory up to a constant factor. 
When interpreting the figure, note that our analysis is asymptotic; hence, discrepancies may appear in the very early iterations.

\FloatBarrier

\begin{figure}[t]
    \centering

    \begin{minipage}{0.45\textwidth}\centering
        \includegraphics[width=\linewidth]{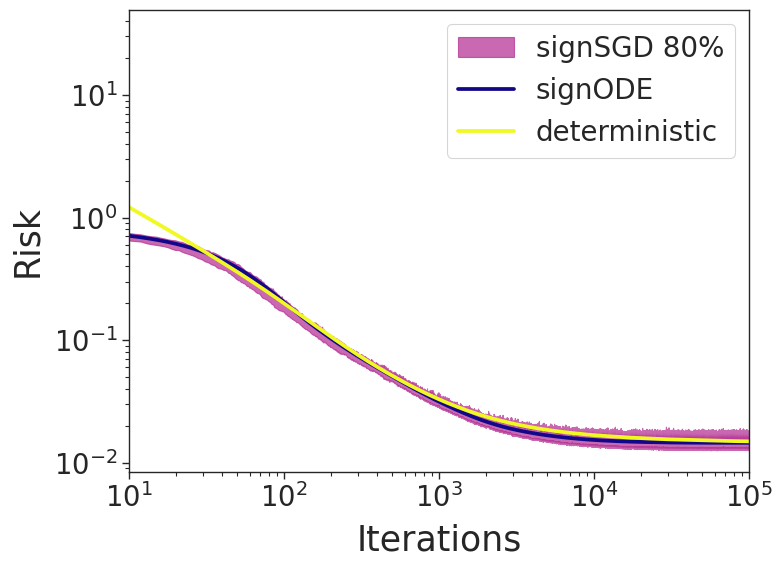}
    \end{minipage}\hfill
    \begin{minipage}{0.45\textwidth}\centering
        \includegraphics[width=\linewidth]{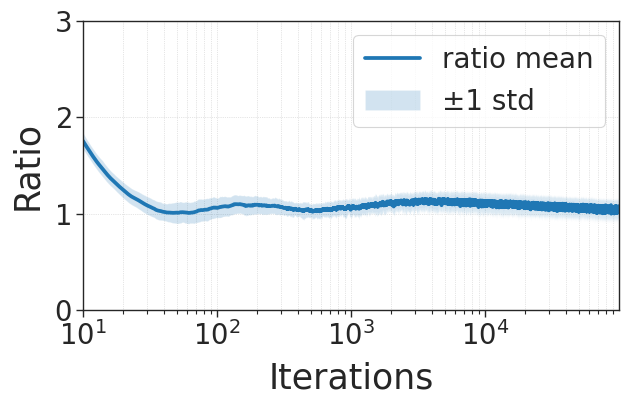}
    \end{minipage}\\[0.6em]

    \begin{minipage}{0.45\textwidth}\centering
        \includegraphics[width=\linewidth]{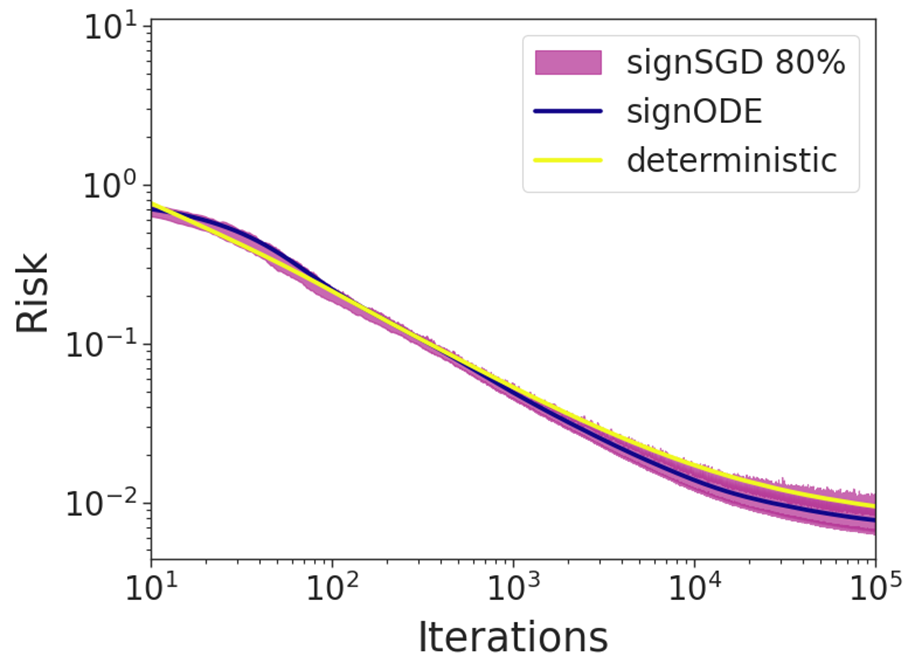}
    \end{minipage}\hfill
    \begin{minipage}{0.45\textwidth}\centering
        \includegraphics[width=\linewidth]{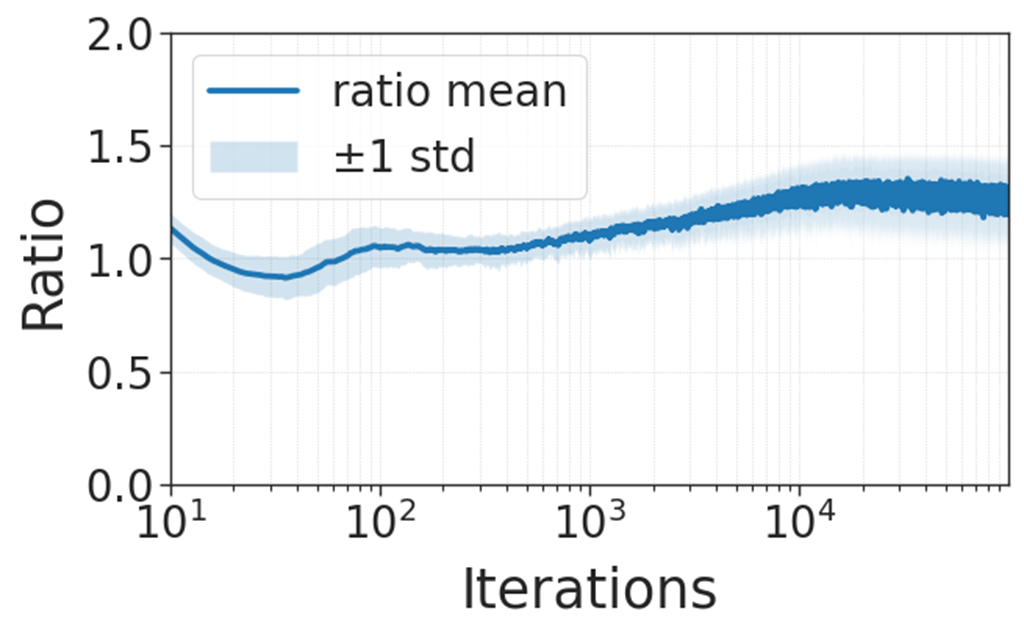}
    \end{minipage}\\[0.6em]

    \begin{minipage}{0.45\textwidth}\centering
        \includegraphics[width=\linewidth]{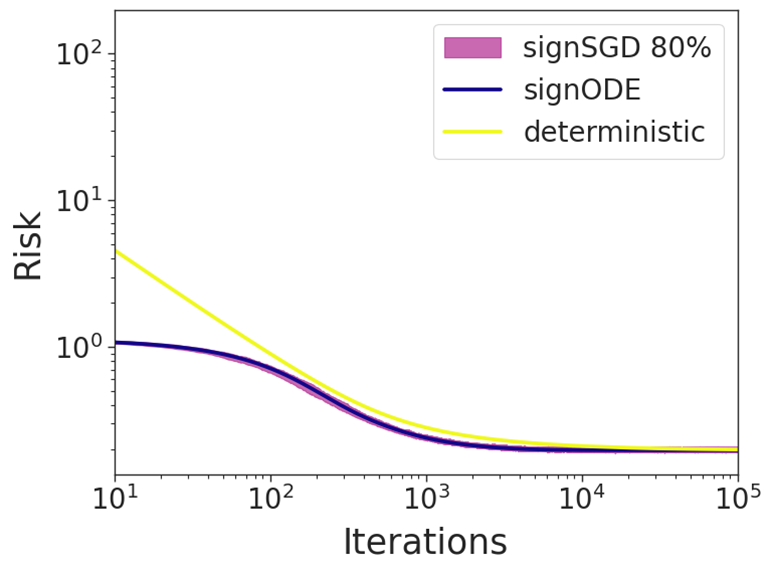}
    \end{minipage}\hfill
    \begin{minipage}{0.45\textwidth}\centering
        \includegraphics[width=\linewidth]{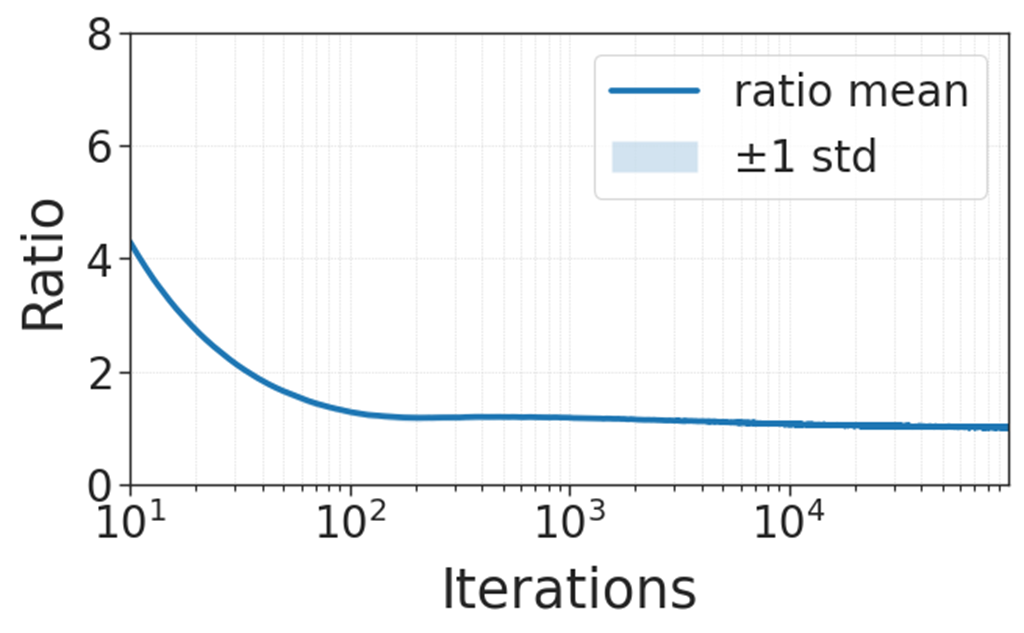}
    \end{minipage}\\[0.6em]

    \begin{minipage}{0.45\textwidth}\centering
        \includegraphics[width=\linewidth]{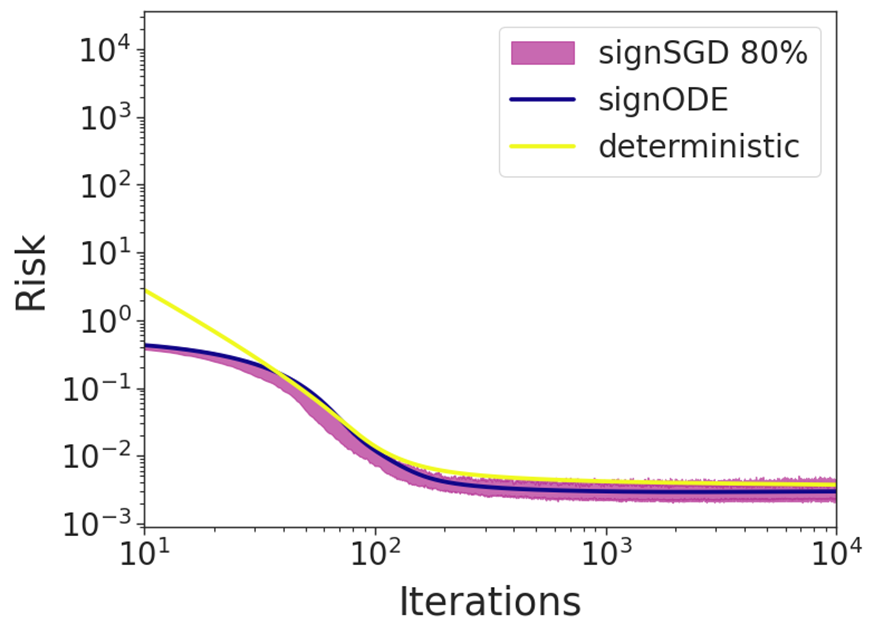}
    \end{minipage}\hfill
    \begin{minipage}{0.45\textwidth}\centering
        \includegraphics[width=\linewidth]{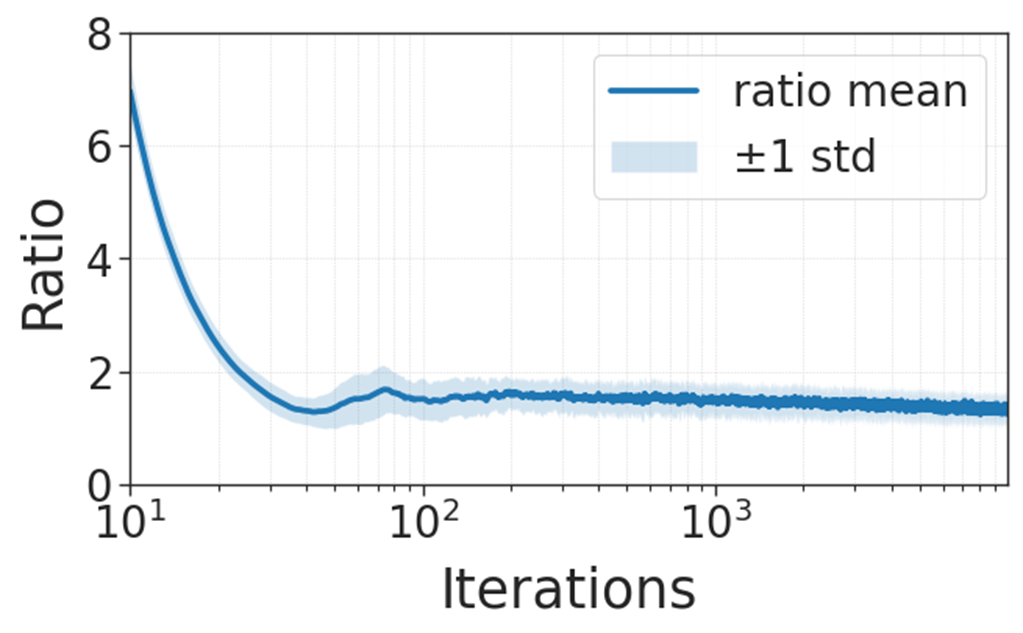}
    \end{minipage}\\[0.6em]

    \caption{\textbf{Verification of the deterministic approximation and drift-term transformation.}  
    Left: the purple curve denotes the 80\% confidence interval of the true signSGD trajectory, the blue curve represents the numerical ODE solution, and the yellow curve corresponds to the deterministic approximation after drift-term transformation in \eqref{head_trans} and \eqref{head_trans2}. Deterministic approximation matches the true trajectory up to a constant factor. It should be noted that our analysis is asymptotic, and thus, discrepancies may occur in the very early iterations.
    Right: the ratio between the approximation and the true trajectory remains bounded by a constant factor, confirming the validity of our approach.  
    Parameters: $(\alpha, \beta) = (0.7, 0.3), (1.0, 0), (0.4, 0.4), (0.7, 1.1)$ from top to bottom, $\gamma_0=0.003$, $f(z)=1$, $M=200$, $d=800$, 100 runs.}
    \label{fig:dapp}
\end{figure}

\FloatBarrier

\subsection{Constant Learning Rate: Proxy and Verification for the Case $\alpha<0.5$ or $\beta<0.5$ (Phase~A)}

Throughout this section, we set $f(z)\equiv 1$; hence
\[
Q(N)=\frac{4\gamma_{0}}{\pi}\int_{0}^{N}\frac{du}{\sqrt{L(u)}}.
\]
Applying the drift/approximation-term transformation to the ODE solution yields the implicit relation
\begin{align}
\label{eq:implicit}
L(N)\;\eqsim\;
\underbrace{M^{-2\alpha+\max( 0 , 1 - 2\beta)}}_{\textbf{approx}}
\;+\;
\underbrace{\bigl(M^{\min(\alpha,\,0.5)}\,Q(N)\bigr)^{-\tfrac{2\alpha+2\beta-1}{2\alpha}}}_{\textbf{drift}}
\; \\ +\;
\underbrace{\frac{2\gamma_0^2}{\pi}\sum_{i=1}^{M}V_i
\int_{0}^{N}\exp\!\Bigl(-\frac{4\gamma_0}{\pi}\lambda_i(\overline K)\!\!\int_{z}^{N}\!\frac{du}{\sqrt{L(u)}}\Bigr)\,dz}_{\textbf{noise}}.
\end{align}

\subsubsection{Early Stage (Dominance of the Drift Term)}

At $N=0$, the noise integral is $0$, the approximation term is independent of $N$, and the drift term is large and decreases with $N$. Thus, in the early phase,
\begin{equation}\label{eq:early-head}
L(N)\ \eqsim\ \bigl(M^{\min(\alpha,\,0.5)}\,Q(N)\bigr)^{-\tfrac{2\alpha+2\beta-1}{2\alpha}}.
\end{equation}
Since \(Q(N)=\tfrac{4\gamma_0}{\pi}\int_{0}^{N}L(u)^{-1/2}\,du\), \eqref{eq:early-head} is equivalent (up to absolute constants) to
\begin{equation}\label{eq:early-implicit}
L(N)^{-\tfrac{2\alpha}{\,2\alpha+2\beta-1\,}}\ \eqsim\ M^{\min(\alpha,\,0.5)}\,\gamma_0\!\int_{0}^{N}\frac{du}{\sqrt{L(u)}}.
\end{equation}
To obtain a proxy profile, we replace \(\eqsim\) by equality in \eqref{eq:early-implicit} and differentiate both sides:
\begin{equation}\label{eq:differentiated}
-\frac{2\alpha}{\,2\alpha+2\beta-1\,}\,L(t)^{-\tfrac{2\alpha}{\,2\alpha+2\beta-1\,}-1}\,L'(t)
\;=\;
M^{\min(\alpha,\,0.5)}\,\gamma_0\,\frac{1}{\sqrt{L(t)}}.
\end{equation}
Solving \eqref{eq:differentiated} for \(L'(t)\) and separating variables gives the separable ODE
\[
\frac{dL}{dt}=-\kappa\,L^{\zeta},
\qquad
\zeta=\frac{2\alpha}{\,2\alpha+2\beta-1\,}+\frac12,
\qquad
\kappa=\frac{2\alpha+2\beta-1}{2\alpha}\,M^{\min(\alpha,\,0.5)}\,\gamma_0.
\]
Assuming \(\zeta>1\) (i.e.\ \(2\alpha+2\beta<4\alpha+1\)), we integrate to obtain
\begin{equation}\label{issue}
-\frac{L(t)^{-(\zeta-1)}}{\zeta-1}=-\kappa t + \text{constant}
\quad\Longrightarrow\quad
L(t)=\Bigl[(\zeta-1)\,\kappa\,t\Bigr]^{-1/(\zeta-1)}.
\end{equation}
Substituting \(\zeta=\tfrac{2\alpha}{\,2\alpha+2\beta-1\,}+\tfrac12\) and \(\kappa=\tfrac{2\alpha+2\beta-1}{2\alpha}\,M^{\min(\alpha,\,0.5)}\,\gamma_0\) yields the early-phase proxy
\begin{equation}\label{eq:Rearly}
L(N)\ \eqsim\ \bigl(\gamma_0\,M^{\min(\alpha,\,0.5)}\,N\bigr)^{-p},
\qquad
p:=\frac{2(2\alpha+2\beta-1)}{\,2\alpha+1-2\beta\,}.
\end{equation}
By construction, \eqref{eq:Rearly} satisfies \eqref{eq:early-implicit} (hence \eqref{eq:early-head}) up to absolute constants.

\subsubsection{Limit Stage (Stationary Analysis and Floor)}
\label{limit}

With $f\equiv 1$, the mode-wise ODE is
\[
\frac{dp_i}{dt}
=
-\frac{4}{\pi\sqrt{P(t)}}\,\lambda_i(\overline K)\,p_i(t)
+\frac{2\gamma_0}{\pi}\,V_i.
\]
At stationarity, $p_i(t)\to s_i$ and $P(t)\to L_\infty$, we must have
\[
-\frac{4}{\pi\sqrt{L_\infty}}\,\lambda_i(\overline K)\,s_i+\frac{2\gamma_0}{\pi}\,V_i=0
\quad\Longrightarrow\quad
s_i=\frac{\gamma_0\sqrt{L_\infty}}{2\,\lambda_i(\overline K)}\,V_i
=\frac{\gamma_0\sqrt{L_\infty}}{2\,\lambda_i(\overline K)}\,(\vw_i\T\ksig \mK \vu_i).
\]
Using the loss decomposition \(P(t)=\sum_{i=1}^{M}p_i(t)+\Hnorm\), we obtain
\begin{align*}
L_\infty
&=\sum_{i=1}^{M}s_i+\Hnorm
=\frac{\gamma_0}{2}\Bigl(\sum_{i=1}^{M}\frac{\vw_i\T\ksig \mK \vu_i}{\lambda_i(\overline K)}\Bigr)\sqrt{L_\infty}
+\Hnorm
\\
&=\frac{\gamma_0}{2}\,\trace\!\bigl(\diag(\mK)^{1/2}\ksig \bigr)\sqrt{L_\infty}
+\Hnorm
=\frac{\gamma_0\pi}{4}\,\trace\!\bigl(\diag(\mK)^{1/2}\bigr)\sqrt{L_\infty}
+\Hnorm.
\end{align*}
Solving the quadratic in \(\sqrt{L_\infty}\) gives
\begin{align}
L_{\infty}
=
\left(
\frac{\frac{\gamma_0 \pi}{4} \, \trace\bigl(\diag(\mK)^{1/2}\bigr)
+ \sqrt{\left(\frac{\gamma_0 \pi}{4} \, \trace\bigl(\diag(\mK)^{1/2}\bigr)\right)^2 + 4\Hnorm}
}{2}
\right)^2
\  \\ \eqsim\
\max\Bigl\{\bigl(\gamma_0\trace(\diag(\mK)^{1/2})\bigr)^2,\ \Hnorm\Bigr\}.
\end{align}
Under our setup,
\[
\trace(\diag(\mK)^{1/2})
=\sum_{i=1}^M\sqrt{(\mS \mH \mS \T)_{ii}}
\eqsim  M \cdot \sqrt{\frac{1}{M} M^{\max(1-2\alpha, 0)} }  \eqsim  M^{1-\min(\alpha,\,0.5)}.
\]

By the results from \citet{paquette4+, linscaling}, and note in Appendix~\ref{approxnote},
\[
\Hnorm\ \eqsim\ M^{-2\alpha+\max(0,\,1-2\beta)}.
\]
Hence
\begin{equation}\label{eq:Rinf-floor}
L_{\infty}\ \eqsim\ \max\!\Bigl\{\gamma_0^2\,M^{\,2-2\min(\alpha,\,0.5)},\ M^{-2\alpha+\max(0,\,1-2\beta)}\Bigr\}.
\end{equation}

\subsubsection{Proxy}

Combining the early-phase decay \eqref{eq:Rearly} with the floor \eqref{eq:Rinf-floor}, we adopt
\begin{equation}\label{eq:proxy}
L_{\rm px}(N)
\;:=\;
\bigl(\gamma_0\,M^{\min(\alpha,\,0.5)}\,N\bigr)^{-p}
\;+\;
\underbrace{\gamma_0^2\,M^{\,2-2\min(\alpha,\,0.5)}+M^{-2\alpha+\max(0,\,1-2\beta)}}_{=:C},
\qquad
p=\frac{2(2\alpha+2\beta-1)}{\,2\alpha+1-2\beta\,}.
\end{equation}

\subsubsection{Verification of the Proxy}
\label{ver1}

We show that \(L_{\rm px}\) satisfies \eqref{eq:implicit} up to absolute constants. 
Equivalently, writing \(Q_{L_{\rm px}}(N):=\frac{4\gamma_0}{\pi}\int_{0}^{N}\frac{du}{\sqrt{L_{\rm px}(u)}}\), we establish
\begin{align}
\underbrace{\bigl(M^{\min(\alpha,\,0.5)}\,Q_{L_{\rm px}}(N)\bigr)^{-\tfrac{2\alpha+2\beta-1}{2\alpha}}}_{\textbf{drift}}
\;+\;
\underbrace{M^{-2\alpha+\max(0,\,1-2\beta)}}_{\textbf{approx}}
\;+\;
\underbrace{\frac{2\gamma_0^2}{\pi}\sum_{i=1}^{M}V_i
\!\int_{0}^{N}\!
\exp\!\Bigl(-\frac{4\gamma_0}{\pi}\lambda_i(\overline K)\!\int_{z}^{N}\!\frac{du}{\sqrt{L_{\rm px}(u)}}\Bigr)\,dz}_{\textbf{noise}}
\  \\ \eqsim\
\underbrace{\bigl(\gamma_0\,M^{\min(\alpha,\,0.5)}\,N\bigr)^{-p}+C}_{L_{\rm px}(N)}.
\label{eq:goal}
\end{align}

\paragraph{Lower Bound}

We prove
\begin{align}
\textbf{drift}+\textbf{approx}+\textbf{noise}
\;\gtrsim\;
\bigl(\gamma_0\,M^{\min(\alpha,\,0.5)}\,N\bigr)^{-p}+C.
\label{eq:LB}
\end{align}

Since \(L_{\rm px}(u) \ge (\gamma_0\,M^{\min(\alpha,\,0.5)}\,u)^{-p}\),
\begin{align*}
\textbf{drift}
&=\bigl(M^{\min(\alpha,\,0.5)}\,Q_{L_{\rm px}}(N)\bigr)^{-\tfrac{2\alpha+2\beta-1}{2\alpha}}
\ \gtrsim\
\Bigl(M^{\min(\alpha,\,0.5)}\cdot \gamma_0\!\int_{0}^{N}\!\bigl(\gamma_0\,M^{\min(\alpha,\,0.5)}\,u\bigr)^{p/2}\,du\Bigr)^{-\tfrac{2\alpha+2\beta-1}{2\alpha}}
\\
&\eqsim\ \bigl(\gamma_0\,M^{\min(\alpha,\,0.5)}\,N\bigr)^{-p}.
\end{align*}

Since \(L_{\rm px}(u)\ge C\) for all \(u\),
\[
\int_{z}^{N}\frac{du}{\sqrt{L_{\rm px}(u)}}\ \le\ \frac{N-z}{\sqrt{C}}.
\]
Hence
\begin{align}
\textbf{noise}
&\ge
\frac{2\gamma_0^2}{\pi}\sum_{i=1}^{M}V_i
\int_{0}^{N}\exp\!\Bigl(-\frac{4\gamma_0}{\pi}\lambda_i(\overline K)\frac{N-z}{\sqrt{C}}\Bigr)\,dz
\\
&=
\frac{2\gamma_0^2}{\pi}\sum_{i=1}^{M}V_i\,
\frac{\sqrt{C}}{\frac{4\gamma_0}{\pi}\lambda_i(\overline K)}
\Bigl(1-e^{-\frac{4\gamma_0}{\pi}\lambda_i(\overline K)\frac{N}{\sqrt{C}}}\Bigr)
\\
&\gtrsim
\gamma_0\sqrt{C}\sum_{i=1}^{M}\frac{V_i}{\lambda_i(\overline K)}
=\frac{\gamma_0}{2}\,\trace\!\bigl(\diag(K)^{1/2}\bigr)\sqrt{C}
\ \eqsim\ \gamma_0\,M^{1-\min(\alpha,\,0.5)}\,\sqrt{C}
\ \gtrsim\ \gamma_0^2\,M^{\,2-2\min(\alpha,\,0.5)}.
\label{taillower}
\end{align}
Adding the approximation term \(M^{-2\alpha+\max(0,\,1-2\beta)}\) gives \(\textbf{noise}+\textbf{approx}\gtrsim C\). Combining with the drift contribution yields \eqref{eq:LB}.

\paragraph{Upper Bound}

We establish
\begin{align}
\textbf{drift}+\textbf{approx}+\textbf{noise}
\;\lesssim\;
\bigl(\gamma_0\,M^{\min(\alpha,\,0.5)}\,N\bigr)^{-p}+C.
\label{eq:UB}
\end{align}

Let
\[
A(N):=\max\Bigl\{(\gamma_0\,M^{\min(\alpha,\,0.5)}\,N)^{-p},\ C\Bigr\},
\qquad
p=\frac{2(2\alpha+2\beta-1)}{\,2\alpha+1-2\beta\,}.
\]
Then \(L_{\rm px}(N)\eqsim A(N)\). Define \(N_0\) by \((\gamma_0\,M^{\min(\alpha,\,0.5)}\,N_0)^{-p}=C\), i.e.
\[
A(N)=
\begin{cases}
(\gamma_0\,M^{\min(\alpha,\,0.5)}\,N)^{-p}, & N\le N_0,\\[2pt]
C, & N>N_0.
\end{cases}
\]
There exists a constant \(B\ge1\) such that
\begin{equation}\label{eq:RvsA}
L_{\rm px}(N)\ \le\ B\,A(N)\qquad(\forall N\ge0).
\end{equation}

\textbf{Upper bound for the drift term.}
Since \(L\le BA\) by \eqref{eq:RvsA} and \(Q\) is decreasing in its denominator,
\[
\textbf{drift}
=
\bigl(M^{\min(\alpha,\,0.5)}\,Q_L(N)\bigr)^{-\tfrac{2\alpha+2\beta-1}{2\alpha}}
\ \lesssim\
\bigl(M^{\min(\alpha,\,0.5)}\,Q_{BA}(N)\bigr)^{-\tfrac{2\alpha+2\beta-1}{2\alpha}}.
\]
We evaluate the right-hand side by cases.

\smallskip
\noindent
\emph{Case \(N\le N_0\).}
Then \(A(u)=(\gamma_0\,M^{\min(\alpha,\,0.5)}\,u)^{-p}\) for \(u\le N\), so
\[
Q_{BA}(N)=\frac{4\gamma_0}{\pi}\int_0^N\frac{du}{\sqrt{BA(u)}}
=\frac{c}{\sqrt{B}}\,\gamma_0\!\int_0^N\bigl(\gamma_0\,M^{\min(\alpha,\,0.5)}\,u\bigr)^{p/2}\,du
\]
for an absolute constant \(c>0\), which implies
\[
\textbf{drift}\ \lesssim\ \bigl(\gamma_0\,M^{\min(\alpha,\,0.5)}\,N\bigr)^{-\tfrac{2\alpha+2\beta-1}{2\alpha}(1+p/2)}
=\bigl(\gamma_0\,M^{\min(\alpha,\,0.5)}\,N\bigr)^{-p}.
\]

\smallskip
\noindent
\emph{Case \(N>N_0\).}
Split the integral at \(N_0\):
\begin{align*}
M^{\min(\alpha,\,0.5)}Q_{BA}(N)
&=\frac{c}{\sqrt{B}}\,\gamma_0M^{\min(\alpha,\,0.5)}
\left[
\int_0^{N_0}\bigl(\gamma_0\,M^{\min(\alpha,\,0.5)}\,u\bigr)^{p/2}\,du
+\int_{N_0}^{N}\frac{du}{\sqrt{BC}}
\right]
\\
&=\frac{c}{\sqrt{B}}
\left[
\bigl(\gamma_0\,M^{\min(\alpha,\,0.5)}\,N_0\bigr)^{1+p/2}
+\gamma_0M^{\min(\alpha,\,0.5)}\,\frac{N-N_0}{\sqrt{BC}}
\right].
\end{align*}
Raising to the power \(-\tfrac{2\alpha+2\beta-1}{2\alpha}\) and using \((\gamma_0\,M^{\min(\alpha,\,0.5)}\,N_0)^{-p}=C\),
\begin{align*}
\textbf{drift}
&\lesssim
\left[
C^{-\,\frac{1+p/2}{p}}+\gamma_0\,\frac{N-N_0}{\sqrt{BC}}
\right]^{-\tfrac{2\alpha+2\beta-1}{2\alpha}}
\ \le\
\Bigl(C^{-\,\frac{1}{(2\alpha+2\beta-1)/(2\alpha)}}\Bigr)^{-\tfrac{2\alpha+2\beta-1}{2\alpha}}
=C.
\end{align*}
Combining the two cases,
\begin{equation}\label{eq:head-UB}
\textbf{drift}\ \lesssim\ \bigl(\gamma_0\,M^{\min(\alpha,\,0.5)}\,N\bigr)^{-p}+C.
\end{equation}

\textbf{Upper bound for the noise integral.}
\label{tailveri}
By the monotonicity of \(r\mapsto r^{-1/2}\),
\[
\int_z^N\frac{du}{\sqrt{L(u)}}
\ \ge\
\frac{1}{\sqrt{B}}\int_z^N\frac{du}{\sqrt{A(u)}}.
\]
Therefore,
\begin{equation}\label{eq:tail-A}
\textbf{noise}
\ \le\
\frac{2\gamma_0^2}{\pi}\sum_{i=1}^{M}V_i
\int_{0}^{N}\exp\!\Bigl(-\frac{4\gamma_0}{\pi\sqrt{B}}\,\lambda_i(\overline K)\!\int_{z}^{N}\!\frac{du}{\sqrt{A(u)}}\Bigr)\,dz.
\end{equation}
We again split into two cases.

\smallskip
\noindent
\emph{Case \(N\le N_0\).}
Then \(A(u)=(\gamma_0\,M^{\min(\alpha,\,0.5)}\,u)^{-p}\) on \([0,N]\), hence
\[
\int_{z}^{N}\frac{du}{\sqrt{A(u)}}
=\bigl(\gamma_0\,M^{\min(\alpha,\,0.5)}\bigr)^{p/2}\int_{z}^{N}u^{p/2}\,du
=\bigl(\gamma_0\,M^{\min(\alpha,\,0.5)}\bigr)^{p/2}\,\frac{N^{1+p/2}-z^{1+p/2}}{1+p/2}.
\]
Plugging this into \eqref{eq:tail-A} and factoring,
\begin{align*}
\textbf{noise}
&=
\frac{2\gamma_0^2}{\pi}\sum_{i=1}^{M}V_i
\int_{0}^{N}\exp\!\Bigl(
-\frac{4\gamma_0}{\pi\sqrt{B}}\,\lambda_i(\overline K)\,
\bigl(\gamma_0\,M^{\min(\alpha,\,0.5)}\bigr)^{p/2}\,\frac{N^{1+p/2}-z^{1+p/2}}{1+p/2}
\Bigr)\,dz
\\
&=
\frac{2\gamma_0^2}{\pi}\sum_{i=1}^{M}V_i\,
\exp\!\Bigl(
-\frac{4\gamma_0}{\pi\sqrt{B}}\,\lambda_i(\overline K)\,
\bigl(\gamma_0\,M^{\min(\alpha,\,0.5)}\bigr)^{p/2}\,\frac{N^{1+p/2}}{1+p/2}
\Bigr)
\\[-1mm]
&\hspace{25mm}\times
\int_{0}^{N}\exp\!\Bigl(
\frac{4\gamma_0}{\pi\sqrt{B}}\,\lambda_i(\overline K)\,
\bigl(\gamma_0\,M^{\min(\alpha,\,0.5)}\bigr)^{p/2}\,\frac{z^{1+p/2}}{1+p/2}
\Bigr)\,dz.
\end{align*}
Make the change of variables \(y=z^{1+p/2}\) so that
\(dz=\frac{1}{1+p/2}\,y^{\frac{1}{1+p/2}-1}\,dy\) and the upper limit becomes \(N^{1+p/2}\):
\begin{align*}
\textbf{noise}
&=
\frac{2\gamma_0^2}{\pi}\sum_{i=1}^{M}V_i\,
e^{- \alpha_i N^{1+p/2}}
\int_{0}^{N^{1+p/2}}
e^{\alpha_i y}\,
\frac{1}{1+p/2}\,y^{\frac{1}{1+p/2}-1}\,dy,
\\
&\hspace{42mm}
\alpha_i:=\frac{4\gamma_0}{\pi\sqrt{B}}\,\lambda_i(\overline K)\,
\frac{\bigl(\gamma_0\,M^{\min(\alpha,\,0.5)}\bigr)^{p/2}}{1+p/2}.
\end{align*}

Let \(X:=N^{1+p/2}\) and
\[
g(y):=\frac{1}{1+p/2}\,y^{\frac{1}{1+p/2}-1}
     =\frac{1}{1+p/2}\,y^{-\,\frac{p}{2+p}}.
\]
Since \(e^{\alpha_i y}\) is increasing and \(g(y)\) is decreasing on \((0,X]\),
Chebyshev’s integral inequality (oppositely monotone) yields
\[
\frac{1}{X}\int_{0}^{X} e^{\alpha_i y}g(y)\,dy
\ \le\
\Bigl(\frac{1}{X}\int_{0}^{X} e^{\alpha_i y}\,dy\Bigr)
\Bigl(\frac{1}{X}\int_{0}^{X} g(y)\,dy\Bigr).
\]
Hence
\begin{align*}
e^{-\alpha_i X}\!\int_{0}^{X}\! e^{\alpha_i y}g(y)\,dy
&\le
e^{-\alpha_i X}\,\frac{e^{\alpha_i X}-1}{\alpha_i}\,
\frac{1}{X}\!\int_{0}^{X}\! g(y)\,dy\\
&=
\frac{1-e^{-\alpha_i X}}{\alpha_i}\,
\frac{1}{1+p/2}\cdot
\frac{1}{1-\frac{p}{2+p}}\,
X^{-\,\frac{p}{2+p}}\\
&=
\frac{1-e^{-\alpha_i X}}{\alpha_i}\,X^{-\,\frac{p}{2+p}}
\qquad\Bigl(\text{since }(1-\tfrac{p}{2+p})(1+\tfrac{p}{2})=1\Bigr)\\
&\le \frac{1}{\alpha_i}\,X^{-\,\frac{p}{2+p}}
=\frac{1}{\alpha_i}\,N^{-p/2}.
\end{align*}
Therefore
\[
\textbf{noise}
\ \le\
\frac{2\gamma_0^2}{\pi}\sum_{i=1}^{M}V_i\,
\frac{1}{\alpha_i}\,N^{-p/2},
\]
and with
\(
\alpha_i=\frac{4\gamma_0}{\pi\sqrt{B}}\,\lambda_i(\overline K)\,
\frac{\bigl(\gamma_0\,M^{\min(\alpha,\,0.5)}\bigr)^{p/2}}{1+p/2}
\)
this becomes
\[
\textbf{noise}
\ \le\
\frac{\gamma_0\sqrt{B}}{2}\,(1+p/2)\,
\sum_{i=1}^{M}\frac{V_i}{\lambda_i(\overline K)}\,
\bigl(\gamma_0\,M^{\min(\alpha,\,0.5)}\,N\bigr)^{-p/2}.
\]

Using \(\sum_i\frac{V_i}{\lambda_i(\overline K)}=\trace(\diag(\mK)^{1/2})\eqsim M^{1-\min(\alpha,\,0.5)}\), we get
\begin{align*}
\textbf{noise}
&\lesssim
\gamma_0\,M^{1-\min(\alpha,\,0.5)}\,(\gamma_0\,M^{\min(\alpha,\,0.5)}\,N)^{-p/2}
\\
&=
\gamma_0\,M^{1-\min(\alpha,\,0.5)}\,(\gamma_0\,M^{\min(\alpha,\,0.5)}\,N)^{p/2}\,
(\gamma_0\,M^{\min(\alpha,\,0.5)}\,N)^{-p}
\\
&\le
\gamma_0\,M^{1-\min(\alpha,\,0.5)}\,\frac{1}{\sqrt{C}}\,
(\gamma_0\,M^{\min(\alpha,\,0.5)}\,N)^{-p}
\ \lesssim\
(\gamma_0\,M^{\min(\alpha,\,0.5)}\,N)^{-p},
\end{align*}
where we used \((\gamma_0\,M^{\min(\alpha,\,0.5)}\,N)^{p/2}\le (\gamma_0\,M^{\min(\alpha,\,0.5)}\,N_0)^{p/2}=C^{-1/2}\).

\smallskip
\noindent
\emph{Case \(N>N_0\).}
Split the \(z\)–integral at \(N_0\):
\begin{align*}
\textbf{noise}
&\le
\frac{2\gamma_0^2}{\pi}\sum_{i=1}^{M}V_i
\left[
\int_{0}^{N_0}\exp\!\Bigl(-\frac{4\gamma_0}{\pi\sqrt{B}}\,\lambda_i(\overline K)\!\int_{z}^{N_0}\!\frac{du}{\sqrt{A(u)}}\Bigr)\,dz
+
\int_{N_0}^{N}\exp\!\Bigl(-\frac{4\gamma_0}{\pi\sqrt{B}}\,\lambda_i(\overline K)\!\int_{z}^{N}\!\frac{du}{\sqrt{A(u)}}\Bigr)\,dz
\right].
\end{align*}
The first integral is the \(N=N_0\) case just handled, hence
\[
\int_{0}^{N_0}\cdots\,dz\ \lesssim\ (\gamma_0\,M^{\min(\alpha,\,0.5)}\,N_0)^{-p}=C.
\]
For the second integral, we use that \(A\equiv C\) on \([N_0,N]\):
\begin{align*}
\int_{N_0}^{N}\exp\!\Bigl(-\frac{4\gamma_0}{\pi\sqrt{B}}\,\lambda_i(\overline K)\!\int_{z}^{N}\!\frac{du}{\sqrt{A(u)}}\Bigr)\,dz
&=
\int_{N_0}^{N}\exp\!\Bigl(-\frac{4\gamma_0}{\pi\sqrt{B}}\,\lambda_i(\overline K)\,\frac{N-z}{\sqrt{C}}\Bigr)\,dz
\\
&=
\frac{\sqrt{C}}{\frac{4\gamma_0}{\pi\sqrt{B}}\,\lambda_i(\overline K)}
\left(1-e^{-\frac{4\gamma_0}{\pi\sqrt{B}}\,\lambda_i(\overline K)\,\frac{N-N_0}{\sqrt{C}}}\right)
\\
&\le
\frac{\pi\sqrt{B}}{4\gamma_0}\,\frac{\sqrt{C}}{\lambda_i(\overline K)}.
\end{align*}
Therefore,
\begin{align*}
\textbf{noise}
&\lesssim
(\gamma_0\,M^{\min(\alpha,\,0.5)}\,N_0)^{-p}
+\frac{2\gamma_0^2}{\pi}\sum_{i=1}^{M}V_i\cdot \frac{\pi\sqrt{B}}{4\gamma_0}\,\frac{\sqrt{C}}{\lambda_i(\overline K)}
\\
&=
C
+\frac{\gamma_0\sqrt{B}}{2}\,\sqrt{C}\sum_{i=1}^{M}\frac{V_i}{\lambda_i(\overline K)}
=
C+\frac{\gamma_0\sqrt{B}}{2}\,\sqrt{C}\,\trace(\diag(\mK)^{1/2})
\\
&\lesssim
C+\gamma_0\,M^{\min(\alpha,\,0.5)}\,\sqrt{C}
\ \lesssim\ C+\sqrt{C}\cdot\sqrt{C}
\ \lesssim\ C.
\end{align*}
Combining both cases,
\begin{equation}\label{eq:tail-UB}
\textbf{noise}\ \lesssim\ \bigl(\gamma_0\,M^{\min(\alpha,\,0.5)}\,N\bigr)^{-p}\ +\ C.
\end{equation}

\textbf{Conclusion of the upper bound.}
From \eqref{eq:head-UB}, \eqref{eq:tail-UB}, and \(\textbf{approx}=M^{-2\alpha+\max(0,\,1-2\beta)}\le C\), we obtain \eqref{eq:UB}.

Finally, combining the lower bound \eqref{eq:LB} and the upper bound \eqref{eq:UB} proves \eqref{eq:goal}. Therefore, the proxy \eqref{eq:proxy} satisfies the implicit relation \eqref{eq:implicit} up to absolute constants, with the three contributions labeled as \textbf{approx}, \textbf{drift}, and \textbf{noise}.

\subsection{Constant Learning Rate: Proxy and Verification for the Case $\alpha>0.5$ and $\beta>0.5$ (Phase~B)}

We now handle the case $\alpha>0.5$ and $\beta>0.5$. Since $\alpha>0.5$, we have $\min(\alpha,0.5)=0.5$, and because $\beta>0.5$, we have $\min(2\alpha,\,2\alpha+2\beta-1)=2\alpha$.
Applying the drift/approximation-term transformation to the ODE solution yields
\begin{align}\label{eq:implicit-above}
L(N)\ \eqsim\
\underbrace{M^{-2\alpha}}_{\text{approx}}
\;+\;
\underbrace{\bigl(M^{1/2}Q(N)\bigr)^{-\tfrac{2\alpha+2\beta-1}{2\alpha}}}_{\text{drift$_1$}}
\;+\;
\underbrace{M^{-1}\bigl(M^{1/2}Q(N)\bigr)^{-1+\tfrac{1}{2\alpha}}}_{\text{drift$_2$}}
\; \\ +\;
\underbrace{\frac{2\gamma_0^2}{\pi}\sum_{i=1}^M V_i \int_0^N
\exp\!\Bigl(-\frac{4\gamma_0}{\pi}\lambda_i(\overline K)\!\int_z^N\!\frac{du}{\sqrt{L(u)}}\Bigr)\,dz}_{\text{noise}},
\end{align}
where \[Q(N)=\frac{4\gamma_0}{\pi}\int_0^N\frac{du}{\sqrt{L(u)}}.\]

\subsubsection{Early Stage Proxies (drift$_1$ and drift$_2$)}
We extract proxies from the two drift terms in \eqref{eq:implicit-above} by the same differentiate-and-separate trick as before.

\paragraph{drift$_1$: $\bigl(M^{1/2}Q(N)\bigr)^{-(2\alpha+2\beta-1)/(2\alpha)}$.}
Assuming this term dominates and replacing $\eqsim$ by equality,
\[
L(N)^{-\tfrac{2\alpha}{\,2\alpha+2\beta-1\,}}=M^{1/2}\gamma_0\!\int_0^N\!\frac{du}{\sqrt{L(u)}}.
\]
Differentiation gives the separable ODE $L'(t)=-\kappa_1\,L(t)^{\beta_1}$ with
\[
\beta_1=\frac{2\alpha}{\,2\alpha+2\beta-1\,}+\frac12,
\qquad
\kappa_1=\frac{2\alpha+2\beta-1}{2\alpha}\,M^{1/2}\gamma_0.
\]
For $\beta_1>1$ (equivalently $2\alpha+2\beta<4\alpha+1$) we obtain
\begin{equation}\label{eq:R1}
L_1(N)\ \eqsim\ (\gamma_0 M^{1/2} N)^{-p_1},
\qquad
p_1=\frac{2(2\alpha+2\beta-1)}{\,2\alpha+1-2\beta\,}.
\end{equation}

\paragraph{drift$_2$: $M^{-1}\bigl(M^{1/2}Q(N)\bigr)^{-1+\tfrac{1}{2\alpha}}$.}
Assume $\alpha> \tfrac{1}{2}$ and, in the early phase, the second drift term dominates:
\[
L(N)\ \eqsim\ M^{-1}\bigl(M^{1/2}Q(N)\bigr)^{-\tfrac{2\alpha-1}{2\alpha}},
\qquad
Q(N)\ \eqsim\ \gamma_0\int_0^N \frac{du}{\sqrt{L(u)}}.
\]
Expanding the $M$–exponent,
\[
\bigl(M^{1/2}Q\bigr)^{-\tfrac{2\alpha-1}{2\alpha}}
= M^{\tfrac{-\, (2\alpha-1)}{4\alpha}}\,Q^{-\tfrac{2\alpha-1}{2\alpha}},
\]
hence
\begin{equation}\label{eq:head2-start-alpha}
L(N)\ \eqsim\ M^{-\tfrac{6\alpha-1}{4\alpha}}\;\bigl(\gamma_0 I(N)\bigr)^{-\tfrac{2\alpha-1}{2\alpha}},
\qquad
I(N):=\int_0^N \frac{du}{\sqrt{L(u)}}.
\end{equation}

Raise both sides of \eqref{eq:head2-start-alpha} to the power $-\tfrac{2\alpha}{\,2\alpha-1\,}$ so that the integral becomes linear:
\begin{equation}\label{eq:linearized-alpha}
L(N)^{-\tfrac{2\alpha}{\,2\alpha-1\,}}
\;=\; M^{\tfrac{6\alpha-1}{\,4\alpha-2\,}}\,\gamma_0\, I(N)
\;\eqsim\; M^{\tfrac{6\alpha-1}{\,4\alpha-2\,}}\,\gamma_0
\int_0^N \frac{du}{\sqrt{L(u)}} .
\end{equation}

Differentiating \eqref{eq:linearized-alpha} with respect to $t$ yields
\[
-\frac{2\alpha}{\,2\alpha-1\,}\,L(t)^{-\tfrac{2\alpha}{\,2\alpha-1\,}-1}\,L'(t)
= M^{\tfrac{6\alpha-1}{\,4\alpha-2\,}}\,\gamma_0\,\frac{1}{\sqrt{L(t)}} .
\]
Rearranging gives a separable ODE of the usual power form
\begin{equation}\label{eq:ode-alpha}
L'(t) \;=\; -\,\kappa_2\, L(t)^{\beta_2},
\qquad
\beta_2 \;=\; \frac{2\alpha}{\,2\alpha-1\,}+\frac12 \;=\; \frac{6\alpha-1}{\,4\alpha-2\,} \;>\;1,
\end{equation}
with
\begin{equation}\label{eq:kappa2-alpha}
\kappa_2 \;=\; \frac{2\alpha-1}{2\alpha}\,\gamma_0\,M^{\tfrac{6\alpha-1}{\,4\alpha-2\,}} \;>\;0 .
\end{equation}

Since $\beta_2>1$, solving \eqref{eq:ode-alpha} gives
\[
L(t)^{-(\beta_2-1)} \;=\; (\beta_2-1)\,\kappa_2\,t + \mathrm{const}.
\]
Absorbing harmless absolute constants into $\eqsim$ and setting $t=N$,
\begin{equation}\label{eq:R2}
L_2(N)\ \eqsim\ \Bigl(\gamma_0\,M^{\tfrac{6\alpha-1}{\,4\alpha-2\,}}\,N\Bigr)^{-p_2},
\qquad
p_2 \;=\; \frac{1}{\beta_2-1}
\;=\; \boxed{\ \frac{2(2\alpha-1)}{\,2\alpha+1\,}\ } .
\end{equation}

\textbf{Crossover scale.}
Equating \eqref{eq:R1} and \eqref{eq:R2} gives
\[
N_1 \ \eqsim\ \gamma_0^{-1}\,M^{\eta},
\qquad
\eta=\frac{2\alpha+1-4\beta}{4\beta},
\]
so \(R_1\) dominates for \(N\lesssim N_1\) and \(L_2\) for \(N\gtrsim N_1\) (when \(\alpha>0.5\) and \(0.5<\beta<\alpha+0.5\)).

\subsubsection{Limit Stage (approx and noise floors)}
As in the case \(\alpha<0.5\) or \(\beta<0.5\), the stationary analysis with \(f\equiv 1\) yields
\[
L_\infty\ \eqsim\ \max\!\bigl\{\gamma_0^2\,\trace(\diag(\mK)^{1/2})^2,\ \Hnorm\bigr\}.
\]
Under our standing model \(\trace(\diag(\mK)^{1/2})\eqsim M^{0.5}\) and by the results from \citet{paquette4+, linscaling}, and note in Appendix~\ref{approxnote}, \(\Hnorm\eqsim M^{-2\alpha}\), hence the floor
\[
C\ :=\ \gamma_0^2\,M\;+\;M^{-2\alpha}.
\]

\subsubsection{Combined Proxy}
\begin{equation}\label{eq:proxy}
\begin{aligned}
L_{\rm px}(N)\ &:=\ L_1(N)+L_2(N)+C
\\
&=\ (\gamma_0\,M^{0.5}N)^{-p_1}
\;+\;
\bigl(\gamma_0\,M^{\frac{6\alpha-1}{4\alpha-2}}N\bigr)^{-p_2}
\;+\;C,
\end{aligned}
\end{equation}
where
\[
p_1=\frac{2(2\alpha+2\beta-1)}{\,2\alpha+1-2\beta\,},
\qquad
p_2=\frac{4\alpha-2}{\,2\alpha+1\,}.
\]

\subsubsection{Verification of the Proxy}
\label{ver2}

We show that \(L_{\rm px}\) satisfies \eqref{eq:implicit-above} up to absolute constants.

\paragraph{Lower bound.}
We claim
\begin{align}
\underbrace{\bigl(M^{0.5}Q_{L_{\rm px}}(N)\bigr)^{-\frac{2\alpha+2\beta-1}{2\alpha}}}_{\text{drift\(_1\)}}
\;+\;
\underbrace{M^{-1}\bigl(M^{0.5}Q_{L_{\rm px}}(N)\bigr)^{-1+\frac{1}{2\alpha}}}_{\text{drift\(_2\)}}
\;+\;
\underbrace{M^{-2\alpha}}_{\text{approx}}
\; \\ +\;
\underbrace{\frac{2\gamma_0^2}{\pi}\sum_{i=1}^{M}V_i\!\int_0^N
\exp\!\Bigl(-\tfrac{4\gamma_0}{\pi}\lambda_i(\overline K)\!\int_{z}^{N}\!\tfrac{du}{\sqrt{L_{\rm px}(u)}}\Bigr)\,dz}_{\text{noise}}
\ \gtrsim\ L_{\rm px}(N).
\label{eq:lower-target-alpha}
\end{align}
\emph{Drift part.} Using \(L_{\rm px}\ge R_1\) inside \(Q\),
\[
\bigl(M^{0.5}Q_{L_{\rm px}}(N)\bigr)^{-\frac{2\alpha+2\beta-1}{2\alpha}}
\ \gtrsim\
\Bigl(M^{0.5}\gamma_0\int_0^N \frac{du}{\sqrt{L_1(u)}}\Bigr)^{-\frac{2\alpha+2\beta-1}{2\alpha}}
\ \eqsim\
(\gamma_0M^{0.5}N)^{-p_1}\ \eqsim\ L_1(N).
\]
Similarly, using \(L_{\rm px}\ge L_2\) inside \(Q\),
\[
M^{-1}\bigl(M^{0.5}Q_{L_{\rm px}}(N)\bigr)^{-1+\frac{1}{2\alpha}}
\ \gtrsim\
M^{-1}\Bigl(M^{0.5}\gamma_0\int_0^N \frac{du}{\sqrt{L_2(u)}}\Bigr)^{-1+\frac{1}{2\alpha}}
\ \eqsim\
\bigl(\gamma_0\,M^{\frac{6\alpha-1}{4\alpha-2}}N\bigr)^{-p_2}\ \eqsim\ L_2(N).
\]
Therefore,
\begin{equation}\label{eq:head-lb-alpha}
\text{drift\(_1\)}+\text{drift\(_2\)}\ \gtrsim\ L_1(N)+L_2(N).
\end{equation}
\emph{Noise \(+\) approx.} Since \(L_{\rm px}\ge C\),
\[
\int_{z}^{N}\frac{du}{\sqrt{L_{\rm px}(u)}}\ \le\ \frac{N-z}{\sqrt{C}}.
\]
As in the Equation~\ref{taillower},
\[
\text{noise}\ \gtrsim\ \gamma_0\sqrt{C}\sum_{i=1}^{M}\frac{V_i}{\lambda_i(\overline K)}
\ =\ \frac{\gamma_0}{2}\,\trace\!\bigl(\diag(K)^{1/2}\bigr)\,\sqrt{C}
\ \eqsim\ \gamma_0\,M^{0.5}\sqrt{C}\ \gtrsim\ \gamma_0^2\,M.
\]
Thus \(\text{noise}+\text{approx}\gtrsim C\). Together with \eqref{eq:head-lb-alpha}, this proves \eqref{eq:lower-target-alpha}.

\paragraph{Upper bound.}
We will prove
\begin{align}
\underbrace{\bigl(M^{0.5}Q_{L_{\rm px}}(N)\bigr)^{-\frac{2\alpha+2\beta-1}{2\alpha}}}_{\text{drift\(_1\)}}
\;+\;
\underbrace{M^{-1}\bigl(M^{0.5}Q_{L_{\rm px}}(N)\bigr)^{-1+\frac{1}{2\alpha}}}_{\text{drift\(_2\)}}
\;+\;
\underbrace{M^{-2\alpha}}_{\text{approx}}
\;  \\ +\;
\underbrace{\frac{2\gamma_0^2}{\pi}\sum_{i=1}^{M}V_i\!\int_{0}^{N}\!
\exp\!\Bigl(-\tfrac{4\gamma_0}{\pi}\lambda_i(\overline K)\!\int_{z}^{N}\!\tfrac{du}{\sqrt{L_{\rm px}(u)}}\Bigr)\,dz}_{\text{noise}}
\ \lesssim\ L_{\rm px}(N).
\end{align}

Let
\[
A(N)=
\begin{cases}
(\gamma_0\,M^{0.5}N)^{-p_1}, & N\le N_1,\\[2pt]
\bigl(\gamma_0\,M^{\frac{6\alpha-1}{4\alpha-2}}N\bigr)^{-p_2}, & N_1 \le N\le N_2,\\[2pt]
C, & N>N_2,
\end{cases}
\]
where \(N_1\) and \(N_2\) are the crossover points between the three terms.
There exists a constant \(B\ge1\) such that
\begin{equation}\label{eq:RvsA}
L_{\rm px}(N)\ \le\ B\,A(N)\qquad(\forall N\ge0).
\end{equation}

It suffices to show
\begin{align}
\underbrace{\bigl(M^{0.5}Q_{B\cdot A}(N)\bigr)^{-\frac{2\alpha+2\beta-1}{2\alpha}}}_{\text{drift$_1$}}
\;+\;
\underbrace{M^{-1}\bigl(M^{0.5}Q_{B\cdot A}(N)\bigr)^{-1+\frac{1}{2\alpha}}}_{\text{drift$_2$}}
\;+\;
\underbrace{M^{-2\alpha}}_{\text{approx}}
\; \\ +\;
\underbrace{\frac{2\gamma_0^2}{\pi}\sum_{i=1}^{M}V_i\!\int_0^N
\exp\!\Bigl(-\tfrac{4\gamma_0}{\pi}\lambda_i(\overline K)\!\int_{z}^{N}\!\tfrac{du}{\sqrt{B\cdot A(u)}}\Bigr)\,dz}_{\text{noise}}
\ \lesssim\ L_{\rm px}(N).
\end{align}

\textbf{Case \(N \le N_1\).}
It is enough to prove
\begin{align}
\underbrace{\bigl(M^{0.5}Q_{B\cdot A}(N)\bigr)^{-\frac{2\alpha+2\beta-1}{2\alpha}}}_{\text{drift$_1$}}
\;+\;
\underbrace{M^{-1}\bigl(M^{0.5}Q_{B\cdot A}(N)\bigr)^{-1+\frac{1}{2\alpha}}}_{\text{drift$_2$}}
\;+\;
\underbrace{M^{-2\alpha}}_{\text{approx}}
\; \\ +\;
\underbrace{\frac{2\gamma_0^2}{\pi}\sum_{i=1}^{M}V_i\!\int_0^N
\exp\!\Bigl(-\tfrac{4\gamma_0}{\pi}\lambda_i(\overline K)\!\int_{z}^{N}\!\tfrac{du}{\sqrt{B\cdot A(u)}}\Bigr)\,dz}_{\text{noise}}
\ \lesssim\ (\gamma_0\,M^{0.5}N)^{-p_1}.
\end{align}
We have \(M^{-2\alpha} \lesssim (\gamma_0\,M^{0.5}N)^{-p_1}\) directly.
Also, the following holds with straightforward integration.
\[\bigl(M^{0.5}Q_{B\cdot A}(N)\bigr)^{-\frac{2\alpha+2\beta-1}{2\alpha}} \eqsim (\gamma_0\,M^{0.5}N)^{-p_1}.\]
Since \(N \le N_1 \eqsim \gamma_0^{-1}\,M^{\eta}\) with \(\eta=\tfrac{2\alpha+1-4\beta}{4\beta}\), following holds by integration and calculation.
\[
M^{-1}\bigl(M^{0.5}Q_{B\cdot A}(N)\bigr)^{-1+\frac{1}{2\alpha}} \ \lesssim\ (\gamma_0\,M^{0.5}N)^{-p_1}.
\]
Finally, arguing as in the \(N\le N_0\) case of Section~\ref{tailveri},
\[
\frac{2\gamma_0^2}{\pi}\sum_{i=1}^{M}V_i\!\int_0^N
\exp\!\Bigl(-\tfrac{4\gamma_0}{\pi}\lambda_i(\overline K)\!\int_{z}^{N}\!\tfrac{du}{\sqrt{B\cdot A(u)}}\Bigr)\,dz
\ \lesssim\ (\gamma_0\,M^{0.5}N)^{-p_1}.
\]
Hence, the claim holds for \(N \le N_1\).

\textbf{Case \(N_1 \le N \le N_2\).}
We will show
\begin{align}
\underbrace{\bigl(M^{0.5}Q_{B\!\cdot\!A}(N)\bigr)^{-\frac{2\alpha+2\beta-1}{2\alpha}}}_{\text{drift$_1$}}
\;+\;
\underbrace{M^{-1}\bigl(M^{0.5}Q_{B\!\cdot\!A}(N)\bigr)^{-1+\frac{1}{2\alpha}}}_{\text{drift$_2$}}
\;+\;
\underbrace{M^{-2\alpha}}_{\text{approx}}
\; \\ +\;
\underbrace{\frac{2\gamma_0^2}{\pi}\sum_{i=1}^{M}V_i
\!\int_0^N\!
\exp\!\Bigl(-\tfrac{4\gamma_0}{\pi}\lambda_i(\overline K)\!\int_{z}^{N}\!\tfrac{du}{\sqrt{B\!\cdot\!A(u)}}\Bigr)\,dz}_{\text{noise}}
\ \lesssim\
\bigl(\gamma_0\,M^{\frac{6\alpha-1}{4\alpha-2}}\,N\bigr)^{-p_2},
\label{eq:mid-target}
\end{align}
where
\[
p_1=\frac{2(2\alpha+2\beta-1)}{\,2\alpha+1-2\beta\,},
\qquad
p_2=\frac{2(2\alpha-1)}{\,2\alpha+1\,},
\qquad
A(u)=
\begin{cases}
(\gamma_0\,M^{0.5}u)^{-p_1},& u\le N_1,\\[2pt]
\bigl(\gamma_0\,M^{\frac{6\alpha-1}{4\alpha-2}}u\bigr)^{-p_2},& N_1<u\le N,\\
\end{cases}
\]
and \(Q_{B\!\cdot\!A}(N)=\frac{4\gamma_0}{\pi}\int_0^N\frac{du}{\sqrt{B\!\cdot\!A(u)}}\).

\textbf{Approx term.}
Since \(N\le N_2\),
\[
M^{-2\alpha}\ \lesssim\ \bigl(\gamma_0\,M^{\tfrac{6\alpha-1}{4\alpha-2}}\,N\bigr)^{-p_2}.
\]

\textbf{Drift term.}
If $N_1 \leq  N \leq 2 N_1 $, using the case $N \leq N_1$, we get an inequality for two drift terms.
\begin{align}
&\underbrace{\bigl(M^{0.5}Q_{B\!\cdot\!A}(N)\bigr)^{-\frac{2\alpha+2\beta-1}{2\alpha}}}_{\text{drift$_1$}}
\;+\;
\underbrace{M^{-1}\bigl(M^{0.5}Q_{B\!\cdot\!A}(N)\bigr)^{-1+\frac{1}{2\alpha}}}_{\text{drift$_2$}}
 \\& \leq 
\underbrace{\bigl(M^{0.5}Q_{B\!\cdot\!A}(N_1)\bigr)^{-\frac{2\alpha+2\beta-1}{2\alpha}}}_{\text{drift$_1$}}
\; +\;
\underbrace{M^{-1}\bigl(M^{0.5}Q_{B\!\cdot\!A}(N_1)\bigr)^{-1+\frac{1}{2\alpha}}}_{\text{drift$_2$}}
 \\& \lesssim (\gamma_0\,M^{0.5}N_1)^{-p_1} 
\lesssim (\gamma_0\,M^{\tfrac{6\alpha-1}{4\alpha-2}}\,N\bigr)^{-p_2}.
\end{align}

So while covering the drift term, we will temporarily assume $2N_1 \leq  N $.

\textbf{Lower bound on \(Q_{B\!\cdot\!A}(N)\).}
Split the integral at \(N_1\):
\begin{align}
Q_{B\!\cdot\!A}(N)
&\eqsim
\gamma_0\!\int_0^{N_1}\!\frac{du}{\sqrt{A(u)}}
+\gamma_0\!\int_{N_1}^{N}\!\frac{du}{\sqrt{A(u)}}
=: \gamma_0\,(I_1+I_2).
\label{eq:Q-split}
\end{align}
For the first part, using \(A(u)=(\gamma_0M^{0.5}u)^{-p_1}\) on \([0,N_1]\),
\begin{align}
I_1
&=(\gamma_0M^{0.5})^{p_1/2}\int_0^{N_1} u^{p_1/2}\,du
=\frac{(\gamma_0M^{0.5})^{p_1/2}}{1+p_1/2}\,N_1^{1+p_1/2}.
\label{eq:I1}
\end{align}
For the second part, using \(A(u)=(\gamma_0M^{\frac{6\alpha-1}{4\alpha-2}}u)^{-p_2}\) on \([N_1,N]\),
\begin{align}
I_2
&=(\gamma_0M^{\frac{6\alpha-1}{4\alpha-2}})^{p_2/2}\int_{N_1}^{N} u^{p_2/2}\,du
=\frac{(\gamma_0M^{\frac{6\alpha-1}{4\alpha-2}})^{p_2/2}}{1+p_2/2}\,
\bigl(N^{1+p_2/2}-N_1^{1+p_2/2}\bigr).
\label{eq:I2}
\end{align}
Since we temporarily assumed \(N\ge 2N_1\), we have
\[I_2\gtrsim (\gamma_0M^{\frac{6\alpha-1}{4\alpha-2}})^{p_2/2} N^{1+p_2/2}.\]
Hence, from \eqref{eq:Q-split},
\begin{equation}
Q_{B\!\cdot\!A}(N)\ \gtrsim\
\gamma_0\,(\gamma_0M^{\tfrac{6\alpha-1}{4\alpha-2}})^{p_2/2}\,N^{1+p_2/2}.
\label{eq:Q-lb}
\end{equation}

\textbf{drift$_1$ vs.\ drift$_2$.}
From \(N\ge N_1\) and \eqref{eq:Q-lb}, we have
\(
Q_{B\!\cdot\!A}(N)\ge Q_{B\!\cdot\!A}(N_1)
\).
It follows that
\[
\text{drift$_1$}
=\bigl(M^{0.5}Q_{B\!\cdot\!A}(N)\bigr)^{-\frac{2\alpha+2\beta-1}{2\alpha}}
\ \le\
M^{-1}\bigl(M^{0.5}Q_{B\!\cdot\!A}(N)\bigr)^{-1+\frac{1}{2\alpha}}
=\text{drift$_2$},
\]
so it suffices to control drift$_2$.

\textbf{drift$_2$ bound.}
Using \eqref{eq:Q-lb},
\begin{align}
\text{drift$_2$}
&=M^{-1}\Bigl(M^{0.5}\,Q_{B\!\cdot\!A}(N)\Bigr)^{-1+\frac{1}{2\alpha}}
\nonumber\\
&\lesssim
M^{-1}\Bigl(
M^{0.5}\cdot
\gamma_0^{\,1+p_2/2}\,
M^{\frac{6\alpha-1}{4\alpha-2}\cdot \frac{p_2}{2}}\,
N^{1+p_2/2}
\Bigr)^{-1+\frac{1}{2\alpha}}.
\label{eq:head2-pre}
\end{align}
Now compute the exponents of \(N\), \(\gamma_0\), and \(M\) separately.

\emph{(i) \(N\)–exponent:}
\[
\Bigl(1+\tfrac{p_2}{2}\Bigr)\Bigl(-1+\tfrac{1}{2\alpha}\Bigr)
=
\Bigl(1+\frac{2\alpha-1}{2\alpha+1}\Bigr)\Bigl(\frac{1}{2\alpha}-1\Bigr)
=
\frac{4\alpha}{2\alpha+1}\cdot\Bigl(-\frac{2\alpha-1}{2\alpha}\Bigr)
= -\,\frac{2(2\alpha-1)}{2\alpha+1}
= -\,p_2.
\]

\emph{(ii) \(\gamma_0\)–exponent:} the same calculation as in (i) gives \(-p_2\).

\emph{(iii) \(M\)–exponent:} the total exponent equals
\[
-1\;+\;\Bigl(-1+\tfrac{1}{2\alpha}\Bigr)\!
\left(
0.5\;+\;\frac{6\alpha-1}{4\alpha-2}\cdot\frac{p_2}{2}
\right).
\]
A direct simplification shows this equals
\(-\,\frac{6\alpha-1}{4\alpha-2}\,p_2\).
Therefore, from \eqref{eq:head2-pre},
\begin{equation}
\text{drift$_2$}
\ \lesssim\
\bigl(\gamma_0\,M^{\tfrac{6\alpha-1}{4\alpha-2}}\,N\bigr)^{-p_2}.
\label{eq:head2-final}
\end{equation}
Since \(\text{drift}_1\le \text{drift}_2\), we also have \(\text{drift}_1\lesssim (\gamma_0M^{\frac{6\alpha-1}{4\alpha-2}}N)^{-p_2}\).

\medskip
\noindent\textbf{Noise bound.}
It suffices to show
\begin{align}
&\frac{2\gamma_0^2}{\pi}\sum_{i=1}^{M}V_i
\!\int_0^{N_1}
\exp\!\Bigl(-\tfrac{4\gamma_0}{\pi}\lambda_i(\overline K)\!\int_{z}^{N}\!\tfrac{du}{\sqrt{B\cdot A(u)}}\Bigr)\,dz
\; \\& +\;
\frac{2\gamma_0^2}{\pi}\sum_{i=1}^{M}V_i
\!\int_{N_1}^{N}
\exp\!\Bigl(-\tfrac{4\gamma_0}{\pi}\lambda_i(\overline K)\!\int_{z}^{N}\!\tfrac{du}{\sqrt{B\cdot A(u)}}\Bigr)\,dz
\  \lesssim\
\bigl(\gamma_0\,M^{\frac{6\alpha-1}{4\alpha-2}}\,N\bigr)^{-p_2}.
\end{align}

\noindent\emph{Integral over \([N_1,N]\).}
As in the case \(N\le N_0\) of Section~\ref{tailveri}, with \(A(u)=(\gamma_0 M^{\frac{6\alpha-1}{4\alpha-2}}u)^{-p_2}\) on \([N_1,N]\),
\[
\frac{2\gamma_0^2}{\pi}\sum_{i=1}^{M}V_i
\!\int_{N_1}^{N}
\exp\!\Bigl(-\tfrac{4\gamma_0}{\pi}\lambda_i(\overline K)\!\int_{z}^{N}\!\tfrac{du}{\sqrt{B\cdot A(u)}}\Bigr)\,dz
\ \lesssim\
\bigl(\gamma_0\,M^{\tfrac{6\alpha-1}{4\alpha-2}}\,N\bigr)^{-p_2}.
\]

\noindent\emph{Integral over \([0,N_1]\).}
First, 
\[
\frac{2\gamma_0^2}{\pi}\sum_{i=1}^{M}V_i
\!\int_{0}^{N_1}
\exp\!\Bigl(-\tfrac{4\gamma_0}{\pi}\lambda_i(\overline K)\!\int_{z}^{N}\!\tfrac{du}{\sqrt{B\cdot A(u)}}\Bigr)\,dz
\;\le\;
\frac{2\gamma_0^2}{\pi}\sum_{i=1}^{M}V_i
\!\int_{0}^{N_1}
\exp\!\Bigl(-\tfrac{4\gamma_0}{\pi}\lambda_i(\overline K)\!\int_{z}^{N_1}\!\tfrac{du}{\sqrt{B\cdot A(u)}}\Bigr)\,dz.
\]
As in the case \(N\le N_0\) of Section~\ref{tailveri},
\[
\frac{2\gamma_0^2}{\pi}\sum_{i=1}^{M}V_i
\!\int_{0}^{N_1}
\exp\!\Bigl(-\tfrac{4\gamma_0}{\pi}\lambda_i(\overline K)\!\int_{z}^{N_1}\!\tfrac{du}{\sqrt{B\cdot A(u)}}\Bigr)\,dz
\ \lesssim\
\sqrt{C}\,\bigl(\gamma_0 M^{0.5}N\bigr)^{-p_1/2}
\ \lesssim\
\bigl(\gamma_0\,M^{\tfrac{6\alpha-1}{4\alpha-2}}\,N_1\bigr)^{-p_2}.
\]
If \(N\le 2N_1\), this already implies
\[
\frac{2\gamma_0^2}{\pi}\sum_{i=1}^{M}V_i
\!\int_{0}^{N_1}
\exp\!\Bigl(-\tfrac{4\gamma_0}{\pi}\lambda_i(\overline K)\!\int_{z}^{N}\!\tfrac{du}{\sqrt{B\cdot A(u)}}\Bigr)\,dz
\ \lesssim\
\bigl(\gamma_0\,M^{\tfrac{6\alpha-1}{4\alpha-2}}\,N\bigr)^{-p_2}.
\]

If \(N>2N_1\), then
\[
\int_{0}^{N_1}
\exp\!\Bigl(-\tfrac{4\gamma_0}{\pi}\lambda_i(\overline K)\!\int_{z}^{N}\!\tfrac{du}{\sqrt{B\cdot A(u)}}\Bigr)\,dz
\ \le\
N_1\,
\exp\!\Bigl(-\tfrac{4\gamma_0}{\pi}\lambda_i(\overline K)\!\int_{N_1}^{N}\!\tfrac{du}{\sqrt{B\cdot A(u)}}\Bigr),
\]
and, using \(e^{-x}\le 1/x\) together with the lower bound
\(
\int_{N_1}^{N}\tfrac{du}{\sqrt{B\cdot A(u)}}
\gtrsim
N^{\,1+p_2/2}\,(\gamma_0 M^{\tfrac{6\alpha-1}{4\alpha-2}})^{p_2/2}
\),
we get
\begin{align*}
\frac{2\gamma_0^2}{\pi}\sum_{i=1}^{M}V_i
\!\int_{0}^{N_1}\!\cdots\,dz
&\lesssim
\frac{2\gamma_0^2}{\pi}\sum_{i=1}^{M}V_i\,
\frac{N_1}{\tfrac{4\gamma_0}{\pi}\lambda_i(\overline K)\!\int_{N_1}^{N}\!\tfrac{du}{\sqrt{B\cdot A(u)}}}
\\
&\lesssim
\gamma_0 \sum_{i=1}^{M}\frac{V_i}{\lambda_i(\overline K)}\,
\bigl(\gamma_0 M^{\tfrac{6\alpha-1}{4\alpha-2}}N\bigr)^{-p_2/2}
\ \\& \eqsim\
\gamma_0\,M^{0.5}\,
\bigl(\gamma_0 M^{\tfrac{6\alpha-1}{4\alpha-2}}N\bigr)^{-p_2/2}
\lesssim
\bigl(\gamma_0\,M^{\tfrac{6\alpha-1}{4\alpha-2}}\,N\bigr)^{-p_2},
\end{align*}
where the last step uses 
\(\gamma_0 M^{0.5} \leq (\gamma_0 M^{\tfrac{6\alpha-1}{4\alpha-2}}N)^{-p_2/2}\) which holds from $N \le N_2$.

Combining the \([N_1,N]\) and \([0,N_1]\) bounds yields
\[
\text{noise}\ \lesssim\ \bigl(\gamma_0\,M^{\tfrac{6\alpha-1}{4\alpha-2}}\,N\bigr)^{-p_2},
\]
as required for the case \(N_1\le N\le N_2\).

\textbf{Case \(N \ge N_2\).}
We have \(M^{-2\alpha}\lesssim C\) directly. As in the above case,
\[\bigl(M^{0.5}Q_{B\cdot A}(N)\bigr)^{-\frac{2\alpha+2\beta-1}{2\alpha}}
\le M^{-1}\bigl(M^{0.5}Q_{B\cdot A}(N)\bigr)^{-1+\frac{1}{2\alpha}}.\]
Using the estimate from the previous case,
\[
M^{-1}\bigl(M^{0.5}Q_{B\cdot A}(N)\bigr)^{-1+\frac{1}{2\alpha}}
\ \lesssim\ 
M^{-1}\bigl(M^{0.5}Q_{B\cdot A}(N_2)\bigr)^{-1+\frac{1}{2\alpha}}
\ \lesssim\ 
\bigl(\gamma_0\,M^{\tfrac{6\alpha-1}{4\alpha-2}}N_2\bigr)^{-p_2}
\ \lesssim\ C.
\]
Finally, as in the \(N>N_0\) case of Section~\ref{tailveri},
\[
\frac{2\gamma_0^2}{\pi}\sum_{i=1}^{M}V_i\!\int_0^N
\exp\!\Bigl(-\tfrac{4\gamma_0}{\pi}\lambda_i(\overline K)\!\int_{z}^{N}\!\tfrac{du}{\sqrt{B\cdot A(u)}}\Bigr)\,dz
\ \lesssim\ C.
\]
Therefore, the bound holds for \(N\ge N_2\) as well.

\subsection{Note on the Regime $\beta > \alpha + 0.5$}
\label{nopower}

When $\beta > \alpha + 0.5$, the assumption $\zeta>1$ used in step~\ref{issue} no longer holds.
In this case, the first drift term takes a different form:
\[
L_{\text{drift}_1}(N)\ \eqsim\
\Bigl(1-\kappa \,\gamma_0\,M^{\min(\alpha,0.5)}\,N\Bigr)^{\frac{2(2\alpha+2\beta-1)}{\,2\beta-2\alpha-1\,}},
\]
for a finite horizon and some constant $\kappa$.
Inserting the max function, we can represent it as a global function.
\[
L_{\text{drift}_1}(N)\ \eqsim\
\Bigl( \max \left( 1-\kappa \,\gamma_0\,M^{\min(\alpha,0.5)}\,N , 0\right)\Bigr)^{\frac{2(2\alpha+2\beta-1)}{\,2\beta-2\alpha-1\,}}.
\]
Now we explain the behavior of the term.
When $N$ is asymptotically smaller than $(\gamma_0\,M^{\min(\alpha,0.5)})^{-1}$, the term is asymptotically constant.
On $N \eqsim (\gamma_0\,M^{\min(\alpha,0.5)})^{-1} $, the term suddenly drops from a constant scale to $0$.

For the case $\alpha<0.5$ or $\beta<0.5$ the valid proxy is
\[
L_{\rm px}(N)
:=\Bigl( \max \left( 1-\kappa \,\gamma_0\,M^{\min(\alpha,0.5)}\,N , 0\right)\Bigr)^{\frac{2(2\alpha+2\beta-1)}{\,2\beta-2\alpha-1\,}} + \gamma_0^2\,M^{\,2-2\min(\alpha,\,0.5)}+M^{-2\alpha+\max(0,\,1-2\beta)},
\]
and for the case $\alpha>0.5$ and $\beta>0.5$ the valid proxy is 
\[
L_{\rm px}(N)
:=\Bigl( \max \left( 1-\kappa \,\gamma_0\,M^{0.5}\,N , 0\right)\Bigr)^{\frac{2(2\alpha+2\beta-1)}{\,2\beta-2\alpha-1\,}} + M^{-\tfrac{6\alpha-1}{2\alpha+1}}(N \gamma_0)^{-\tfrac{2(2\alpha-1)}{2\alpha+1}} + \gamma_0^2\,M + M^{-2\alpha}.
\]
These satisfy the implicit integral equation, same as Sections~\ref{ver1} and \ref{ver2}.

Therefore, for the case $\alpha<0.5$, $\beta>\alpha+0.5$,
\begin{equation}
R(M, N, \gamma_0)
=\Bigl( \max \left( 1-\kappa \,\gamma_0\,M^{\alpha}\,N , 0\right)\Bigr)^{\frac{2(2\alpha+2\beta-1)}{\,2\beta-2\alpha-1\,}} + \gamma_0^2\,M^{\,2-2\alpha}+M^{-2\alpha},
\label{new1}
\end{equation}
and for the case $\alpha>0.5$ and $\beta>0.5$, 
\begin{equation}
R(M, N, \gamma_0)
=\Bigl( \max \left( 1-\kappa \,\gamma_0\,M^{0.5}\,N , 0\right)\Bigr)^{\frac{2(2\alpha+2\beta-1)}{\,2\beta-2\alpha-1\,}} + M^{-\tfrac{6\alpha-1}{2\alpha+1}}(N \gamma_0)^{-\tfrac{2(2\alpha-1)}{2\alpha+1}} + \gamma_0^2\,M + M^{-2\alpha}.
\label{new2}
\end{equation}

\newpage

\section{Derivation of the Compute-Optimal Result}
\label{comop}

\paragraph{Goal.}
The main goal of this section is to derive compute-optimal scaling laws of signSGD in the following form:
\[
M^\star \eqsim \frf^{\xi}, 
\qquad 
R\left(M^\star, \tfrac{\frf}{M^\star}, \gamma_0^{\star} \right) \eqsim \frf^{-\eta}.
\]
Here $R(M, N, \gamma_0)$ denote the $L(\vtheta_N)$ under learning rate $\gamma_0$ and fixed model size $M$.  
We define the computational budget in terms of FLOPS as $\frf = M N$, and consider the optimal model size $M^\star$ under fixed $\frf$, and optimal scaling of learning rate in the form $\gamma_0^{\star} = M^{-e^*}$.

\paragraph{Proof Overview.}

Substituting the learning rate $\gamma_0 = M^{-e}$ into our loss formula
\begin{align*}
R(M,N,\gamma_0)\;\eqsim\;
M^{-2\alpha+\max(0,\,1-2\beta)}
+\bigl(M^{\min(\alpha,0.5)}N\gamma_0\bigr)^{-\tfrac{2(2\alpha+2\beta-1)}{2\alpha-2\beta+1}}
\\+ M^{-\tfrac{6\alpha-1}{2\alpha+1}}(N\gamma_0)^{-\tfrac{2(2\alpha-1)}{2\alpha+1}}
+\gamma_0^2\,M^{2-\min(1,2\alpha)},
\end{align*}
we can represent the risk as a function of three variables $M$, $N$, $e$, and two parameters $\alpha$, $\beta$.

Then for fixed compute $\frf = M N$, we substitute $M = \frf^x$ and $N = \frf^{1-x}$ to express the risk as the function of three variables $\frf$, $x$, $e$ and two parameters $\alpha$, $\beta$.
Four terms in the loss formula convert to four terms with exponential of FLOPS $\frf$ with exponent functions $\ell_1$ to $\ell_4$.
\[
R(\frf, x, e, \alpha, \beta) \;\eqsim\;
\frf^{-\ell_1(x, e, \alpha, \beta)} + \frf^{-\ell_2(x, e, \alpha, \beta)} + \frf^{-\ell_3(x, e, \alpha, \beta)} + \frf^{-\ell_4(x, e, \alpha, \beta)}.
\]
Since each term is a power of $\frf$, and assuming $\frf \geq 1$, the loss simplifies to
\[
R(\frf, x, e) \;\eqsim\; 
\frf^{-h(x, e, \alpha,\beta)}, \quad \text{where} \ \ h(x, e, \alpha, \beta) = \min(\ell_1, \ell_2, \ell_3, \ell_4).
\]

We find the optimal learning rate exponent $e^*$ and the optimal model size exponent by
\[x^*, e^* = \argmax_{x, e} h(x, e,\alpha,\beta).\]

As we optimize over two variables $x$ and $e$, three terms among $\ell_1$ to $\ell_4$ balance on the optimal values $x^*$ and $e^*$.

Then the optimal learning rate is $\gamma_0^* = M^{-e^*}$, and the optimal model size is $M^\star = \frf^{x^*}$.
Finally, the compute-optimal scaling law is 
\[
R(M^\star, \frf / M^\star, \gamma_0^*) \;=\; \frf^{-h(x^*, e^*, \alpha,\beta)},
\]
and $h(x^*, e^*, \alpha,\beta)$ will be the compute-optimal slope in absolute value.

\subsection{Compute-Optimal Result for Maximal Learning Rate}
\label{maximal}

We now discuss the maximal learning rate case deferred from the main text.
Note that \citet{paquette4+} showed that the maximal learning rate for SGD is \(\gamma_0 \eqsim 1\) when \(\alpha>\tfrac{1}{2}\), and \(\gamma_0 \eqsim M^{-(1-2\alpha)}\) when \(\alpha<\tfrac{1}{2}\).

Now, we discuss the maximal learning rate for signSGD.  
Because the noise term is \(\gamma_0^2\,M^{\,2-\min(1,\,2\alpha)}\), stability requires
\begin{equation*}
\gamma_0^2\,M^{\,2-\min(1,\,2\alpha)} \;\lesssim\; 1 .
\end{equation*}
Otherwise, the signSGD noise term explodes as \(M\) grows.  
This condition is satisfied by choosing
\begin{equation*}
\gamma_0 \;=\; M^{-1+\min(\alpha,\,0.5)},
\end{equation*}
which ensures \(\gamma_0^2\,M^{\,2-\min(1,\,2\alpha)} \eqsim 1\) while the other terms still decay appropriately.  

For \(\alpha<0.5\), the term
\[
\bigl(M^{\min(\alpha,\,0.5)}\,N\,\gamma_0\bigr)^{-\tfrac{2(2\alpha+2\beta-1)}{\,2\alpha-2\beta+1\,}}
\;=\;
\bigl(M^{-(1-2\alpha)}\,N\bigr)^{-\tfrac{2(2\alpha+2\beta-1)}{\,2\alpha-2\beta+1\,}}
\]
decreases with \(N\) but increases with \(M\).  
However, under a fixed compute budget \(\frf = M N\), one can allocate resources so that this term does not cause an exploding loss; hence we do not classify it as unstable.

Thus, the maximal learning rate for signSGD is 
\begin{equation*}
\gamma_0 \;=\; M^{-1+\min(\alpha,\,0.5)}.
\end{equation*}
In this case, however, we obtain \(R(M, N, \gamma_0) \eqsim 1\), so the slope of the compute-optimal curve is always zero.

\subsection{Derivation of Compute-Optimal Result for Optimal Learning Rate}
\label{optimlr}

We assume \(\alpha+\beta>0.5\) throughout, even for the case where it is not specified.

\subsubsection{\(\alpha>0.5,\;\beta<0.5\) (Phase~A$a$)}

We start from
\[
R(M,N,\gamma_0)
 \eqsim \bigl(M^{1/2}N\gamma_0\bigr)^{-\tfrac{2(2\alpha+2\beta-1)}{\,2\alpha+1-2\beta\,}}
+M^{-(2\alpha+2\beta-1)}
+\gamma_0^2\,M.
\]
Substitute
\[
\gamma_0 = M^{-e},\qquad
N = \frac{\frf}{M},\qquad
M = \frf^x,
\]
so that, up to constant factors,
\[
R\;\eqsim\; \frf^{\,\max\{\ell_1(x),\,\ell_2(x),\,\ell_3(x)\}},
\]
where
\[
\begin{aligned}
\ell_1(x)&=-\,(2\alpha+2\beta-1)\,x,\\
\ell_2(x)&=\frac{2(2\alpha+2\beta-1)}{\,2\alpha+1-2\beta\,}\,\Bigl(e+\tfrac12\Bigr)\,x
          \;-\;\frac{2(2\alpha+2\beta-1)}{\,2\alpha+1-2\beta\,},\\
\ell_3(x)&=(1-2e)\,x.
\end{aligned}
\]
We minimize the convex, piecewise–linear function \(f(x,e)=\max_i \ell_i(x,e)\) over \(x\in(0,1)\) and \(e\in\R\).
By convexity, any interior minimizer must occur at a kink where at least two lines are active.
In our regime \(\alpha+\beta>0.5\) and \(\beta<\alpha+0.5\), the only admissible triple intersection is \(\{\ell_1,\ell_2,\ell_3\}\).
Solving \(\ell_1=\ell_3\) and \(\ell_2=\ell_3\) yields
\[
e^*=\alpha+\beta,\qquad
x^*=\frac{1}{\,2\alpha+1\,},\qquad
h^*=\ell_1(x^*)=\ell_2(x^*)=\ell_3(x^*)=-\frac{2\alpha+2\beta-1}{\,2\alpha+1\,}.
\]

To verify that this kink is the global minimizer, note first that \(x^*\in(0,1)\) when \(\alpha>0.5\), hence it is interior.
Next, the subgradient optimality condition for convex max-of-lines problems requires \((0,0)\in\partial f(x^*,e^*)\).
At \((x^*,e^*)\) the active lines have slopes that straddle zero in both coordinates:
\[
\partial_x\ell_1=-(2\alpha+2\beta-1)<0,\quad
\partial_x\ell_2=\frac{2(2\alpha+2\beta-1)}{\,2\alpha+1-2\beta\,}\Bigl(e^*+\tfrac12\Bigr)>0,\quad
\partial_x\ell_3=1-2e^*=1-2(\alpha+\beta)<0,
\]
and
\[
\partial_e\ell_1=0,\qquad
\partial_e\ell_2=\frac{2(2\alpha+2\beta-1)}{\,2\alpha+1-2\beta\,}\,x^*>0,\qquad
\partial_e\ell_3=-2x^*<0.
\]
Since \(0\) lies in the convex hull of the active slopes in both \(x\) and \(e\), we have \((0,0)\in\partial f(x^*,e^*)\), so the interior triple intersection is the global minimizer; no boundary check is needed.

\[
\boxed{
\gamma_0 = M^{-(\alpha+\beta)},\quad
M^{\star}\;\eqsim\; \frf^{1/(2\alpha+1)},\quad
R\!\left(M^{\star},\tfrac{\frf}{M^{\star}}\right)\;\eqsim\; \frf^{-\tfrac{2\alpha+2\beta-1}{\,2\alpha+1\,}}.
}
\]

\subsubsection{\(\alpha<0.5,\;\beta<0.5\) (Phase~A$b$)}

We start from
\[
R(M,N,\gamma_0)
= \bigl(M^{\alpha}N\gamma_0\bigr)^{-\tfrac{2(2\alpha+2\beta-1)}{\,2\alpha+1-2\beta\,}}
+ M^{-(2\alpha+2\beta-1)}
+ \gamma_0^2\,M^{\,2-2\alpha}.
\]
Substitute
\[
\gamma_0=M^{-e},\qquad N=\frac{\frf}{M},\qquad M=\frf^x,
\]
so that, up to constant factors,
\[
R\;\eqsim\; \frf^{\,\max\{\ell_1(x),\,\ell_2(x),\,\ell_3(x)\}},
\]
where
\[
\begin{aligned}
\ell_1(x)&=-\,(2\alpha+2\beta-1)\,x,\\
\ell_2(x)&=-\frac{2(2\alpha+2\beta-1)}{\,2\alpha+1-2\beta\,}\,\bigl(\alpha - e -1\bigr)\,x
          \;-\;\frac{2(2\alpha+2\beta-1)}{\,2\alpha+1-2\beta\,},\\
\ell_3(x)&=\bigl(2 - 2\alpha - 2e\bigr)\,x.
\end{aligned}
\]
We minimize the convex, piecewise–linear function \(f(x,e)=\max_i \ell_i(x,e)\) over \(x\in(0,1)\) and \(e\in\R\).
Under our standing assumptions \(\alpha+\beta>0.5\) and \(\beta<\alpha+0.5\), the only admissible triple intersection is \(\{\ell_1,\ell_2,\ell_3\}\).
Solving \(\ell_1=\ell_3\) and \(\ell_2=\ell_3\) gives
\[
e^*=\beta+\tfrac12,\qquad
x^*=\tfrac12,\qquad
h^*=\ell_1(x^*)=\ell_2(x^*)=\ell_3(x^*)=-\frac{2\alpha+2\beta-1}{2}.
\]

To certify optimality, note that \(x^*\in(0,1)\) (since \(x^*=\tfrac12\)) and check the subgradient condition \((0,0)\in\partial f(x^*,e^*)\).
At \((x^*,e^*)\) the active lines have slopes straddling zero in both coordinates:
\[
\partial_x\ell_1=-(2\alpha+2\beta-1)<0,\quad
\partial_x\ell_2=\frac{2(2\alpha+2\beta-1)}{\,2\alpha+1-2\beta\,}\bigl(e^*+1-\alpha\bigr)>0,\quad
\partial_x\ell_3=2-2\alpha-2e^*=1-2(\alpha+\beta)<0,
\]
and
\[
\partial_e\ell_1=0,\qquad
\partial_e\ell_2=\frac{2(2\alpha+2\beta-1)}{\,2\alpha+1-2\beta\,}\,x^*>0,\qquad
\partial_e\ell_3=-2x^*<0.
\]
Hence \(0\) lies in the convex hull of the active slopes in both variables, so the interior kink \((x^*,e^*)\) is the global minimizer; no boundary check is required.

\[
\boxed{
\gamma_0 = M^{-(\beta+0.5)},\quad
M^{\star}\;\eqsim\; \frf^{1/2},\quad
R\!\left(M^{\star},\tfrac{\frf}{M^{\star}}\right)\;\eqsim\; \frf^{-\tfrac{2\alpha+2\beta-1}{2}}.
}
\]

\subsubsection{\(\alpha<0.5,\;0.5<\beta<\alpha+0.5\) (Phase~A$c$)}

We start from
\[
R(M,N,\gamma_0)
= \bigl(M^{\alpha}N\gamma_0\bigr)^{-\tfrac{2(2\alpha+2\beta-1)}{\,2\alpha+1-2\beta\,}}
+ M^{-2\alpha}
+ \gamma_0^2\,M^{\,2-2\alpha}.
\]
Substitute
\[
\gamma_0=M^{-e},\qquad N=\frac{\frf}{M},\qquad M=\frf^x,
\]
so that, up to constant factors,
\[
R\;\eqsim\; \frf^{\,\max\{\ell_1(x),\,\ell_2(x),\,\ell_3(x)\}},
\]
where
\[
\begin{aligned}
\ell_1(x)&=-\,2\alpha\,x,\\
\ell_2(x)&=-\frac{2(2\alpha+2\beta-1)}{\,2\alpha+1-2\beta\,}\,\Bigl(\alpha - e -1\Bigr)\,x
            \;-\;\frac{2(2\alpha+2\beta-1)}{\,2\alpha+1-2\beta\,},\\
\ell_3(x)&=\bigl(2 - 2\alpha - 2e\bigr)\,x.
\end{aligned}
\]

We minimize the convex, piecewise–linear objective \(f(x,e)=\max_i\ell_i(x,e)\) over \(x\in(0,1)\) and \(e\in\R\).
In the regime \(\alpha+\beta>0.5\) and \(\beta<\alpha+0.5\) (with \(\alpha<0.5<\beta\)), the only admissible triple intersection is \(\{\ell_1,\ell_2,\ell_3\}\).
Solving \(\ell_1=\ell_3\) and \(\ell_2=\ell_3\) yields
\[
e^*=1,\qquad
x^*=\frac{2\alpha+2\beta-1}{\, -4\alpha\beta + 6\alpha + 4\beta - 2\,},\qquad
h^*=\ell_1(x^*)=\ell_2(x^*)=\ell_3(x^*)=-\,\frac{2\alpha\,(2\alpha+2\beta-1)}{\, -4\alpha\beta + 6\alpha + 4\beta - 2\,}.
\]
One checks that the denominator is positive in this regime and exceeds the positive numerator \(2\alpha+2\beta-1\), hence \(x^*\in(0,1)\).

\emph{Interior optimality.}
At \((x^*,e^*)\) the active lines’ slopes straddle zero in both coordinates:
\[
\partial_x\ell_1=-2\alpha<0,\quad
\partial_x\ell_2=\frac{2(2\alpha+2\beta-1)}{\,2\alpha+1-2\beta\,}\,(e^*+1-\alpha)>0,\quad
\partial_x\ell_3=2-2\alpha-2e^*=-2\alpha<0,
\]
and
\[
\partial_e\ell_1=0,\qquad
\partial_e\ell_2=\frac{2(2\alpha+2\beta-1)}{\,2\alpha+1-2\beta\,}\,x^*>0,\qquad
\partial_e\ell_3=-2x^*<0.
\]
Thus \((0,0)\in\partial f(x^*,e^*)\) and, with \(x^*\in(0,1)\), the interior kink is the global minimizer; no boundary check is required.

\[
\boxed{
\gamma_0 = M^{-1},\quad
M^{\star}\;\eqsim\; \frf^{\frac{2\alpha+2\beta-1}{\, -4\alpha\beta + 6\alpha + 4\beta - 2\,}},\quad
R\!\bigl(M^{\star},\tfrac{\frf}{M^{\star}}\bigr)\;\eqsim\;
\frf^{-\frac{2\alpha\,(2\alpha+2\beta-1)}{\, -4\alpha\beta + 6\alpha + 4\beta - 2\,}}.
}
\]

\subsubsection{\(\alpha>0.5,\;0.5<\beta<\alpha+0.5\) (Phase~B$a$)}

We start from
\[
R(M,N,\gamma_0)
=\bigl(M^{1/2}N\gamma_0\bigr)^{-\frac{2(2\alpha+2\beta-1)}{\,2\alpha+1-2\beta\,}}
+\bigl(M^{\frac{6\alpha-1}{\,4\alpha-2\,}}N\gamma_0\bigr)^{-\frac{2(2\alpha-1)}{\,2\alpha+1\,}}
+M^{-2\alpha}+\gamma_0^2\,M.
\]
Substitute \(\gamma_0=M^{-e},\ N=\frf/M,\ M=\frf^x\). Then, up to \(\frf\)-independent factors,
\[
R\;\eqsim\; \frf^{\,\max_{i=1,\dots,4}\ell_i(x,e)},
\]
where
\[
\begin{aligned}
\ell_1(x)&=-\,2\alpha\,x,\\
\ell_2(x)&=\frac{2(2\alpha+2\beta-1)}{\,2\alpha+1-2\beta\,}\,\Bigl(e+\tfrac12\Bigr)x
          \;-\;\frac{2(2\alpha+2\beta-1)}{\,2\alpha+1-2\beta\,},\\
\ell_3(x)&=\Bigl(\frac{2(2\alpha-1)}{\,2\alpha+1\,}\,e -1\Bigr)x
          \;-\;\frac{2(2\alpha-1)}{\,2\alpha+1\,},\\
\ell_4(x)&=(1-2e)\,x.
\end{aligned}
\]

We minimize the convex, piecewise–linear function \(f(x,e)=\max_i\ell_i(x,e)\) over \(x\in(0,1)\), \(e\in\R\).
In the regime \(\alpha>0.5,\ \beta>0.5\), the only admissible interior kink with three active lines is \(\{\ell_2,\ell_3,\ell_4\}\).
Solving \(\ell_2=\ell_4\) and \(\ell_3=\ell_4\) yields
\[
e^*=\frac{2\alpha+4\beta-1}{\,4\beta\,},\qquad
x^*=\frac{\beta}{\,\alpha+\beta\,},\qquad
h^*=\ell_2(x^*,e^*)=\ell_3(x^*,e^*)=\ell_4(x^*,e^*)
=-\,\frac{2\alpha+2\beta-1}{\,2\alpha+2\beta\,}.
\]

\emph{Interior optimality.}
First, \(x^*\in(0,1)\) since \(\alpha,\beta>0.5\).
Second, the subgradient condition \((0,0)\in\partial f(x^*,e^*)\) holds because the active slopes straddle zero in both variables:
\[
\partial_x\ell_2=\tfrac{2(2\alpha+2\beta-1)}{\,2\alpha+1-2\beta\,}\Bigl(e^*+\tfrac12\Bigr)>0,\quad
\partial_x\ell_3=\tfrac{2(2\alpha-1)}{\,2\alpha+1\,}e^*-1<0,\quad
\partial_x\ell_4=1-2e^*=\tfrac{1-2\alpha-2\beta}{\,2\beta\,}<0,
\]
and
\[
\partial_e\ell_2=\tfrac{2(2\alpha+2\beta-1)}{\,2\alpha+1-2\beta\,}\,x^*>0,\qquad
\partial_e\ell_3=\tfrac{2(2\alpha-1)}{\,2\alpha+1\,}\,x^*>0,\qquad
\partial_e\ell_4=-2x^*<0.
\]
Hence \((x^*,e^*)\) is the global minimizer among interior points. It remains to exclude \(\ell_1\) at \((x^*,e^*)\):
\[
\ell_1(x^*)=-2\alpha\,\frac{\beta}{\alpha+\beta}
\ \le\
-\frac{2\alpha+2\beta-1}{\,2(\alpha+\beta)\,}=h^*,
\]
since \(4\alpha\beta-2\alpha-2\beta+1=4(\alpha-\tfrac12)(\beta-\tfrac12)\ge0\) for \(\alpha,\beta>0.5\).
Therefore the triple intersection \(\{\ell_2,\ell_3,\ell_4\}\) is the global optimum.

\[
\boxed{
\gamma_0 = M^{-\tfrac{2\alpha+4\beta-1}{\,4\beta\,}},\quad
M^{\star}\;\eqsim\; \frf^{\tfrac{\beta}{\,\alpha+\beta\,}},\quad
R\!\left(M^{\star},\tfrac{\frf}{M^{\star}}\right)\;\eqsim\; \frf^{-\tfrac{2\alpha+2\beta-1}{\,2\alpha+2\beta\,}}.
}
\]

\subsubsection{\(\alpha<0.5,\;\beta>\alpha+0.5\) (Phase~A$d$)}

Recall the loss formula \eqref{new1}
\[
R(M, N, \gamma_0)
=\Bigl( \max \left( 1-\kappa \,\gamma_0\,M^{\alpha}\,N , 0\right)\Bigr)^{\frac{2(2\alpha+2\beta-1)}{\,2\beta-2\alpha-1\,}} + \gamma_0^2\,M^{\,2-2\alpha}+M^{-2\alpha}.
\]

Note that the drift term vanishes at $N \eqsim (\gamma_0\,M^{\alpha})^{-1} $.

Let $\gamma_0 = M^{-e}$. 
Note that because of the approximation error $M^{-2\alpha}$, there is no gain from setting $e$ bigger than $1$.
So we will only consider the case $e \le 1$.
In that case, loss is a constant scale before $N \eqsim M^{e-\alpha} $, and it drops to the scale of $M^{-2e-2\alpha+2}$.

Since a constant scale loss cannot be compute-optimal, the loss $M^{-2e-2\alpha+2}$ at $N \eqsim M^{e-\alpha} $ will be a candidate for the compute-optimal point.
In that case $\frf = MN = M^{1+e-\alpha}$ holds and it leads to $M = \frf^{\frac{1}{1+e-\alpha}}$.
So the loss $M^{-2e-2\alpha+2}$ has the size $\frf^{\frac{-2e-2\alpha+2}{1+e-\alpha}}$.

Since $e = 1$ minimizes $\frac{-2e-2\alpha+2}{1+e-\alpha}$, $\gamma_0 = M^{-1}$ is the optimal learning rate.
This leads to the following result.
\[
\boxed{
\gamma_0 = M^{-1},\quad
M^{\star}\;\eqsim\; \frf^{\tfrac{1}{2-\alpha}},\quad
R\!\left(M^{\star},\tfrac{\frf}{M^{\star}}\right)\;\eqsim\; \frf^{-\tfrac{2\alpha}{2-\alpha}}.
}
\]

\subsubsection{\(\alpha>0.5,\;\beta>\alpha+0.5\) (Phase~B$b$)}

Recall the loss formula \eqref{new2}
\[
R(M, N, \gamma_0)
=\Bigl( \max \left( 1-\kappa \,\gamma_0\,M^{0.5}\,N , 0\right)\Bigr)^{\frac{2(2\alpha+2\beta-1)}{\,2\beta-2\alpha-1\,}} + M^{-\tfrac{6\alpha-1}{2\alpha+1}}(N \gamma_0)^{-\tfrac{2(2\alpha-1)}{2\alpha+1}} + \gamma_0^2\,M + M^{-2\alpha}.
\]

Note that the first term vanishes at $N \eqsim (\gamma_0\,M^{\alpha})^{-1} $.
At that point second term becomes $M^{-\tfrac{6\alpha-1}{2\alpha+1}}(N \gamma_0)^{-\tfrac{2(2\alpha-1)}{2\alpha+1}} \eqsim M^{-\frac{4\alpha}{2\alpha+1}}$.

As we optimize over three parameters $N$, $M$, $\gamma_0$, and one constraint $\frf = MN$, we have two degrees of freedom.
So this means three terms may balance together at the compute-optimal point.

The first possible case is the balance of the first three terms, and in this case, $\gamma_0^2 M = M^{-\frac{4\alpha}{2\alpha+1}}$ and $N \eqsim (\gamma_0\,M^{\alpha})^{-1} $ must hold.
Here, the loss is $M^{-\frac{4\alpha}{2\alpha+1}}$ and $\frf=MN = M^{\frac{2\alpha+1}{4\alpha+1}}$ holds, so the loss is $\frf^{-\frac{4\alpha}{4\alpha+1}}$.

The second possible case is the balance of the last three terms, and after solving the equations, the loss is $\frf^{-\frac{2\alpha}{2\alpha+1}}$.

The first case has a steeper decay, so it is the compute-optimal.
This leads to the following result.
\[
\boxed{
\gamma_0 = M^{-\tfrac{6\alpha+1}{4\alpha+2}},\quad
M^{\star}\;\eqsim\; \frf^{\tfrac{2\alpha+1}{4\alpha+1}},\quad
R\!\left(M^{\star},\tfrac{\frf}{M^{\star}}\right)\;\eqsim\; \frf^{-\tfrac{4\alpha}{4\alpha+1}}.
}
\]

\subsection{Discussion for the Suboptimal Learning Rate}

In this section, we calculate the compute-optimal exponent for a general size of learning rate in the form of $\gamma_0 = M^{-e}$.
We will focus on Phase~Aa.
In that phase, the maximal learning rate was $\gamma_0 = M^{-1/2}$ and optimal learning rate was $\gamma_0^{\star} = M^{-(\alpha+\beta)}$.

In this section, we will calculate the compute-optimal exponent for general $e\ge 1/2$.

Recall that we have the following loss formula for Phase~Aa.
\[
R(M,N,\gamma_0)
 \eqsim \bigl(M^{1/2}N\gamma_0\bigr)^{-\tfrac{2(2\alpha+2\beta-1)}{\,2\alpha+1-2\beta\,}}
+M^{-(2\alpha+2\beta-1)}
+\gamma_0^2\,M.
\]

For the case $1/2 \le e \le (\alpha+\beta) $, $\bigl(M^{1/2}N\gamma_0\bigr)^{-\tfrac{2(2\alpha+2\beta-1)}{\,2\alpha+1-2\beta\,}}$ and $\gamma_0^2\,M$ are dominant terms.
Substituting $\gamma_0 = M^{-e}$ and balancing them, we get $N = M^{(\frac{4\alpha}{2\alpha+2\beta-1}) (e-1/2)}$.
As $\frf=MN$ holds, it leads to
\[
M^{\star}\;\eqsim\; \frf^{1/((\frac{4\alpha}{2\alpha+2\beta-1}) (e-1/2)+1)},\quad
R\!\left(M^{\star},\tfrac{\frf}{M^{\star}}, \gamma_0\right)\;\eqsim\; \frf^{-\tfrac{ (2e-1) ( 2\alpha+2\beta-1)}{2\alpha(2e-1)+( 2\alpha+2\beta-1)}}.
\]

For the case $e \ge (\alpha+\beta) $, $\bigl(M^{1/2}N\gamma_0\bigr)^{-\tfrac{2(2\alpha+2\beta-1)}{\,2\alpha+1-2\beta\,}}$ and $M^{-(2\alpha+2\beta-1)}$ are dominant terms.
Substituting $\gamma_0 = M^{-e}$ and balancing them, we get $N = M^{\alpha-\beta+e}$.
As $\frf=MN$ holds, it leads to
\[
M^{\star}\;\eqsim\; \frf^{1/(\alpha-\beta+e+1)},\quad
R\!\left(M^{\star},\tfrac{\frf}{M^{\star}}, \gamma_0\right)\;\eqsim\; \frf^{-\tfrac{  2\alpha+2\beta-1}{\alpha-\beta+e+1}}.
\]

In Figure~\ref{subop}, we provide a graph of the compute-optimal exponent with respect to $e$ of $\gamma_0 = M^{-e}$ for $(\alpha, \beta) = (0.6, 0.4)$.
As the graph is continuous, the absolute value of the compute-optimal exponent gradually decreases as we move away from the optimal choice.
Also, we can observe that the degradation is smaller for the learning rates with larger $e$ in $\gamma_0 = M^{-e}$ than that of the optimal learning rate.
So in terms of tuning the learning rate, we may aggressively set a high $e$ in $\gamma_0 = M^{-e}$ for the initial attempt, and gradually decrease the $e$ for later attempts.

\begin{figure}[t]
    \centering
    \begin{subfigure}{0.5\textwidth}
        \centering
        \includegraphics[width=\textwidth]{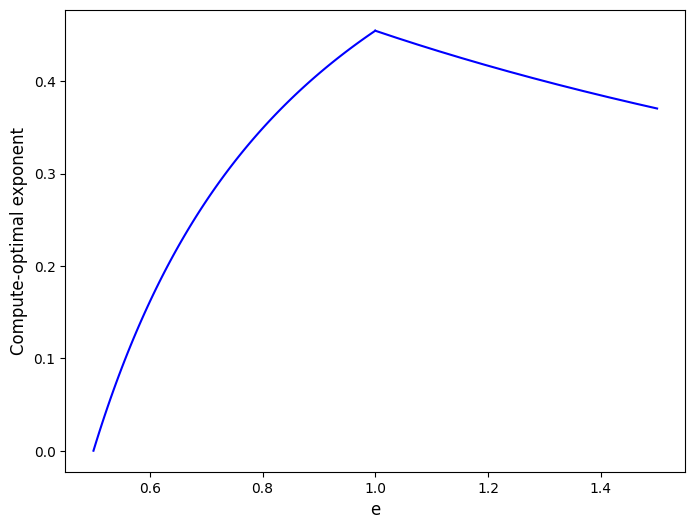}
        \label{fig:plot1}
    \end{subfigure}
    \caption{\textbf{Compute-optimal exponent with respect to $e$ of $\gamma_0 = M^{-e}$ for $(\alpha, \beta) = (0.6, 0.4)$.}
    Colored line shows the compute-optimal exponent $x$ in the formula $R\!\left(M^{\star},\tfrac{\frf}{M^{\star}}, \gamma_0\right)\;\eqsim\; \frf^{-x}$.  }
    \label{subop}
\end{figure}

\newpage

\section{Derivation for the Stable-decay and Warmup-stable-decay Scheduling}
\label{derivsd}

We first derive a scaling-law bound for the stable-decay schedule, a simplified variant of the warmup–stable-decay (WSD) schedule, and then extend it to the full warmup–stable-decay schedule in Section~\ref{acwsd}.

We set the learning rate as \(\gamma_k=\gamma_0\,f(k)\). 
Previously, we considered the constant–learning–rate case (\(f\equiv 1\)). 
In this section, we start with a general decaying learning rate by taking \(f\) to be a decreasing function, and then substitute the stable-decay scheduling.
Throughout, for simplicity, we assume \(\alpha>0.5\) and \(\beta<0.5\) (Phase~Aa).

Recall the implicit integral equation \eqref{int_eq}:
\begin{align}
L(N)
= \Hnorm
+ \sum_{i=1}^M r_i(0)\,
   \exp\!\Bigl(
     -\tfrac{4\lambda_i \gamma_0}{\pi}\int_0^N \frac{f(u)}{\sqrt{L(u)}}\,du
   \Bigr)
\\+ \frac{2\gamma_0^2}{\pi}\sum_{i=1}^M V_i
  \int_0^N
     \exp\!\Bigl(
       -\tfrac{4\lambda_i \gamma_0}{\pi}\int_z^N \frac{f(u)}{\sqrt{L(u)}}\,du
     \Bigr) f(z)^2\,dz .
\label{eq:decay-implicit}
\end{align}

Also recall Equations~\ref{formalheadtail} and \ref{dddcomp}.
\begin{equation}
\Rheadn = \sum_{i=1}^M r_i(0)\,e^{-\tfrac{4\lambda_i\gamma_0}{\pi}\int_0^N \tfrac{f(u)}{\sqrt{L(u)}}\,du},
\quad
\Rtailn = \frac{2\gamma_0^2}{\pi} \sum_{i=1}^M V_i \int_0^N
e^{-\tfrac{4\lambda_i\gamma_0}{\pi}\int_z^N \tfrac{f(u)}{\sqrt{L(u)}}\,du}\,f(z)^2\,dz.
\end{equation}
\begin{equation}
L(N) = \Hnorm + \Rheadn + \Rtailn.
\end{equation}

Recall also the drift/approximation transformation \eqref{head_trans}:
\begin{align*}
\Rheadn + \Hnorm
\;\eqsim\;
M^{-(2\alpha+2\beta-1)}
\;+\;
\bigl(M^{0.5} Q_L(N)\bigr)^{-\tfrac{2\alpha+2\beta-1}{2\alpha}},
\\
Q_L(z):=\frac{4\gamma_0}{\pi}\int_0^z \frac{f(u)}{\sqrt{L(u)}}\,du .
\end{align*}
Hence,
\begin{align}
L(N)
\;\eqsim\; &
M^{-(2\alpha+2\beta-1)}
\;+\;
\bigl(M^{0.5} Q_L(N)\bigr)^{-\tfrac{2\alpha+2\beta-1}{2\alpha}}
\; \\&+\;
\frac{2\gamma_0^2}{\pi}\sum_{i=1}^M V_i
\int_0^N
\exp\!\Bigl(
-\tfrac{4\lambda_i \gamma_0}{\pi}\int_z^N \frac{f(u)}{\sqrt{L(u)}}\,du
\Bigr) f(z)^2\,dz .
\label{eq:decay-implicit2}
\end{align}

\begin{remark}[Early-iteration proxy]\label{eipr}
In early iterations the drift term \(\bigl(M^{0.5} Q_L(N)\bigr)^{-\tfrac{2\alpha+2\beta-1}{2\alpha}}\) dominates. Solving
\(
L(N) \eqsim \bigl(M^{0.5} Q_L(N)\bigr)^{-\tfrac{2\alpha+2\beta-1}{2\alpha}}
\)
yields
\[
L(N)
\;\eqsim\;
\bigl(M^{0.5}\gamma_0 F(N)\bigr)^{-\tfrac{2(2\alpha+2\beta-1)}{\,2\alpha+1-2\beta\,}},
\qquad
F(N):=\int_0^N f(u)\,du .
\]
\end{remark}

Now we move on to stable-decay scheduling.
\paragraph{Stable-decay schedule.}

\begin{wrapfigure}[12]{r}{0.47\linewidth}
  
  \vspace{-20pt}                 
  \centering
  \includegraphics[width=\linewidth]{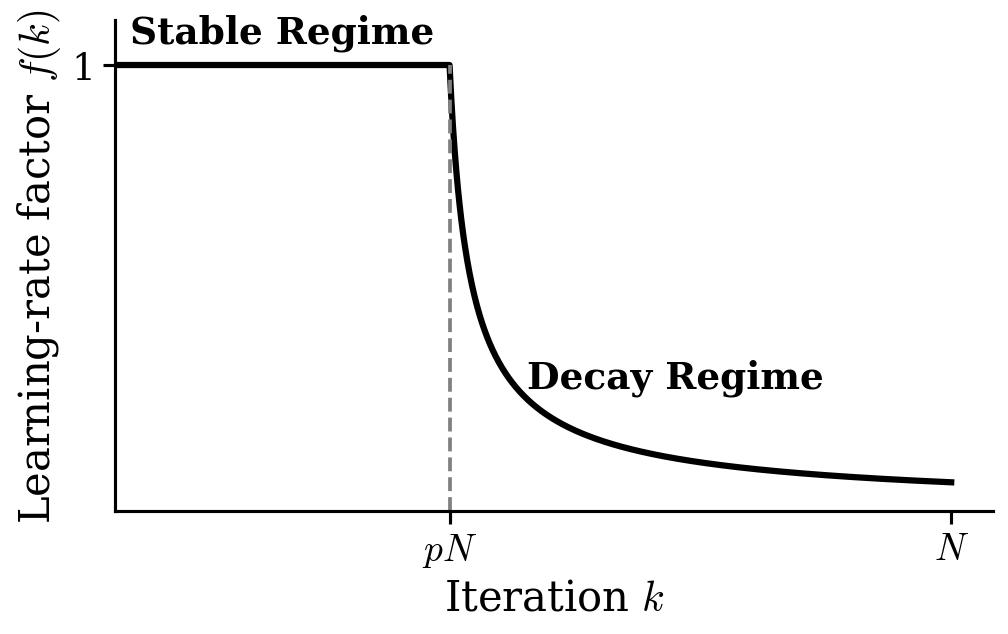}
  \captionsetup{justification=raggedright,singlelinecheck=false} 
  \caption{ \textbf{Visualization of Stable-decay Scheduling.  } }
  
  \label{fig:signsgd-phase}
  \vspace{-8pt}                 
\end{wrapfigure}

For the stable-decay schedule, we set the learning rate to $\gamma_k = \gamma_0 f(k)$ with
\begin{equation}
f(k) =
\begin{cases}
1, & k \leq pN, \\
\bigl(1 + \tau (k - pN)\bigr)^{-c}, & k > pN ,
\end{cases}
\label{sdfunc}
\end{equation}
where $p, c \in (0,1)$ and $\tau > 0$. 
In other words, the learning rate remains constant for the first $pN$ steps, and then decays polynomially with exponent $c$ for the remaining $(1-p)N$ steps.

\begin{remark}
Note that $f$ depends on the total training steps $N$.
To be precise, we have to represent it as $f_N$, but for simplicity, we write it as $f$ throughout the analysis.
\end{remark}

First, we will make an upper bound on the noise term under stable-decay scheduling.

\subsection{Upper Bound of the Noise Term}
Fix \(p < q < 1\) close to \(1\) and split \(\Rtailn\) as
\begin{align*}
\Rtailn
&=\frac{2\gamma_0^2}{\pi}\sum_{i=1}^M V_i
\int_0^{qN}
\exp\!\Bigl(
-\tfrac{4\lambda_i \gamma_0}{\pi}\int_z^N \frac{f(u)}{\sqrt{L(u)}}\,du
\Bigr) f(z)^2\,dz
\\[-1mm]
&\quad+
\frac{2\gamma_0^2}{\pi}\sum_{i=1}^M V_i
\int_{qN}^{N}
\exp\!\Bigl(
-\tfrac{4\lambda_i \gamma_0}{\pi}\int_z^N \frac{f(u)}{\sqrt{L(u)}}\,du
\Bigr) f(z)^2\,dz
=:T_{\le qN}+T_{>qN}.
\end{align*}

\paragraph{Bounding \(T_{>qN}\).}
Note that \(f(N)\eqsim f(z)\) holds for $qN < z < N$. So
\[
\int_{qN}^{N}
\exp\!\Bigl(
-\tfrac{4\lambda_i \gamma_0}{\pi}\int_z^N \frac{f(u)}{\sqrt{L(u)}}\,du
\Bigr) f(z)^2\,dz
\;\eqsim\;
f(N)^2\int_{qN}^{N}
\exp\!\Bigl(
-\tfrac{4\lambda_i \gamma_0}{\pi}\int_z^N \frac{f(u)}{\sqrt{L(u)}}\,du
\Bigr) dz .
\]
For \(q\) sufficiently close to \(1\), there exist constants \(c_0,c_1>0\) such that for $qN < z < N$
\[
c_0\,\frac{(N-z)f(N)}{\sqrt{L(N)}} \;\le\;
\int_z^N \frac{f(u)}{\sqrt{L(u)}}\,du
\;\le\;
c_1\,\frac{(N-z)f(N)}{\sqrt{L(N)}} .
\]
Therefore,
\begin{align*}
T_{>qN}
&\le
\frac{2\gamma_0^2}{\pi}\,f(N)^2\sum_{i=1}^M V_i
\int_{qN}^{N}
\exp\!\Bigl(
-\tfrac{4\lambda_i \gamma_0}{\pi}\,c_0\,\frac{(N-z)f(N)}{\sqrt{L(N)}}
\Bigr) dz
\\
&\eqsim
\frac{2\gamma_0^2}{\pi}\,f(N)^2\sum_{i=1}^M V_i\,
\frac{\pi\,\sqrt{L(N)}}{4\lambda_i \gamma_0\,c_0\,f(N)}
\;\eqsim\;
\gamma_0\,f(N)\,\sqrt{L(N)}\sum_{i=1}^M \frac{V_i}{\lambda_i}
\\
&\eqsim
\gamma_0\,f(N)\,\sqrt{L(N)}\,\trace\!\bigl(\diag(K)^{1/2}\bigr)
\;\eqsim\; \gamma_0\,f(N)\,\sqrt{L(N)}\,M^{0.5}.
\end{align*}
To summarize, we have
\[
T_{>qN}\ \lesssim\ \gamma_0\,f(N)\,\sqrt{L(N)}\,M^{0.5} \eqsim\ \gamma_0\,M^{1/2} N^{-c}\sqrt{L(N)}.
\]

\paragraph{Bounding \(T_{\le qN}\).}

Let $Q(z,N) =\frac{4\gamma_0}{\pi} \int_z^N \frac{f(u)}{\sqrt{L(u)}} \,du$.
Then
\begin{align*}
T_{\le qN} &= \frac{2\gamma_0^2}{\pi} \sum_{i=1}^M (w_i\T \ksig  K u_i) \int_0^{qN}  e^{ -\frac{4\gamma_0}{\pi} \lamkbar \int_z^N \frac{f(u)}{\sqrt{L(u)}} \,du} \, f(z)^2 dz \\
&= \frac{2\gamma_0^2}{\pi} \sum_{i=1}^M \trace(\ksig  K u_i w_i\T) \int_0^{qN} e^{- \lamkbar Q(z,N)} \, f(z)^2 dz \\
&= \frac{2\gamma_0^2}{\pi} \int_0^{qN} \sum_{i=1}^M \trace(\ksig  K u_i w_i\T)  e^{- \lamkbar Q(z,N)} \,f(z)^2 dz \\
&= \frac{2\gamma_0^2}{\pi} \int_0^{qN} \trace(\ksig  K  \sum_{i=1}^M e^{- \lamkbar Q(z,N)} u_i w_i\T) \,f(z)^2 dz \\
&= \frac{2\gamma_0^2}{\pi} \int_0^{qN} \trace(\ksig  K e^{-\kbar Q(z,N)}) \,f(z)^2 dz.
\end{align*}

Using $\arcsin x \approx x$ approximation on $\ksig  = \arcsin(\diag(K)^{-1/2} \cdot K \cdot \diag(K)^{-1/2})$, we get
\begin{align*}
\trace(\ksig  K e^{-\kbar Q(z,N)}) = \trace(\ksig  SH^{1/2} e^{-\kbar_1 Q(z,N) }H^{1/2} S\T) \\= \trace(H^{1/2} S\T \ksig  SH^{1/2} e^{-\kbar_1 Q(z,N) }) \approx \trace(\kbar_1^2 e^{-\kbar_1 Q(z,N) }) .
\end{align*}

Using same contour representation method and deterministic approximation with Section~\ref{deter} we get
\begin{align*}
T_{\le qN} &\eqsim \frac{2\gamma_0^2}{\pi} \int_0^{qN} \trace(\kbar_1^2 e^{-\kbar_1 Q(z,N) })  \, f(z)^2 dz
\\&= \frac{2\gamma_0^2}{\pi} \int_0^{qN} \trace\left( \frac{-1}{2\pi i}\oint_{\Gamma_2} z_1^2  e^{-Q(z, N) z_1} (\overline \mK_1-z_1\mI)^{-1}\,dz_1 \right)  \, f(z)^2 dz
\\&\approx \frac{2\gamma_0^2}{\pi} \int_0^{qN} \trace\left( \frac{-1}{2\pi i}\oint_{\Gamma} p_d ^2 z_1^2  e^{- p_d  Q(z, N) z_1} \mathcal{R}(z_1) \,dz_1 \right)  \, f(z)^2 dz
\\&\eqsim \frac{2\gamma_0^2}{\pi} M \int_0^{qN} \trace\left( \frac{-1}{2\pi i}\oint_{\Gamma}  z_1^2  e^{- p_d  Q(z, N) z_1} \mathcal{R}(z_1) \,dz_1 \right)  \, f(z)^2 dz
\end{align*}

Adopting the method in \citet{paquette4+} same as Section~\ref{app:similar-analysis}, we get
\begin{align*}
 \trace\left( \frac{-1}{2\pi i}\oint_{\Gamma}  z_1^2  e^{- p_d  Q(z, N) z_1} \mathcal{R}(z_1) \,dz_1 \right) 
 \eqsim (p_d  Q(z, N))^{-2+1/(2\alpha)} \eqsim (M^{1/2}  Q(z, N))^{-2+1/(2\alpha)}.
\end{align*}

It leads to 
\begin{align*}
T_{\le qN} &\eqsim \frac{2\gamma_0^2}{\pi} M \int_0^{qN}(M^{1/2}  Q(z, N))^{-2+1/(2\alpha)}  \, f(z)^2 dz
\\& \eqsim \gamma_0^2 M^{1/(4\alpha)} \int_0^{qN} (Q(z,N))^{-2+1/(2\alpha)} f(z)^2\, dz 
\end{align*}

Finally,
\begin{equation}\label{eq:wsd}
\begin{aligned}
&\gamma_0^2 M^{1/(4\alpha)} \int_0^{qN} (Q(z,N))^{-2+1/(2\alpha)} f(z)^2\, dz 
\\&\eqsim \gamma_0^{1/(2\alpha)} M^{1/(4\alpha)} \int_0^{pN} \frac{f(z)^2}{(\int_z^N \frac{f(u)}{\sqrt{L(u)}} du)^{2-1/(2\alpha)}}\, dz + \gamma_0^{1/(2\alpha)} M^{1/(4\alpha)} \int_{pN}^{qN} \frac{f(z)^2}{(\int_z^N \frac{f(u)}{\sqrt{L(u)}} du)^{2-1/(2\alpha)}}\, dz 
\\&\lesssim  \gamma_0^{1/(2\alpha)} M^{1/(4\alpha)} \int_0^{pN} \frac{1}{( \frac{pN-z}{\sqrt{L(0)}} + \frac{1}{\sqrt{L(pN)}} \int_{pN}^N f(u) du )^{2-1/(2\alpha)}}\, dz
\\  & \qquad + \gamma_0^{1/(2\alpha)} M^{1/(4\alpha)} \int_{pN}^{qN} \frac{f(z)^2}{( \frac{1}{\sqrt{L(pN)}} \int_{qN}^N f(u) du)^{2-1/(2\alpha)}}\, dz 
\\&\lesssim  \gamma_0^{1/(2\alpha)} M^{1/(4\alpha)}  \int_0^{pN} \frac{1}{( \frac{pN-z}{\sqrt{L(0)}} + \frac{N^{1-c}}{\sqrt{L(pN)}} )^{2-1/(2\alpha)}}\, dz
+ \gamma_0^{1/(2\alpha)} M^{1/(4\alpha)} \int_{pN}^{qN} \frac{f(z)^2}{( \frac{N^{1-c}}{\sqrt{L(pN)}} )^{2-1/(2\alpha)}}\, dz 
\\&\eqsim \gamma_0^{1/(2\alpha)} M^{1/(4\alpha)} \sqrt{L(0)} ( (\frac{N^{1-c}}{\sqrt{L(pN)}})^{1/(2\alpha)-1} - (pN + \frac{N^{1-c}}{\sqrt{L(pN)}})^{1/(2\alpha)-1})
\\  & \qquad + \gamma_0^{1/(2\alpha)} M^{1/(4\alpha)} N^{\max(1-2c,0)}(\frac{N^{1-c}}{\sqrt{L(pN)}})^{1/(2\alpha)-2}
\\&\lesssim \gamma_0^{1/(2\alpha)} M^{1/(4\alpha)} N^{-(1-c)(1-1/(2\alpha))} L(pN)^{(1/2-1/(4\alpha))}  \lesssim \gamma_0^{1/(2\alpha)} M^{1/(4\alpha)} N^{-(1-c)(1-1/(2\alpha))} 
\end{aligned}
\end{equation}

So we have
\[ T_{\le qN} \lesssim \gamma_0^{1/(2\alpha)} M^{1/(4\alpha)} N^{-(1-c)(1-1/(2\alpha))}. \]

\subsection{Combining Terms}

Combining the bounds,
\[
L(N)
\;\lesssim\;
M^{-(2\alpha+2\beta-1)}
\;+\;
\bigl(M^{0.5}\gamma_0 N\bigr)^{-\tfrac{2(2\alpha+2\beta-1)}{\,2\alpha+1-2\beta\,}}
\;+\;
\gamma_0M^{0.5}N^{-c}\sqrt{L(N)}\,
\;+\;
\gamma_0^{\tfrac{1}{2\alpha}}\,M^{\tfrac{1}{4\alpha}}\,
N^{-(1-c)\,(1-\tfrac{1}{2\alpha})}.
\]
We replaced the drift part with $\bigl(M^{0.5}\gamma_0 N\bigr)^{-\tfrac{2(2\alpha+2\beta-1)}{\,2\alpha+1-2\beta\,}}$ temporarily based on Remark~\ref{eipr}, and justify this on our selected parameters in Remark~\ref{hjust}.
Solving the inequality asymptotically yields
\[
L(N)
\;\lesssim\;
M^{-(2\alpha+2\beta-1)}
\;+\;
\bigl(M^{0.5}\gamma_0 N\bigr)^{-\tfrac{2(2\alpha+2\beta-1)}{\,2\alpha+1-2\beta\,}}
\;+\;
\gamma_0^2\,M\,N^{-2c}
\;+\;
\gamma_0^{\tfrac{1}{2\alpha}}\,M^{\tfrac{1}{4\alpha}}\,
N^{-(1-c)\,(1-\tfrac{1}{2\alpha})}.
\]

Finally, substituting \(\gamma_0=M^{-e}\) and \(N=\frf/M\) yields
\begin{align*}
R(M,\frf)
\;\lesssim\;
&\;M^{-(2\alpha+2\beta-1)}
\;+\;
\Bigl(M^{-e-0.5}\,\frf\Bigr)^{-\tfrac{2(2\alpha+2\beta-1)}{\,2\alpha+1-2\beta\,}}
\;+\;
M^{1+2c-2e}\,\frf^{-2c}
\\
&\;+\;
M^{\tfrac{1}{4\alpha}-\tfrac{e}{2\alpha}+(1-c)\,(1-\tfrac{1}{2\alpha})}\,
\frf^{-(1-c)\,(1-\tfrac{1}{2\alpha})}.
\end{align*}
Optimizing over \(M\) gives a bound of the form
\(
R(M^{\star},\frf )\ \le\ \frf^{-h(\alpha,\beta,c,e)}
\),
and we then optimize over \(c,e\) to maximize \(h(\alpha,\beta,c,e)\).

\subsection{Optimizing over \(c,e\) to Maximize \(h(\alpha,\beta,c,e)\)}

Assume throughout \(\alpha>0.5\), \(\beta<0.5\), and \(2\alpha+2\beta>1\).
Consider the upper bound
\[
R_U(M,\frf)
= M^{-(2\alpha+2\beta-1)}
+ \Bigl(M^{-e-0.5}\,\frf\Bigr)^{-\tfrac{2(2\alpha+2\beta-1)}{2\alpha+1-2\beta}}
+ M^{1+2c-2e}\, \frf^{-2c}
+ M^{\tfrac{1}{4\alpha}-\tfrac{e}{2\alpha}+(1-c)\!\left(1-\tfrac{1}{2\alpha}\right)}\,
  \frf^{-(1-c)\!\left(1-\tfrac{1}{2\alpha}\right)}.
\]
For large \(\frf\), define
\[
R_{\min}(\frf)\ :=\ \min_{M>0}\ R_U(M,\frf).
\]
We show \(R_{\min}(\frf)\ \eqsim\ \frf^{\,h^\star(\alpha,\beta)}\) with \(h^\star(\alpha,\beta)<0\), and identify
\(c^\star(\alpha,\beta)\), \(e^\star(\alpha,\beta)\), and \(M=\frf^{\,m^\star(\alpha,\beta)}\).

\textbf{Logarithmic reduction to exponent balancing}

Let \(M=\frf^{\,m}\) with \(m\in\R\). Writing each term as \(\frf^{\,L_i}\) gives
\begin{align}
L_1(m)&=-(2\alpha+2\beta-1)\,m, \label{eq:L1ab}\\
L_2(m,e)&=-\frac{2(2\alpha+2\beta-1)}{2\alpha+1-2\beta}
+ \frac{2(2\alpha+2\beta-1)}{2\alpha+1-2\beta}\,m\bigl(e+0.5\bigr), \label{eq:L2ab}\\
L_3(m,c,e)&=m(1+2c-2e)-2c,\label{eq:L3ab}\\
L_4(m,c,e)&=m\!\left(\frac{1}{4\alpha}-\frac{e}{2\alpha}+(1-c)\Bigl(1-\frac{1}{2\alpha}\Bigr)\right)
-(1-c)\Bigl(1-\frac{1}{2\alpha}\Bigr).
\label{eq:L4ab}
\end{align}
Thus minimizing \(R_U\) is equivalent to
\begin{equation}
\min_{m, e\in\R,\ 0<c<1}\ \max\{L_1,L_2,L_3,L_4\}.
\label{eq:minimaxab}
\end{equation}
Introduce \(h\in\R\) and rewrite as
\begin{equation}
\min_{m,c,e,h}\ h\quad \text{s.t.}\quad L_i(m,c,e)\le h\ (i=1,2,3,4),\ \ 0<c<1.
\label{eq:LPviewab}
\end{equation}
At an interior optimum (\(0<c<1\)), constraints equalize:
\begin{equation}
L_1=L_2=L_3=L_4=h.
\label{eq:equalizeab}
\end{equation}

Solving the equality yields
\begin{align}
c^\star&=\frac{-8\alpha\beta + 2\alpha + 2\beta - 1}{\,16\alpha^2 + 8\alpha\beta - 6\alpha - 2\beta + 1\,},\label{eq:cstar_int_ab}\\
e^\star&=\frac{8\alpha^2 + 16\alpha\beta - 4\alpha - 4\beta + 1}{\,2(4\alpha - 1)\,},\label{eq:estar_int_ab}\\
m^\star&=\frac{2(4\alpha - 1)}{\,16\alpha^2 + 8\alpha\beta + 2\alpha - 2\beta - 1\,},\label{eq:mstar_int_ab}\\
h^\star&=-\,\frac{2(4\alpha - 1)\,(2\alpha + 2\beta - 1)}{\,16\alpha^2 + 8\alpha\beta + 2\alpha - 2\beta - 1\,}.
\label{eq:hstar_int_ab}
\end{align}
\emph{Feasibility.} Since \(\alpha>0.5\), denominators are positive. The condition \(c^\star>0\) is equivalent to
\[
-8\alpha\beta + 2\alpha + 2\beta - 1\ >\ 0
\quad\Longleftrightarrow\quad
\beta\ <\ \frac{2\alpha-1}{\,2(4\alpha-1)\,}\ :=\ B^\star(\alpha),
\]
which is stricter than \(\beta<0.5\). Moreover, \(c^\star<1\) holds automatically for \(\beta>0\).
Hence, the interior solution is feasible whenever
\begin{equation}
\boxed{\quad 0.5-\alpha\ <\ \beta\ <\ B^\star(\alpha)\quad}
\qquad\text{with}\quad
 B^\star(\alpha)=\frac{2\alpha-1}{2(4\alpha-1)}.
\label{eq:interior_band_ab}
\end{equation}
In this band,
\[
\boxed{\ M=\frf^{\,m^\star},\qquad R_{\min}(\frf)\ \eqsim\ \frf^{\,h^\star}\ }
\]
with \(m^\star,h^\star\) as in \eqref{eq:mstar_int_ab}–\eqref{eq:hstar_int_ab}. Note \(m^\star>0\) and \(h^\star<0\).

\textbf{Result}
As \(\frf\to\infty\), the choice \(M=\frf^{\,m^\star}\) with
\[
\begin{aligned}
m^\star &= \frac{2(4\alpha-1)}{16\alpha^2+8\alpha\beta+2\alpha-2\beta-1},\\
c^\star &= \frac{-8\alpha\beta+2\alpha+2\beta-1}{16\alpha^2+8\alpha\beta-6\alpha-2\beta+1},\\
e^\star &= \frac{8\alpha^2+16\alpha\beta-4\alpha-4\beta+1}{2(4\alpha-1)},\\
h^\star &= -\frac{2(4\alpha-1)(2\alpha+2\beta-1)}{16\alpha^2+8\alpha\beta+2\alpha-2\beta-1}
\end{aligned}
\]
is optimal for \(\alpha>0.5\), \(0.5-\alpha<\beta<B^\star(\alpha)\), where
\[
B^\star(\alpha)=\frac{2\alpha-1}{2(4\alpha-1)}.
\]
This choice minimizes \(\max\{L_1,L_2,L_3,L_4\}\) in \eqref{eq:minimaxab}. Consequently,
\[
R_{\min}(\frf)\ \eqsim\ \frf^{\,h^\star(\alpha,\beta)}\qquad\text{with }h^\star(\alpha,\beta)<0.
\]

\begin{remark}[Justification on drift term conversion]\label{hjust}
Note that $M = \frf^{M^{\star}}$ and $N = \frf^{1- M^{\star}}$ holds for the selected parameters.

For $pN$ iterations the stable-decay scheduling behaves same as the constant learning rate.
Let $N_0$ be the crossover point in constant learning rate.
Note that $N \gtrsim N_0$ holds, and $N$ is asymptotically strictly bigger than $N_0$.
So $L(u) \lesssim \gamma_0^2 M + M^{-2\alpha-2\beta+1}$ holds for $u \ge N_0$.

Also for selected $\gamma_0 = M^{-e^*}$, $\gamma_0^2 M \gtrsim M^{-2\alpha-2\beta+1}$ holds.

So we have $L(u) \lesssim \gamma_0^2 M$ for $u \ge N_0$.

\begin{align*}
\bigl(M^{0.5} Q_L(N)\bigr)^{-\tfrac{2\alpha+2\beta-1}{2\alpha}}
&=\left(M^{0.5} \frac{4\gamma_0}{\pi}\int_0^N \frac{f(u)}{\sqrt{L(u)}}\,du\right)^{-\tfrac{2\alpha+2\beta-1}{2\alpha}}
\\&\lesssim  \left(M^{0.5} \frac{4\gamma_0}{\pi}\int_{N_0}^{pN} \frac{f(u)}{\sqrt{L(u)}}\,du\right)^{-\tfrac{2\alpha+2\beta-1}{2\alpha}}
\\&\eqsim  \left(\gamma_0 M^{0.5} \int_{N_0}^{pN} \frac{1}{\sqrt{L(u)}}\,du\right)^{-\tfrac{2\alpha+2\beta-1}{2\alpha}}
\\&\eqsim  \left(\gamma_0 M^{0.5} \int_{N_0}^{pN} \frac{1}{\sqrt{\gamma_0^2 M}}\,du\right)^{-\tfrac{2\alpha+2\beta-1}{2\alpha}}
\\&\eqsim  \left( pN-N_0 \right)^{-\tfrac{2\alpha+2\beta-1}{2\alpha}} 
\eqsim  N^{-\tfrac{2\alpha+2\beta-1}{2\alpha}} 
\end{align*}

For selected parameters $M = \frf^{M^{\star}}$, $N = \frf^{1- M^{\star}}$, $c^*$, and $\gamma_0 = M^{-e^*}$ following holds.
\[ N^{-\tfrac{2\alpha+2\beta-1}{2\alpha}} \lesssim \gamma_0^2 M N^{-2c^*}. \]

As $(M^{0.5} Q_L(N)\bigr)^{-\tfrac{2\alpha+2\beta-1}{2\alpha}} \lesssim \gamma_0^2 M N^{-2c^*}$, replacing the drift term with a proxy does not alter the argument.

\end{remark}

\subsection{Analysis for Warmup-stable-decay}
\label{acwsd}

The analysis for warmup-stable-decay is almost identical to that for stable-decay.
The only difference occurs in the step leading to \eqref{eq:wsd}, but the final bound is the same. 
Thus, the loss bound for warmup-stable-decay matches that for stable-decay. 
We provide the corresponding analysis to the procedure of \eqref{eq:wsd} at the end of this subsection.

Finally, the bound 
\begin{equation}
R_f(M^\star, \frf / M^\star, (M^\star)^{-e^*}) \;\lesssim\; \frf^{-\tfrac{2(4\alpha-1)(2\alpha+2\beta-1)}{16\alpha^2 + 8\alpha\beta + 2\alpha - 2\beta - 1}}.
\end{equation}
introduced in \eqref{sdexpo} also holds for warmup-stable-decay.

For the warmup-stable-decay schedule, we set the learning rate to $\gamma_k = \gamma_0 f(k)$ with
\begin{equation}
f(k) =
\begin{cases}
k/(wN), & k \leq wN, \\
1, &  wN \leq  k \leq pN, \\
\bigl(1 + \tau (k - pN)\bigr)^{-c}, & k > pN ,
\end{cases}
\label{sdfunc}
\end{equation}
where $w, p, c \in (0,1)$ and $\tau > 0$.
$w$ is the ratio for the warmup stage, and we assume that $w$ is smaller than $p/2$.

Following is the corresponding analysis to the procedure of \eqref{eq:wsd}.
{
\allowdisplaybreaks
\begin{align*}
&\gamma_0^2 M^{1/(4\alpha)} \int_0^{qN} (Q(z,N))^{-2+1/(2\alpha)} f(z)^2\, dz 
\\&\eqsim \gamma_0^{1/(2\alpha)} M^{1/(4\alpha)} \int_0^{pN} \frac{f(z)^2}{(\int_z^N \frac{f(u)}{\sqrt{L(u)}} du)^{2-1/(2\alpha)}}\, dz + \gamma_0^{1/(2\alpha)} M^{1/(4\alpha)} \int_{pN}^{qN} \frac{f(z)^2}{(\int_z^N \frac{f(u)}{\sqrt{L(u)}} du)^{2-1/(2\alpha)}}\, dz 
\\&\lesssim \gamma_0^{1/(2\alpha)} M^{1/(4\alpha)} \int_0^{wN} \frac{1}{( \frac{pN/2}{\sqrt{L(0)}} + \frac{1}{\sqrt{L(pN)}} \int_{pN}^N f(u) du )^{2-1/(2\alpha)}}\, dz
\\ & \qquad + \gamma_0^{1/(2\alpha)} M^{1/(4\alpha)} \int_{wN}^{pN} \frac{1}{( \frac{pN-z}{\sqrt{L(0)}} + \frac{1}{\sqrt{L(pN)}} \int_{pN}^N f(u) du )^{2-1/(2\alpha)}}\, dz
\\  & \qquad + \gamma_0^{1/(2\alpha)} M^{1/(4\alpha)} \int_{pN}^{qN} \frac{f(z)^2}{( \frac{1}{\sqrt{L(pN)}} \int_{qN}^N f(u) du)^{2-1/(2\alpha)}}\, dz 
\\&\lesssim  
\gamma_0^{1/(2\alpha)} M^{1/(4\alpha)}  \int_0^{wN} \frac{1}{( \frac{pN/2}{\sqrt{L(0)}} + \frac{N^{1-c}}{\sqrt{L(pN)}} )^{2-1/(2\alpha)}}\, dz
+ \gamma_0^{1/(2\alpha)} M^{1/(4\alpha)}  \int_{wN}^{pN} \frac{1}{( \frac{pN-z}{\sqrt{L(0)}} + \frac{N^{1-c}}{\sqrt{L(pN)}} )^{2-1/(2\alpha)}}\, dz
\\ & \qquad + \gamma_0^{1/(2\alpha)} M^{1/(4\alpha)} \int_{pN}^{qN} \frac{f(z)^2}{( \frac{N^{1-c}}{\sqrt{L(pN)}} )^{2-1/(2\alpha)}}\, dz 
\\&\eqsim
\gamma_0^{1/(2\alpha)} M^{1/(4\alpha)} wN (\frac{pN/2}{\sqrt{L(0)}})^{-2+1/(2\alpha)} 
\\  & \qquad + 
\gamma_0^{1/(2\alpha)} M^{1/(4\alpha)} \sqrt{L(0)} ( (\frac{N^{1-c}}{\sqrt{L(pN)}})^{1/(2\alpha)-1} - (pN - wN + \frac{N^{1-c}}{\sqrt{L(pN)}})^{1/(2\alpha)-1})
\\  & \qquad + \gamma_0^{1/(2\alpha)} M^{1/(4\alpha)} N^{\max(1-2c,0)}(\frac{N^{1-c}}{\sqrt{L(pN)}})^{1/(2\alpha)-2}
\\&\lesssim 
\gamma_0^{1/(2\alpha)} M^{1/(4\alpha)} N^{-(1-1/(2\alpha))} +
\gamma_0^{1/(2\alpha)} M^{1/(4\alpha)} N^{-(1-c)(1-1/(2\alpha))} L(pN)^{(1/2-1/(4\alpha))}
\\  & \lesssim \gamma_0^{1/(2\alpha)} M^{1/(4\alpha)} N^{-(1-c)(1-1/(2\alpha))} 
\end{align*}
}

\subsection{Scheduling on SGD}
\label{scSGD}
In this subsection, we explain that the scheduling does not lift the compute-optimal exponent of SGD in the Phase~\uppercase\expandafter{\romannumeral 1} and Phase~\uppercase\expandafter{\romannumeral 2}.
Assume a bounded scheduling function $f$, and define $F(k) = \int_0^k f(z)\,dz$.

\citet{ferbach2025dimension} proved 
\begin{align*}
R_f(M,N, \gamma_0)
\gtrsim M^{-2\alpha+\max( 0 , 1 - 2\beta)} + (\gamma_0 F(N))^{-(2\alpha+2\beta-1)/(2\alpha)} + M^{-1}(\gamma_0 F(N))^{-1+1/(2\alpha)} 
\end{align*}
for the risk $R_f(M,N, \gamma_0)$ with general bounded scheduling function $f$.

Since $f$ is bounded, we have $F(N) \lesssim N$.  
Therefore,
\begin{align*}
R_f(M,N, \gamma_0)
&\gtrsim M^{-2\alpha+\max( 0 , 1 - 2\beta)} + (\gamma_0 F(N))^{-(2\alpha+2\beta-1)/(2\alpha)} + M^{-1}(\gamma_0 F(N))^{-1+1/(2\alpha)} \\
&\gtrsim M^{-2\alpha+\max( 0 , 1 - 2\beta)} + (\gamma_0 N)^{-(2\alpha+2\beta-1)/(2\alpha)} + M^{-1}(\gamma_0 N)^{-1+1/(2\alpha)} \\
&\gtrsim R_1(M,N, \gamma_0),
\end{align*}
where $R_1(M,N, \gamma_0)$ is the loss under a constant schedule $f \equiv1$.

Thus, scheduling does not improve the compute-optimal exponent of SGD in Phase~\uppercase\expandafter{\romannumeral 1} and Phase~\uppercase\expandafter{\romannumeral 2}.

\newpage

\section{Analysis for Linear Decaying Scheduling and Cosine Scheduling}
\label{lincos}

\subsection{Analysis for Linear Decaying Scheduling}

In this section, we analyze the following linear decaying scheduling.
\begin{equation}
    f(t) = 1-  \left(1- \frac{1}{\sqrt{N}} \right) \frac{t}{N}
\end{equation}
It decays from 1 to $\frac{1}{\sqrt{N}}$ linearly.

We will focus on Phase~Aa, and follow a similar procedure to stable-decay scheduling.

Note that we have to handle the following equation, where $Q_L(z):=\frac{4\gamma_0}{\pi}\int_0^z \frac{f(u)}{\sqrt{L(u)}}\,du$.

\begin{align*}
L(N)
\;\eqsim\; &
M^{-(2\alpha+2\beta-1)}
\;+\;
\bigl(M^{0.5} Q_L(N)\bigr)^{-\tfrac{2\alpha+2\beta-1}{2\alpha}}
\; \\&+\;
\frac{2\gamma_0^2}{\pi}\sum_{i=1}^M V_i
\int_0^N
\exp\!\Bigl(
-\tfrac{4\lambda_i \gamma_0}{\pi}\int_z^N \frac{f(u)}{\sqrt{L(u)}}\,du
\Bigr) f(z)^2\,dz .
\end{align*}

In early iterations the drift term \(\bigl(M^{0.5} Q_L(N)\bigr)^{-\tfrac{2\alpha+2\beta-1}{2\alpha}}\) dominates. Solving
\(
L(N) \eqsim \bigl(M^{0.5} Q_L(N)\bigr)^{-\tfrac{2\alpha+2\beta-1}{2\alpha}}
\)
yields
\[
L(N)
\;\eqsim\;
\bigl(M^{0.5}\gamma_0 F(N)\bigr)^{-\tfrac{2(2\alpha+2\beta-1)}{\,2\alpha+1-2\beta\,}},
\qquad
F(N):=\int_0^N f(u)\,du .
\]

For linear decaying scheduling $F(N) \eqsim N$ holds, so the drift term becomes $\bigl(M^{0.5}\gamma_0 N\bigr)^{-\tfrac{2(2\alpha+2\beta-1)}{\,2\alpha+1-2\beta\,}}$.

Now we move to the noise term.
We split the noise term \(\Rtailn\) as
\begin{align*}
\Rtailn
&=\frac{2\gamma_0^2}{\pi}\sum_{i=1}^M V_i
\int_0^{N -\sqrt{N} }
\exp\!\Bigl(
-\tfrac{4\lambda_i \gamma_0}{\pi}\int_z^N \frac{f(u)}{\sqrt{L(u)}}\,du
\Bigr) f(z)^2\,dz
\\[-1mm]
&\quad+
\frac{2\gamma_0^2}{\pi}\sum_{i=1}^M V_i
\int_{N -\sqrt{N}}^{N}
\exp\!\Bigl(
-\tfrac{4\lambda_i \gamma_0}{\pi}\int_z^N \frac{f(u)}{\sqrt{L(u)}}\,du
\Bigr) f(z)^2\,dz
=:T_{\le (N -\sqrt{N})}+T_{> (N -\sqrt{N}) }.
\end{align*}

\paragraph{Bounding \(T_{>(N -\sqrt{N})}\).}
Note that \(f(N)\eqsim f(z)\) holds for $(N -\sqrt{N}) < z < N$. So
\[
\int_{(N -\sqrt{N})}^{N}
\exp\!\Bigl(
-\tfrac{4\lambda_i \gamma_0}{\pi}\int_z^N \frac{f(u)}{\sqrt{L(u)}}\,du
\Bigr) f(z)^2\,dz
\;\eqsim\;
f(N)^2\int_{(N -\sqrt{N})}^{N}
\exp\!\Bigl(
-\tfrac{4\lambda_i \gamma_0}{\pi}\int_z^N \frac{f(u)}{\sqrt{L(u)}}\,du
\Bigr) dz .
\]
There exist constants \(c_0,c_1>0\) such that for $(N -\sqrt{N}) < z < N$
\[
c_0\,\frac{(N-z)f(N)}{\sqrt{L(N)}} \;\le\;
\int_z^N \frac{f(u)}{\sqrt{L(u)}}\,du
\;\le\;
c_1\,\frac{(N-z)f(N)}{\sqrt{L(N)}} .
\]
Therefore,
\begin{align*}
T_{>qN}
&\le
\frac{2\gamma_0^2}{\pi}\,f(N)^2\sum_{i=1}^M V_i
\int_{(N -\sqrt{N})}^{N}
\exp\!\Bigl(
-\tfrac{4\lambda_i \gamma_0}{\pi}\,c_0\,\frac{(N-z)f(N)}{\sqrt{L(N)}}
\Bigr) dz
\\
&\eqsim
\frac{2\gamma_0^2}{\pi}\,f(N)^2\sum_{i=1}^M V_i\,
\frac{\pi\,\sqrt{L(N)}}{4\lambda_i \gamma_0\,c_0\,f(N)}
\;\eqsim\;
\gamma_0\,f(N)\,\sqrt{L(N)}\sum_{i=1}^M \frac{V_i}{\lambda_i}
\\
&\eqsim
\gamma_0\,f(N)\,\sqrt{L(N)}\,\trace\!\bigl(\diag(K)^{1/2}\bigr)
\;\eqsim\; \gamma_0\,f(N)\,\sqrt{L(N)}\,M^{0.5}.
\end{align*}
To summarize, we have
\[
T_{>(N -\sqrt{N})}\ \lesssim\ \gamma_0\,f(N)\,\sqrt{L(N)}\,M^{0.5} \eqsim\ \gamma_0\,M^{1/2} N^{-1/2}\sqrt{L(N)}.
\]

\paragraph{Bounding \(T_{\le (N -\sqrt{N})}\).}

Let $Q(z,N) =\frac{4\gamma_0}{\pi} \int_z^N \frac{f(u)}{\sqrt{L(u)}} \,du$.

By the same procedure as the stable-decaying case, we can get
\begin{align*}
T_{\le (N -\sqrt{N})} &\eqsim \frac{2\gamma_0^2}{\pi} M \int_0^{ (N -\sqrt{N}) }(M^{1/2}  Q(z, N))^{-2+1/(2\alpha)}  \, f(z)^2 dz
\\& \eqsim \gamma_0^2 M^{1/(4\alpha)} \int_0^{ (N -\sqrt{N}) } (Q(z,N))^{-2+1/(2\alpha)} f(z)^2\, dz.
\end{align*}

We have
\begin{equation}
\begin{aligned}
&\gamma_0^2 M^{1/(4\alpha)} \int_0^{(N -\sqrt{N})} (Q(z,N))^{-2+1/(2\alpha)} f(z)^2\, dz 
\\&\eqsim \gamma_0^{1/(2\alpha)} M^{1/(4\alpha)} \int_0^{(N -\sqrt{N})} \frac{f(z)^2}{(\int_z^N \frac{f(u)}{\sqrt{L(u)}} du)^{2-1/(2\alpha)}}\, dz 
\\&\lesssim \gamma_0^{1/(2\alpha)} M^{1/(4\alpha)} \int_0^{(N -\sqrt{N})} \frac{f(z)^2}{(\int_z^N \frac{f(u)}{\sqrt{L(0)}} du)^{2-1/(2\alpha)}}\, dz 
\\&\eqsim \gamma_0^{1/(2\alpha)} M^{1/(4\alpha)} \int_0^{(N -\sqrt{N})} \frac{f(z)^2}{(\int_z^N f(u) du)^{2-1/(2\alpha)}}\, dz 
\end{aligned}
\end{equation}

Let the integral term be $\mathcal{I}$.
First, we use the change of variables $z = N - u$, which transforms the integration interval $[0, N-\sqrt{N}]$ into $[\sqrt{N}, N]$. In the regime of large $N$, the linear schedule $f(N-u)$ can be approximated as
\begin{equation}
    f(N-u) = 1 - \left(1 - \frac{1}{\sqrt{N}}\right)\frac{N-u}{N} \approx \frac{1}{\sqrt{N}} \left( 1 + \frac{u}{\sqrt{N}} \right).
\end{equation}
Using this approximation, we evaluate the inner integral in the denominator:
\begin{equation}
    \int_{N-u}^N f(s) \, ds \approx \int_{0}^{u} \frac{1}{\sqrt{N}} \left( 1 + \frac{v}{\sqrt{N}} \right) dv = \frac{u}{\sqrt{N}}\left( 1 + \frac{u}{2\sqrt{N}} \right).
\end{equation}
Substituting these terms back into $\mathcal{I}$, we obtain
\begin{equation}
    \mathcal{I} \approx \int_{\sqrt{N}}^{N} \frac{\left[ \frac{1}{\sqrt{N}} \left( 1 + \frac{u}{\sqrt{N}} \right) \right]^2}{\left[ \frac{u}{\sqrt{N}}\left( 1 + \frac{u}{2\sqrt{N}} \right) \right]^{2 - \frac{1}{2\alpha}}} \, du.
\end{equation}
To decouple the dependency on $N$, we apply the scaling $u = \sqrt{N}y$, which implies $du = \sqrt{N}dy$. The integration limits change from $[\sqrt{N}, N]$ to $[1, \sqrt{N}]$. The integral is then reformulated as
\begin{equation}
\begin{aligned}
    \mathcal{I} &\approx \int_{1}^{\sqrt{N}} \frac{\frac{1}{N} (1+y)^2}{\left( y (1+y/2) \right)^{2 - \frac{1}{2\alpha}}} \sqrt{N} \, dy \\
    &= \frac{1}{\sqrt{N}} \int_{1}^{\sqrt{N}} \frac{(1+y)^2}{y^{2 - \frac{1}{2\alpha}} (1+y/2)^{2 - \frac{1}{2\alpha}}} \, dy.
\end{aligned}
\end{equation}
The asymptotic behavior is determined by the convergence of the remaining integral. As $y \to \infty$, the integrand behaves as
\begin{equation}
    \frac{y^2}{y^{2 - \frac{1}{2\alpha}} (y/2)^{2 - \frac{1}{2\alpha}}} \propto y^{2 - 2(2 - \frac{1}{2\alpha})} = y^{\frac{1}{\alpha}-2}.
\end{equation}
Integrating this term from $1$ to $\sqrt{N}$ leads to the following cases depending on the exponent $\frac{1}{\alpha}-2$:
\begin{equation}
    \mathcal{I} \sim \frac{1}{\sqrt{N}} \times
    \begin{cases} 
    (\sqrt{N})^{\frac{1}{\alpha}-1} = N^{\frac{1}{2\alpha} - \frac{1}{2}} & \text{if } \frac{1}{\alpha} - 2 > -1 \implies \alpha < 1, \\
    \ln(\sqrt{N}) \sim \ln N & \text{if } \frac{1}{\alpha} - 2 = -1 \implies \alpha = 1, \\
    \text{const} & \text{if } \frac{1}{\alpha} - 2 < -1 \implies \alpha > 1.
    \end{cases}
\end{equation}
Simplifying the final exponents, we get the asymptotic order:
\begin{equation}
    \mathcal{I} \sim 
    \begin{cases} 
    \mathcal{O}\left( N^{\frac{1}{2\alpha} - 1} \right) & \text{if } 0.5 < \alpha < 1, \\
    \mathcal{O}\left( N^{-\frac{1}{2}} \ln N \right) & \text{if } \alpha = 1, \\
    \mathcal{O}\left( N^{-\frac{1}{2}} \right) & \text{if } \alpha > 1.
    \end{cases}
\end{equation}

For $0.5 < \alpha < 1$, we have
\[ T_{\le  (N -\sqrt{N})} \lesssim \gamma_0^{1/(2\alpha)} M^{1/(4\alpha)} N^{-(1-1/(2\alpha))}. \]

For $0.5 < \alpha < 1$, combining the bounds for the drift term and noise term, we have
\[
L(N)
\;\lesssim\;
M^{-(2\alpha+2\beta-1)}
\;+\;
\bigl(M^{0.5}\gamma_0 N\bigr)^{-\tfrac{2(2\alpha+2\beta-1)}{\,2\alpha+1-2\beta\,}}
\;+\;
\gamma_0M^{0.5}N^{- 1/2 }\sqrt{L(N)}\,
\;+\;
\gamma_0^{\tfrac{1}{2\alpha}}\,M^{\tfrac{1}{4\alpha}}\,
N^{- (1-\tfrac{1}{2\alpha})}.
\]

In intersection of Area~$\aaa$ and $0.5 < \alpha < 1$, with choice of $e^*$ in $\gamma_0 = M^{-e^*}$ and $c^*$ we used for stable-decaying scheduling, we have
\[
L(N)
\;\lesssim\;
M^{-(2\alpha+2\beta-1)}
\;+\;
\bigl(M^{0.5}\gamma_0 N\bigr)^{-\tfrac{2(2\alpha+2\beta-1)}{\,2\alpha+1-2\beta\,}}
\;+\;
\gamma_0M^{0.5}N^{-c^*}\sqrt{L(N)}\,
\;+\;
\gamma_0^{\tfrac{1}{2\alpha}}\,M^{\tfrac{1}{4\alpha}}\,
N^{-(1-c^*)\,(1-\tfrac{1}{2\alpha})}.
\]

So in intersection of Area~$\aaa$ and $0.5 < \alpha < 1$, we have
\begin{equation}
R_f(M^\star, \frf / M^\star, (M^\star)^{-e^*}) \;\lesssim\; \frf^{-\tfrac{2(4\alpha-1)(2\alpha+2\beta-1)}{16\alpha^2 + 8\alpha\beta + 2\alpha - 2\beta - 1}}.
\end{equation}

Therefore, linear decaying scheduling has an advantage compared to constant learning rate in the intersection of Area~$\aaa$ and $0.5 < \alpha < 1$.

\subsection{Analysis for Cosine Scheduling}

In this section, we analyze the following cosine scheduling.

\begin{equation}
    f(t) = \frac{1+1/N}{2} + \frac{1-1/N}{2} \cos \left( \frac{\pi}{N} t \right)
\end{equation}
It decays from 1 to $\frac{1}{N}$.

We will focus on Phase~Aa, and follow a similar procedure to stable-decay scheduling.

Note that we have to handle the following equation, where $Q_L(z):=\frac{4\gamma_0}{\pi}\int_0^z \frac{f(u)}{\sqrt{L(u)}}\,du$.

\begin{align*}
L(N)
\;\eqsim\; &
M^{-(2\alpha+2\beta-1)}
\;+\;
\bigl(M^{0.5} Q_L(N)\bigr)^{-\tfrac{2\alpha+2\beta-1}{2\alpha}}
\; \\&+\;
\frac{2\gamma_0^2}{\pi}\sum_{i=1}^M V_i
\int_0^N
\exp\!\Bigl(
-\tfrac{4\lambda_i \gamma_0}{\pi}\int_z^N \frac{f(u)}{\sqrt{L(u)}}\,du
\Bigr) f(z)^2\,dz .
\end{align*}

In early iterations the drift term \(\bigl(M^{0.5} Q_L(N)\bigr)^{-\tfrac{2\alpha+2\beta-1}{2\alpha}}\) dominates. Solving
\(
L(N) \eqsim \bigl(M^{0.5} Q_L(N)\bigr)^{-\tfrac{2\alpha+2\beta-1}{2\alpha}}
\)
yields
\[
L(N)
\;\eqsim\;
\bigl(M^{0.5}\gamma_0 F(N)\bigr)^{-\tfrac{2(2\alpha+2\beta-1)}{\,2\alpha+1-2\beta\,}},
\qquad
F(N):=\int_0^N f(u)\,du .
\]

For cosine scheduling $F(N) \eqsim N$ holds, so the drift term becomes $\bigl(M^{0.5}\gamma_0 N\bigr)^{-\tfrac{2(2\alpha+2\beta-1)}{\,2\alpha+1-2\beta\,}}$.

Now we move to the noise term.
We split the noise term \(\Rtailn\) as
\begin{align*}
\Rtailn
&=\frac{2\gamma_0^2}{\pi}\sum_{i=1}^M V_i
\int_0^{N -\sqrt{N} }
\exp\!\Bigl(
-\tfrac{4\lambda_i \gamma_0}{\pi}\int_z^N \frac{f(u)}{\sqrt{L(u)}}\,du
\Bigr) f(z)^2\,dz
\\[-1mm]
&\quad+
\frac{2\gamma_0^2}{\pi}\sum_{i=1}^M V_i
\int_{N -\sqrt{N}}^{N}
\exp\!\Bigl(
-\tfrac{4\lambda_i \gamma_0}{\pi}\int_z^N \frac{f(u)}{\sqrt{L(u)}}\,du
\Bigr) f(z)^2\,dz
=:T_{\le (N -\sqrt{N})}+T_{> (N -\sqrt{N}) }.
\end{align*}

\paragraph{Bounding \(T_{>(N -\sqrt{N})}\).}
Note that \(f(N)\eqsim f(z)\) holds for $(N -\sqrt{N}) < z < N$. So
\[
\int_{(N -\sqrt{N})}^{N}
\exp\!\Bigl(
-\tfrac{4\lambda_i \gamma_0}{\pi}\int_z^N \frac{f(u)}{\sqrt{L(u)}}\,du
\Bigr) f(z)^2\,dz
\;\eqsim\;
f(N)^2\int_{(N -\sqrt{N})}^{N}
\exp\!\Bigl(
-\tfrac{4\lambda_i \gamma_0}{\pi}\int_z^N \frac{f(u)}{\sqrt{L(u)}}\,du
\Bigr) dz .
\]
There exist constants \(c_0,c_1>0\) such that for $(N -\sqrt{N}) < z < N$
\[
c_0\,\frac{(N-z)f(N)}{\sqrt{L(N)}} \;\le\;
\int_z^N \frac{f(u)}{\sqrt{L(u)}}\,du
\;\le\;
c_1\,\frac{(N-z)f(N)}{\sqrt{L(N)}} .
\]
Therefore,
\begin{align*}
T_{>qN}
&\le
\frac{2\gamma_0^2}{\pi}\,f(N)^2\sum_{i=1}^M V_i
\int_{(N -\sqrt{N})}^{N}
\exp\!\Bigl(
-\tfrac{4\lambda_i \gamma_0}{\pi}\,c_0\,\frac{(N-z)f(N)}{\sqrt{L(N)}}
\Bigr) dz
\\
&\eqsim
\frac{2\gamma_0^2}{\pi}\,f(N)^2\sum_{i=1}^M V_i\,
\frac{\pi\,\sqrt{L(N)}}{4\lambda_i \gamma_0\,c_0\,f(N)}
\;\eqsim\;
\gamma_0\,f(N)\,\sqrt{L(N)}\sum_{i=1}^M \frac{V_i}{\lambda_i}
\\
&\eqsim
\gamma_0\,f(N)\,\sqrt{L(N)}\,\trace\!\bigl(\diag(K)^{1/2}\bigr)
\;\eqsim\; \gamma_0\,f(N)\,\sqrt{L(N)}\,M^{0.5}.
\end{align*}
To summarize, we have
\[
T_{>(N -\sqrt{N})}\ \lesssim\ \gamma_0\,f(N)\,\sqrt{L(N)}\,M^{0.5} \eqsim\ \gamma_0\,M^{1/2} N^{-1}\sqrt{L(N)}.
\]

\paragraph{Bounding \(T_{\le (N -\sqrt{N})}\).}

Let $Q(z,N) =\frac{4\gamma_0}{\pi} \int_z^N \frac{f(u)}{\sqrt{L(u)}} \,du$.

By the same procedure as the stable-decaying case, we can get
\begin{align*}
T_{\le (N -\sqrt{N})} &\eqsim \frac{2\gamma_0^2}{\pi} M \int_0^{ (N -\sqrt{N}) }(M^{1/2}  Q(z, N))^{-2+1/(2\alpha)}  \, f(z)^2 dz
\\& \eqsim \gamma_0^2 M^{1/(4\alpha)} \int_0^{ (N -\sqrt{N}) } (Q(z,N))^{-2+1/(2\alpha)} f(z)^2\, dz.
\end{align*}

We have
\begin{equation}
\begin{aligned}
&\gamma_0^2 M^{1/(4\alpha)} \int_0^{(N -\sqrt{N})} (Q(z,N))^{-2+1/(2\alpha)} f(z)^2\, dz 
\\&\eqsim \gamma_0^{1/(2\alpha)} M^{1/(4\alpha)} \int_0^{(N -\sqrt{N})} \frac{f(z)^2}{(\int_z^N \frac{f(u)}{\sqrt{L(u)}} du)^{2-1/(2\alpha)}}\, dz 
\\&\lesssim \gamma_0^{1/(2\alpha)} M^{1/(4\alpha)} \int_0^{(N -\sqrt{N})} \frac{f(z)^2}{(\int_z^N \frac{f(u)}{\sqrt{L(0)}} du)^{2-1/(2\alpha)}}\, dz 
\\&\eqsim \gamma_0^{1/(2\alpha)} M^{1/(4\alpha)} \int_0^{(N -\sqrt{N})} \frac{f(z)^2}{(\int_z^N f(u) du)^{2-1/(2\alpha)}}\, dz 
\end{aligned}
\end{equation}

Let the integral term be $\mathcal{I}$.
First, we use the change of variables $z = N - u$, which transforms the integration interval $[0, N-\sqrt{N}]$ into $[\sqrt{N}, N]$.
The integral can be written as:
\begin{equation}
    \mathcal{I} = \int_{\sqrt{N}}^{N} \frac{f(N-x)^2}{\left( \int_{0}^{x} f(N-v) \, dv \right)^{2 - \frac{1}{2\alpha}}} \, dx.
\end{equation}
We evaluate the asymptotic magnitude of $\mathcal{I}$ by analyzing the dominant contributions from the lower limit ($x \approx \sqrt{N}$) and the upper limit ($x \approx N$).

Contribution near the lower limit ($x \approx \sqrt{N}$):
In the region where $x$ is small, the learning rate approaches its minimum, $f(N-x) \approx \frac{1}{N}$. Consequently, the cumulative sum scales linearly with the inverse of $N$, i.e., $\int_{0}^{x} f(N-v) \, dv \approx \frac{x}{N}$. Substituting these approximations, the integrand becomes:
\begin{equation}
    \frac{(1/N)^2}{(x/N)^{2 - \frac{1}{2\alpha}}} = N^{-2} \cdot N^{2 - \frac{1}{2\alpha}} \cdot x^{-2 + \frac{1}{2\alpha}} = N^{-\frac{1}{2\alpha}} x^{-2 + \frac{1}{2\alpha}}.
\end{equation}
Integrating this term with respect to $x$ near the lower limit $\sqrt{N}$:
\begin{equation}
    N^{-\frac{1}{2\alpha}} \left[ x^{-1 + \frac{1}{2\alpha}} \right]_{x=\sqrt{N}} \sim N^{-\frac{1}{2\alpha}} (\sqrt{N})^{-1 + \frac{1}{2\alpha}} = N^{-\frac{1}{2\alpha}} N^{-\frac{1}{2} + \frac{1}{4\alpha}}.
\end{equation}
Simplifying the exponents yields the scaling $N^{-\frac{1}{2} - \frac{1}{4\alpha}}$.

Contribution near the upper limit ($x \approx N$):
In the region where $x$ is large, $f(N-x) \sim \mathcal{O}(1)$ and the cumulative sum scales as $\mathcal{O}(x)$. The integrand is dominated by $x^{-\left(2 - \frac{1}{2\alpha}\right)}$. Integrating this term near the upper limit $N$:
\begin{equation}
    \left[ x^{-1 + \frac{1}{2\alpha}} \right]^{x=N} \sim N^{-1 + \frac{1}{2\alpha}}.
\end{equation}

The asymptotic behavior of $\mathcal{I}$ is determined by the maximum of these two contributions. The contribution from the lower limit dominates when $-\frac{1}{2} - \frac{1}{4\alpha} > -1 + \frac{1}{2\alpha}$, which corresponds to $\alpha > 1.5$. Otherwise, the contribution from the upper limit dominates. Thus,
\begin{equation}
\mathcal{I} \eqsim 
\begin{cases} 
N^{-\frac{1}{2} - \frac{1}{4\alpha}} & \text{if } \alpha > 1.5, \\
N^{-1 + \frac{1}{2\alpha}} & \text{if } 0.5 < \alpha < 1.5.
\end{cases}
\end{equation}

For $0.5 < \alpha < 1.5$, we have
\[ T_{\le  (N -\sqrt{N})} \lesssim \gamma_0^{1/(2\alpha)} M^{1/(4\alpha)} N^{-(1-1/(2\alpha))}. \]

For $0.5 < \alpha < 1.5$, combining the bounds for the drift term and noise term, we have
\[
L(N)
\;\lesssim\;
M^{-(2\alpha+2\beta-1)}
\;+\;
\bigl(M^{0.5}\gamma_0 N\bigr)^{-\tfrac{2(2\alpha+2\beta-1)}{\,2\alpha+1-2\beta\,}}
\;+\;
\gamma_0M^{0.5}N^{- 1 }\sqrt{L(N)}\,
\;+\;
\gamma_0^{\tfrac{1}{2\alpha}}\,M^{\tfrac{1}{4\alpha}}\,
N^{- (1-\tfrac{1}{2\alpha})}.
\]

In intersection of Area~$\aaa$ and $0.5 < \alpha < 1.5$, with choice of $e^*$ in $\gamma_0 = M^{-e^*}$ and $c^*$ we used for stable-decaying scheduling, we have
\[
L(N)
\;\lesssim\;
M^{-(2\alpha+2\beta-1)}
\;+\;
\bigl(M^{0.5}\gamma_0 N\bigr)^{-\tfrac{2(2\alpha+2\beta-1)}{\,2\alpha+1-2\beta\,}}
\;+\;
\gamma_0M^{0.5}N^{-c^*}\sqrt{L(N)}\,
\;+\;
\gamma_0^{\tfrac{1}{2\alpha}}\,M^{\tfrac{1}{4\alpha}}\,
N^{-(1-c^*)\,(1-\tfrac{1}{2\alpha})}.
\]

So in intersection of Area~$\aaa$ and $0.5 < \alpha < 1.5$, we have
\begin{equation}
R_f(M^\star, \frf / M^\star, (M^\star)^{-e^*}) \;\lesssim\; \frf^{-\tfrac{2(4\alpha-1)(2\alpha+2\beta-1)}{16\alpha^2 + 8\alpha\beta + 2\alpha - 2\beta - 1}}.
\end{equation}

Therefore, linear decaying scheduling has an advantage compared to constant learning rate in the intersection of Area~$\aaa$ and $0.5 < \alpha < 1.5$.

\newpage

\section{Analysis about Hypothesis for the Position of the Beneficial Area}
\label{hypanal}

In this section, we cover the analysis of stochastic gradient decay, which was deferred from Section~\ref{hyppp}.

We examine the decaying structure of the stochastic gradient.
Assume a feature vector $\vx$ is drawn from the distribution $\gN(0, \mH)$, and its label is $y = \langle \vx, \vw^* \rangle$.  
Then the stochastic gradient for that feature vector is
\begin{align*}
\vg &= 2 \big(\langle \mS \vx_t, \vtheta_{t-1} \rangle - y\big)\,\mS \vx_t \\
&=  2\mS \vx \vx\T \mS\T (\vtheta - \vtheta^*) - 2 \mS \vx \vx\T \vw_\perp.
\end{align*}
Taking the expectation of the stochastic gradient and using $\mS \mH \vw_\perp = 0$, we obtain
\begin{align*}
\E[\vg] &= 2\mS \mH \mS\T (\vtheta - \vtheta^*) - 2 \mS \mH \vw_\perp \\
&=  2 \mS \mH \mS\T (\vtheta - \vtheta^*).
\end{align*}
\citet{linscaling} proved that the eigenvalues $\lambda_i$ of $\mS \mH \mS\T$ satisfy $\lambda_i \eqsim i^{-2\alpha}$.  
Let the eigenvalue decomposition of $\mS \mH \mS \T$ be $\mS \mH \mS \T = \mU \diag(\lambda_i) \mU\T$.  
Then
\[
\mU\T \E[\vg] = 2 \diag(\lambda_i)\, \mU\T (\vtheta - \vtheta^*),
\]
which provides the intuition that $\E[\vg]$, expressed in the basis of the columns of $\mU$, decays as $i^{-2\alpha}$.
Figure~\ref{degrad} shows that the expected gradient decays similarly to $i^{-2\alpha}$.
Also, note that a larger $\alpha$ leads to a steeper gradient decay.

\begin{figure}[t]
    \centering
    \begin{subfigure}{0.5\textwidth}
        \centering
        \includegraphics[width=\textwidth]{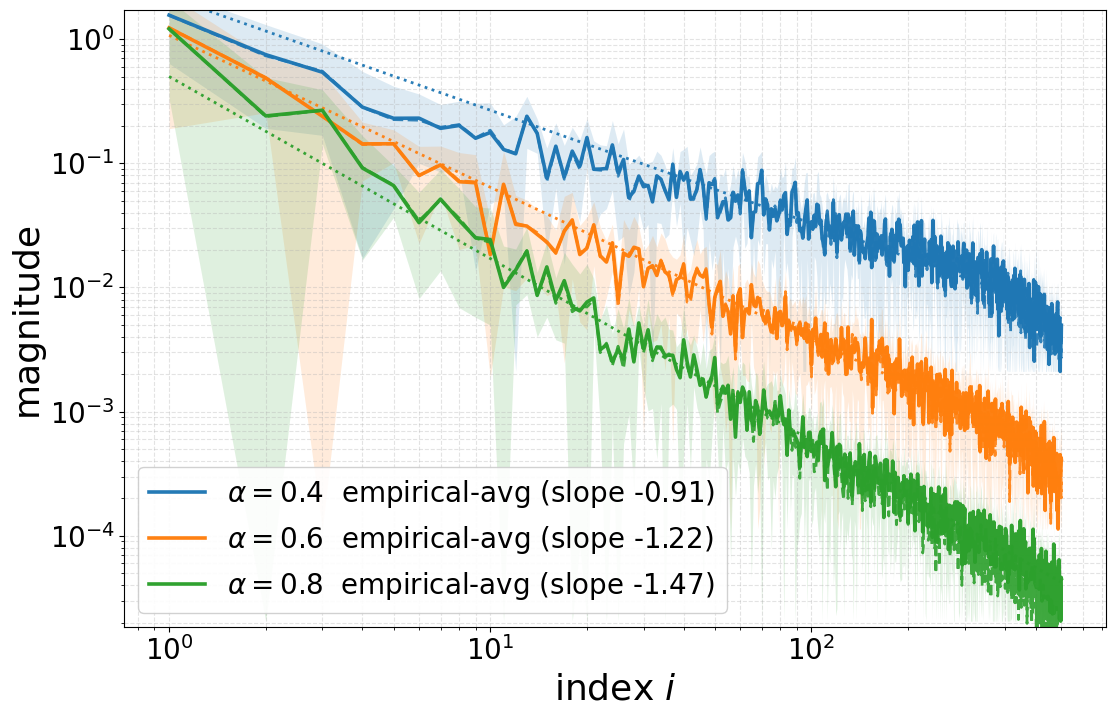}
        \label{fig:plot1}
    \end{subfigure}
    \caption{\textbf{Decay of gradient under the basis of $\mU$.}
    Colored solid lines show the average of gradients under the basis of $\mU$ for the parameter $(\alpha, \beta) = (0.4, 0.5), (0.6, 0.5), (0.8, 0.5)$. On the legend, we only noted the $\alpha$.
    The dotted line is fitted for the average of gradients, and we noted its slope in the legend.
    Slope is similar to $2\alpha$ within error $0.13$.}
    \label{degrad}
\end{figure}

\newpage

\section{Scaling Law of Adam with Heuristic}
\label{adam}

First, we recall the Adam \citep{kingma2014adam} update and notation.
For the stochastic gradient
\[
\vg_k\
= 2 \bigl(\langle \mS\vx_k,\vtheta_k\rangle-y_k\bigr)\,\mS\vx_k.
\]
Adam maintains first and second moment estimates
\begin{align*}
\vm_k&=\beta_1\,\vm_{k-1}+(1-\beta_1)\,\vg_k,\\
\vv_k&=\beta_2\,\vv_{k-1}+(1-\beta_2)\,\vg_k^{\odot 2},
\end{align*}
with bias corrections \(\;\hat\vm_k=\vm_k/(1-\beta_1^k)\), \(\;\hat\vv_k=\vv_k/(1-\beta_2^k)\).
The update is
\[
\vtheta_{k+1}
=\vtheta_k-\gamma_k\,\hat\vm_k\odot\bigl(\epsilon\mathbf{1}+\hat\vv_k\bigr)^{-1/2},
\]
where \(\odot\) denotes elementwise multiplication and the \((-1/2)\) power is taken elementwise; \(\epsilon>0\) is the usual damping (we will set \(\epsilon=0\) in the asymptotic analysis).

\citet{xiao2024exact} proposed a heuristic for Adam: take \(\beta_2\) sufficiently close to \(1\) so that the second moment can be treated as an expectation.

We present results under a same heuristic. In addition, \citet{ferbach2025dimension} prove that SGD with momentum obeys the same scaling law as SGD; motivated by this, we set \(\beta_1=0\) and omit the first-moment term for simplicity.

\paragraph{Second-moment proxy and normalized update.}
Under the heuristic of \citet{xiao2024exact},
\begin{align*}
\hat \vv_{k}
&\approx  4\E\!\Big[(\mS \vx_k)^{\odot 2}\,\bigl(\langle \mS \vx_k,\vtheta_k\rangle-y_k\bigr)^2 \,\Big|\, \mathcal{F}_k\Big]
\end{align*}

Let \(\boldsymbol{\xi} := \mS \vx_k\) be the sketched feature vector. Under the Gaussian assumption on \(\vx_k\), \(\boldsymbol{\xi}\) follows a multivariate Gaussian distribution with covariance \(\mK = \mS\mH\mS\T\).

First, we analyze the residual term. Using the decomposition \(\vw^* = \mS\T \vtheta^* + \vw_\perp\), the residual at step \(k\) is:
\begin{align*}
\langle \mS \vx_k, \vtheta_k \rangle - y_k
&= \vx_k\T \mS\T \vtheta_k - \vx_k\T \vw^* \\
&= \vx_k\T \mS\T \vtheta_k - \vx_k\T (\mS\T \vtheta^* + \vw_\perp) \\
&= (\mS \vx_k)\T (\vtheta_k - \vtheta^*) - \vx_k\T \vw_\perp \\
&= \boldsymbol{\xi}\T \delta_k - \zeta_k,
\end{align*}
where \(\delta_k := \vtheta_k - \vtheta^*\) is the parameter error, and \(\zeta_k := \vx_k\T \vw_\perp\) is the irreducible residual term induced by approximation error.

We focus on the \(i\)-th coordinate of the second moment vector. Let \(X := (\boldsymbol{\xi})_i\) and \(Y := \boldsymbol{\xi}\T \delta_k - \zeta_k\) (the residual). Since both are linear combinations of the Gaussian vector \(\vx_k\), they are jointly Gaussian. We apply Isserlis' theorem:
\[
\E[X^2 Y^2] = \E[X^2]\E[Y^2] + 2\bigl(\E[XY]\bigr)^2.
\]

We evaluate each term:
\begin{enumerate}
    \item \textbf{Variance of the feature (\(\E[X^2]\)):}
    \[
    \E[(\boldsymbol{\xi})_i^2] = (\mS \mH \mS\T)_{ii} = (\diag(\mK))_i.
    \]

    \item \textbf{Variance of the residual (\(\E[Y^2]\)):}
    By definition, the expected squared residual is the population risk:
    \[
    \E[Y^2] = \E\bigl[(\langle \mS \vx_k, \vtheta_k \rangle - y_k)^2\bigr] = L(\vtheta_k).
    \]

    \item \textbf{Covariance term (\(\E[XY]\)):}
    This term involves the correlation between the feature and the residual.
    \begin{align*}
    \E[XY] &= \E\Bigl[ (\boldsymbol{\xi})_i \bigl( \boldsymbol{\xi}\T \delta_k - \zeta_k \bigr) \Bigr] \\
    &= \E\Bigl[ (\mS \vx_k)_i (\vx_k\T \mS\T \delta_k) \Bigr] - \E\Bigl[ (\mS \vx_k)_i (\vx_k\T \vw_\perp) \Bigr].
    \end{align*}
    The first part is the standard covariance calculation:
    \[
    \E\Bigl[ (\mS \vx_k)_i (\vx_k\T \mS\T \delta_k) \Bigr] = \bigl( \mS \E[\vx_k \vx_k\T] \mS\T \delta_k \bigr)_i = (\mS \mH \mS\T \delta_k)_i = (\mK \delta_k)_i.
    \]
    The second part vanishes due to the orthogonality property of the projected solution (\(\mS \mH \vw_\perp = 0\)):
    \[
    \E\Bigl[ (\mS \vx_k)_i (\vx_k\T \vw_\perp) \Bigr] = \bigl( \mS \E[\vx_k \vx_k\T] \vw_\perp \bigr)_i = (\mS \mH \vw_\perp)_i = 0.
    \]
    Thus, \(\E[XY] = (\mK \delta_k)_i\).
\end{enumerate}

Substituting these back into Isserlis' formula yields:
\[
\E[X^2 Y^2] = (\diag(\mK))_i \, L(\vtheta_k) \;+\; 2 \bigl( (\mK(\vtheta_k - \vtheta^*))_i \bigr)^2.
\]

Stacking the coordinates gives the following equation.

\begin{equation}
\label{eq:adam_second_moment_vector}
\E\!\Big[(\mS \vx_k)^{\odot 2}\,\bigl(\langle \mS \vx_k,\vtheta_k\rangle-y_k\bigr)^2 \,\Big|\, \mathcal{F}_k\Big]
=
\diag(\mK)\cdot L(\vtheta_k)\;+\;2\,(\mK(\vtheta_k-\vtheta^*))^{\odot 2}.
\end{equation}
Moreover, by Cauchy--Schwarz,
\[
(\mK(\vtheta_k-\vtheta^*))_j^2
\le \mK_{jj}\,(\vtheta_k-\vtheta^*)\T\mK(\vtheta_k-\vtheta^*)
\le \mK_{jj}\,L(\vtheta_k),
\]
and hence the exact second moment admits the coordinate-wise bounds
\begin{equation}
\label{eq:adam_second_moment_sandwich}
\diag(\mK)\cdot L(\vtheta_k)
\;\preceq\;
\E\!\Big[(\mS \vx_k)^{\odot 2}\,\bigl(\langle \mS \vx_k,\vtheta_k\rangle-y_k\bigr)^2 \,\Big|\, \mathcal{F}_k\Big]
\;\preceq\;
3\,\diag(\mK)\cdot L(\vtheta_k),
\end{equation}
where \(\preceq\) denotes elementwise inequality.
In particular, replacing the exact second moment by \(\diag(\mK)\cdot L(\vtheta_k)=\diag(\SHST)\cdot L(\vtheta_k)\) changes the normalization by at most a universal constant factor.

Hence, the (elementwise) normalized update satisfies
\[
\vtheta_{k+1}-\vtheta_k
\eqsim -\gamma_k
\frac{\bigl(\langle \mS \vx_k,\vtheta_k\rangle-y_k\bigr)\,\mS \vx_k}{
\sqrt{\diag(\SHST)\cdot L(\vtheta_k)}}.
\]

\paragraph{One-step update formula.}
Recalling the Taylor expansion used for signSGD,
\[
\E\!\big[q(\vtheta_{k+1})-q(\vtheta_k) \,\big|\, \mathcal{F}_k\big]
=
\E\!\Big[\big\langle \nabla q(\vtheta_k),\,\vtheta_{k+1}-\vtheta_k\big\rangle \,\Big|\, \mathcal{F}_k\Big]
+\frac{1}{2}\,
\E\!\Big[\big\langle \nabla^2 q,\,(\vtheta_{k+1}-\vtheta_k)^{\otimes 2}\big\rangle \,\Big|\, \mathcal{F}_k\Big].
\]

\emph{Gradient term:}
\[
\E\!\Big[\big\langle \nabla q(\vtheta_k),\,\vtheta_{k+1}-\vtheta_k\big\rangle \,\Big|\, \mathcal{F}_k\Big]
\eqsim -\gamma_k
\left\langle \nabla q(\vtheta_k),\,
\frac{\SHST\,\vtheta_k - \mS\mH\vw^*}{\sqrt{\diag(\SHST)\cdot L(\vtheta_k)}}
\right\rangle .
\]

\emph{Quadratic term:}
\begin{align*}
&\E\!\Big[\big\langle \nabla^2 q,\,(\vtheta_{k+1}-\vtheta_k)^{\otimes 2}\big\rangle \,\Big|\, \mathcal{F}_k\Big]
\\& \eqsim \gamma_k^2
\E\!\Bigg[
\Big\langle \nabla^2 q,\,
\diag(\SHST)^{-1/2}\,\mS\vx_k \vx_k\T \mS\T\,\diag(\SHST)^{-1/2}
\Big\rangle
\frac{\bigl(\langle \mS\vx_k,\vtheta_k\rangle-y_k\bigr)^2}{L(\vtheta_k)}
\;\Bigg|\,\mathcal{F}_k\Bigg]
\\
&=\frac{\gamma_k^2}{L(\vtheta_k)}
\Big(
\big\langle \diag(\SHST)^{-1/2}\nabla^2 q\,\diag(\SHST)^{-1/2},\,\SHST\big\rangle\,L(\vtheta_k)
\\[-0.25em]
&\hspace{16mm}
+\,2\,\big\langle
\SHST\,\diag(\SHST)^{-1/2}\nabla^2 q\,\diag(\SHST)^{-1/2}\,\SHST,\,
(\vtheta_k-\vtheta^*)^{\otimes 2}
\big\rangle
\Big).
\end{align*}

Combining the two contributions,
\begin{align*}
\E\!\big[q(\vtheta_{k+1})-q(\vtheta_k) \,\big|\, \mathcal{F}_k\big]
& \eqsim
-\frac{\gamma_k}{\sqrt{L(\vtheta_k)}}\,
\big\langle \nabla q(\vtheta_k),\,\kbar(\vtheta_k-\vtheta^*)\big\rangle
+\frac{\gamma_k^2}{2}\,\big\langle \nabla^2 q,\,\ktau\big\rangle
\\
&\quad
+\frac{\gamma_k^2}{L(\vtheta_k)}\,
\big\langle
\SHST\,\diag(\SHST)^{-1/2}\nabla^2 q\,\diag(\SHST)^{-1/2}\,\SHST,\,
(\vtheta_k-\vtheta^*)^{\otimes 2}
\big\rangle ,
\end{align*}
where \(\ktau:=\diag(\SHST)^{-1/2}\,\SHST\,\diag(\SHST)^{-1/2}\).

\paragraph{Mode-wise recursion.}
For \(r_i(k):=(\vtheta_k-\vtheta^*)\T (\mK \vu_i\otimes \vw_i)\,(\vtheta_k-\vtheta^*)\) (cf. Appendix~\ref{onestep}),
\begin{align*}
\E\!\big[r_i(k+1)-r_i(k) \,\big|\, \mathcal{F}_k\big]
& \eqsim
-\frac{2\gamma_k}{\sqrt{L(\vtheta_k)}}\,\lambda_i(\kbar)\,r_i(k)
+\gamma_k^2\,(\vw_i\T \ktau \mK \vu_i)
+\frac{2\gamma_k^2}{L(\vtheta_k)}\,\lambda_i(\kbar)\,r_i(k)
\\
&=
-\Bigl(\frac{2\gamma_k}{\sqrt{L(\vtheta_k)}}-\frac{2\gamma_k^2}{L(\vtheta_k)}\Bigr)\lambda_i(\kbar)\,r_i(k)
+\gamma_k^2\,(\vw_i\T \ktau \mK \vu_i).
\end{align*}

We now assume $f \equiv 1$, and $\gamma_k = \gamma_0$ for simplicity.
Passing to the ODE limit as in Section~\ref{ODE} we get following ODE for $P(t) = L(t/\gamma_0)$ and $p_i(t) = r_i(t/\gamma_0)$.  
\begin{equation}
\frac{dp_i}{dt}
\eqsim -2 \left(\frac{1}{ \sqrt{P(t)}} - \frac{\gamma_0}{ P(t)} \right)\,\lambda_i(\kbar)\,p_i(t)
+  \gamma_0\,V'_i,
\end{equation}
where \(V_i':=\vw_i\T \ktau \mK \vu_i\).

Interpreting the solution of the ODE as an implicit integral equation and summing over \(i\), similar to Section~\ref{ODE}, and writing
\[
Q_2(N) = 2\gamma_0\int_0^N\!\Bigl(\frac{1}{\sqrt{L(u)}}-\gamma_0\frac{1}{L(u)}\Bigr)\,du,
\]
we obtain the implicit integral inequality for some $c_1, c_2>0$.
\begin{align*}
&\Hnorm
+\sum_{i=1}^M r_i(0)\,
\exp\!\Bigl(-c_1 \lambda_iQ_2(N)\Bigr)
\\ &
+\gamma_0^2\sum_{i=1}^M V_i'
\int_0^N
\exp\!\Bigl(-2 c_1 \lambda_i\gamma_0 \int_z^N\!\Bigl(\frac{1}{\sqrt{L(u)}}-\gamma_0\frac{1}{L(u)}\Bigr)\,du\Bigr)\,dz 
\\ &\le
L(N)
\le
\Hnorm
+\sum_{i=1}^M r_i(0)\,
\exp\!\Bigl(-c_2 \lambda_iQ_2(N)\Bigr)
\\
&\quad
+\gamma_0^2\sum_{i=1}^M V_i'
\int_0^N
\exp\!\Bigl(-2 c_2 \lambda_i\gamma_0 \int_z^N\!\Bigl(\frac{1}{\sqrt{L(u)}}-\gamma_0\frac{1}{L(u)}\Bigr)\,du\Bigr)\,dz.
\end{align*}

\paragraph{Drift transformation and limit phase.}
By the same drift/approximation transformation as in equation~\ref{head_trans},
\begin{align*}
&M^{-2\alpha+\max(0,\,1-2\beta)}
+\bigl(M^{\min(\alpha,0.5)}\,Q_2(N)\bigr)^{-\tfrac{2\alpha+2\beta-1}{2\alpha}}
\\ &
+\;\mathbf{1}_{\{\alpha>0.5,\ \beta>0.5\}}\;
M^{-1}\,\bigl(M^{\min(\alpha,0.5)} Q_2(N)\bigr)^{-1+\tfrac{1}{2\alpha}}
+\gamma_0^2\sum_{i=1}^M V_i'
\int_0^N e^{-2 c_1 \lambda_i\gamma_0\int_z^N(\frac{f(u)}{\sqrt{L(u)}}-\gamma_0\frac{f(u)^2}{L(u)})\,du}\,f(z)^2\,dz
\\& \lesssim L(N)
 \lesssim
M^{-2\alpha+\max(0,\,1-2\beta)}
+\bigl(M^{\min(\alpha,0.5)}\,Q_2(N)\bigr)^{-\tfrac{2\alpha+2\beta-1}{2\alpha}}
\\
&\quad
+\;\mathbf{1}_{\{\alpha>0.5,\ \beta>0.5\}}\;
M^{-1}\,\bigl(M^{\min(\alpha,0.5)} Q_2(N)\bigr)^{-1+\tfrac{1}{2\alpha}}
+\gamma_0^2\sum_{i=1}^M V_i'
\int_0^N e^{-2 c_2 \lambda_i\gamma_0\int_z^N(\frac{f(u)}{\sqrt{L(u)}}-\gamma_0\frac{f(u)^2}{L(u)})\,du}\,f(z)^2\,dz .
\end{align*}

We will first handle the limit phase, similar to Section~\ref{limit}.
At stationarity, let $p_i(t)\to s_i$ and $P(t)\to L_\infty$, we must have
\[
-2 \left(\frac{1}{\sqrt{L_\infty}} - \frac{\gamma_0}{L_\infty}\right)\,\lambda_i(\kbar)\,s_i+ \gamma_0\,V_i \eqsim 0
\quad\Longrightarrow\quad
s_i \eqsim \frac{\gamma_0\sqrt{L_\infty}}{2\,\lambda_i(\overline K)}\,V'_i \frac{1}{ 1 - \frac{\gamma_0}{\sqrt{L_\infty}}}
= \frac{\gamma_0\sqrt{L_\infty}}{2\,\lambda_i(\overline K)}\,(\vw_i\T \ktau \mK \vu_i) \frac{1}{ 1 - \frac{\gamma_0}{\sqrt{L_\infty}}}.
\]
Using the loss decomposition \(P(t)=\sum_{i=1}^{M}p_i(t)+\Hnorm\), we obtain
\begin{align*}
L_\infty
&=\sum_{i=1}^{M}s_i+\Hnorm
\eqsim \frac{\gamma_0}{2}\Bigl(\sum_{i=1}^{M}\frac{w_i\T\ktau \mK u_i}{\lambda_i(\overline K)}\Bigr)\sqrt{L_\infty} \frac{1}{ 1 - \frac{\gamma_0}{\sqrt{L_\infty}}}
+\Hnorm
\\
&=\frac{\gamma_0}{2}\,\trace\!\bigl(\diag(K)^{1/2}\ktau \bigr)\sqrt{L_\infty} \frac{1}{ 1 - \frac{\gamma_0}{\sqrt{L_\infty}}}
+\Hnorm
\\&=\frac{\gamma_0}{2}\,\trace\!\bigl(\diag(K)^{1/2}\bigr)\sqrt{L_\infty} \frac{1}{ 1 - \frac{\gamma_0}{\sqrt{L_\infty}}}
+\Hnorm.
\end{align*}
Then we get
\[
L_\infty
\;\eqsim\;
\max\Bigl\{\gamma_0^2 \, \trace\!\bigl(\diag(K)^{1/2}\bigr)^2,\ \Hnorm\Bigr\}
\;\eqsim\;
\max\Bigl\{\gamma_0^2\,M^{\,2-2\min(\alpha,0.5)},\ M^{-2\alpha+\max(0,\,1-2\beta)}\Bigr\}.
\]
So we have the same floor as for signSGD.

Since \(f\) is bounded and \(L(N)\ge \gamma_0^2 M^{\,2-2\min(\alpha,0.5)}\),
\[
\frac{ \tfrac{f(u)}{\sqrt{L(u)}} }{ \gamma_0 \tfrac{f(u)^2}{L(u)} }
= \frac{\sqrt{L(u)}}{\gamma_0 f(u)}
\gtrsim
M^{ 1-\min(\alpha,0.5)},
\]
so the subtraction inside \(Q_2\) is asymptotically negligible and \(Q_2(N)\eqsim Q(N)\). Hence, the drift contribution coincides with that of signSGD.

\paragraph{Scaling law (constant learning rate).}
For \(f\equiv 1\), Adam (under this heuristic) follows the same scaling law as signSGD:
\begin{align*}
R(M,N, \gamma_0)
\;\eqsim\;
M^{-2\alpha+\max(0,\,1-2\beta)}
+\bigl(M^{\min(\alpha,0.5)}\,N\,\gamma_0\bigr)^{-\tfrac{2(2\alpha+2\beta-1)}{\,2\alpha+1-2\beta\,}}
\\+\bigl(M^{\tfrac{6\alpha-1}{\,4\alpha-2\,}}\,N\,\gamma_0\bigr)^{-\tfrac{2(2\alpha-1)}{\,2\alpha+1\,}}
+\gamma_0^2\,M^{\,2-2\min(\alpha,0.5)}.
\end{align*}

Since the loss formula $R(M,N, \gamma_0)$ is the same as signSGD, the compute-optimal scaling law will also be the same as signSGD.
So we expect that Adam has the compute-optimal scaling law in Table~\ref{optimtable}.
Figure~\ref{fig:adco} shows that exponents in the Table~\ref{optimtable} and measured compute-optimal loss slope and optimal model size slope (in log-log plot) for Adam match well.

\FloatBarrier

\begin{figure}[t]
    \centering

    \begin{minipage}{0.45\textwidth}\centering
        \includegraphics[width=\linewidth]{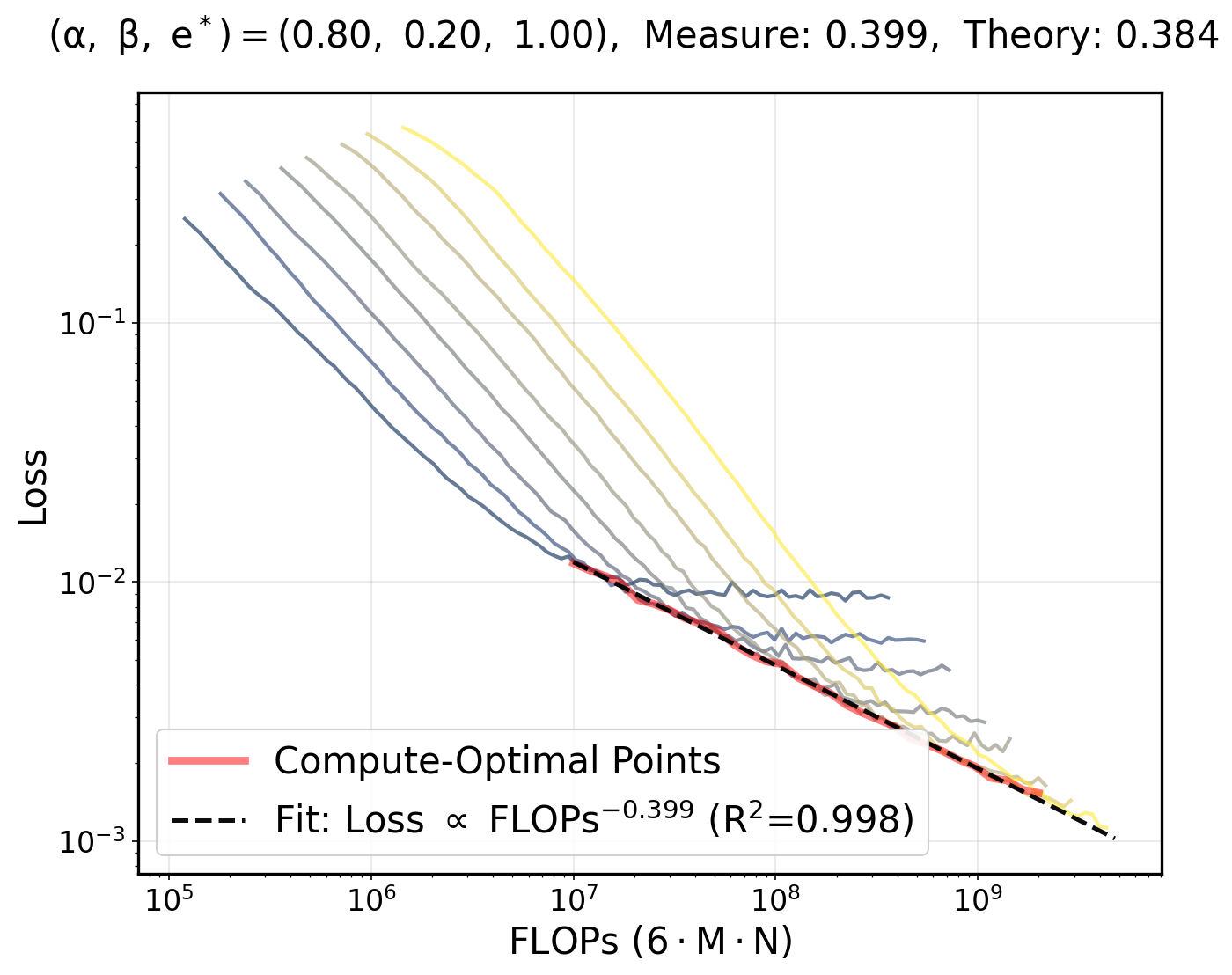}
    \end{minipage}\hfill
    \begin{minipage}{0.45\textwidth}\centering
        \includegraphics[width=\linewidth]{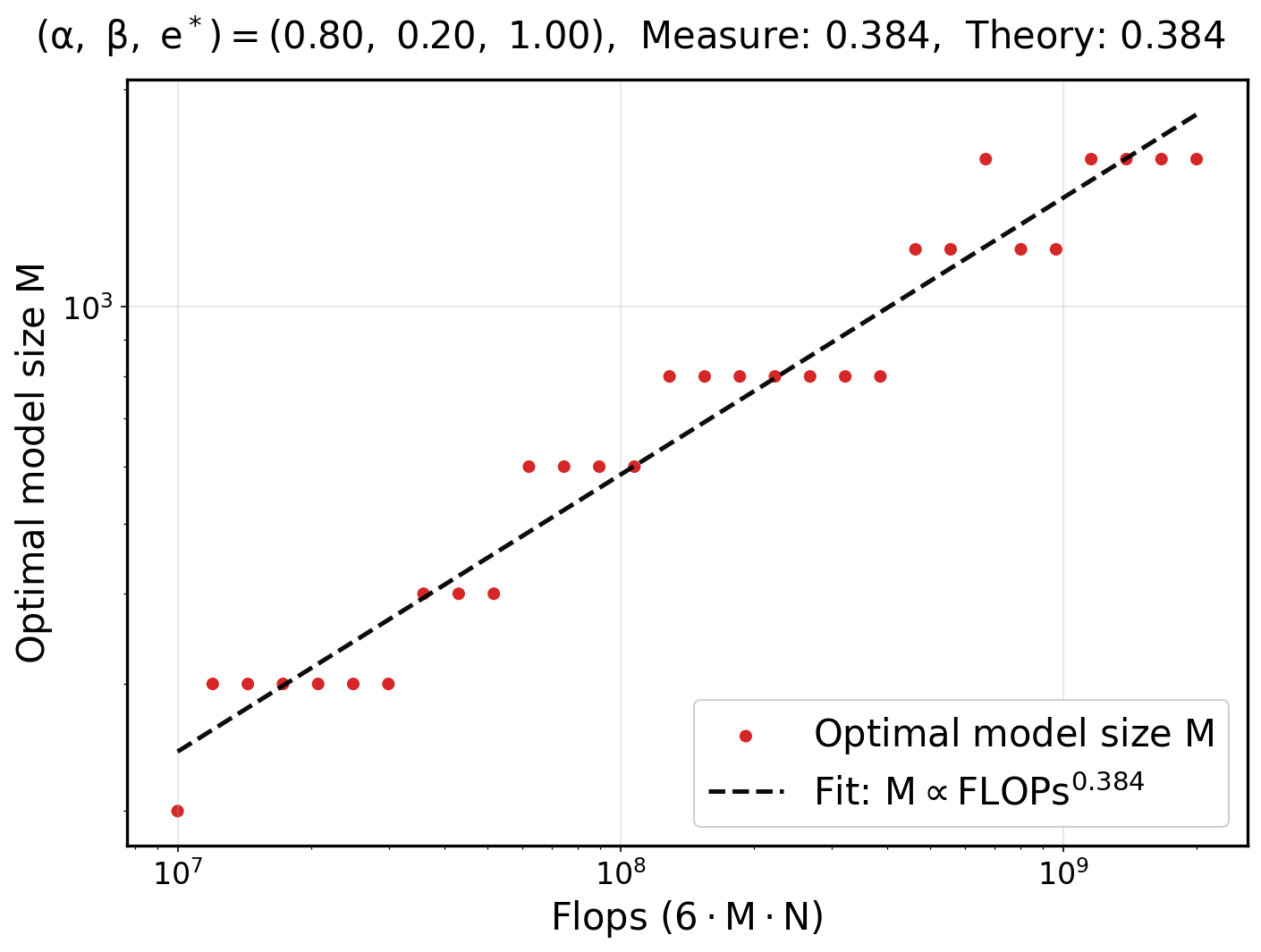}
    \end{minipage}\\[0.6em]

    \begin{minipage}{0.45\textwidth}\centering
        \includegraphics[width=\linewidth]{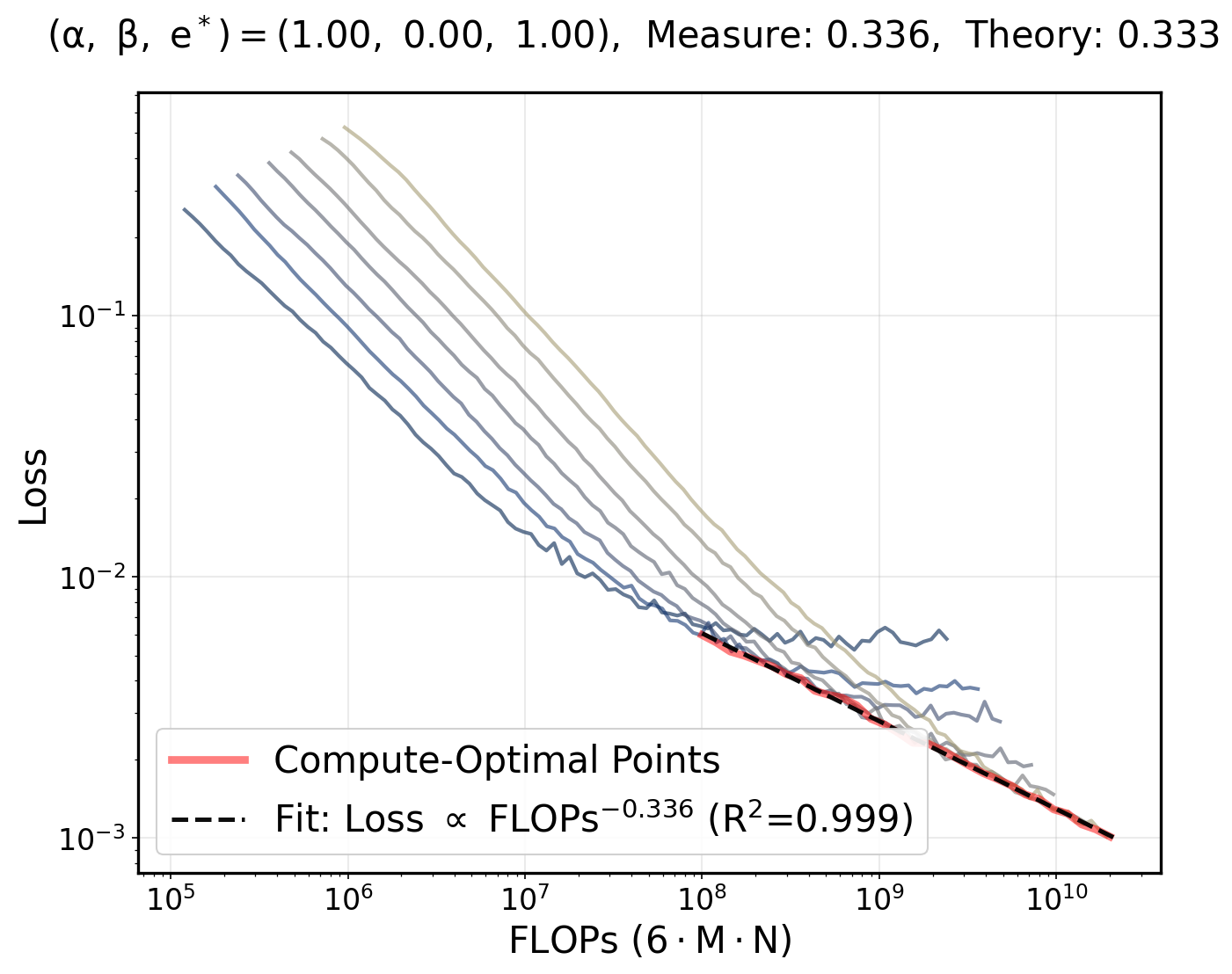}
    \end{minipage}\hfill
    \begin{minipage}{0.45\textwidth}\centering
        \includegraphics[width=\linewidth]{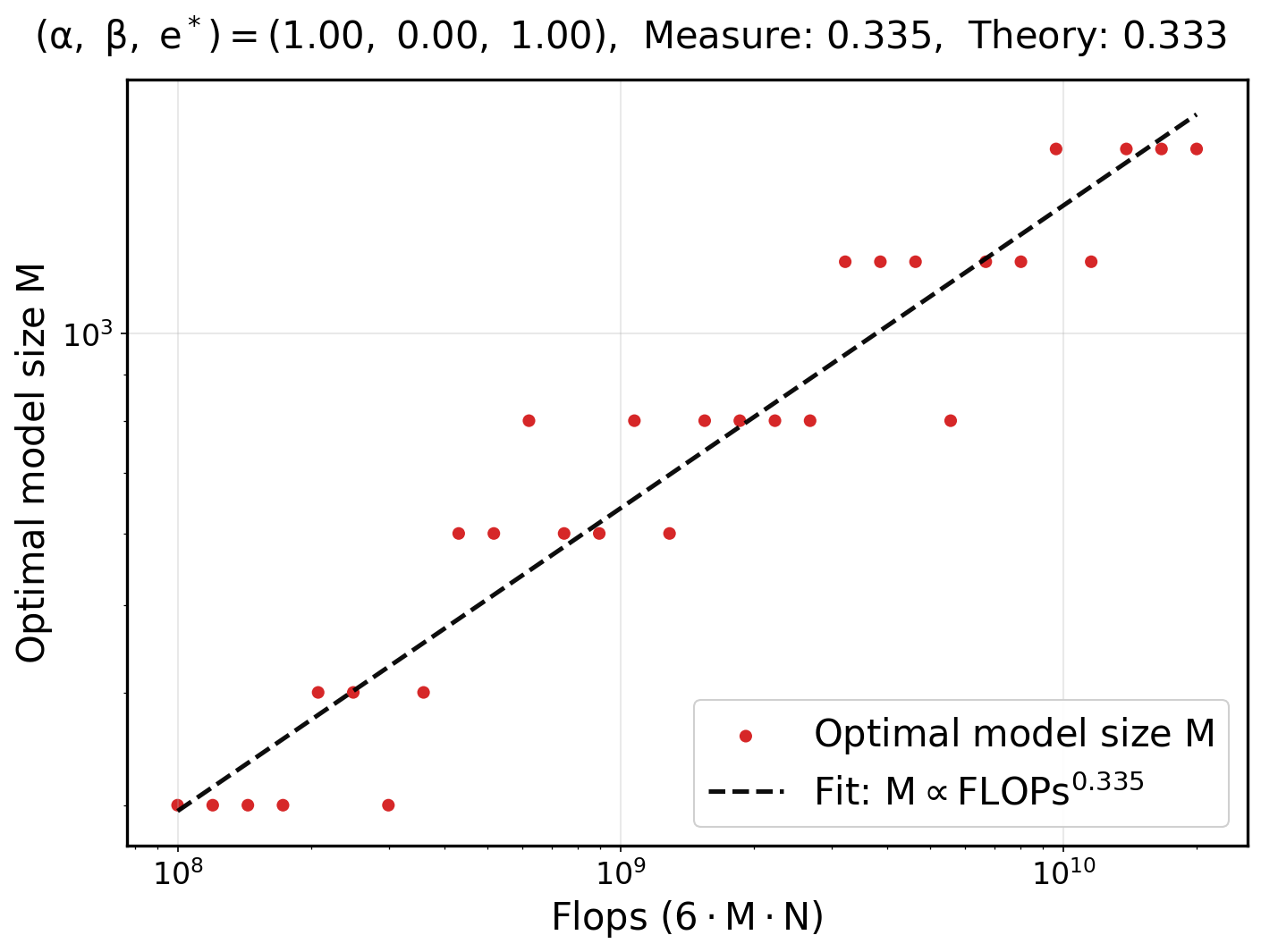}
    \end{minipage}\\[0.6em]

    \begin{minipage}{0.45\textwidth}\centering
        \includegraphics[width=\linewidth]{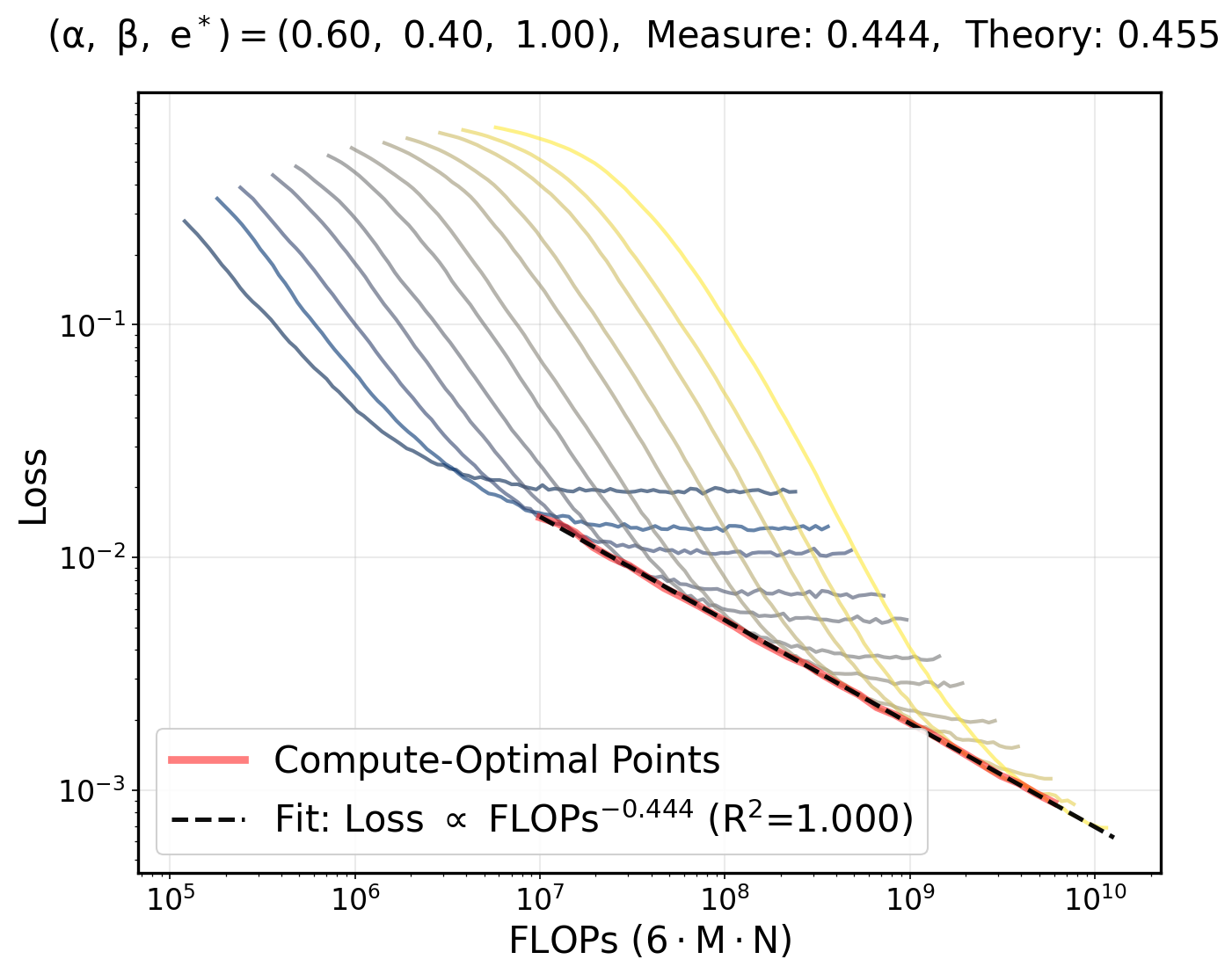}
    \end{minipage}\hfill
    \begin{minipage}{0.45\textwidth}\centering
        \includegraphics[width=\linewidth]{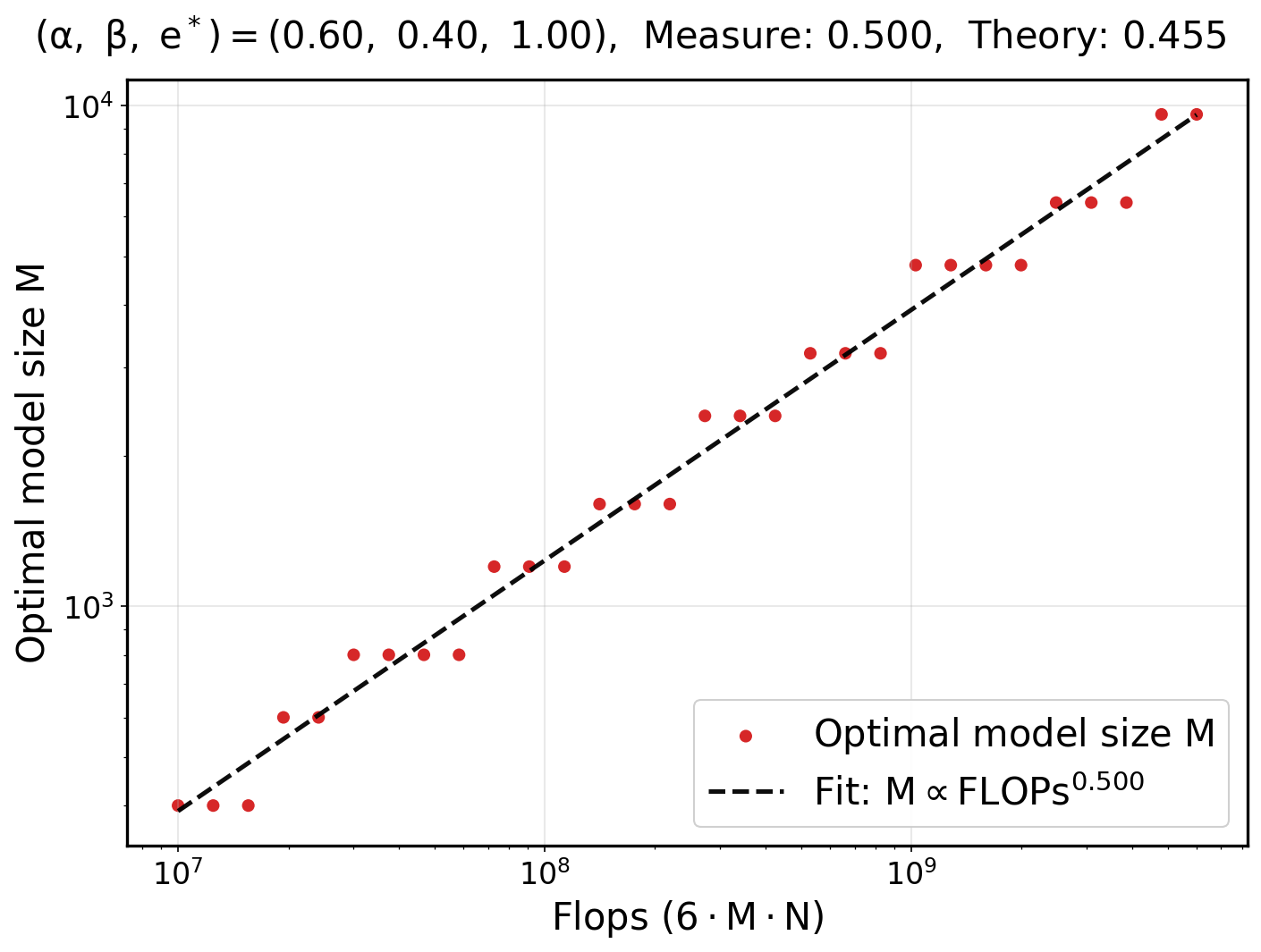}
    \end{minipage}\\[0.6em]

    \begin{minipage}{0.45\textwidth}\centering
        \includegraphics[width=\linewidth]{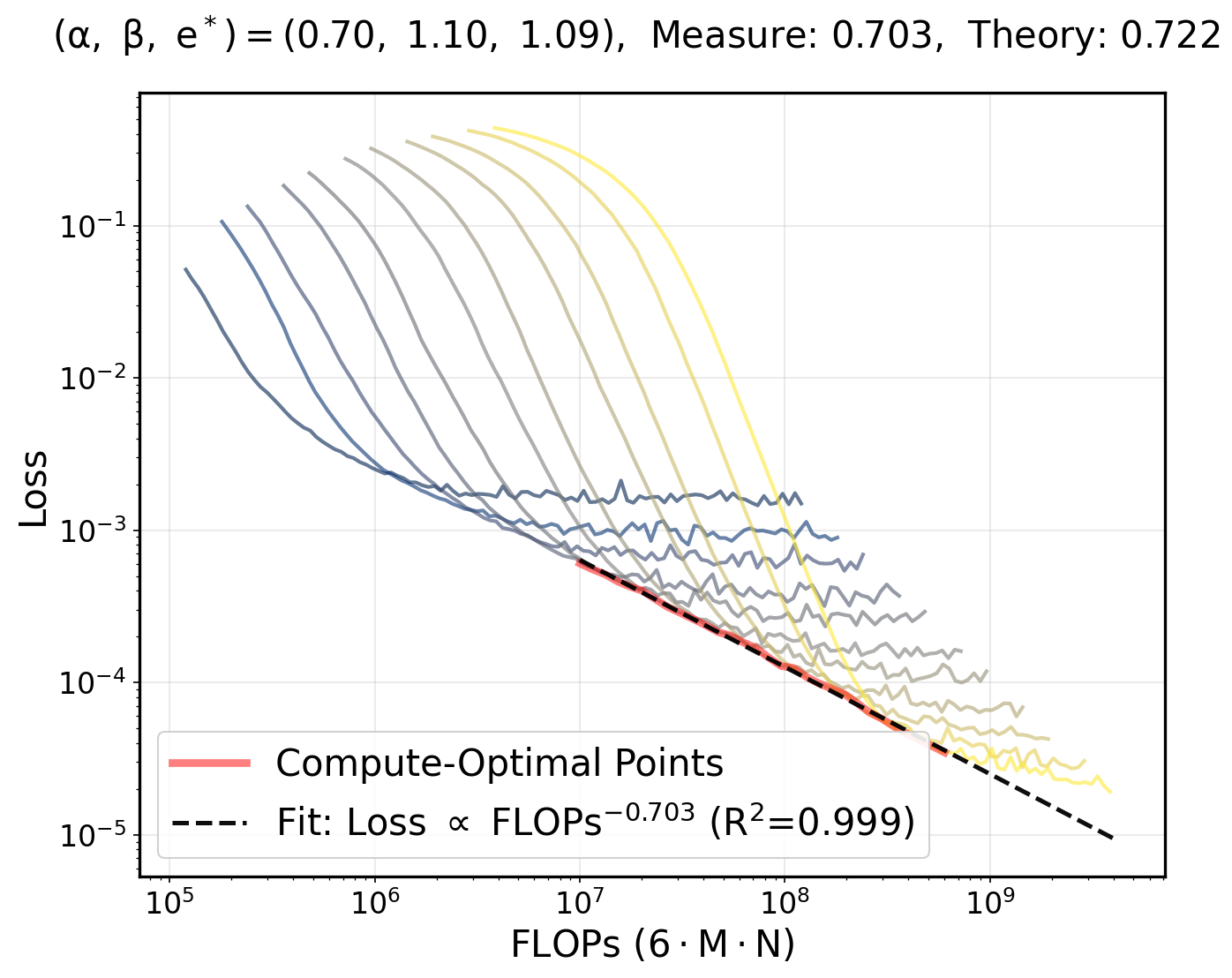}
    \end{minipage}\hfill
    \begin{minipage}{0.45\textwidth}\centering
        \includegraphics[width=\linewidth]{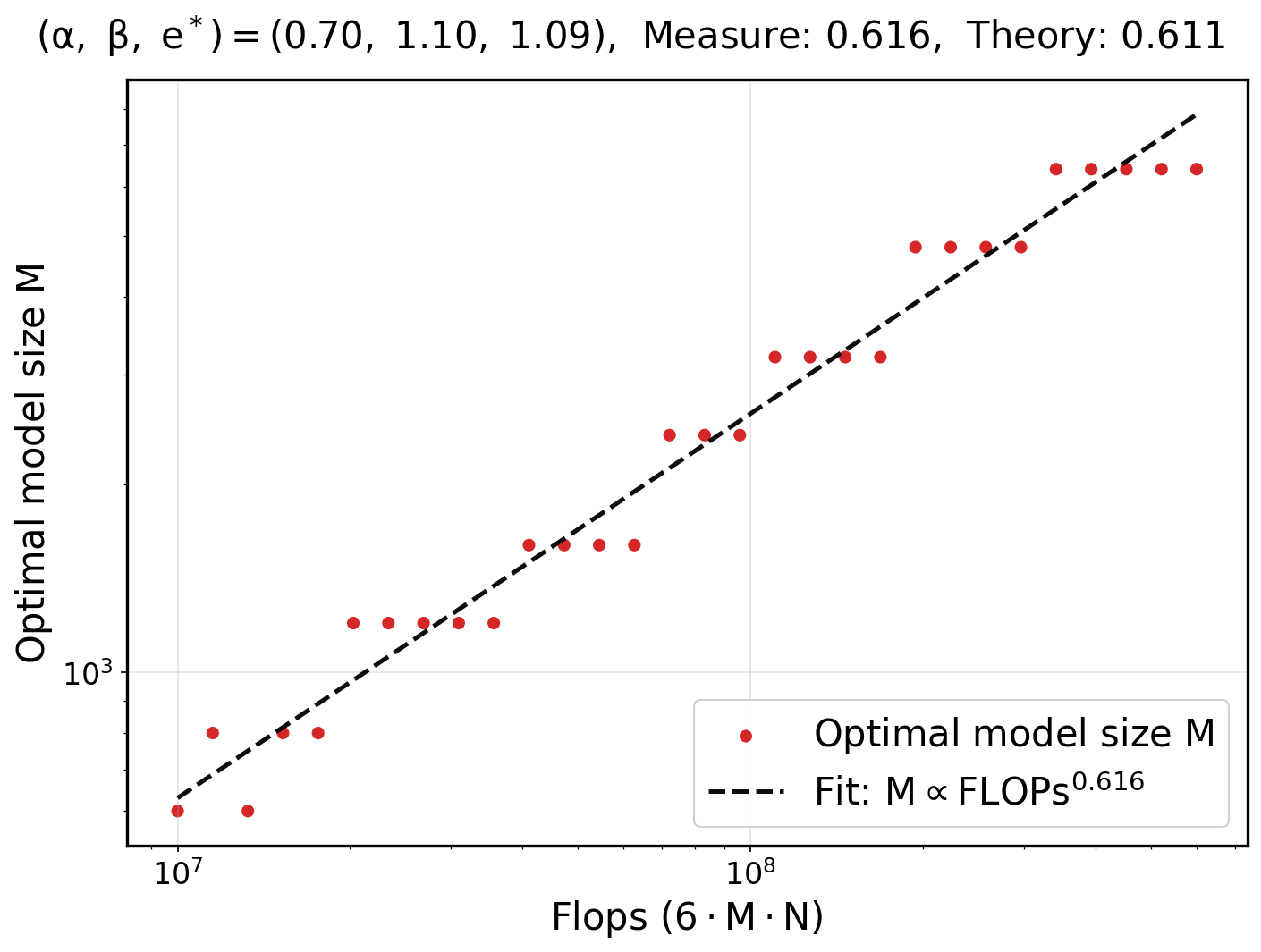}
    \end{minipage}\\[0.6em]

    \caption{\textbf{Measure of compute-optimal loss slope and optimal model size slope for Adam.} 
    We validate the exponent of $R\left(M^{\star}, \frac{\frf}{M^{\star}}, \gamma_0^{\star}\right)$ and $M^{\star}$ with respect to $\frf$ in the Table~\ref{optimtable}.
    The left plot shows the compute-optimal loss with respect to FLOPS $6MN$.
    The right plot shows the optimal model size with respect to FLOPS $6MN$.
    Each plot includes the measured slope and the theoretical slope from the Table~\ref{optimtable}.
    Parameters : $\beta_1 = 0.9$, $\beta_2 = 0.999$, $\epsilon=10^{-8}$, $\gamma_0=0.002$.
    }
    \label{fig:adco}
\end{figure}

\FloatBarrier

\section{Omitted Analysis from Section~\ref{decon}}
\label{omanal}

\subsection{Omitted Proof of \eqref{pq1} and \eqref{pq2}}
\label{app:similar-analysis}

In this section, we cover omitted proof of \eqref{pq1} and \eqref{pq2}. 
Note that the proof is almost similar to \citet{paquette4+}, but we cover it briefly for completeness.
Refer to Appendix F, G, H of \citet{paquette4+} for more details.

It is enough to prove
\begin{align*}
 -\frac{1}{2\pi\I}\oint_{\Gamma} e^{-p_d\,Q(N)z}\,
 \ip{ \mathcal L(z),\, v^{\otimes 2} }\,dz
 &\;\eqsim\;
 M^{-2\alpha+\max(0,\,1-2\beta)} \nonumber \\
 &\quad+\;
 \Bigl(M^{\min(\alpha,\,0.5)}\,Q(N)\Bigr)^{-\tfrac{2\alpha+2\beta-1}{2\alpha}}\\
 &\quad+\;
  1_{\{\alpha>0.5, \beta>0.5\}} M^{-1}\,\Bigl(M^{\min(\alpha,\,0.5)}\,Q(N)\Bigr)^{-1+\tfrac{1}{2\alpha}}.
\end{align*}

From now on, we will use similar notation to \citet{paquette4+}, except in the inevitable case, to facilitate easy comparison for the reader.
Note that we use $M$ and $d$ for model size and initial dimension before projection, while \cite{paquette4+} uses $d$ and $v$.

We use $\Gamma$ for the contour containing the spectrum of $\mK$, while \citet{paquette4+} used $\Gamma\cup\Gamma_0$ for that, where $\Gamma_0$ is a small circle containing the origin.

Let
\begin{equation}
\label{eq:defF}
\mathcal{F}(N)
\;:=\;
-\frac{1}{2\pi i}\oint_{\Gamma}\! \inner{\mathcal{R}(z)}{(H^{1/2}w^*)^{\otimes 2}}\, e^{-p_d Q(N)\,z}\,dz.
\end{equation}

The exponential kernel $e^{-p_d Q(N)z}$ replaces all polynomial weights in the analysis of \citet{paquette4+}. The resulting leading orders remain the same while constants and exponents are altered in a transparent way; precise statements follow.

We can split the $\mathcal{F}(N)$ by splitting the keyhole contour $\Gamma$.
We let 
\begin{equation}
\label{eq:split}
\mathcal{F}(N)=\mathcal{F}_0(N)+\mathcal{F}_{\mathrm{caps}}(N)+\mathcal{F}_C(N)+\text{(lower-order)},
\end{equation}
where $\mathcal{F}_0$ collects the small circle around the origin, $\mathcal{F}_{\mathrm{caps}}$ collects the right/left caps adjacent to the positive real axis, and $\mathcal{F}_C$ collects the central arc close to $[0,1]$.
Refer to Appendix F of \citet{paquette4+} for more details about the picture of contour and decomposition of contour.

In the following proposition the function $(x)_+ := \max(x,0)$ is used.
\begin{proposition}
\label{prop:static}
$\mathcal{F}_0(N)$ is independent of $N$ and obeys
\[
\Bigl|\,\mathcal{F}_0(0)-\sum_{j=1}^{d}\frac{j^{-2\alpha-2\beta}}{1+j^{-2\alpha}M^{2\alpha}\,\kappa(d/M)}\,\Bigr|
\;\le\;C\,M^{-2\alpha+(2\beta-1)_+-1}.
\]
\end{proposition}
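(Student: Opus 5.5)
The plan is to show first that $\mathcal{F}_0(N)$ does not depend on $N$, and then to import the static estimate of \citet{paquette4+}. The $N$-dependence enters the integrand only through the factor $e^{-p_d Q(N) z}$. On the small circle $\Gamma_0$ around the origin, the only singularity of $z\mapsto \ip{\mathcal{R}(z),(\mH^{1/2}\vw^*)^{\otimes 2}}$ is the one coming from the kernel of the deterministic equivalent. This is the pole at $z=0$ associated with the $d-M$ directions annihilated by the rank-$M$ sketch, which is exactly the approximation-error part.

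Since $e^{-p_d Q(N) z}$ is entire and equals $1$ at $z=0$, I would write $e^{-p_d Q(N) z} = 1 + z\,g_N(z)$ with $g_N$ entire. I then want to show that $z\,g_N(z)\,\ip{\mathcal{R}(z),(\mH^{1/2}\vw^*)^{\otimes 2}}$ is analytic inside $\Gamma_0$, so that its contour integral vanishes by Cauchy's theorem. For this I need that $z\,\mathcal{R}(z)$ is analytic near $0$, i.e. the singularity at the origin is at most a simple pole.

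This follows from the fixed-point equation for $m$ stated in Section~\ref{deter}, which is identical to that of \citet{paquette4+}. Their analysis (Appendix F) shows that $m(z)$ is analytic in a neighbourhood of $0$ with $m(0)$ given through $\kappa(d/M)$. Hence $(-z\mI+m(z)\mH)^{-1}$ has at most a simple pole at $0$. I would make sure $\Gamma_0$ is chosen small enough that it excludes the rest of the spectrum, which sits at scale $M^{-2\alpha}$ and above. The conclusion is $\mathcal{F}_0(N)=\mathcal{F}_0(0)$.

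For the quantitative bound I evaluate $\mathcal{F}_0(0)$ as the residue at $0$. Using $\mH=\diag(j^{-2\alpha})$ and $\vv_j=j^{-\alpha-\beta}$, the residue gives
\[
\sum_j \frac{j^{-2\alpha-2\beta}}{1+j^{-2\alpha}/m_0},
\]
where $m_0$ is the appropriate limit of $m(z)$ (or $z/m(z)$) near $0$. This is exactly the quantity controlled in \citet{paquette4+}.

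Their estimate gives $1/m_0 = M^{2\alpha}\kappa(d/M)\,(1+O(M^{-1}))$. The stated error then follows by perturbing the sum: the derivative of each summand with respect to $1/m_0$, multiplied by $M^{2\alpha-1}$, sums to at most $C M^{-2\alpha+(2\beta-1)_+-1}$. This is checked by splitting the sum at $j\approx M$, where $j\le M$ contributes via $j^{-2\beta+2\alpha}M^{-4\alpha}\cdot M^{2\alpha-1}$ and $j>M$ contributes the tail.

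The main obstacle is justifying that the analytic structure near $0$, namely the simple-pole behaviour of $z\mapsto m(z)$-dependent quantities and the $O(M^{-1})$ accuracy of $m_0$, carries over unchanged. Because our fixed-point equation coincides with theirs after the $p_d$ rescaling, I would simply cite Appendix F of \citet{paquette4+} for this, and verify only that the rescaling $z\mapsto z/p_d$ maps $\Gamma_0$ to a circle of the admissible radius.
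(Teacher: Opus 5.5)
Your argument for $N$-independence is the paper's argument made precise. The paper's whole proof is that the exponential equals $1$ at $z=0$, so everything reduces to \citet{paquette4+}, and your simple-pole/residue formulation is the right way to justify that.

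One correction there: $m$ does not take a nonzero value at the origin. If it did, $\mathcal{R}(z)=(-z\mI+m(z)\mH)^{-1}$ would be analytic at $0$ and $\mathcal{F}_0$ would vanish. The pole exists because $m(0)=0$ with $m(z)/z\to c<0$, where $|c|$ solves $\sum_{j=1}^{d}(1+j^{2\alpha}/|c|)^{-1}=M$. Then $\mathcal{R}(z)=z^{-1}\bigl(-\mI+(m(z)/z)\mH\bigr)^{-1}$, and $\mathcal{F}_0(N)=\sum_j j^{-2\alpha-2\beta}/(1+|c|\,j^{-2\alpha})$ for every $N$.

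The real problem is your quantitative step. At $x\eqsim M^{2\alpha}$, the derivative of $x\mapsto j^{-2\alpha-2\beta}/(1+j^{-2\alpha}x)$ is of order $j^{-2\beta}M^{-4\alpha}$ for $j\le M$, not $j^{2\alpha-2\beta}M^{-4\alpha}$ (which would sum to $M^{-2\beta}$). For $j>M$ it is of order $j^{-4\alpha-2\beta}$. Multiplying by $\delta x\eqsim M^{2\alpha-1}$ and summing gives $M^{-2\alpha-1}\sum_{j\le M}j^{-2\beta}+M^{-2\alpha-2\beta}\eqsim M^{-2\alpha+(1-2\beta)_+-1}$, up to a logarithm at $\beta=\tfrac12$.

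For $\beta>\tfrac12$ this is $M^{-2\alpha-1}$ and implies the displayed bound. For $\beta<\tfrac12$ it is $M^{-2\alpha-2\beta}\gg M^{-2\alpha-1}$, so your claim that the sum is at most $CM^{-2\alpha+(2\beta-1)_+-1}$ is false in that regime. Nor can this route be repaired if $\kappa(d/M)$ is the continuum quantity of \citet{paquette4+}. Every difference has the sign of $|c|-M^{2\alpha}\kappa$, so nothing cancels. The discrete equation for $|c|$ and the integral equation for $\kappa$ differ by a sum-versus-integral error of genuine relative order $M^{-1}$.

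What you actually prove is the bound with exponent $-2\alpha+(1-2\beta)_+-1$, a factor $M^{-1}$ below $\mathcal{F}_0\eqsim M^{-2\alpha+\max(0,1-2\beta)}$. That is almost certainly the intended statement. The paper only asserts that the error bound is identical to that of \citet{paquette4+}, and with $(2\beta-1)_+$ the error term would exceed the main term once $\beta>1$. Either state and prove the corrected exponent, or import the static estimate purely by citation as the paper does, rather than claiming to rederive the displayed one.
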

\begin{proof}[Sketch]
Putting $z=0$ to the exponential leads to $1$, so we can reduce to the analysis of \citet{paquette4+}. So the error bound is identical.
\end{proof}

After this $\mathcal{F}_0(N) \eqsim M^{-2\alpha+\max(0,\,1-2\beta)} $ holds by identical procedure calculating $\sum_{j=1}^{d}\frac{j^{-2\alpha-2\beta}}{1+j^{-2\alpha}M^{2\alpha}\,\kappa(d/M)}$.

\begin{proposition}
\label{prop:caps}
There exist functions $f,g\ge0$ with
\[
 f(N)\le C\exp\!\bigl(-p_d Q(N)\,M^{-2\alpha}\bigr),\qquad
 g(N)\le C\exp\!\bigl(-p_d Q(N)\bigr),
\]
so that
\[
\bigl|\mathcal{F}_{\mathrm{caps}}(N)\bigr|\;\le\;C\,f(N)\,M^{-2\alpha+(1-2\beta)_+}+C\,g(N).
\]
\end{proposition}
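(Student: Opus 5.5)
The approach is to reduce this statement to the corresponding cap estimate of \citet{paquette4+} (their Appendix~F). The only structural difference is the kernel: we have $e^{-p_d Q(N) z}$ where they have the polynomial weight $(1-2\gamma B z+\gamma^2 B(B+1)z^2)^r$. Following their decomposition, I parametrize the two caps. The right cap is the part of $\Gamma$ adjacent to the positive real axis near the largest eigenvalue scale, $|z|\eqsim 1$. The left cap is the part near the smallest scale, $z\eqsim M^{-2\alpha}$, which bridges to the small circle around the origin. On each cap I will bound $|\ip{\mathcal R(z)}{(\mH^{1/2}\vw^*)^{\otimes 2}}|$ exactly as in \citet{paquette4+}, using the properties of $m(z)$ from the fixed-point equation derived in Section~\ref{deter}. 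That equation is identical to theirs, so all resolvent bounds transfer verbatim. The only new input needed is a pointwise bound on the kernel along each cap.

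For the kernel, the caps can be chosen, as in \citet{paquette4+}, so that on the left cap $\Re z\ge c\,M^{-2\alpha}$ and on the right cap $\Re z\ge c$, with lengths $O(M^{-2\alpha})$ and $O(1)$ respectively. Then
\[
|e^{-p_d Q(N) z}| = e^{-p_d Q(N)\Re z}\le \exp\!\bigl(-c\,p_d Q(N) M^{-2\alpha}\bigr)
\]
on the left cap, and $\le \exp(-c\,p_d Q(N))$ on the right cap. I will absorb the constant $c$ into $C$ by rescaling where admissible; otherwise the statement is read with $c$ inside the exponent, which is harmless for the later uses. Set $f(N)$ equal to the left-cap integral of $|\mathcal R|$ weighted by the kernel, normalised by $M^{-2\alpha+(1-2\beta)_+}$, and set $g(N)$ equal to the right-cap integral. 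It then remains to show that $\int_{\text{left cap}}|\ip{\mathcal R}{\vv^{\otimes 2}}|\,|dz|\lesssim M^{-2\alpha+(1-2\beta)_+}$ and $\int_{\text{right cap}}|\ip{\mathcal R}{\vv^{\otimes2}}|\,|dz|\lesssim 1$. These are precisely the $N$-independent estimates in \citet{paquette4+}: on the left cap, $\mathcal R(z)$ is comparable to its value at the origin, which gives the approximation-error scale; on the right cap, $\|\mathcal R\|$ is bounded and $\|\vv\|^2=\sum_j j^{-2\alpha-2\beta}<\infty$.

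The main obstacle is ensuring that the contour can actually be deformed so that $\Re z$ is bounded below by the stated scales along the caps, while still enclosing the spectrum of $\widehat{\mK}$ after the $1/p_d$ rescaling. This matters because, unlike the polynomial kernel, $e^{-p_dQz}$ grows for $\Re z<0$, so no part of a cap may dip into the left half-plane. I would first try to reuse the keyhole geometry of \citet{paquette4+} directly, noting that their caps lie in $\{\Re z>0\}$ away from $\Gamma_0$. I would then invoke the spectral bounds $c_1 M^{\min(0.5,\alpha)}\widehat{\mK}\preceq\kbar\preceq c_2 M^{\min(0.5,\alpha)}\widehat{\mK}$ to guarantee that the rescaled spectrum lies in $[cM^{-2\alpha},C]$. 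Any residual part of the contour near the imaginary axis belongs to $\mathcal F_0$ or to the lower-order terms in \eqref{eq:split}.
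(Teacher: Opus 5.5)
Your proposal is correct and follows essentially the paper's route. You reduce to the cap analysis of \citet{paquette4+}, bound the kernel on each cap by $|e^{-p_dQ(N)z}|=e^{-p_dQ(N)\Re z}$ using $\Re z\gtrsim M^{-2\alpha}$ on the left cap and $\Re z\gtrsim 1$ on the right cap, and multiply by the $N$-independent cap integrals of $|\ip{\mathcal R(z)}{\vv^{\otimes 2}}|$. (The paper's sketch gets those integrals by replacing the pairing with a partial fraction via $|m(z)-1|\lesssim M^{-\min\{2\alpha,1\}}$.)

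As you already noticed, a constant $c$ in $\exp(-c\,p_dQ(N)M^{-2\alpha})$ cannot in general be absorbed into the prefactor $C$. So your argument and the paper's sketch alike deliver the bound only with a constant in the exponent, which is all that is used downstream.
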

\begin{proof}[Sketch]
Use $|m(z)-1|\lesssim M^{-\min\{2\alpha,1\}}$ (as in \citet{paquette4+}) on a cap pushed $\mathcal{O}(1)$-close to $[0,1]$ to replace $\inner{\mathcal{R}(z)}{(H^{1/2}w^*)^{\otimes 2}}$ by a simple partial fraction, and control the remainder by the real part of $z$.
\end{proof}

The main contribution arises from the arc parameterized by $z(u)=u+i\eta(u)$ with $u\in[M^{-2\alpha},1]$ and $|\eta(u)|\ll u$. Along this arc we have the uniform approximation
\begin{equation}
\label{eq:m-approx}
\Bigl|\,m\bigl(z(u)\bigr)-\Bigl(1-\tfrac{\pi}{2\alpha}(c(u)+i)\,u^{-1/(2\alpha)}M^{-1}\Bigr)\,\Bigr|\;\le\;\varepsilon\,u^{-1/(2\alpha)}M^{-1}
\end{equation}
for some bounded real $c(u)$. Inserting \eqref{eq:m-approx} in $\mathcal{R}(z)=(-zI+m(z)H)^{-1}$ and extracting the imaginary part produces two canonical integrals,
\begin{equation}
\label{eq:FppFac}
\mathcal{F}_{pp}(N):=\frac{1}{2\alpha}\int_0^1 u^{(2\beta-1)/(2\alpha)}e^{-p_d Q(N)u}\,du,
\qquad
\mathcal{F}_{ac}(N):=\frac{c_\beta}{2\alpha}\int_{M^{-2\alpha}}^1 u^{-1/(2\alpha)}M^{-1}e^{-p_d Q(N)u}\,du,
\end{equation}
with $c_\beta=\sum_{j\ge1}j^{-2\beta}$ if $2\beta>1$ and $c_\beta=0$ otherwise.

\begin{proposition}
\label{prop:central}
There exists $C>0$ such that for all $N\ge0$, $|\mathcal{F}_C(N)|\le C\bigl(\mathcal{F}_{pp}(N)+\mathcal{F}_{ac}(N)\bigr)$. Moreover, there are $A>0$ and a bounded function $C(N)>0$ with $C(N)\le1+\varepsilon$ whenever $p_d Q(N)\in[A, M^{2\alpha}/A]$, and
\[
\frac{1}{C(N)}\bigl(\mathcal{F}_{pp}(N)+\mathcal{F}_{ac}(N)\bigr)
\;\le\;\mathcal{F}_C(N)
\;\le\;C(N)\bigl(\mathcal{F}_{pp}(N)+\mathcal{F}_{ac}(N)\bigr).
\]
\end{proposition}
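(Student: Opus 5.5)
The plan is to follow the central-arc analysis of \citet{paquette4+} (their Appendix~F--H), with the polynomial kernel $(1-2\gamma Bz+\gamma^2B(B+1)z^2)^r$ replaced throughout by the exponential kernel $e^{-p_dQ(N)z}$. We parametrize the arc as $z(u)=u+i\eta(u)$ for $u\in[M^{-2\alpha},1]$, with $|\eta(u)|\ll u$. On this arc we write
\[
-\frac{1}{2\pi i}\int_{\Gamma_C} e^{-p_dQ(N)z}\,\ip{\mathcal{R}(z),(\mH^{1/2}\vw^*)^{\otimes 2}}\,dz
\]
as a real integral in $u$. Its integrand is $\frac{1}{\pi}\Im\bigl[e^{-p_dQ(N)z(u)}\ip{\mathcal{R}(z(u)),\vv^{\otimes 2}}\,z'(u)\bigr]$, with both sides of the arc combined by conjugate symmetry. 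The key point is that $|e^{-p_dQ(N)z(u)}|=e^{-p_dQ(N)u}$ exactly. Its phase $e^{-ip_dQ(N)\eta(u)}$ stays within $O(\varepsilon)$ of $1$ once $\eta$ is chosen with $p_dQ(N)\,\eta(u)\le\varepsilon$ on the region that matters, namely $u\lesssim (p_dQ(N))^{-1}$, where the weight is non-negligible. For larger $u$, the factor $e^{-p_dQ(N)u}$ already kills the contribution relative to the main terms. This gives an exponential weight in place of the polynomial weight of \citet{paquette4+}, with positivity preserved.

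Next we substitute \eqref{eq:m-approx} into $\mathcal{R}(z)=(-z\mI+m(z)\mH)^{-1}$. Since $\mH$ is diagonal with entries $j^{-2\alpha}$ and $\vv_j^2=j^{-2\alpha-2\beta}$, we get
\[
\ip{\mathcal{R},\vv^{\otimes2}}=\sum_j \frac{j^{-2\alpha-2\beta}}{m(z)j^{-2\alpha}-z}.
\]
Taking the imaginary part splits the sum into two pieces, following \citet{paquette4+}. The indices $j$ near the resonance $j^{-2\alpha}\approx u$ produce the point-mass part: after a sum-to-integral comparison with $j=u^{-1/(2\alpha)}$, this yields the density $\frac{1}{2\alpha}u^{(2\beta-1)/(2\alpha)}$, i.e.\ $\mathcal{F}_{pp}$. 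The small indices $j\ll u^{-1/(2\alpha)}$ see $\Im m(z)\approx -\frac{\pi}{2\alpha}u^{-1/(2\alpha)}M^{-1}$, each contributing about $j^{-2\beta}\,\Im m$. Summing these gives $c_\beta u^{-1/(2\alpha)}M^{-1}$, which is nonzero only when $2\beta>1$; this is $\mathcal{F}_{ac}$. Every error term carries a factor $\varepsilon$ or $u^{-1/(2\alpha)}M^{-1}\ll1$ relative to the main density, pointwise in $u$. Because the weight $e^{-p_dQ(N)u}$ is positive, pointwise relative errors integrate to relative errors. This yields the two-sided bound with $C(N)\le 1+\varepsilon$.

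The range restriction $p_dQ(N)\in[A,M^{2\alpha}/A]$ is needed so that the effective window $u\sim (p_dQ(N))^{-1}$ lies strictly inside $[M^{-2\alpha},1]$. This keeps the arc endpoints, which are handled by Propositions~\ref{prop:static} and \ref{prop:caps}, from carrying comparable mass. Outside that range we claim only the one-sided bound $|\mathcal{F}_C|\le C(\mathcal{F}_{pp}+\mathcal{F}_{ac})$. That bound comes from the same pointwise estimate with constant rather than $1+\varepsilon$ errors, using $|m(z)-1|\lesssim M^{-\min(2\alpha,1)}$ near the endpoints.

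The main obstacle is the uniform control of the phase of $e^{-p_dQ(N)z}$ along the arc. The original argument has $|(1-2\gamma Bz+\dots)^r|$, which is handled differently. Here $\eta(u)$ must be small enough that $p_dQ(N)\eta(u)$ stays small on the relevant window, yet large enough that the approximation \eqref{eq:m-approx} of $m$, which needs $\eta$ bounded away from the real spectrum at scale $u^{-1/(2\alpha)}M^{-1}$, still holds. We would first try $\eta(u)=\varepsilon\,u\,\min\{1,(p_dQ(N)u)^{-1}\}$ and check compatibility with \citet{paquette4+}'s admissible heights. If this fails near $u\sim M^{-2\alpha}$, we would absorb that piece into the caps estimate.
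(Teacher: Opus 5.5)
Your route is the paper's: parametrize $\Gamma_C$ by $u$, insert \eqref{eq:m-approx} into $\mathcal{R}(z)=(-z\mI+m(z)\mH)^{-1}$, and read $\mathcal{F}_{pp}$ and $\mathcal{F}_{ac}$ off the imaginary part. The step that fails is the error control.

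Set $X:=p_dQ(N)$ and $G(z):=\ip{\mathcal{R}(z),\vv^{\otimes 2}}$, and write $e^{-Xz(u)}z'(u)=e^{-Xu}(a+ib)$. Here $a\approx 1$ and $b=\eta'(u)\cos(X\eta)-\sin(X\eta)\approx \eta'-X\eta$. The integrand is then $\tfrac{1}{\pi}e^{-Xu}\bigl(a\,\Im G+b\,\Re G\bigr)$.

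Treating $e^{-iX\eta}(1+i\eta')$ as a factor $1+O(\varepsilon)$, as you do, is legitimate only if $|\Re G|\lesssim \Im G$. This fails whenever $2\beta>1$, which is exactly when $\mathcal{F}_{ac}\neq 0$. The modes with $j^{-2\alpha}\gg u$ contribute about $\sum_j j^{-2\beta}=c_\beta$ to $\Re G$, so $\Re G=c_\beta+O\bigl(u^{(2\beta-1)/(2\alpha)}+u^{-1/(2\alpha)}M^{-1}\bigr)$ is of order one. Meanwhile $\Im G/\pi\approx\frac{1}{2\alpha}\bigl(u^{(2\beta-1)/(2\alpha)}+c_\beta u^{-1/(2\alpha)}M^{-1}\bigr)\ll 1$.

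For your arc, $|b|$ is of order $\varepsilon$ across the whole window. On $u<1/X$ this comes from $\eta'=\varepsilon$, which you never account for. On $u>1/X$ it comes from the phase, and $e^{-Xu}$ does not make that part negligible relative to the main terms. The absolute bound on the cross term is therefore $\asymp \varepsilon c_\beta/X$. But $X(\mathcal{F}_{pp}+\mathcal{F}_{ac})\to 0$ inside $[A,M^{2\alpha}/A]$; for example, at $X=M^{\alpha}$ it is $O(M^{-(2\beta-1)/2}+M^{-\min(\alpha,1/2)})$. So the relative error is unbounded, not $O(\varepsilon)$, and "pointwise relative errors integrate to relative errors" does not apply.

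The missing idea is that the two first-order errors cancel under signed integration against analytic functions. Indeed $e^{-Xu}(\eta'-X\eta)\approx\frac{d}{du}\bigl(e^{-Xu}\eta\bigr)$, and exactly, $\Im\int_{\Gamma_C}e^{-Xz}\,dz$ reduces to the endpoint values of $X^{-1}e^{-Xu}\sin(X\eta(u))$. The repair has three steps:
\begin{enumerate}
\item Subtract from $G$ its part that is analytic across the window: the constant $c_\beta$, and, when $\beta>\alpha+\tfrac12$, also the entire Taylor terms $z^k\sum_j j^{2k\alpha-2\beta}$ with $2k\alpha<2\beta-1$.
\item Integrate that piece exactly. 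It reduces to endpoint terms of size $\lesssim c_\beta\,\eta(M^{-2\alpha})\le \varepsilon c_\beta M^{-2\alpha}$, which is $\lesssim\varepsilon\,\mathcal{F}_{ac}$ because $X\le M^{2\alpha}/A$.
\item Run your pointwise argument on the remainder. Its real part includes $c_\beta\Re(1/m(z)-1)\propto c(u)\,u^{-1/(2\alpha)}M^{-1}$ and is of the same order as $\Im G$, so there the phase and slope errors really are relatively $O(\varepsilon)$.
\end{enumerate}
The paper's sketch does not spell this cancellation out either; it only asserts that the real part is $\mathcal{O}(\varepsilon)$-smaller. Your pointwise version of that claim is false as stated. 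Conversely, the phase of $e^{-Xz}$, which you single out as the main obstacle, becomes harmless once the dominant real part has been removed.
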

\begin{proof}[Sketch]
Parameterize $\Gamma_C$ by $u$ and use \eqref{eq:m-approx} to separate real/imaginary parts. The imaginary terms integrate exactly to \eqref{eq:FppFac}, while the real part is smaller by a factor $\mathcal{O}(\varepsilon)$ since $|\eta(u)|\ll u$.
\end{proof}

\begin{proposition}[Asymptotics of $\mathcal{F}_{pp}$]
\label{prop:H2}
Assume $2\alpha+2\beta>1$ and set $X:=p_d Q(N)$. For any $\varepsilon>0$ there exists $A>0$ such that for $X\ge A$,
\[
\bigl|\mathcal{F}_{pp}(N)-g_{pp}(N)\bigr|\;\le\;\varepsilon\,g_{pp}(N),
\]
where
\[
 g_{pp}(N)\;:=\;(2\alpha)^{-1}X^{-(1+\beta/\alpha)+1/(2\alpha)}\,\Gamma\!\Bigl(\tfrac{\beta}{\alpha}-\tfrac{1}{2\alpha}+1\Bigr).
\]
Moreover, if $X\le\widetilde A$ then $c\le\mathcal{F}_{pp}(N)\le C$ for constants $c,C>0$, and if $X\ge\widetilde A\,M^{2\alpha}$ then $\mathcal{F}_{pp}(N)\le\widetilde C\,\mathcal{F}_0(N)$ for some $\widetilde C>0$ independent of $M$.
\end{proposition}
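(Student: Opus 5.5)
The statement concerns the one-dimensional Laplace-type integral
\[
\mathcal{F}_{pp}(N)=\frac{1}{2\alpha}\int_0^1 u^{s-1}e^{-Xu}\,du,\qquad s:=\frac{2\beta-1}{2\alpha}+1=\frac{\beta}{\alpha}-\frac{1}{2\alpha}+1,
\]
where $X=p_dQ(N)$. The standing assumption $2\alpha+2\beta>1$ is exactly $s>0$, so the integrand is integrable at the origin. The plan is to treat the three regimes of $X$ separately. In each, we compare with the incomplete Gamma function $\int_0^X t^{s-1}e^{-t}\,dt$ after the substitution $t=Xu$.

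\textbf{Large $X$.} Substituting $t=Xu$ gives
\[
\mathcal{F}_{pp}(N)=(2\alpha)^{-1}X^{-s}\int_0^X t^{s-1}e^{-t}\,dt=g_{pp}(N)\Bigl(1-\frac{\Gamma(s,X)}{\Gamma(s)}\Bigr),
\]
and $X^{-s}=X^{-(1+\beta/\alpha)+1/(2\alpha)}$, matching the stated $g_{pp}$. Hence $|\mathcal{F}_{pp}-g_{pp}|=g_{pp}\,\Gamma(s,X)/\Gamma(s)$. Since $\Gamma(s,X)\to 0$ as $X\to\infty$, for any $\varepsilon$ we can choose $A$ with $\Gamma(s,A)\le\varepsilon\Gamma(s)$; monotonicity in $X$ then gives the bound for all $X\ge A$. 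This step is exact and routine.

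\textbf{Bounded $X$.} For $X\le\widetilde A$, we have $e^{-\widetilde A}\le e^{-Xu}\le 1$ on $[0,1]$. Therefore $\mathcal{F}_{pp}$ lies between $e^{-\widetilde A}/(2\alpha s)$ and $1/(2\alpha s)$, which gives the constants $c$ and $C$.

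\textbf{Very large $X$.} For $X\ge\widetilde A M^{2\alpha}$, the first step gives $\mathcal{F}_{pp}\le (2\alpha)^{-1}\Gamma(s)X^{-s}\lesssim M^{-2\alpha s}=M^{-(2\alpha+2\beta-1)}$. We then compare this with $\mathcal{F}_0(N)$, using Proposition~\ref{prop:static} and the estimate $\mathcal{F}_0\eqsim M^{-2\alpha+\max(0,1-2\beta)}$ established right after it.

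\begin{itemize}
\item If $\beta\le 1/2$, the two exponents coincide, so the bound holds with a constant independent of $M$.
\item If $\beta>1/2$, then $M^{-(2\alpha+2\beta-1)}\le M^{-2\alpha}$ since $2\beta-1>0$, so the bound holds again.
\end{itemize}

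The main care point is this last comparison. The lower bound $\mathcal{F}_0\gtrsim M^{-2\alpha+(1-2\beta)_+}$ must hold uniformly, including the absorption of the error term of Proposition~\ref{prop:static}, which is of order $M^{-2\alpha+(2\beta-1)_+-1}$. We also need to check that the resulting constant $\widetilde C$ depends only on $\alpha,\beta,\widetilde A$ and on the constant in the $\mathcal{F}_0$ asymptotics, not on $M$.
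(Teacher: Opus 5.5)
Your proposal is correct and follows the paper's argument. You use the substitution $t=Xu$ to get $\mathcal{F}_{pp}=g_{pp}\bigl(1-\Gamma(s,X)/\Gamma(s)\bigr)$, control the large-$X$ regime by the incomplete-Gamma tail, and handle $X\le\widetilde A$ and $X\ge\widetilde A M^{2\alpha}$ by elementary bounds, which you spell out more explicitly than the paper's sketch.

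Like the paper, your last regime takes $\mathcal{F}_0\eqsim M^{-2\alpha+\max(0,1-2\beta)}$ as given. The absorption issue you flag is therefore inherited from Proposition~\ref{prop:static}, not a gap in your argument. Note that with the error exponent printed as $(2\beta-1)_+$, absorption would fail for $\beta>1$, so it presumably should read $(1-2\beta)_+$.
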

\begin{proof}[Sketch]
With the change of variables $w=Xu$, we get
\[
\mathcal{F}_{pp}(N)=(2\alpha)^{-1}X^{-(1+\beta/\alpha)+1/(2\alpha)}\int_0^{X}w^{(2\beta-1)/(2\alpha)}e^{-w}dw.
\]
Comparing to the complete gamma integral yields the relative error bound in terms of the upper incomplete gamma tail, which can be made $\le\varepsilon$ by choosing $A$ large. The remaining bounds follow by monotonicity and elementary estimates.
\end{proof}

\begin{proposition}[Asymptotics of $\mathcal{F}_{ac}$]
\label{prop:H4}
Let $X:=p_d Q(N)$. There exists $C(\alpha,\beta)>0$ such that
\[
\mathcal{F}_{ac}(N)\le\begin{cases}
C\,\mathcal{F}_0(N),& 2\beta>1,\;2\alpha<1,\\
0,& 2\beta<1.
\end{cases}
\]
If in addition $2\alpha>1$ and $2\beta>1$, then for any $\varepsilon>0$ there is $A>0$ such that whenever $X\in[A,M^{2\alpha}/A]$,
\[
\bigl|\mathcal{F}_{ac}(N)-g_{ac}(N)\bigr|\le\varepsilon\,g_{ac}(N),\qquad
 g_{ac}(N):=\Bigl(\sum_{j=1}^{\nu}j^{-2\beta}\Bigr)\,(2\alpha)^{-1}\,\Gamma\!\Bigl(1-\tfrac{1}{2\alpha}\Bigr)\,X^{-1+1/(2\alpha)}\,M^{-1}.
\]
Furthermore, for any $\widetilde A>0$ there exist constants $C,c>0$ (independent of $M$) such that
\[
\mathcal{F}_{ac}(N)\le\begin{cases}
C\,M^{-1},& X\le \widetilde A,\\
 c\,\mathcal{F}_0(N),& X\ge \widetilde A\,M^{2\alpha}.
\end{cases}
\]
\end{proposition}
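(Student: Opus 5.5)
The plan is to work directly from the explicit form of $\mathcal{F}_{ac}$ in \eqref{eq:FppFac}, set $X:=p_d Q(N)$, and rescale, exactly as in the sketch of Proposition~\ref{prop:H2}. If $2\beta<1$ then $c_\beta=0$ by definition, so $\mathcal{F}_{ac}\equiv 0$ and there is nothing to prove. In all remaining cases I substitute $w=Xu$, which gives
\[
\mathcal{F}_{ac}(N)=\frac{c_\beta}{2\alpha}\,M^{-1}X^{-1+1/(2\alpha)}\int_{XM^{-2\alpha}}^{X} w^{-1/(2\alpha)}e^{-w}\,dw .
\]
Every claim then reduces to estimating this truncated incomplete-gamma integral, with $\mathcal{F}_0(N)\eqsim M^{-2\alpha+(1-2\beta)_+}$ taken from Proposition~\ref{prop:static}.

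\emph{Case $2\beta>1$, $2\alpha<1$.} Here $w^{-1/(2\alpha)}$ is integrable at $\infty$ but not at $0$, so the integral is dominated by its lower endpoint. Since $1/(2\alpha)>1$, we get
\[
\int_{XM^{-2\alpha}}^{\infty} w^{-1/(2\alpha)}e^{-w}\,dw\lesssim (XM^{-2\alpha})^{1-1/(2\alpha)}\, e^{-\min(1,XM^{-2\alpha})}.
\]
It is easier to bound the original $u$-integral directly: $\int_{M^{-2\alpha}}^1 u^{-1/(2\alpha)}\,du\lesssim M^{-2\alpha(1-1/(2\alpha))}=M^{1-2\alpha}$. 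Hence $\mathcal{F}_{ac}\lesssim M^{-2\alpha}\eqsim\mathcal{F}_0$, since $(1-2\beta)_+=0$.

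\emph{Case $2\alpha>1$, $2\beta>1$, asymptotics.} Now $1/(2\alpha)<1$, so the integrand is integrable at $0$. We have
\[
\int_{XM^{-2\alpha}}^{X}=\Gamma\!\Bigl(1-\tfrac{1}{2\alpha}\Bigr)-\int_0^{XM^{-2\alpha}}-\int_X^\infty .
\]
The first error term is at most $C(XM^{-2\alpha})^{1-1/(2\alpha)}\le CA^{-(1-1/(2\alpha))}$ when $X\le M^{2\alpha}/A$. The second is at most $CX^{-1/(2\alpha)}e^{-X}\le Ce^{-A}$ when $X\ge A$. Choosing $A$ large makes both errors at most $\varepsilon\,\Gamma(1-1/(2\alpha))$, which gives $|\mathcal{F}_{ac}-g_{ac}|\le\varepsilon g_{ac}$. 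The constant $c_\beta$ should be identified with $\sum_{j\le\nu}j^{-2\beta}$, which is a convergent partial sum; any discrepancy between $\nu$ and $\infty$ is absorbed into $\varepsilon$ for large dimension.

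\emph{Boundary regimes.} For $X\le\widetilde A$, I use $e^{-Xu}\le 1$ and $\int_0^1 u^{-1/(2\alpha)}\,du<\infty$ to obtain $\mathcal{F}_{ac}\le CM^{-1}$. For $X\ge\widetilde A M^{2\alpha}$, I use $u\ge M^{-2\alpha}$ to get $e^{-Xu}\le e^{-\widetilde A}e^{-X u/2}$, and bound
\[
\mathcal{F}_{ac}\lesssim M^{-1}X^{-1+1/(2\alpha)}\lesssim M^{-1}M^{-2\alpha+1}=M^{-2\alpha},
\]
which is $\lesssim\mathcal{F}_0$. The statement writes $c\,\mathcal{F}_0$ here, so the constant can simply be taken large.

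The main obstacle is not the calculus but the bookkeeping in the asymptotic case. The error estimates must be uniform in $M$, and the two thresholds $A$ and $M^{2\alpha}/A$ must both be used. This is also the step where the relation between $c_\beta$ and $\sum_{j\le\nu}j^{-2\beta}$ from \citet{paquette4+} has to be imported rather than rederived.
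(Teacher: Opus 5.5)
Your proposal is correct and follows essentially the paper's route: the paper's sketch likewise compares the truncated integral in \eqref{eq:FppFac} with its extension to $[0,\infty)$ and bounds the tails $[0,M^{-2\alpha}]$ and $[1,\infty)$, which are exactly your two error terms after the rescaling $w=Xu$. It also obtains the endpoint regimes by dropping the exponential, or by a crude bound when $X\gtrsim M^{2\alpha}$. Your remark that the gap between $c_\beta$ and $\sum_{j\le\nu}j^{-2\beta}$ is absorbed into $\varepsilon$ is legitimate: the window $X\in[A,M^{2\alpha}/A]$ is nonempty only when $M\ge A^{1/\alpha}$, so taking $A$ large forces $d\ge r_0M$ to be large.
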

\begin{proof}[Sketch]
Compare the truncated integral in \eqref{eq:FppFac} with its extension to $[0,\infty)$ and control the two tails $[0,M^{-2\alpha}]$ and $[1,\infty)$ separately. The first is at most $\widetilde c\,M^{-2\alpha}$; the second is bounded by $M^{-1}X^{-1}e^{-X}$. Normalizing by $g_{ac}(N)$ shows both are relatively small for $X\in[A,M^{2\alpha}/A]$ with $A$ large. The endpoint bounds follow from dropping the exponential and from a crude $\int e^{-Xu}du\le X^{-1}e^{-XM^{-2\alpha}}$ estimate when $X\gtrsim M^{2\alpha}$.
\end{proof}

Finally we get
\begin{align*}
 -\frac{1}{2\pi\I}\oint_{\Gamma} e^{-p_d \,Q(N)z}\,
 \ip{ \mathcal L(z),\, v^{\otimes 2} }\,dz
 &\;\eqsim\; \mathcal{F}_0(N)+\mathcal{F}_{\mathrm{caps}}(N)+\mathcal{F}_C(N)
 \\& \eqsim  M^{-2\alpha+\max(0,\,1-2\beta)} + \Bigl(p_d \,Q(N)\Bigr)^{-\tfrac{2\alpha+2\beta-1}{2\alpha}} \\& + 1_{\{\alpha>0.5, \beta>0.5\}} M^{-1}\,\Bigl(p_d \,Q(N)\Bigr)^{-1+\tfrac{1}{2\alpha}}
 M^{-2\alpha+\max(0,\,1-2\beta)} \nonumber \\
 &\eqsim  M^{-2\alpha+\max(0,\,1-2\beta)} +\;
 \Bigl(M^{\min(\alpha,\,0.5)}\,Q(N)\Bigr)^{-\tfrac{2\alpha+2\beta-1}{2\alpha}}\\
 &\quad+\;
  1_{\{\alpha>0.5, \beta>0.5\}} M^{-1}\,\Bigl(M^{\min(\alpha,\,0.5)}\,Q(N)\Bigr)^{-1+\tfrac{1}{2\alpha}}.
\end{align*}

\subsection{Note on the $\arcsin{x} \approx x$ Approximation}
\label{arcxn}

We explain that it is possible to replace the linear approximation $\arcsin{x} \approx x$ by an inequality, and the main results of our paper remain unchanged.

\textbf{Replacing the $\arcsin$–linearization by a uniform sandwich.}
Fix $0<\rho\le 1$ and define
\[
c_1(\rho) \;\defeq\; \inf_{|t|\le\rho}\frac{\arcsin t}{t} = 1,
\qquad
c_2(\rho) \;\defeq\; \sup_{|t|\le\rho}\frac{\arcsin t}{t}
= \frac{\arcsin\rho}{\rho}\ \le\ \frac{\pi}{2}.
\]
For $x\in\R^d$ with $\|x\|_\infty\le\rho$, the entrywise odd and monotone map $t\mapsto\arcsin t$ satisfies the componentwise bounds
\[
c_1(\rho)\,x \ \le\ \arcsin(x)\ \le\ c_2(\rho)\,x .
\]

In our update, put $v_k\defeq \vtheta_k-\vtheta^*$ and
\[
x_k \;\defeq\; \frac{\overline K\,v_k}{\sqrt{L(\vtheta_k)}},
\qquad
\text{so that}\quad \arcsin(x_k)=D_k\,x_k ,
\]
for some diagonal $D_k=\mathrm{diag}(\kappa_{k,1},\dots,\kappa_{k,d})$ with
$\,c_1(\rho)\le \kappa_{k,j}\le c_2(\rho)$.  
Using $K\T=K$ and $K\T\overline K=\overline K\T K\T$, the one–step drift can be written as
\[
\E\!\left[r_i(k\!+\!1)-r_i(k)\mid\mathcal F_k\right]
= -\frac{2\gamma_k}{\pi\sqrt{L(\vtheta_k)}}\,
v_k\T\!\left( K \vu_i \vw_i\T + \vw_i \vu_i\T K \right) D_k\,\overline K\,v_k
\;+\; \frac{2\gamma_k^2}{\pi}\,(\vw_i\T K_\sigma K \vu_i).
\]

Since $D_k$ is diagonal with $c_1(\rho)I\preceq D_k\preceq c_2(\rho)I$, the quadratic form is sandwiched between the same expression with $D_k$ replaced by $c_1(\rho)I$ and $c_2(\rho)I$.  
Recalling the identity used earlier,
\[
v_k\T\!\left( K \vu_i \vw_i\T + \vw_i \vu_i\T K \right)\overline K\,v_k
= 2\,\lambda_i(\overline K)\,r_i(k),
\]
we obtain the two–sided one–step bound
\begin{align*}
-\frac{4\,c_2(\rho)\,\gamma_k}{\pi\sqrt{L(\vtheta_k)}}\,\lambda_i(\overline K)\,r_i(k)
\;+\; \frac{2\gamma_k^2}{\pi}\,(\vw_i\T K_\sigma K \vu_i)
\ \le\
\E\!\left[r_i(k\!+\!1)-r_i(k)\mid\mathcal F_k\right]
\\\ \le\
-\frac{4\,c_1(\rho)\,\gamma_k}{\pi\sqrt{L(\vtheta_k)}}\,\lambda_i(\overline K)\,r_i(k)
\;+\; \frac{2\gamma_k^2}{\pi}\,(\vw_i\T K_\sigma K \vu_i).
\end{align*}

\textbf{Consequences for the ODE limit and the implicit integral equation.}
Let $\gamma_k=\gamma_0 f(k)$, $t=k\gamma_0$, $p_i(t)\defeq r_i(k)$, and $P(t)\defeq L(\vtheta_k)$, as in Appendix~\ref{ODE}.
Then we obtain the differential inequalities
\[
-\frac{4\,c_2(\rho)}{\pi\sqrt{P(t)}}\,\lambda_i(\overline K)\,f(t/\gamma_0)\,p_i(t)
+\frac{2\gamma_0}{\pi}f(t/\gamma_0)^2 V_i
\ \le\ \dot p_i(t)\ \le\
-\frac{4\,c_1(\rho)}{\pi\sqrt{P(t)}}\,\lambda_i(\overline K)\,f(t/\gamma_0)\,p_i(t)
+\frac{2\gamma_0}{\pi}f(t/\gamma_0)^2 V_i,
\]
with $V_i:=\vw_i\T K_\sigma K \vu_i$.  
Solving these linear comparison inequalities yields the bounds
\[
p_i^{(c_2)}(t)\ \le\ p_i(t)\ \le\ p_i^{(c_1)}(t),
\qquad
P^{(c_2)}(t)\ \le\ P(t)\ \le\ P^{(c_1)}(t),
\]
where $p_i^{(c)}(\cdot)$ and $P^{(c)}(\cdot)$ denote the solutions of the ODE/integral equations from Appendix~\ref{ODE} with the factor $\tfrac{4}{\pi}$ replaced by $\tfrac{4c}{\pi}$.  
Equivalently, defining
\[
Q_c(N)\ \defeq\ \frac{4c\,\gamma_0}{\pi}\int_0^N \frac{f(u)}{\sqrt{P(u)}}\,du,
\]
the drift/noise expressions remain valid with $Q(N)$ replaced by $Q_c(N)$, and all proofs carry through verbatim.

\textbf{Only multiplicative constants change; scaling exponents and phases do not.}
Every appearance of $Q(N)$ in the final formulas enters either through an exponential
$e^{-\lambda Q(N)}$ or through a polynomial factor $(M^{\mu}Q(N))^{-\nu}$.
Replacing $Q$ by $Q_c=c\,Q$ only multiplies these terms by constants:
$e^{-\lambda c Q}$ converts to
$(M^{\mu}c Q)^{-\nu}=c^{-\nu}(M^{\mu}Q)^{-\nu}$.
Hence the \emph{rates, exponents, and phase boundaries} of the scaling laws are unchanged;
only the prefactors are rescaled by fixed constants depending on $c_1(\rho),c_2(\rho)\in[1,\pi/2]$.
In particular, all “$\eqsim$” statements (equalities up to absolute constants)
remain valid with the same exponents.

\subsection{Note on Approximation Error}
\label{approxnote}

Though proof of \citet{paquette4+} implicitly implies 
\[ \bigl\|\mH^{1/2}\vw_\perp\bigr\|^{2} \eqsim M^{-2\alpha+\max(0,\,1-2\beta)}. \]
It was not explicitly specified. So we clarify it here.

First, 
\[  -\frac{1}{2\pi i}\oint_{|z|=\varepsilon}
\Big\langle(\widehat{\mK}-z\mI)^{-1},\,(\mH^{1/2}\vw^{*})^{\otimes 2}\Big\rangle\,dz \eqsim M^{-2\alpha+\max(0,\,1-2\beta)}, \]
is directly implied from Proposition H.3 of \citet{paquette4+}.
So it is enough to prove the following claim.

\textbf{Claim.}
Let
\[
\widehat{\mK}=\mH^{1/2}\mS\T \mS\,\mH^{1/2},\qquad
\vw^{*}=\mS\T\vtheta^{*}+\vw_\perp,\qquad
\mS\,\mH\,\vw_\perp=\mathbf{0}.
\]
For a sufficiently small circle \(|z|=\varepsilon\) enclosing only the eigenvalue \(0\) of \(\widehat{\mK}\),
\[
-\frac{1}{2\pi i}\oint_{|z|=\varepsilon}
\Big\langle(\widehat{\mK}-z\mI)^{-1},\,(\mH^{1/2}\vw^{*})^{\otimes 2}\Big\rangle\,dz
\;=\;
\bigl\|\mH^{1/2}\vw_\perp\bigr\|^{2}.
\]

\textbf{Proof.}

By the Riesz projection theorem (Dunford–Riesz functional calculus), for a small circle 
$|z|=\varepsilon$ enclosing only the eigenvalue \(0\) of \(\widehat{\mK}\),
\[
\Pi_0 \;:=\; -\frac{1}{2\pi i}\oint_{|z|=\varepsilon} (\widehat{\mK}-z\mI)^{-1}\,dz
\]
is the spectral Riesz projector onto the \(0\)-eigenspace; since \(\widehat{\mK}\) is Hermitian, \(\Pi_0\) is the \emph{orthogonal} projector onto \(\ker(\widehat{\mK})\).

Then we have
\[
-\frac{1}{2\pi i}\oint_{|z|=\varepsilon}
\Big\langle(\widehat{\mK}-z\mI)^{-1},\,(\mH^{1/2}\vw^{*})^{\otimes 2}\Big\rangle\,dz
=
\Big\langle\Pi_{0},\,(\mH^{1/2}\vw^{*})^{\otimes 2}\Big\rangle
=
\bigl\|\Pi_{0}\,\mH^{1/2}\vw^{*}\bigr\|_2^{\,2}.
\]
Since \(\ker(\widehat{\mK})=\{\,\vx:\ \mS\,\mH^{1/2}\vx=\mathbf{0}\,\}
=\bigl(\mathrm{Im}(\mH^{1/2}\mS\T)\bigr)^{\perp}\),
We have the orthogonal decomposition
\[
\mH^{1/2}\vw^{*}
=
\underbrace{\mH^{1/2}\mS\T\vtheta^{*}}_{\in\,\mathrm{Im}(\mH^{1/2}\mS\T)}
\;+\;
\underbrace{\mH^{1/2}\vw_\perp}_{\in\,(\mathrm{Im}(\mH^{1/2}\mS\T))^{\perp}},
\]
where the second membership uses \(\mS\,\mH\,\vw_\perp=\mathbf{0}\).
Hence \(\Pi_{0}\,\mH^{1/2}\vw^{*}=\mH^{1/2}\vw_\perp\), and therefore
\[
\Big\langle\Pi_{0},\,(\mH^{1/2}\vw^{*})^{\otimes 2}\Big\rangle
=
\bigl\|\mH^{1/2}\vw_\perp\bigr\|^{2}.
\qquad\Box
\]

\subsection{Proof of Matrix Inequality for 
\texorpdfstring{$\diag(\mS\mH\mS^\top)^{-1/2}$}{diag(S H S^T)^{-1/2}}}\label{matrixbound}

We will prove the inequality in the following form in this section.
\[
\boxed{\;
c_1\,M^{\min(0.5,\alpha)}\,I
\;\preceq\;
\diag(\mS\mH\mS\T)^{-1/2}
\;\preceq\;
c_2\,M^{\min(0.5,\alpha)}\,I
\;}
\]

\textbf{Setup.}
Let \(\mS\in\R^{M\times d}\) have i.i.d.\ entries \(S_{ij}\sim\gN(0,1/M)\), and let
\[
\mH=\diag\!\bigl(1^{-2\alpha},\,2^{-2\alpha},\,\dots,\,d^{-2\alpha}\bigr), \qquad \alpha>0.
\]
Then, for each \(i\in\{1,\dots,M\}\),
\[
\bigl[\diag(\mS\mH\mS\T)\bigr]_{ii}
=\sum_{j=1}^d H_{jj} S_{ij}^2
=\frac{1}{M}\sum_{j=1}^d j^{-2\alpha} \chi^2_j,
\]
where \(\chi^2_1,\dots,\chi^2_d\) are i.i.d.\ \(\chi^2(1)\). 

\begin{remark}[Rough intuition for what we will prove]
\[
\bigl[\diag(\mS\mH\mS\T)\bigr]_{ii}=\frac{1}{M}\sum_{j=1}^d j^{-2\alpha}\chi^2_j
\;\eqsim\;
\begin{cases}
M^{-1}, & \alpha>\tfrac12,\\[2pt]
M^{-1}\,d^{\,1-2\alpha}\eqsim M^{-2\alpha}, & \alpha\le\tfrac12 \text{ with } d\eqsim M,
\end{cases}
\]
So, we want to obtain \(\diag(\mS\mH\mS\T)^{-1/2}\eqsim M^{\min(0.5,\alpha)} I\).
\end{remark}

Define
\[
S_d(\alpha)\;\defeq\;\sum_{j=1}^d j^{-2\alpha}\chi^2_j \quad\Longrightarrow\quad
\bigl[\diag(\mS\mH\mS\T)\bigr]_{ii}=\frac{1}{M}\,S_d(\alpha).
\]
Hence, any high–probability upper/lower bounds on \(S_d(\alpha)\) translate into corresponding bounds on 
\(\diag(\mS\mH\mS\T)^{-1/2}\) via
\begin{align*}
\frac{1}{M}\,S_d(\alpha)\;\le\;U\quad\Longrightarrow\quad
\bigl[\diag(\mS\mH\mS\T)\bigr]^{-1/2}\;\succeq\;\sqrt{\frac{M}{U}}\,I,\\
\frac{1}{M}\,S_d(\alpha)\;\ge\;L\quad\Longrightarrow\quad
\bigl[\diag(\mS\mH\mS\T)\bigr]^{-1/2}\;\preceq\;\sqrt{\frac{M}{L}}\,I.
\end{align*}

We consider two regimes and then unify them through \(M^{\min(0.5,\alpha)}\).

\subsubsection*{Regime I: \(\alpha>\tfrac{1}{2}\) (summable weights)}
In this regime, \(\sum_{j=1}^\infty j^{-2\alpha}=\zeta(2\alpha)<\infty\).
Write \(X_j\defeq j^{-2\alpha}(\chi^2_j-1)\), so that
\[
S_d(\alpha)=\E[S_d(\alpha)]+\sum_{j=1}^d X_j,\qquad 
\E[S_d(\alpha)]=\sum_{j=1}^d j^{-2\alpha}\le \zeta(2\alpha).
\]
Moreover, \(\Var(S_d(\alpha))=2\sum_{j=1}^d j^{-4\alpha}\le 2\zeta(4\alpha)\).

\textbf{Upper tail (to lower–bound \(\diag^{-1/2}\)).}
For \(\lambda=\tfrac12\),
\[
\E\!\left[e^{\lambda X_j}\right]
=e^{-\lambda j^{-2\alpha}}\,(1-2\lambda j^{-2\alpha})^{-1/2}
\;\le\;\exp\!\Bigl(\tfrac{1}{2}j^{-4\alpha}\Bigr),
\]
hence
\[
\E\!\left[e^{\frac12\,(S_d(\alpha)-\E S_d(\alpha))}\right]
\le \exp\!\Bigl(\tfrac12\sum_{j=1}^d j^{-4\alpha}\Bigr)
\le \exp\!\Bigl(\tfrac12\,\zeta(4\alpha)\Bigr).
\]
By Markov and a union bound over the \(M\) diagonal entries, setting the per–entry failure probability to \(\delta_0\defeq \delta_{\text{total}}/M\),
\[
\Pr\!\left(S_d(\alpha)\le \zeta(2\alpha)+\zeta(4\alpha)+2\log\frac{M}{\delta_{\text{total}}}\right)\;\ge\;1-\delta_{\text{total}}.
\]
Therefore, with probability at least \(1-\delta_{\text{total}}\),
\[
\diag(\mS\mH\mS\T)^{-1/2}
\;\succeq\;
\frac{\sqrt{M}}{\bigl(\zeta(2\alpha)+\zeta(4\alpha)+2\log\frac{M}{\delta_{\text{total}}}\bigr)^{1/2}}\,I.
\]

\textbf{Lower tail (to upper–bound \(\diag^{-1/2}\)).}
A Chernoff bound on the lower tail of \(S_d(\alpha)\) (via the mgf of \(e^{-t\,j^{-2\alpha}\chi^2}\))
gives, for any \(\delta\in(0,1)\), the existence of a constant
\[
c_\downarrow(\alpha)\;=\;\Bigl(\tfrac{2\alpha-1}{2}\Bigr)^{2\alpha-1}\!\!/\,2^{\,2\alpha-1}
\]
such that
\[
\Pr\!\left(S_d(\alpha)\;\ge\;c_\downarrow(\alpha)\,\bigl(\log(1/\delta)\bigr)^{-(2\alpha-1)}\right)\;\ge\;1-\delta.
\]

With \(\delta=\delta_0=\delta_{\text{total}}/M\) and a union bound over the \(M\) rows,
with probability at least \(1-\delta_{\text{total}}\),
\[
\diag(\mS\mH\mS\T)^{-1/2}
\;\preceq\;
\frac{\sqrt{M}}{\bigl(c_\downarrow(\alpha)\bigr)^{1/2}}\,
\Bigl(\log\frac{M}{\delta_{\text{total}}}\Bigr)^{\frac{2\alpha-1}{2}}\;I.
\]

\textbf{Conclusion for \(\alpha>\tfrac12\).}
Combining the two displays,
\[
\boxed{\;
\frac{\sqrt{M}}{\bigl(\zeta(2\alpha)+\zeta(4\alpha)+2\log\frac{M}{\delta_{\text{total}}}\bigr)^{1/2}}\,I
\;\preceq\;
\diag(\mS\mH\mS\T)^{-1/2}
\;\preceq\;
\frac{\sqrt{M}}{\bigl(c_\downarrow(\alpha)\bigr)^{1/2}}\,
\Bigl(\log\frac{M}{\delta_{\text{total}}}\Bigr)^{\frac{2\alpha-1}{2}}\,I
\;}
\quad(\alpha>\tfrac12).
\]

\subsubsection*{Regime II: \(\alpha\le\tfrac{1}{2}\) (diverging weights)}
Assume \(d\ge r\,M\) for some fixed \(r>1\) (as in our setup). Then
\[
\E[S_d(\alpha)]=\sum_{j=1}^d j^{-2\alpha}
\quad\text{satisfies}\quad
\frac{(d+1)^{1-2\alpha}-1}{1-2\alpha}\;\le\;\E[S_d(\alpha)]\;\le\;1+\frac{d^{1-2\alpha}-1}{1-2\alpha}.
\]
Hence \(\E[S_d(\alpha)]\eqsim d^{\,1-2\alpha}\).
Moreover,
\[
\Var\bigl(S_d(\alpha)\bigr)=2\sum_{j=1}^d j^{-4\alpha}
\;\begin{cases}
=O(1), & \alpha>\tfrac14,\\[3pt]
=\Theta\!\bigl(d^{\,1-4\alpha}\bigr), & \alpha<\tfrac14,
\end{cases}
\]
so in all cases \(\sqrt{\Var(S_d(\alpha))}=o\bigl(\E[S_d(\alpha)]\bigr)\) as \(d\to\infty\).
Thus, by Bernstein and a union bound over the \(M\) rows, for all sufficiently large \(M\) we get, with probability at least \(1-\delta_{\text{total}}\),
\[
\frac12\,\E[S_d(\alpha)]
\;\le\;
S_d(\alpha)
\;\le\;
\frac32\,\E[S_d(\alpha)].
\]
Using \(d\ge rM\) and the integral bounds for \(\E[S_d(\alpha)]\),
\[
\frac{(rM)^{1-2\alpha}-1}{2(1-2\alpha)}
\;\le\;
S_d(\alpha)
\;\le\;
\frac{3}{2}\Bigl(1+\frac{(rM)^{1-2\alpha}-1}{1-2\alpha}\Bigr).
\]
Dividing by \(M\) and inverting the square–root yields constants
\[
C_{L}(\alpha,r)
\;\defeq\;
\left(\frac{3}{1-2\alpha}\,r^{\,1-2\alpha}\right)^{-1/2},
\qquad
C_{U}(\alpha,r)
\;\defeq\;
\left(\frac{1}{2(1-2\alpha)}\,r^{\,1-2\alpha}\right)^{-1/2},
\]
such that, with probability at least \(1-\delta_{\text{total}}\),
\[
\boxed{\;
C_{L}(\alpha,r)\,M^{\alpha}\,I
\;\preceq\;
\diag(\mS\mH\mS\T)^{-1/2}
\;\preceq\;
C_{U}(\alpha,r)\,M^{\alpha}\,I
\;}
\quad(\alpha\le\tfrac12).
\]

\subsubsection*{Unified statement}
Combining Regimes I and II, there exist positive constants \(c_1(\alpha,r,\delta_{\text{total}})\) and \(c_2(\alpha,r,\delta_{\text{total}})\) such that, with probability at least \(1-\delta_{\text{total}}\),
\[
\boxed{\;
c_1(\alpha,r,\delta_{\text{total}})\,M^{\min(0.5,\alpha)}\,I
\;\preceq\;
\diag(\mS\mH\mS\T)^{-1/2}
\;\preceq\;
c_2(\alpha,r,\delta_{\text{total}})\,M^{\min(0.5,\alpha)}\,I
\;}
\]
with the following explicit choices:
\begin{itemize}
\item If \(\alpha>\tfrac12\):
\[
c_1(\alpha,\cdot,\delta_{\text{total}})=\bigl(\zeta(2\alpha)+\zeta(4\alpha)+2\log\tfrac{M}{\delta_{\text{total}}}\bigr)^{-1/2},\qquad
c_2(\alpha,\cdot,\delta_{\text{total}})=\bigl(c_\downarrow(\alpha)\bigr)^{-1/2}\Bigl(\log\tfrac{M}{\delta_{\text{total}}}\Bigr)^{\frac{2\alpha-1}{2}},
\]
where one admissible choice is \(c_\downarrow(\alpha)=\bigl(\tfrac{2\alpha-1}{2}\bigr)^{\,2\alpha-1}\!/\,2^{\,2\alpha-1}\).
\item If \(\alpha\le\tfrac12\) and \(d\ge rM\):
\[
c_1(\alpha,r,\cdot)=C_{L}(\alpha,r),\qquad
c_2(\alpha,r,\cdot)=C_{U}(\alpha,r),
\]
with \(C_L, C_U\) as defined above.
\end{itemize}

\newpage

\section{Analysis for the Case with Label Noise}
\label{noiserate}

For the case with label noise, only Phase~\RomanNum{1}a is solved for SGD by \citet{linscaling}.
So we will focus on the Phase~\RomanNum{1}a where $\alpha>0.5$ and $\beta<0.5$ holds.

Now we set an assumption for label noise.
For selected data $x$, we assume that label $y$ satisfies
\[ y = \inner{\vx}{\vw^*} +\epsilon \]
where $\epsilon$ is a label noise with mean $0$ and variance $\sigma^2$ satisfying $\epsilon \perp\!\!\!\perp \vx$.

Note that for the case with label noise $
L(\vtheta) = \E_{\vx}\big[(\inner{\mS \vx}{\vtheta} - y)^2\big]
$ and $L(\vtheta) = \lVert \mH^{1/2}(\mS\T \vtheta - \vw^*) \rVert^2$ are not equivalent.

So in this section, we will use a notation $
L_{\text{true}}(\vtheta) = \E_{\vx}\big[(\inner{\mS \vx}{\vtheta} - y)^2\big].
$

Then $L_{\text{true}}(\vtheta) = \lVert \mH^{1/2}(\mS\T \vtheta - \vw^*) \rVert^2 +\sigma^2 = L(\vtheta) +\sigma^2  $

Here $\sigma^2$ is the irreducible risk.
\citet{linscaling} discussed compute-optimal scaling for $L(\vtheta)=L_{\text{true}}(\vtheta) -\sigma^2$.
Therefore, we will also discuss compute-optimal scaling for $L(\vtheta)=L_{\text{true}}(\vtheta) -\sigma^2$.

Also in this section, we let $R(M, N, \gamma_0)$ as the $L_{\text{true}}(\vtheta_N)$ under learning rate $\gamma_0$ and fixed model size $M$. 
We will discuss the scaling law of $R(M,N,\gamma_0) - \sigma^2$.

\subsection{Deriving ODE and Integral Equation}

For a quadratic function $q$, by Taylor's theorem, we have
\[
\E\!\left[q(\vtheta_{k+1}) - q(\vtheta_k)\,\middle|\, \mathcal{F}_k\right]
= \E\!\left[\inner{\nabla q(\vtheta_k)}{\vtheta_{k+1} - \vtheta_k}\,\middle|\, \mathcal{F}_k\right]
+ \tfrac{1}{2}\,\E\!\left[\inner{\nabla^2 q}{(\vtheta_{k+1}-\vtheta_k)^{\otimes 2}}\,\middle|\, \mathcal{F}_k\right],
\]
where $\mathcal{F}_k = \sigma(\mS, \vtheta_0, \dots, \vtheta_k)$.  
Since
\[
\vtheta_{k+1} - \vtheta_k = -\gamma_k \,\sgn(\inner{\mS \vx_k}{\vtheta_k} - y_k)\,\sgn(\mS \vx_k),
\]
We can expand the two terms using sign-Gaussian identities.
We let label noise for the same $(\vx_k, y_k)$ as $\epsilon_k$ and $y_k = \inner{\vx_k}{w^*} +\epsilon_k$ holds.

\paragraph{Gradient term.}
\begin{align*}
&\E\!\left[\left\langle \nabla q\!\left(\vtheta_k\right),\, \vtheta_{k+1} - \vtheta_k \right\rangle \,\middle|\, \mathcal{F}_k\right] \\
&\quad= -\gamma_k \left\langle \nabla q\!\left(\vtheta_k\right),\, \E\!\left[\sgn\!\left(\mS \vx_k\right)\,\sgn\!\left(\left\langle \vx_k,\, \mS\T \vtheta_k - \vw^* \right\rangle - \epsilon_k \right)\,\middle|\, \mathcal{F}_k\right] \right\rangle \\
&\quad= -\gamma_k \left\langle \nabla q\!\left(\vtheta_k\right),\, \frac{2}{\pi}\arcsin\!\left( \frac{ \diag\!\left(\mS \mH \mS\T\right)^{-1/2}\, \mS \mH \left(\mS\T \vtheta_k - \vw^*\right)} {\sqrt{ \left(\mS\T \vtheta_k - \vw^*\right)\T \mH \left(\mS\T \vtheta_k - \vw^*\right) + \sigma^2 }} \right) \right\rangle \\
&\quad= -\gamma_k \left\langle \nabla q\!\left(\vtheta_k\right),\, \frac{2}{\pi}\arcsin\!\left( \frac{ \diag\!\left(\mK\right)^{-1/2}\, \mK \left(\vtheta_k - \vtheta^*\right)} { \sqrt{ \left\lVert \mH^{1/2}\!\left(\mS\T \vtheta_k - \vw^*\right) \right\rVert^2 + \sigma^2}} \right) \right\rangle ,
\end{align*}
where $\mK = \mS \mH \mS\T$.

\paragraph{Quadratic term.}
\begin{align*}
&\E\!\left[\left\langle \nabla^2 q,\, \left(\vtheta_{k+1}-\vtheta_k\right)^{\otimes 2} \right\rangle \,\middle|\, \mathcal{F}_k\right] \\
&\quad= \gamma_k^2 \left\langle \nabla^2 q,\, \E\!\left[\left(\sgn\!\left(\mS \vx_k\right)\,\sgn\!\left(\left\langle \vx_k,\, \mS\T \vtheta_k - \vw^* \right\rangle-\epsilon_k \right)\right)^{\otimes 2}\,\middle|\, \mathcal{F}_k\right] \right\rangle \\
&\quad= \gamma_k^2 \left\langle \nabla^2 q,\, \E\!\left[\left(\sgn\!\left(\mS \vx_k\right)\right)^{\otimes 2}\,\middle|\, \mathcal{F}_k\right] \right\rangle \\
&\quad= \gamma_k^2 \left\langle \nabla^2 q,\, \frac{2}{\pi}\arcsin\!\left(\diag\!\left(\mS \mH \mS\T\right)^{-1/2}\,\mS \mH \mS\T \diag\!\left(\mS \mH \mS\T\right)^{-1/2}\right) \right\rangle \\
&\quad= \gamma_k^2 \left\langle \nabla^2 q,\, \frac{2}{\pi}\arcsin\!\left(\diag\!\left(\mK\right)^{-1/2}\,\mK\,\diag\!\left(\mK\right)^{-1/2}\right) \right\rangle .
\end{align*}

\paragraph{One-step update formula.}
Substituting the gradient and quadratic terms yields the desired one-step update formula for signSGD.
\[
\E\!\left[q\!\left(\vtheta_{k+1}\right) - q\!\left(\vtheta_{k}\right) \,\middle|\, \mathcal{F}_k\right] 
= -\frac{2\gamma_k}{\pi}\,\left\langle \nabla q\!\left(\vtheta_k\right),\, \arcsin\!\left(\frac{\kbar\left(\vtheta_k - \vtheta^*\right)}{\sqrt{L(k)+\sigma^2}}\right) \right\rangle
+ \frac{\gamma_k^2}{\pi}\,\left\langle \nabla^2 q,\, \mK_\sigma \right\rangle .
\]

By the same procedure as the noiseless case, while $\sqrt{L(k)}$ in the denominator is replaced by $\sqrt{L(k)+\sigma^2}$, we get the following ODE, where $P(t) = L(t/\gamma_0)$ and $p_i(t) = r_i(t/\gamma_0)$.  
\begin{equation}
\frac{dp_i}{dt}
= -\frac{4}{\pi \sqrt{P(t) + \sigma^2}}\,\lambda_i(\kbar)\,f(t/\gamma_0)\,p_i(t)
+ \frac{2 f(t/\gamma_0)^2 \gamma_0}{\pi}\,V_i .
\label{odenn}
\end{equation}

\paragraph{Integral equation.}
Also, by the same procedure as the noiseless case, while $\sqrt{L(u)}$ in the denominator is replaced by $\sqrt{L(u)+\sigma^2}$, we get the following integral equation.
\begin{equation}
L(N) = \Hnorm
+ \sum_{i=1}^M r_i(0)\,e^{-\tfrac{4\lambda_i\gamma_0}{\pi}\int_0^N \tfrac{f(u)}{\sqrt{L(u) + \sigma^2}}\,du}
+ \frac{2\gamma_0^2}{\pi} \sum_{i=1}^M V_i \int_0^N
e^{-\tfrac{4\lambda_i\gamma_0}{\pi}\int_z^N \tfrac{f(u)}{\sqrt{L(u) + \sigma^2}}\,du}\,f(z)^2\,dz .
\label{int_eqnn}
\end{equation}

By using the same drift/approximation-term transformation as the noiseless case, we get
\begin{align}
\label{eq:implicitnn}
L(N)\;\eqsim\;
\underbrace{M^{-2\alpha - 2\beta + 1}}_{\textbf{approx}}
\;+\;
\underbrace{\bigl(M^{0.5}\,Q(N)\bigr)^{-\tfrac{2\alpha+2\beta-1}{2\alpha}}}_{\textbf{drift}}
\; \\ +\;
\underbrace{\frac{2\gamma_0^2}{\pi}\sum_{i=1}^{M}V_i
\int_{0}^{N}\exp\!\Bigl(-\frac{4\gamma_0}{\pi}\lambda_i(\overline K)\!\!\int_{z}^{N}\!\frac{du}{\sqrt{L(u) + \sigma^2}}\Bigr)\,dz}_{\textbf{noise}}.
\end{align}
where $f(z)\equiv 1$ (which means constant learning rate) and 
\[
Q(N)=\frac{4\gamma_{0}}{\pi}\int_{0}^{N}\frac{du}{\sqrt{L(u) + \sigma^2 }}.
\]

\subsection{Early Stage for a Noisy Label}

Similar to the noiseless case, we first solve for the early stage.
Here we have to solve the following equation
\[  L(N)\;\eqsim
\bigl(M^{0.5}\,Q(N)\bigr)^{-\tfrac{2\alpha+2\beta-1}{2\alpha}} \]

It can be converted to
\begin{equation}\label{eq:early-noisy-implicit}
L(N)^{-\tfrac{2\alpha}{\,2\alpha+2\beta-1\,}}
\ \eqsim\ 
M^{0.5}\,\gamma_0
\int_{0}^{N}\frac{du}{\sqrt{L(u)+\sigma^2}}.
\end{equation}
Replacing \(\eqsim\) by equality in~\eqref{eq:early-noisy-implicit} and differentiating with
respect to \(N\) (viewed as a continuous time variable \(t\)) yields
\begin{equation}\label{eq:early-noisy-diff}
-\frac{2\alpha}{\,2\alpha+2\beta-1\,}\,
L(t)^{-\tfrac{2\alpha}{\,2\alpha+2\beta-1\,}-1}\,L'(t)
\;\eqsim\;
M^{0.5}\,\gamma_0\,\frac{1}{\sqrt{L(t)+\sigma^2}}.
\end{equation}
Equivalently,
\begin{equation}\label{eq:early-noisy-core}
L(t)^{A-1}\,L'(t)
\;\eqsim\;
M^{0.5}\,\gamma_0\,\frac{1}{\sqrt{L(t)+\sigma^2}},
\qquad
A := -\frac{2\alpha}{\,2\alpha+2\beta-1\,}.
\end{equation}

For any \(\sigma>0\) and \(x \ge 0\) we have the elementary bounds
\begin{equation}\label{eq:noise-comparison}
\frac{1}{\sqrt{2}}\,
\min\!\bigl(x^{-1/2},\;\sigma^{-1}\bigr)
\ \le\
\frac{1}{\sqrt{x+\sigma^2}}
\ \le\
\min\!\bigl(x^{-1/2},\;\sigma^{-1}\bigr).
\end{equation}
Indeed, if \(x \ge \sigma^2\) then \(x \le x+\sigma^2 \le 2x\), so
\[
\frac{1}{\sqrt{2}}\,x^{-1/2}
\;\le\;
\frac{1}{\sqrt{x+\sigma^2}}
\;\le\;
x^{-1/2},
\]
whereas if \(0\le x \le \sigma^2\) then \(\sigma^2 \le x+\sigma^2 \le 2\sigma^2\), so
\[
\frac{1}{\sqrt{2}}\,\sigma^{-1}
\;\le\;
\frac{1}{\sqrt{x+\sigma^2}}
\;\le\;
\sigma^{-1}.
\]
Combining the two cases yields~\eqref{eq:noise-comparison}.
Applying~\eqref{eq:noise-comparison} with \(x=L(t)\) in~\eqref{eq:early-noisy-core}, we obtain
\begin{equation}\label{eq:early-noisy-regimes}
L(t)^{A-1} L'(t)
\ \eqsim\ 
M^{0.5}\,\gamma_0
\begin{cases}
L(t)^{-1/2}, & L(t)\ \ge\ \sigma^2,\\[2mm]
\sigma^{-1}, & L(t)\ \le\ \sigma^2.
\end{cases}
\end{equation}
This naturally splits the dynamics into a \emph{large-\(L\)} regime \(L\ge\sigma^2\)
and a \emph{small-\(L\)} regime \(L\le\sigma^2\).

Suppose \(L(t)\ge\sigma^2\). Then from~\eqref{eq:early-noisy-regimes} we have
\[
L(t)^{A-1} L'(t)
\ \eqsim\ 
M^{0.5}\,\gamma_0 L(t)^{-1/2},
\]
or equivalently
\begin{equation}\label{eq:largeL-ode}
L'(t)
\ \eqsim\ 
M^{0.5}\,\gamma_0\,L(t)^{1-A-\tfrac12}.
\end{equation}
Define
\[
\zeta
\;:=\;
1 - A - \frac12
\;=\;
-\!A + \frac12
\;=\;
\frac{2\alpha}{\,2\alpha+2\beta-1\,} + \frac12.
\]
The assumptions \(\alpha>0.5\), \(\beta<0.5\), and \(\alpha+\beta>0.5\) imply \(\zeta>1\).
Then~\eqref{eq:largeL-ode} takes the canonical form
\[
\frac{dL}{dt} \ \eqsim\ M^{0.5}\,\gamma_0\,L^{\zeta}.
\]
Separating variables and integrating gives
\[
\int L^{-\zeta}\,dL
\ \eqsim\ 
M^{0.5}\,\gamma_0 \int dt
\quad\Longrightarrow\quad
L(t)^{-(\zeta-1)} \ \eqsim\ M^{0.5}\,\gamma_0\,t,
\]
where we have absorbed additive constants into the implicit comparison.
Thus, in the large-\(L\) regime,
\begin{equation}\label{eq:largeL-solution}
L(t)
\ \eqsim\
\bigl(M^{0.5}\,\gamma_0\,t\bigr)^{-1/(\zeta-1)}.
\end{equation}
Writing
\[
p := \frac{1}{\zeta-1}
= \frac{2(2\alpha+2\beta-1)}{\,2\alpha+1-2\beta\,},
\]
we recover exactly the original early-phase exponent:
\begin{equation}\label{eq:Rearly-noisy-largeL}
L(t)\ \eqsim\ \bigl(M^{0.5}\,\gamma_0\,t\bigr)^{-p},
\qquad
L(t)\ \ge\ \sigma^2.
\end{equation}
In particular, the presence of \(\sqrt{L+\sigma^2}\) in the denominator does not change
the scaling exponent \(p\) in the regime where \(L\) is larger than the noise floor \(\sigma^2\); it only affects the constant factors hidden in \(\eqsim\).

Now suppose \(L(t)\le\sigma^2\) and \(t\) is sufficiently large so that the small-\(L\) regime dominates.
From~\eqref{eq:early-noisy-regimes} we obtain
\[
L(t)^{A-1} L'(t)
\ \eqsim\ 
M^{0.5}\,\gamma_0\,\sigma^{-1}.
\]
Observing that \(\frac{d}{dt}L(t)^A = A L(t)^{A-1}L'(t)\), we can rewrite this as
\[
\frac{d}{dt}L(t)^A
\ \eqsim\ 
M^{0.5}\,\gamma_0\,\sigma^{-1}.
\]
Integrating in \(t\) and absorbing additive constants into \(\eqsim\) yields
\[
L(t)^A
\ \eqsim\ 
M^{0.5}\,\gamma_0\,\sigma^{-1}\,t.
\]
Since \(A<0\), we invert this relation to obtain
\begin{equation}\label{eq:smallL-solution}
L(t)
\ \eqsim\ 
\bigl(M^{0.5}\,\gamma_0\,t\bigr / \sigma)^{1/A}
\ =\
\bigl(M^{0.5}\,\gamma_0\,t\bigr / \sigma)^{-p'},
\qquad
p' := -\frac{1}{A} = \frac{2\alpha+2\beta-1}{2\alpha}.
\end{equation}
Thus, in the small-\(L\) (noise-dominated) regime,
\begin{equation}\label{eq:Rearly-noisy-smallL}
L(t)\ \eqsim\ \bigl(M^{0.5}\,\gamma_0\,t\bigr / \sigma)^{-p'},
\qquad
L(t)\ \le\ \sigma^2.
\end{equation}

Combining~\eqref{eq:Rearly-noisy-largeL} and~\eqref{eq:Rearly-noisy-smallL}, we obtain the following
formula for the early-stage.
\begin{equation}\label{eq:two-regime-summary}
L(t)
\ \eqsim\ \bigl(M^{0.5}\,\gamma_0\,t\bigr)^{-p} + \bigl(M^{0.5}\,\gamma_0\,t / \sigma \bigr)^{-p'},
\qquad
p = \dfrac{2(2\alpha+2\beta-1)}{\,2\alpha+1-2\beta\,},
\quad
p' = \dfrac{2\alpha+2\beta-1}{2\alpha}.
\end{equation}

\subsection{Limit Stage for a Noisy Label}

By the same procedure as Appendix~\ref{limit}, we get an equation
\begin{align*}
L_\infty=
\frac{\gamma_0\pi}{4}\,\trace\!\bigl(\diag(\mK)^{1/2}\bigr)\sqrt{L_\infty + \sigma^2 }
+\Hnorm.
\end{align*}

Solving the quadratic equation, we get
\[  L_\infty \eqsim  \gamma_0^2 \trace\!\bigl(\diag(\mK)^{1/2}\bigr)^2  + \sigma \gamma_0 \trace\!\bigl(\diag(\mK)^{1/2}\bigr) + \Hnorm \]

Under our setup,
\[
\trace(\diag(\mK)^{1/2})
=\sum_{i=1}^M\sqrt{(\mS \mH \mS \T)_{ii}}
\eqsim  M \cdot \sqrt{\frac{1}{M} M^{\max(1-2\alpha, 0)} }  \eqsim  M^{1-\min(\alpha,\,0.5)}.
\]

By the results from \citet{paquette4+, linscaling}, and note in Appendix~\ref{approxnote},
\[
\Hnorm\ \eqsim\ M^{-2\alpha+\max(0,\,1-2\beta)}.
\]

Hence
\[ L_\infty \eqsim  \gamma_0^2M + \sigma \gamma_0 \sqrt{M} +  M^{-(2\alpha+2\beta-1)} \]

\subsection{Evaluating Compute-optimal Scaling}

Combining the early stage and the limit stage, we get
\[
R(M,N,\gamma_0) - \sigma^2 \eqsim \bigl(M^{1/2}N\gamma_0\bigr)^{-\tfrac{2(2\alpha+2\beta-1)}{\,2\alpha+1-2\beta\,}} + \bigl(M^{1/2}N\gamma_0\bigr / \sigma )^{-\tfrac{2\alpha+2\beta-1}{2\alpha}  }  +
 \gamma_0^2M + \sigma \gamma_0 \sqrt{M} +  M^{-(2\alpha+2\beta-1)} .
\]
Note that we use $R$ instead of $L$ when we are writing the loss as a three-variable function.

We let $\gamma_0 = M^{-e}$. Also assume $\sigma \eqsim 1$ (this covers values such as $\sigma = 1, 0.2, 0.01,$ etc.).

Compute-optimal occurs when the three terms balance. For the loss formula in this section, compute-optimal occurs when $\bigl(M^{1/2}N\gamma_0\bigr / \sigma )^{-\tfrac{2\alpha+2\beta-1}{2\alpha}  }$ and $ \sigma \gamma_0 \sqrt{M} $ and $ M^{-(2\alpha+2\beta-1)}$ balances.
Solving $ \sigma \gamma_0 \sqrt{M} = M^{-(2\alpha+2\beta-1)}$, we get $\gamma_0^{\star} = M^{-(2\alpha+2\beta-0.5)}$.
Solving $\bigl(M^{1/2}N\gamma_0\bigr / \sigma )^{-\tfrac{2\alpha+2\beta-1}{2\alpha}  }= M^{-(2\alpha+2\beta-1)}$, we get $N = M^{4\alpha+2\beta-1}$ and it leads to $\frf = MN = M^{4\alpha+2\beta}$.

So finally we get 
\begin{equation}
    M^{\star} = \frf^{1/(4\alpha+2\beta)}, \quad  R(M^{\star} ,\frf / M^{\star},\gamma_0^{\star}) - \sigma^2 \eqsim \frf^{-(2\alpha+2\beta-1)/(4\alpha+2\beta)} .
\label{noisefor}
\end{equation}

Figure~\ref{fig:noise} shows that exponents in the \eqref{noisefor} and measured compute-optimal loss slope and optimal model size slope (in log-log plot) for the case with the label noise match well. In the experiments, we used $\sigma = 0.1$.

\FloatBarrier

\begin{figure}[t]
    \centering

    \begin{minipage}{0.45\textwidth}\centering
        \includegraphics[width=\linewidth]{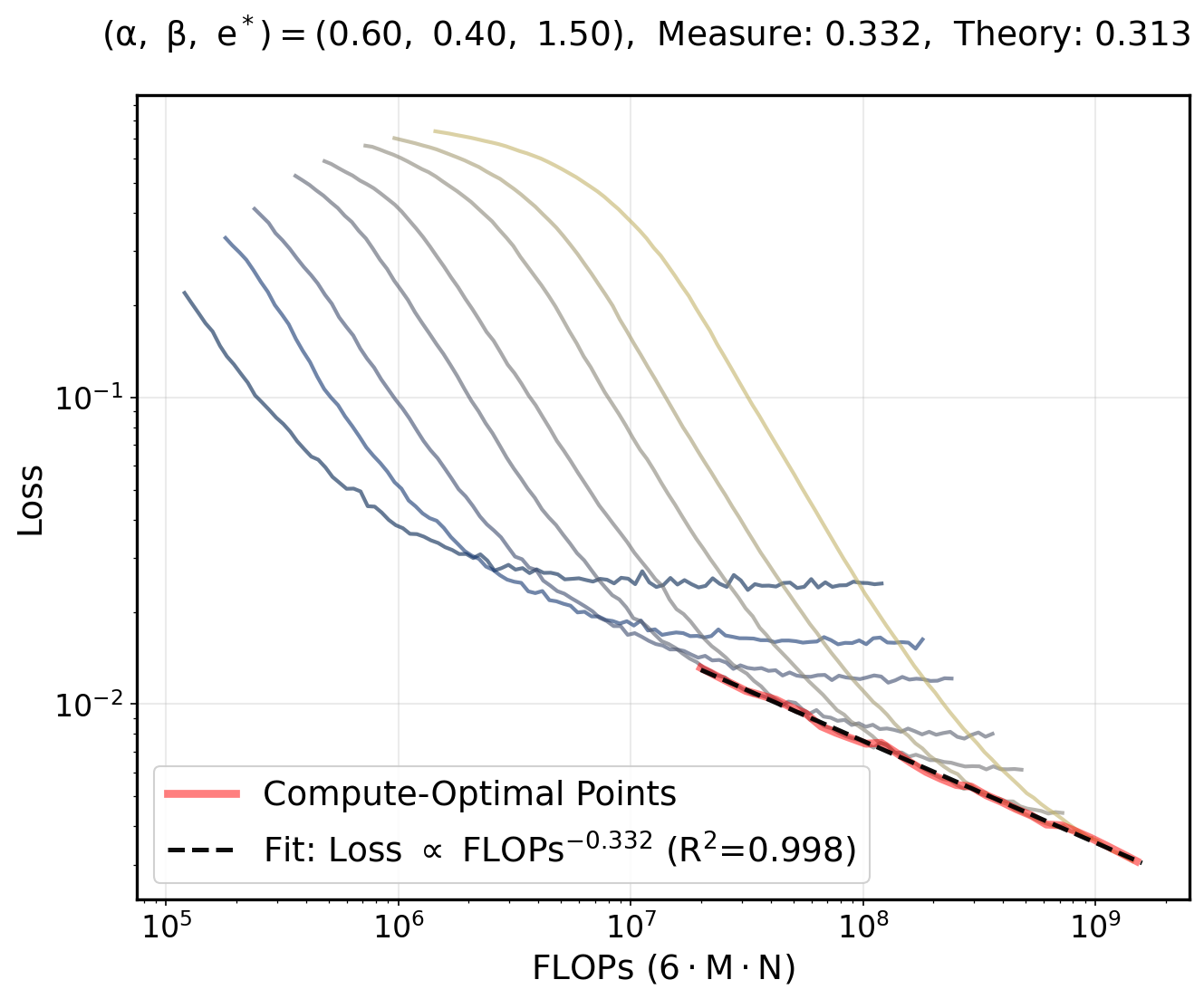}
    \end{minipage}\hfill
    \begin{minipage}{0.45\textwidth}\centering
        \includegraphics[width=\linewidth]{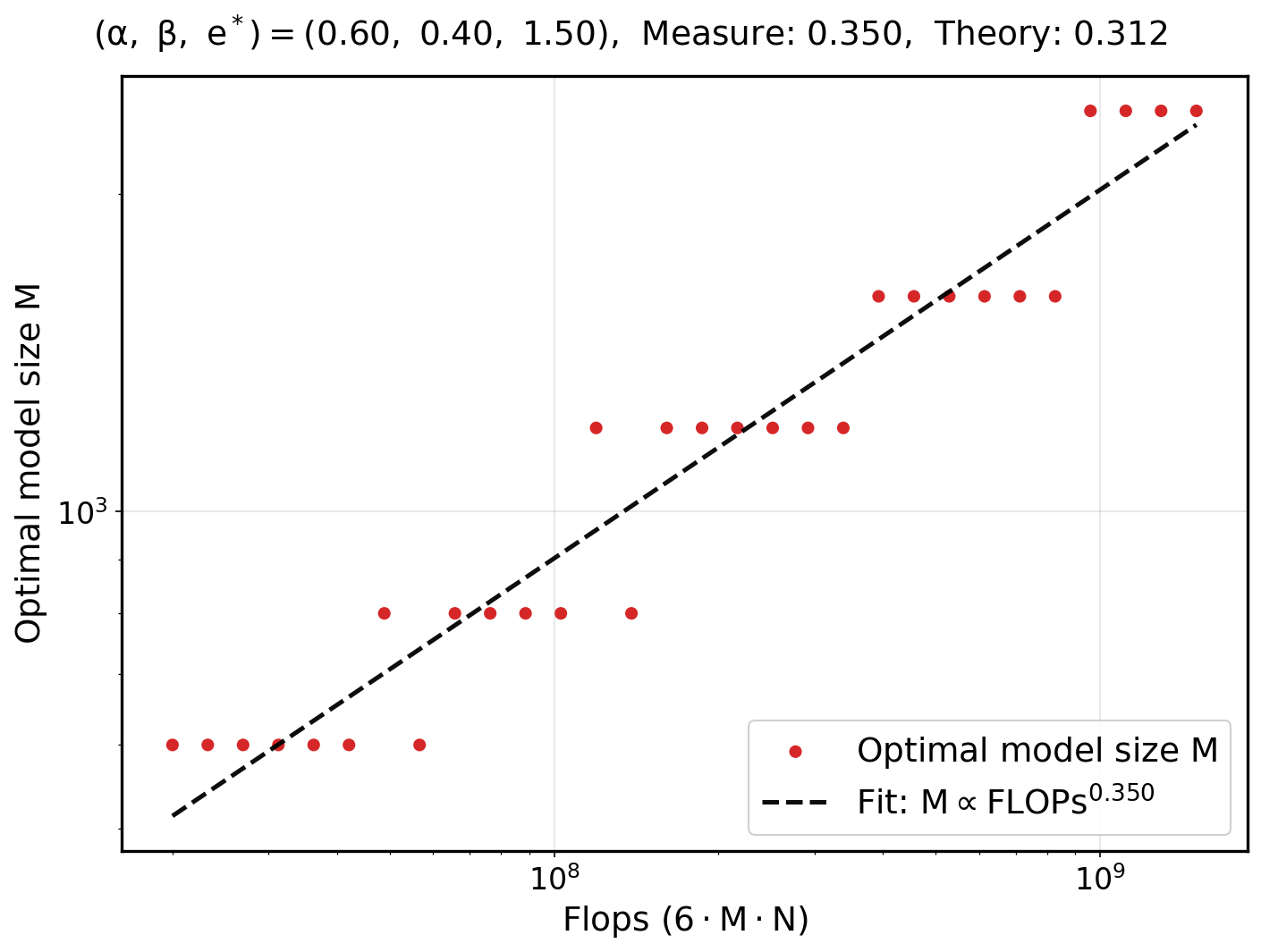}
    \end{minipage}\\[0.6em]

    \begin{minipage}{0.45\textwidth}\centering
        \includegraphics[width=\linewidth]{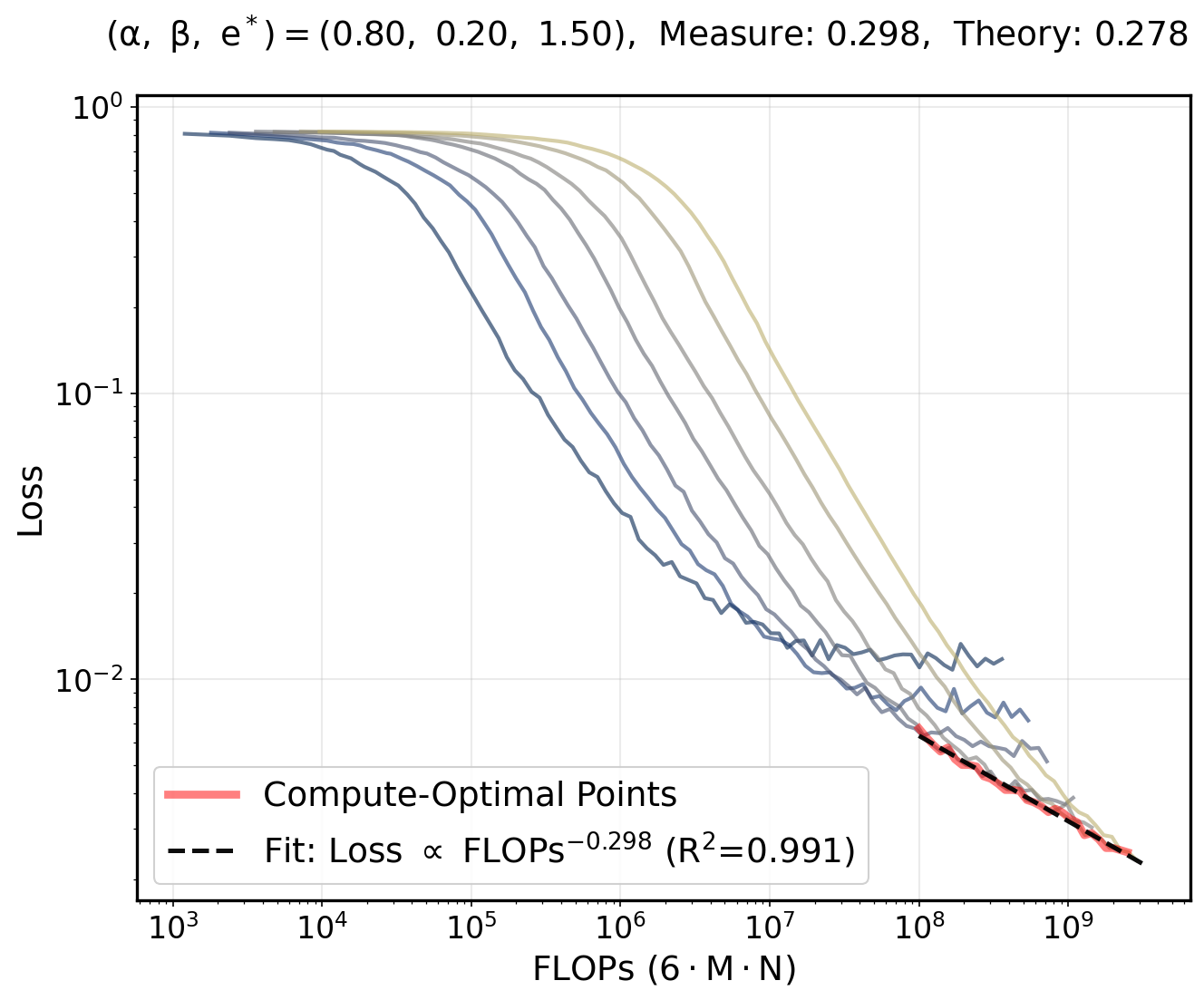}
    \end{minipage}\hfill
    \begin{minipage}{0.45\textwidth}\centering
        \includegraphics[width=\linewidth]{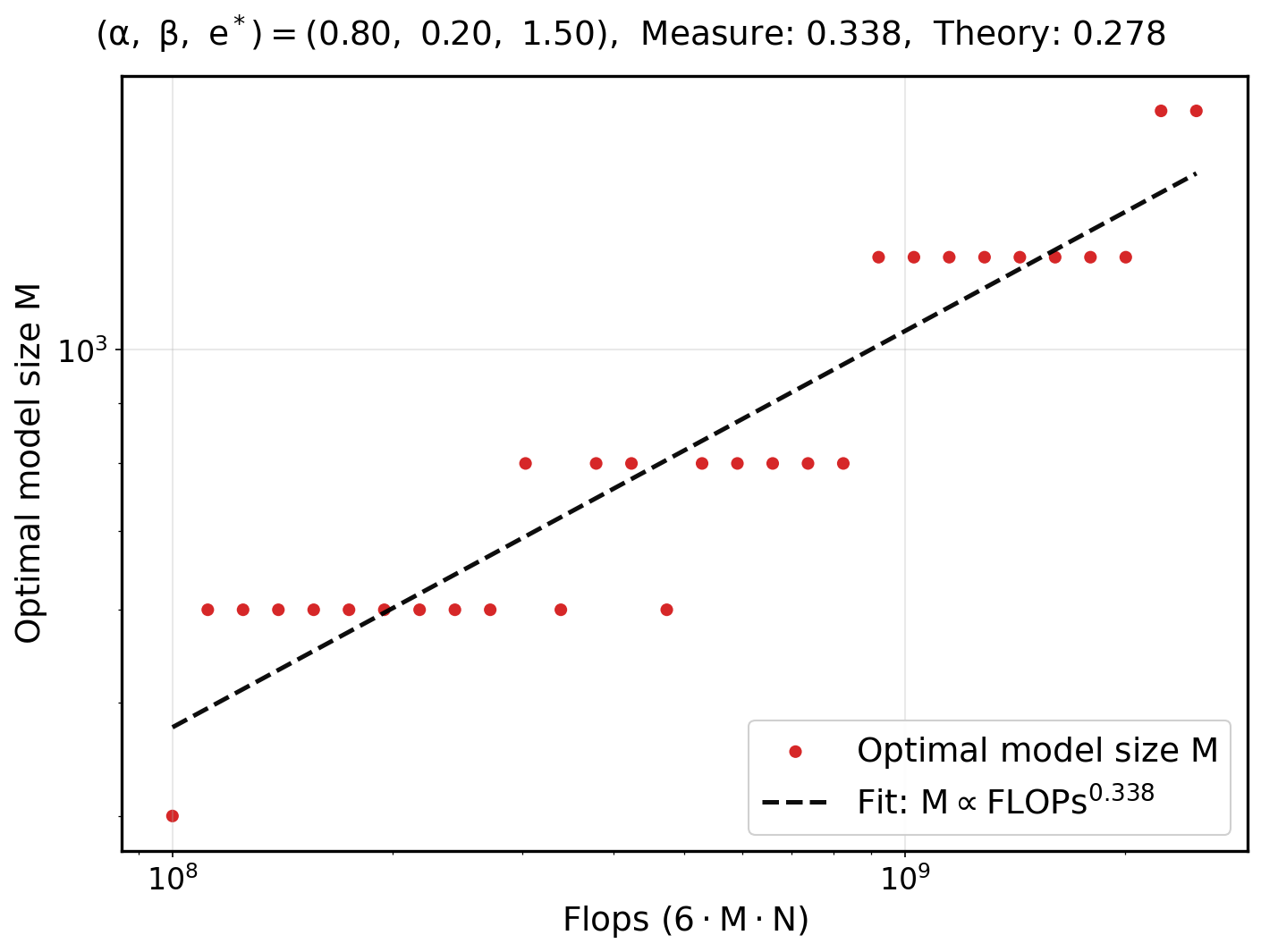}
    \end{minipage}\\[0.6em]

    \caption{\textbf{Measure of compute-optimal loss slope and optimal model size slope for the case with label noise.} 
    We validate the exponent of $R\left(M^{\star}, \frac{\frf}{M^{\star}}, \gamma_0^{\star}\right)$ and $M^{\star}$ with respect to $\frf$ for the case with label noise.
    The left plot shows the compute-optimal loss with respect to FLOPS $6MN$.
    The right plot shows the optimal model size with respect to FLOPS $6MN$. Note that we evaluate the region with big FLOPS, as we aim to evaluate asymptotic behavior.
    }
    \label{fig:noise}
\end{figure}

\FloatBarrier

\end{document}